\documentclass{article} 

\usepackage[preprint]{myarxiv}

\usepackage[utf8]{inputenc} 
\usepackage[T1]{fontenc}    
\usepackage{hyperref}       
\usepackage{url}            
\usepackage{booktabs}       
\usepackage{amsfonts}       
\usepackage{nicefrac}       
\usepackage{microtype}      
\usepackage{xcolor}         
\usepackage{graphicx} 
\usepackage{amsmath}
\usepackage{amssymb}
\usepackage{amsthm}
\usepackage{tikz}
\usetikzlibrary{arrows.meta}
\usetikzlibrary{intersections}
\usetikzlibrary{calc}
\usepackage{natbib}
\usepackage{xfrac}
\usepackage{subcaption}
\usepackage{mathtools}
\usepackage{hyperref}
\usepackage{cleveref}
\usepackage{bbm}
\usepackage{wrapfig}

\usepackage{todonotes}
\usepackage{enumitem}

\usepackage{thmtools}

\makeatletter
\newcommand{\listofappendices}{%
  \section*{Appendix Contents}%
  \@starttoc{apx}%
}
\makeatother

\setcitestyle{authoryear,round,comma}

\newtheorem{theorem}{Theorem}

\newtheorem{prop}[theorem]{Proposition}
\newtheorem{lemma}[theorem]{Lemma}
\newtheorem{cor}[theorem]{Corollary}

\theoremstyle{definition}
\newtheorem{definition}[theorem]{Definition}

\newtheorem{remark}[theorem]{Remark}

\usepackage{xcolor}

\newcommand{\aff}{\operatorname{aff}}
\newcommand{\R}{\mathbb{R}}
\DeclareMathOperator{\rank}{rank}
\newcommand{\architecture}{\mathcal{A}}
\newcommand{\paramspace}[1]{\Theta_{#1}}
\newcommand{\gparamspace}[1]{\Tilde{\Theta}_{#1}}
\newcommand{\weights}[1]{W^{(#1)}}

\newcommand{\bias}[1]{b^{(#1)}}
    \newcommand{\complex}{\mathcal{C}}
\newcommand{\activation}[1]{a^{(#1)}}
\newcommand{\diag}{\operatorname{diag}}
\newcommand{\ind}{\mathbbm{1}}
\newcommand{\preactivation}[1]{z^{(#1)}}
\newcommand{\bhyperplane}[1]{B_{#1}}

\newcommand{\hyperplanes}[2]{\mathcal{H}_{#1}(#2)}
\newcommand{\breakpoints}[1]{B({#1})}
\newcommand{\bcomplex}[1]{\mathcal{B}_{#1}}
\newcommand{\funspace}[1]{\mathcal{F}_{#1}}
\newcommand{\fiber}[1]{\mathcal{S}({#1})}
\newcommand{\gfiber}[1]{\Tilde{\mathcal{S}}({#1})}

\newcommand{\str}{\operatorname{star}}

\newcommand{\inner}[1]{\langle #1 \rangle}

\newcommand{\relint}{\operatorname{relint}}

\newcommand{\lin}{\operatorname{lin}}

\newcommand{\orthant}[1]{\R_{\geq 0}^{#1}}

\newcommand{\modulo}[1]{\big/#1}
\newcommand{\nncr}[1]{\mathrm{rank}_{\mathrm{col},+}(#1)}
\DeclareMathOperator{\GL}{GL}
\newcommand{\cone}{\operatorname{cone}}
\newcommand{\col}{\operatorname{col}}

\title{The Symmetries of Three-Layer ReLU Networks}

\author{%
   Johanna Marie Gegenfurtner \\
   Technical University of Denmark \\
   \texttt{johge@dtu.dk} \\
    \And
   Moritz Grillo \\
  Max Planck Institute for Mathematics in the Sciences\\
    Leipzig\\
  \texttt{moritz.grillo@mis.mpg.de} \\
  \And
   Guido Mont\'ufar \\ 
   UCLA and MPI MiS\\
   \texttt{montufar@math.ucla.edu}
}

\begin{document}

\maketitle

\begin{abstract}
We develop a framework for analyzing parameter symmetries in deep ReLU networks and obtain a complete characterization of the generic parameter fibers for three-layer bottleneck architectures. Our approach provides explicit semi-algebraic descriptions of these fibers and yields a polynomial time algorithm for deciding functional equivalence of two parameters. The symmetries include discrete and continuous transformations arising from layer composition, and depend on whether deeper layers hide or preserve geometric structure from preceding layers. Finally, we show that some of these symmetries induce local conservation laws along gradient flow, while others do not. 
\end{abstract}

\section{Introduction}

A fundamental property of neural network parameterizations is parameter redundancy: distinct parameter configurations can realize the same function. 
These redundancies can be formalized via the notion of \emph{fibers}, defined as the set of all parameters realizing a given function. 
Symmetries, i.e., parameter transformations that leave the function unchanged, provide a canonical mechanism for generating such fibers. 
For instance, in ReLU networks, neuron weights can be permuted within layers, and incoming and outgoing weights rescaled without affecting the function 
\citep{Kurkova1994, rolnick2020reverse}. 
Beyond such global symmetries, fibers also arise from \emph{subnetwork} structure: when the realized function depends only on a subset of parameters, the rest can vary freely without changing the function. 
Such symmetries significantly influence the optimization landscape and can induce conserved quantities along training, constraining how gradient methods explore parameter space and which solutions are ultimately selected \citep{głuch2021noetherthingschangestay,
kunin2021neural,
zhao2023symmetries,
zhao2025symmetryneuralnetworkparameter,
nurisso2026topology}. 

In ReLU networks, parameter symmetries are relatively well understood in the single-hidden-layer case. 
In particular, \citet{PetzkaTS20} show that, for generic parameters, the fiber is trivial up to permutation and scaling, and provide a description in the remaining degenerate cases. 
In deep architectures, the situation is more nuanced. While global symmetries are still generally limited to permutation and scaling, additional local symmetries can arise at specific non-degenerate parameter configurations \citep{rolnick2020reverse,
grigsby2023hiddensymmetriesrelunetworks}. 
At the same time, for many ReLU architectures there exist functions whose parameters are identifiable (i.e., uniquely determined) among suitable classes of parameters up to these global symmetries \citep{phuong2020functional,grigsby2023hiddensymmetriesrelunetworks}.
However, identifiability in ReLU networks is inherently non-uniform and the degree of redundancy can vary significantly across parameter space \citep{ELISENDAGRIGSBY2025110636}. 
In particular, the existence of identifiable parameter configurations does not imply identifiability throughout generic parameters. 

As a result, beyond well-understood global symmetries, the structure of fibers in ReLU networks with more than one hidden layer remains largely uncharacterized. 
In particular, a systematic description of all parameters realizing a given function, and of the interaction between different sources of redundancy, is still missing. 
Addressing this problem for two-hidden-layer networks is a key step toward understanding identifiability, minimality, and parameter optimization in deep ReLU networks.

\subsection{Our Contributions}

We develop a systematic framework for describing symmetries and fibers in ReLU neural networks beyond the standard scaling and permutation symmetries. 
Our contributions are as follows: 
\begin{enumerate}[leftmargin=*, itemsep=.3pt, topsep=.3pt]
    \item \textbf{Layerwise symmetries:} 
    We characterize symmetries acting within individual layers by analyzing two-layer networks on the nonnegative orthant. 
    In this setting, we obtain explicit descriptions of fibers via polynomial equations and inequalities and compute the dimension of generic fibers. 
    These results provide a fundamental building block for the analysis of deeper architectures. 
    
    \item \textbf{Symmetries induced by layer composition:} 
    We identify new symmetries that emerge from composing ReLU layers by analyzing three-layer networks. We show that layer composition introduces redundancies that cannot be reduced to symmetries acting within individual layers. 

    \item \textbf{Explicit fiber description for three-layer architectures:} 
    We show the above symmetries are complete for three-layer bottleneck networks with generic parameters. 
    Remarkably, these fibers depend only on weight signs 
    and imply a polynomial time algorithm for deciding functional equivalence of parameters. 

    \item \textbf{Non-localizability of fibers:} 
    We show that symmetries in deep networks are not always localizable. We construct networks where each pair of consecutive layers is identifiable, but their composition is not, showing that parameter redundancy can arise as a global property of network composition. 

    \item \textbf{Connections to training dynamics:} 
    We show that local symmetries from linearly acting neurons induce conserved quantities along gradient flow, but the symmetries from \Cref{sec:fibers_from_concatenation} do not.  
\end{enumerate}

\subsection{Related Work}

The study of parameter equivalence in neural networks dates back at least to the work of \citet{SUSSMANN1992589,Kurkova1994,Fefferman1994ReconstructingAN} for tanh and asymptotically constant activations. 
For networks with polynomial activations, recent works show, depending on the degree, that generic parameters are identifiable up to permutation and scaling \citep{shahverdi2025learningrazorsedgesingularity, 
usevich2025identifiabilitydeeppolynomialneural, 
finkel2025activationdegreethresholdsexpressiveness}. 
More generally, \citet{vlacic2020affinesymmetriesneuralnetwork} develop a framework in which parameter fibers are generated by affine symmetries of the activation function, leading to equivalence classes beyond permutation and scaling. 
For ReLU activations, however, they indicate that additional, non-affine symmetries would need to be taken into account to characterize the fibers. 

For ReLU networks, parameter symmetries are particularly intricate. 
In the shallow setting, \citet{PetzkaTS20,
ramakrishnan2026completesymmetryclassificationshallow} give a complete
classification, showing that the generic fiber is trivial up to permutation and scaling, and characterizing degenerate cases where subsets of neurons aggregate into a single linear function. In a related direction, \citet{DEREICH2022121}
characterize minimal representations of scalar-valued functions realized by shallow ReLU
networks and determine the dimension and number of connected components of
the corresponding function space. Building on this, \citet{Dereich_2024} prove the existence of optimal
approximations of continuous target functions within this function space. Their tools are closely related to the
hyperplane representation used in our work, which generalizes this viewpoint
to intermediate layers with nonnegative inputs and to non-scalar outputs. 
 
Our approach complements the reverse-engineering framework of \citet{rolnick2020reverse}, which leverages the combinatorial geometry of activation boundaries to recover parameters. Relatedly, \citet{grigsby2023hiddensymmetriesrelunetworks} show that architectures with widths at least the input dimension admit open sets of parameters that are identifiable within a suitable parameter class, while \citet{phuong2020functional} prove analogous results for architectures with non-increasing widths. However, the structure of the fibers outside these identifiable sets remains largely unknown.

We further highlight the work of \citet{Stock2022}, which introduces a path-based locally linear parametrization of the realization map and derives conditions for local identifiability. 
Complementary local analyses by \citet{bona-pellissier2022local,Bona-Pellissier2023} provide conditions under which deep ReLU networks are identifiable. 
From a different perspective, \citet{NEURIPS2019_04115ec3} study the degeneracy of the ReLU parametrization via inverse stability,  
showing the realization map can be highly non-injective. 
To our knowledge, a general description of parameter fibers in ReLU networks with more than one hidden layer, particularly accounting for compositional effects, remains unknown. 

Our discussion of training dynamics is motivated by the well-known fact that parameter symmetries can induce conservation laws during training. 
For example, matrix product symmetries preserve imbalance quantities \citep{tarmoun21understanding,kunin2021neural}, a line of work further developed by \cite{le2022training,marcotte2023abide}. 
These conserved quantities are of interest because they constrain the set of reachable parameters and provide insight into implicit bias and generalization. 
Recent work further characterizes initialization-dependent invariant sets of gradient flow and shows that their singularities correspond to submodels \citep{nurisso2026topology}. 
This raises the question of which additional conserved quantities may arise from other symmetry mechanisms.

\section{Preliminaries}

For $n \in \mathbb{N}$, we write $[n] \coloneqq \{1,\dots,n\}$. 
For $x \in \R^d$, we denote by $[x]_+$ the entrywise application of the ReLU activation function. 
For a set of indices $S \subseteq [n]$, let $D_S \in \R^{n\times n}$ denote the diagonal matrix with $(D_S)_{ii}=1$ if $i\in S$ and $(D_S)_{ii}=0$ otherwise. 
We write $\|\cdot\|$ for the Euclidean norm.

\paragraph{ReLU networks.}
A \emph{ReLU layer} with $n_{\ell-1}$ inputs and $n_\ell$ outputs is the map 
$
\phi^{(\ell)} 
\colon \R^{n_{\ell-1}} \to \R^{n_\ell};\,
x \mapsto [\weights{\ell}x + \bias{\ell}]_+
$,
parametrized by weight matrix $\weights{\ell} \in \R^{n_\ell \times n_{\ell-1}}$ and bias vector $\bias{\ell}\in\R^{n_\ell}$. 
A feedforward \emph{ReLU network} with architecture
$\architecture=(n_0,\ldots,n_{L+1})$ 
is the composition
$f_\theta(x)
=
T^{(L+1)}
\circ
\phi^{(L)} 
\circ \cdots \circ
\phi^{(1)}
(x)$, 
where the final layer
$
T^{(L+1)}
(y)=\weights{L+1}y+\bias{L+1}
$
is affine. 
We write $d=n_0$ and $m=n_{L+1}$ for the input and output dimensions. 
The parameter 
$
\theta=(\weights{1},\bias{1},\dots,\weights{L+1},\bias{L+1})
$
lies in the \emph{parameter space} 
$
\paramspace{\architecture}
\cong
\bigoplus_{\ell=1}^{L+1}
\bigl(\R^{n_\ell\times n_{\ell-1}}\times\R^{n_\ell}\bigr)
$. 
The associated \emph{realization map} is
$
\mu_\architecture \colon \paramspace{\architecture} \to \funspace{\architecture};\,
\theta \mapsto f_\theta
$. 
For $\ell\in[L]$, define the \emph{preactivation} and \emph{activation} at layer $\ell$ by
$
\preactivation{\ell,\theta}(x)
\coloneqq
\weights{\ell}\activation{\ell-1,\theta}(x)+\bias{\ell}$ and 
$\activation{\ell,\theta}(x)
\coloneqq
[\preactivation{\ell,\theta}(x)]_+$, 
with $\activation{0,\theta}(x)=x$. 
For fixed $\theta$, the maps $f_\theta$, $\preactivation{\ell,\theta}$, and $\activation{\ell,\theta}$ are continuous and piecewise linear. For a parameter 
$
\theta \in \paramspace{\architecture},
$
we also use $\weights{\theta,\ell}$ and $\bias{\theta,\ell}$ to denote the weight matrix and bias vector of the $\ell$-th layer, but omit $\theta$ in the superscript when there is no risk of ambiguity. 

\paragraph{Fibers and symmetries.} 
The \emph{fiber} of a parameter $\theta\in\paramspace{\architecture}$ is 
$
\fiber{\theta}
\coloneqq
\{\eta\in\paramspace{\architecture}\mid f_\eta=f_\theta\}
$. 
The parameter space of ReLU networks carries a standard equivalence relation generated by: 
\begin{enumerate}[leftmargin=*]
    \item permutation of neurons within any hidden layer, obtained by permuting the rows of $[\weights{\ell},\bias{\ell}]$, and inversely permuting the columns of $\weights{\ell+1}$; 
    \item positive rescaling of a hidden neuron, obtained by multiplying the corresponding row of $[\weights{\ell},\bias{\ell}]$ by $\lambda>0$, and multiplying the corresponding column of $\weights{\ell+1}$ by $\lambda^{-1}$. 
\end{enumerate}
We write $\theta\sim\eta$ if $\eta$ is obtained from $\theta$ by a sequence of such transformations. 
A parameter $\theta$ is called \emph{identifiable} if
$f_\eta=f_\theta$ implies $\eta\sim\theta$. 

\paragraph{Polyhedral complexes and piecewise linear functions.}
A \emph{polyhedral complex} $\complex$ in $\R^d$ is a finite collection of polyhedra containing the empty set such that if $P\in\complex$, then every face of $P$ belongs to $\complex$, and if $P,Q\in\complex$ with $P\cap Q\neq\emptyset$, then $P\cap Q$ is a face of both. 
We write $\complex^k$ for the $k$-faces of $\complex$, and call $\complex^{d-1}$ the \emph{facets} and $\complex^{d}$ the \emph{regions}. 
For a hyperplane arrangement $\mathcal{H}$, we write $\complex_\mathcal{H}$ for the induced polyhedral complex. 
A function $f\colon\R^d\to\R^m$ is \emph{continuous  piecewise linear} (CPWL) if there exists a complete polyhedral complex $\complex$ such that $f|_P$ is affine for each $P\in\complex$; in this case, $f$ and $\complex$ are \emph{compatible}. 
A point $x\in\R^d$ is a \emph{breakpoint} of 
$f$ if no neighborhood of $x$ exists on which $f$ is affine. 
We denote the set of breakpoints by $\breakpoints{f}$. 
For further details, see \Cref{app:polyheadral_geometry}. 

\paragraph{Weighted complexes.} 
Let $f\colon\R^d\to\R^m$ be CPWL and compatible with a polyhedral complex $\complex$. 
For a facet $\sigma\in\complex^{d-1}$, let $P,Q\in\complex^d$ be the adjacent maximal cells with $P\cap Q=\sigma$, and suppose 
$
f(x)=A_Px+b_P  \text{ for } x\in P 
$ and $
f(x)=A_Qx+b_Q  \text{ for } x\in Q
$. 
If $e_{P/\sigma}$ denotes the unit normal to $\sigma$ pointing from $Q$ into $P$, define the \emph{weight} of $f$ along $\sigma$ by 
$
c_f(\sigma)\coloneqq (A_P-A_Q)e_{P/\sigma}\in\R^m
$. 
The map
$
c_f\colon \complex^{d-1}\to\R^m
$
records the \emph{gradient jump} of $f$ across facets. 
The following standard fact from polyhedral and tropical geometry will be used repeatedly. More details are provided in Appendix~\ref{app:polyheadral_geometry}. 

\begin{theorem}[\citealp{maclagan2015introduction}]
\label{lem:structure_theorem_tropical_geometry}
Let $f \colon \R^d \to \R^m$ be CPWL and compatible with a polyhedral complex $\complex$. 
Then the weight function $c_f \colon \complex^{d-1}\to\R^m$ uniquely determines $f$ up to 
a global affine linear function.
\end{theorem}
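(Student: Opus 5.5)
The plan is to reduce to the difference of two functions. Suppose $f,g\colon\R^d\to\R^m$ are CPWL, both compatible with $\complex$, and $c_f=c_g$ on $\complex^{d-1}$. If $f$ and $g$ are compatible with different complexes, I first pass to a common refinement; on a facet created by the refinement inside a region where the function is affine, the weight is zero, so equality of weights is preserved. By linearity of the definition of $c$, the function $h\coloneqq f-g$ is CPWL, compatible with $\complex$, and satisfies $c_h\equiv 0$. It then suffices to show that a CPWL function with identically vanishing weights is globally affine. Working coordinatewise in $\R^m$, I may take $m=1$.

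\textbf{Step 1 (local: adjacent regions share the affine piece).} Let $P,Q\in\complex^d$ be adjacent across a facet $\sigma=P\cap Q$, with $h=A_Px+b_P$ on $P$ and $h=A_Qx+b_Q$ on $Q$. Continuity of $h$ forces the two affine maps to agree on $\sigma$, which spans an affine hyperplane. Hence $A_P-A_Q$ vanishes on the direction space of $\sigma$, so $A_P-A_Q=\lambda\, e_{P/\sigma}^{\top}$ with $\lambda=(A_P-A_Q)e_{P/\sigma}=c_h(\sigma)=0$. Thus $A_P=A_Q$, and since the two maps agree at a point of $\sigma$, also $b_P=b_Q$.

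\textbf{Step 2 (global: connectivity of the dual graph).} Consider the graph whose vertices are the regions of $\complex$, with an edge between two regions when they share a facet. I claim this graph is connected, because $\complex$ is complete, i.e.\ covers $\R^d$. To prove it, take interior points $x\in P$ and $y\in P'$ and perturb the segment $[x,y]$ slightly, which is possible since the cells of codimension at least $2$ form a finite union of sets of codimension at least $2$. The perturbed segment then avoids all such cells and crosses regions only through relative interiors of facets. The sequence of regions it passes through is a path in the dual graph. Combined with Step 1, $h$ is given by a single affine map on all of $\R^d$, so $f=g+(\text{affine})$, which is the claim.

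The main obstacle I expect is the genericity argument in Step 2, that a generic segment meets only facets and regions. I would handle it by a dimension count: for each cell $\tau$ of dimension at most $d-2$, the set of endpoints $x'$ near $x$ for which $[x',y]$ meets $\tau$ lies in the cone over $\tau$ with apex $y$. That cone has dimension at most $d-1$, hence measure zero, and there are only finitely many such $\tau$. The remaining details are routine bookkeeping: the sign convention for $e_{P/\sigma}$, and the fact that the weight is well defined independently of which side is labeled $P$.
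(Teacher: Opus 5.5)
Your proof is correct, but it does not follow the paper, which gives no proof of this statement. The paper imports it from tropical geometry \citep{maclagan2015introduction}, as the uniqueness half of the correspondence in Lemma~\ref{lem:APP:structure_theorem_tropical_geometry} between CPWL functions modulo affine functions and balanced weighted complexes. You prove uniqueness directly: pass to $h=f-g$ with $c_h\equiv 0$, glue the affine pieces across each facet using continuity plus the vanishing jump, and propagate along the dual graph of regions. You get connectivity of that graph from a generic segment avoiding the finitely many cells of dimension at most $d-2$, and your cone-over-$\tau$ dimension count is the right justification.

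Your route has three advantages. It is elementary and self-contained, and works directly for real weights and $\R^m$-valued $f$, with no translation through rational complexes or tropical polynomials. It makes explicit that the balancing condition plays no role in uniqueness. And it transfers with no real change to any convex full-dimensional domain, because a segment between interior points stays in the interior and therefore only crosses facets lying in two regions. That covers $\orthant{d}$, where the paper actually invokes the statement (Section~\ref{sec:layerwise_fibers} and Lemma~\ref{lem:hyperplane_representation}), and where the $\R^d$ version does not literally apply.

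What the citation adds is the converse: every balanced weight function on $\complex$ is realized by some CPWL function. The appendix lemma records this, but the present statement does not need it. Your common-refinement preamble is also unnecessary, since the statement fixes a single complex $\complex$ for both functions.
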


\section{Polyhedral Geometry of ReLU Networks}
\label{sec:polyhedral_geometry_relu} 

In this section, we recall polyhedral structures naturally associated with ReLU networks that will be needed for the fiber analysis. 
Additional background and proofs are deferred to \Cref{app:relu_geometry}. 

\paragraph{Canonical polyhedral complex and bent hyperplanes.} 
For a ReLU network $f_\theta$ with architecture
$\architecture=(n_0,n_1,\dots,n_L,n_{L+1})
$, 
the \emph{canonical polyhedral complex} $\complex_\theta$ is constructed iteratively: 
starting from $\complex_{\theta,0}=\R^d$, one subdivides each cell of $\complex_{\theta,\ell-1}$ by the pulled back hyperplanes induced by the preactivations of layer $\ell$. 
The resulting complex $\complex_\theta=\complex_{\theta,L}$
is indexed by global activation patterns, and $f_\theta$ restricts to an affine-linear map on every maximal cell. 
For a neuron $(j,\ell)$, the preactivation $\preactivation{\ell,\theta}_j$ is affine linear on each cell $R\in\complex_{\theta,\ell-1}$. On cells where it is non-constant, its zero set is a codimension-one polyhedron and the union of these polyhedra is called the neuron's \emph{bent hyperplane}, denoted $B_{\ell,j}(\theta)$. 
We denote by
$
\hyperplanes{\ell}{\theta}
=
\{ \bhyperplane{\ell,j}(\theta) \mid j\in[n_\ell]\}
$ 
the set of bent hyperplanes of neurons in layer $\ell$. 
We note that not every facet of the canonical polyhedral complex is necessarily associated to a change in the linear behavior of the realized function. 
The \emph{breakpoint complex} is the subcomplex 
$\bcomplex{\theta}
\coloneqq
\{P\in \complex_\theta \mid P \subseteq \breakpoints{f_\theta}\}$, whose support is the breakpoint set: 
$|\bcomplex{\theta}|=\breakpoints{f_\theta}$. 
The canonical complex $\complex_\theta$ carries a weight function $c_\theta$ induced by $f_\theta$,
as in \Cref{lem:structure_theorem_tropical_geometry}. 
A facet of $\complex_\theta$ is visible in the
breakpoint complex precisely when the corresponding weight is nonzero: 
$\bcomplex{\theta}^{d-1}
=
\{\sigma\in \complex_\theta^{d-1} \mid c_\theta(\sigma)\neq 0\}$. 

\paragraph{Generic parameters.} 
Many of our results concern \emph{generic} parameters. At an informal level, genericity combines a
geometric and an algebraic requirement. Geometrically, the bent hyperplanes induced by the network
intersect in the expected codimensions, so that the canonical polyhedral complex has the expected
combinatorics. Algebraically, all 
masked products of weight matrices $W^{(\ell)}D_{S_{\ell-1}} W^{(\ell-1)}\cdots D_{S_k}W^{(k)}$ have maximal possible rank,
preventing degeneracies in the resulting linear parts, such as accidental alignments or
cancellations. The set of generic parameters is open and dense in $\paramspace{\architecture}$; we
denote it by $\gparamspace{\architecture} \subseteq \paramspace{\architecture}$. We refer to Appendix~\ref{app:generic-parameters} for a precise formulation.

\paragraph{Visibility of first-layer hyperplanes in three-layer networks.}
For three-layer networks, a first-layer hyperplane may or may not remain visible in the realized function,
depending on the activity of the second layer. 
The key mechanism is whether the hyperplane is \emph{anchored}, meaning that it is intersected by, or lies in
the active region of, a second-layer bent hyperplane.
The following result 
is illustrated in \Cref{fig:intersected_hyperplanes}
and proved in \Cref{sec:last_layer}. It will be central in the bottleneck analysis.

\begin{restatable}{theorem}{thmintersectedhyperplanesstay} 
\label{thm:intersected_hyperplanes_stay} 
Let $\architecture = (d,n_1,n_2,n_3)$ with $d,n_1>1$, and let $\theta,\eta\in\paramspace{\architecture}$
be generic parameters with $f_\theta=f_\eta$. 
Let $\hyperplanes{1}{\theta}$ denote the set of first-layer hyperplanes of $\theta$.
Then:
\begin{enumerate}[leftmargin=*]
    \item If $H\in \hyperplanes{1}{\theta}$ and there exists a neuron $j\in[n_2]$ together with
    a point $x\in H$ such that $\preactivation{2,\theta}_j(x)>0$, then $H\in \hyperplanes{1}{\eta}$.
    \item In particular, if this holds for every $H\in \hyperplanes{1}{\theta}$, then
    $
    \hyperplanes{1}{\theta}=\hyperplanes{1}{\eta}
$ and $
    \complex_\theta=\complex_\eta.
    $
\end{enumerate}
\end{restatable}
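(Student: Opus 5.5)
The plan is to show that a first-layer hyperplane $H$ of $\theta$ which is anchored by an active second-layer neuron leaves a trace in the breakpoint set $\breakpoints{f_\theta}$ that is a relatively open piece of an affine hyperplane. By genericity, such a piece can only be produced by a first-layer neuron of $\eta$. Since $f_\theta=f_\eta$, the breakpoint sets and the weight functions coincide (\Cref{lem:structure_theorem_tropical_geometry}), so any facet with nonzero weight for $\theta$ is also a facet with nonzero weight for $\eta$.

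\textbf{Step 1: a breakpoint on $H$.} Take $x\in H$ with $\preactivation{2,\theta}_j(x)>0$. By genericity and continuity, I would perturb $x$ within $H$ to a point in the relative interior of a facet of $\complex_\theta$ lying on $H$. The perturbed point should avoid all other bent hyperplanes, and $\preactivation{2,\theta}_j$ should stay positive there. Across $H$ at this point, only first-layer neuron $i$ (with $H=B_{1,i}$) switches. Hence the gradient jump of $f_\theta$ is
\[
W^{(3)} D_{S_2} W^{(2)}_{:,i}\,\bigl(W^{(1)}_{i,:}\bigr)^{\top}/\|W^{(1)}_{i,:}\|,
\]
up to sign, where $S_2\ni j$ is the local second-layer activation pattern. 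Genericity gives $W^{(3)}D_{S_2}W^{(2)}_{:,i}\neq 0$: the masked product has maximal rank, and the column $W^{(2)}_{:,i}$ restricted to a nonempty $S_2$ is nonzero. So the weight is nonzero, and an open piece $U\subseteq H$ lies in $\breakpoints{f_\theta}=\breakpoints{f_\eta}$.

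\textbf{Step 2: identify the piece in $\eta$.} The piece $U$ must be covered by facets of $\complex_\eta$ with nonzero weight, so some open subset $U'\subseteq U$ lies in a bent hyperplane $B_{\ell,k}(\eta)$. If $\ell=1$, that bent hyperplane is a full affine hyperplane containing an open subset of $H$, so it equals $H$ and we are done.

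Otherwise $\ell=2$, and $U'$ lies on $\{\preactivation{2,\eta}_k=0\}$ inside a region of $\complex_{\eta,1}$. Two facts pin this down:
\begin{itemize}
\item the gradient of $\preactivation{2,\eta}_k$ there is $W^{(2)}_{k,:}D_{S_1}W^{(1)}$;
\item the normal of the facet is $W^{(1)}_{i,:}$ of $\theta$, which is also the direction of the jump vector.
\end{itemize}
I would then exploit the fact that $d>1$ and that the bent hyperplane of a second-layer neuron of $\eta$ must bend (change direction) when it crosses first-layer hyperplanes of $\eta$. The goal is to follow $B_{2,k}(\eta)$ along $H$ and derive a contradiction with the breakpoint set being flat along $H$ near $x$. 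In fact, $\breakpoints{f_\theta}$ near $H$ is the union of $H$-pieces and the bent hyperplanes of $\theta$, and I would compare these locally. If $B_{2,k}(\eta)$ coincided with $H$ on a whole open set, then $H$ is visible for $\eta$ only through a second-layer neuron. I would argue via connectedness of $H$ and genericity that this forces $B_{2,k}(\eta)$ to be flat along an unbounded piece, which in turn forces it to equal a first-layer hyperplane of $\eta$ or to violate generic rank conditions. This step is the main obstacle. What I would try first is a dimension count: for generic $\eta$, a second-layer bent hyperplane coinciding on an open set with an affine hyperplane that is not a first-layer hyperplane of $\eta$ imposes a nontrivial algebraic condition on $\eta$. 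One then needs $\eta$ generic and $f_\eta=f_\theta$ to exclude this, possibly by symmetric use of $\theta$ and $\eta$.

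\textbf{Step 3: part 2.} Apply part 1 to every $H\in\hyperplanes{1}{\theta}$ to get $\hyperplanes{1}{\theta}\subseteq\hyperplanes{1}{\eta}$. Equality follows by a counting or symmetric argument. Each extra first-layer hyperplane of $\eta$ would be anchored too, since all of $\theta$'s are and the regions agree; alternatively, an unanchored extra hyperplane would contradict genericity or minimality of the setup. With $\complex_{\theta,1}=\complex_{\eta,1}$, the second-layer bent hyperplanes are recovered from the nonzero-weight facets not lying on first-layer hyperplanes. Thus $\complex_\theta=\complex_\eta$ as the common refinement.
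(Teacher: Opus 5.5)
\textbf{The gap is in Step 2.} Step 1 is fine: the weight $\|W^{(1)}_i\|\,W^{(3)}D_{S_2}W^{(2)}e_i$ is nonzero by the masked-rank condition with $S_1=\{i\}$. Counting also settles $\hyperplanes{1}{\theta}=\hyperplanes{1}{\eta}$ in Step 3. But Step 2 is the whole content of part 1, and you leave it open.

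Your dimension count cannot work. The event ``a facet of $B_{2,k}(\eta)$ lies in $H$'' is not an algebraic condition on $\eta$ alone. Every second-layer facet of $\eta$ lies in its own affine hull, and that hull is generically not a first-layer hyperplane of $\eta$. What would make the situation special is that this hull equals a hyperplane determined by $\theta$, and genericity of $\eta$ by itself says nothing about that.

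Your connectedness idea fails for a different reason: the visible part of $H$ in $f_\theta$ need not be all of $H$. If $\preactivation{2,\theta}_j$ changes sign on $H$ and the other second-layer neurons are inactive there, then $H$ is invisible wherever the second layer is dead. The flat piece you want to follow is then bounded. Locally, it looks exactly like a flat piece of a second-layer bent hyperplane of $\eta$ inside one first-layer region of $\eta$.

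\textbf{The missing idea is to examine where the visible part of $H$ ends.} Split according to whether $\preactivation{2,\theta}_j$ changes sign on $H$.
\begin{enumerate}
\item \emph{It changes sign.} Then $B_{2,j}(\theta)\cap H$ contains a ridge $\tau$ lying on bent hyperplanes of two different layers. Last-layer bent hyperplanes are always visible (\Cref{lem:lastbenthyperplanesstay}), so $\tau\in\bcomplex{\theta}$, and by honesty (\Cref{lem:generic_honest}) $\tau$ is bending. Generic parameters with $f_\theta=f_\eta$ have the same breakpoint complex (\Cref{prop:canonicalbcomplex}), so $\tau$ is also a bending ridge of $\bcomplex{\eta}$. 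Hence it lies on one first-layer and one second-layer bent hyperplane of $\eta$ (\Cref{lem:hyperplaneAimpliesnonbending}). \Cref{lem:bendingoptions} then identifies the first-layer facets at $\tau$ from the weighted complex alone: the opposite pair if four facets meet at $\tau$, and the unique facet not adjacent to a constant region if three do. Because this criterion depends only on $\bcomplex{\theta}=\bcomplex{\eta}$, it selects the same facet for both parameters. For $\theta$ that facet lies in $H$, so the first-layer hyperplane of $\eta$ through $\tau$ is $H$.
\item \emph{It does not change sign}, so $\preactivation{2,\theta}_j\ge 0$ on all of $H$. Only here does your global argument apply. Cancellation-freeness makes all of $H$ visible. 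A complete flat hyperplane cannot come from second-layer neurons of $\eta$, because for $d,n_1>1$ they bend where they cross first-layer hyperplanes.
\end{enumerate}

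\textbf{A smaller point in Step 3.} To conclude $\complex_\theta=\complex_\eta$ you need \Cref{lem:lastbenthyperplanesstay} explicitly. Recovering the second-layer bent hyperplanes from the nonzero-weight facets requires that none of their facets is invisible.
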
 

\begin{figure}
\centering 
\begin{subfigure}{0.6\textwidth}
\begin{tikzpicture}[
    scale = .6, 
    line join=round,
    >={Latex[length=2mm,width=2mm]},
    cyanline/.style={draw=cyan!75!blue, line width=1.5pt},
    orangeline/.style={draw=orange!95!black, line width=1.5pt},
    yellowline/.style={draw=yellow!70!green, line width=3pt},
]

\begin{scope}[xshift=0cm]
    \draw[cyanline] (0,1.5) -- (3.5,3.5) -- (3.5,6);
    \draw[cyanline,->] (1.75,2.5) -- (1.75+.5*0.5714 ,2.5-.5);

    \draw[yellowline] (3.5,3.5) -- (6,1.41366);
    \draw[orangeline] (0.5,6) -- (3.5,3.5) -- (6,1.41366);

    \draw[yellowline] (1,0) -- (6,2.857);
    \draw[orangeline] (1,0) -- (6,2.857);
  
    \draw[black, line width=1pt] (0,0) rectangle (6,6);
\end{scope}

\begin{scope}[xshift=7cm]
\draw[cyanline] (0,1.5) -- (3.5,3.5) -- (3.5,6);
\draw[cyanline,->] (1.75,2.5) -- (1.75-.5*0.5714 ,2.5+.5);

\draw[yellowline] (0.5,6) -- (3.5,3.5);
\draw[orangeline] (0.5,6) -- (3.5,3.5) -- (6,1.41366);

\draw[orangeline] (1,0) -- (6,2.857);

\draw[black, line width=1pt] (0,0) rectangle (6,6);
\end{scope}

\end{tikzpicture} 
\end{subfigure}
\hspace{0.2cm}
\begin{subfigure}{0.3\textwidth}
\begin{tikzpicture}[
    scale = .6, 
    line join=round,
    >={Latex[length=1.5mm,width=1.5mm]},
    axis/.style={draw=black,->, line width=.8pt},
    orangeline/.style={draw=orange!95!black, line width=1pt},
    purpleline/.style={draw=violet!70, line width=1pt},
    redline/.style={draw=red!90!black, line width=1pt},
    blueline/.style={draw=blue!80, line width=1pt},
]

\def\arrowlen{15} 
\newcommand{\perparrow}[3][]{%
	\draw[#1,->]
	let
	\p1 = (#2),
	\p2 = (#3),
	\n1 = {\x2-\x1},
	\n2 = {\y2-\y1},
	\n3 = {veclen(\n1,\n2)}
	in
	($(#2)!0.5!(#3)$)
	-- ++({-\n2/\n3*\arrowlen},{\n1/\n3*\arrowlen});
}
    
\draw[axis] (-3,0) -- (3.75,0) node[right] {$x_1$};
\draw[axis] (0,-2.75) -- (0,3.5) node[above] {$x_2$};

\coordinate (A) at (-3,0.2);
\coordinate (B) at (1.3,-2.75);
\draw[violet!70, dashed, line width=1.2pt] (A) -- (B);
\perparrow[violet!70, dashed]{B}{A};
    
\coordinate (A) at (-2.0,1.9);
\coordinate (B) at (0.8,-2.75);
\draw[purpleline] (A) -- (B);    
\perparrow[purpleline]{A}{B}; 

\coordinate (Ared) at (0.0,0.7);
\coordinate (Bred) at (3.2,3.5);
\draw[redline, name path=redline] (Ared) -- (Bred); 

\coordinate (Ablue1) at (0.0,1.7);
\coordinate (Bblue1) at (3.7,0.6);
\draw[blueline, name path=blueline1] (Ablue1) -- (Bblue1);    

\coordinate (Ablue2) at (1.7,3.5);
\coordinate (Bblue2) at (3.05,0.0);
\draw[blueline, name path=blueline2] (Ablue2) -- (Bblue2);    

\path[name intersections={of=redline and blueline1, by= I1}]; 
\path[name intersections={of=redline and blueline2, by= I2}]; 
\path[name intersections={of=blueline1 and blueline2, by= I3}]; 
\coordinate (C) at ($(I1)!1/3!(I2)!1/3!(I3)$); 
\coordinate (I1') at ($(I1)!0.075!(C)$);
\coordinate (I2') at ($(I2)!0.075!(C)$);
\coordinate (I3') at ($(I3)!0.075!(C)$);

\fill[yellow, opacity=0.4]
(I1') -- (I2') -- (I3') -- cycle;

\perparrow[redline]{Bred}{Ared}; 
\perparrow[blueline]{Bblue1}{Ablue1}; 
\perparrow[blueline]{Ablue2}{Bblue2};

\perparrow[blueline, opacity=.2]{Ared}{Bred}; 
\coordinate (C) at (-2.0,1.5); 
\draw[purpleline, {Latex[length=1.3mm]}-{Latex[length=1.3mm]}, opacity=.3]
  ($(C)+(140:0.6)$) arc[start angle=160, end angle=80, radius=0.6];

\end{tikzpicture}
\end{subfigure}

\caption{\textbf{Left:} Illustration of \Cref{thm:intersected_hyperplanes_stay}. 
    The orange hyperplanes correspond to first-layer neurons, and the cyan bent hyperplane to a second-layer neuron. 
    The portions of the first-layer hyperplanes that intersect the active region of the second-layer neuron are highlighted in green. \textbf{Right:}
 Illustration of sign-flipping mechanism. 
    The region $R$ is shown in yellow. Hyperplanes of nonlinear neurons that are active on $R$ are shown in red, and those that are inactive in blue. A dead neuron is shown as a dashed,  and a linear neuron as a solid purple plane. 
    If the orientation of nonlinear neurons is reversed, this must be compensated by a corresponding change in the linear neurons, as described in (\ref{eq:generic_compensation}). 
    }
    \label{fig:intersected_hyperplanes}
    \label{fig:sign_flip}
\end{figure}

This theorem shows that first-layer hyperplanes that are functionally visible through the second layer
are rigid along the fiber. The only possible source of nontrivial fiber arising from the first layer in the three-layer
case is therefore the presence of \emph{hidden hyperplanes}, which we analyze in
Section~\ref{sec:fibers_from_concatenation}.

\section{Layerwise Fibers}
\label{sec:layerwise_fibers}

In this section, we provide a complete description of the fiber for a one-hidden-layer ReLU network $f_\theta \colon \R_{\ge 0}^d \to \R^m$. 
This is essential for understanding deep networks, as the output of any ReLU layer (except the final one) serves as a non-negative input to the subsequent layer. 
We first establish a \emph{hyperplane  representation} of $f_\theta$. 
We then describe the fiber as a union of semi-algebraic sets formed by intersections of algebraic varieties and polyhedral cones. 
Finally, we compute the dimension of the generic fiber components. 
All proofs belonging to this section can be found in \Cref{app:layerwise_fibers}. 

\subsection{Hyperplane Representation}
Let $X \subseteq \R^d$ be a full-dimensional polyhedron; in this paper we will only consider the cases
$X=\orthant{d}$ or $X=\R^d$.
A one-hidden-layer ReLU network $f_\theta \colon X \to \R^m$ is a CPWL function whose breakpoints are supported on a hyperplane arrangement
$
\mathcal{H}=\{H_1,\dots,H_k\}.
$
By \Cref{prop:tropical_weight_1layer} in the appendix, the weight function $c_\theta$ is constant along each breakpoint hyperplane: if
$\sigma,\sigma' \in \complex_\theta^{d-1}$ are facets contained in the same hyperplane $H_i$, then
$
c_\theta(\sigma)=c_\theta(\sigma').
$
Thus each breakpoint hyperplane $H_i$ carries a well-defined vector weight
$
c(i)\coloneqq c_\theta(\sigma)
$ for any facet $\sigma\subseteq H_i.
$

By \Cref{lem:structure_theorem_tropical_geometry}, the 
weighted hyperplane arrangement $(\mathcal{H},c)$ determines $f_\theta$ up to the addition of a global affine-linear function.
This remaining ambiguity is removed by fixing the affine map on a full-dimensional region $R$ of the arrangement.
This motivates the following definition. 

\begin{definition}
\label{def:hyperplane-representation}
A tuple $(\mathcal{H},A_R,b_R,c)$ is called a \emph{hyperplane representation} of
$f_\theta \colon X \to \R^m$ if:
\begin{enumerate}[leftmargin=*]
    \item $\mathcal{H}=\{H_1,\dots,H_k\}$ is the arrangement covering the breakpoints of $f_\theta$ in $X$;
    \item $R \in \complex_{\mathcal H}\cap X$ is a full-dimensional region and
    $
    f_\theta(x)=A_Rx+b_R$ for all $ x\in R;
    $
    \item $c\colon [k]\to \R^m$ satisfies
    $
    c(i)=c_\theta(\sigma)
   $ for every facet  $\sigma \in \complex_\theta^{d-1} \text{ with } \sigma\subseteq H_i.
    $
\end{enumerate}
\end{definition}

The following lemma shows that hyperplane representations completely characterize one-hidden-layer functions on the orthant.

\begin{restatable}{lemma}{lemhyperplanerepresentation}
\label{lem:hyperplane_representation}
    For every one-hidden-layer network $f_\theta \colon \orthant{d} \to \R^m$, there exists a hyperplane representation $(\mathcal{H}, A_R, b_R, c)$. 
    Moreover, this representation fully characterizes the function: 
    for any parameter $\eta$, we have $f_\eta = f_\theta$ if and only if $(\mathcal{H}, A_R, b_R, c)$ is also a hyperplane representation of $f_\eta$. 
\end{restatable}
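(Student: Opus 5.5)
We will prove existence by constructing the representation directly from the parameters, and then obtain the characterization from \Cref{lem:structure_theorem_tropical_geometry} applied on the polyhedron $\orthant{d}$.

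\emph{Existence.} Write $f_\theta(x)=\sum_{j=1}^{n_1} w_j[\inner{a_j,x}+b_j]_+ + \bias{2}$ with $w_j$ the columns of $\weights{2}$. Call a neuron \emph{nonlinear on the orthant} if its hyperplane $\{\inner{a_j,x}+b_j=0\}$ meets the interior of $\orthant{d}$. The remaining neurons are either identically zero or affine on $\orthant{d}$, so they only change the global affine part. Group the nonlinear neurons by the hyperplane they define: neurons $j,j'$ define the same hyperplane iff $(a_j,b_j)=\lambda(a_{j'},b_{j'})$ with $\lambda\neq 0$. Crossing that hyperplane, each such neuron contributes a gradient jump $\pm w_j\|a_j\|\,\cdot\,$(sign depending on orientation). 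Summing over the group gives a vector independent of the facet chosen, which is the content of \Cref{prop:tropical_weight_1layer}. Let $\mathcal H$ consist of those hyperplanes whose total weight is nonzero, and let $c(i)$ be that total weight. Hyperplanes with zero total weight are not breakpoints: on both sides the function differs by an affine function, and we can absorb them. Then pick any full-dimensional region $R$ of $\complex_{\mathcal H}\cap\orthant{d}$ and read off $A_R,b_R$. Since the breakpoint set is contained in $\bigcup\mathcal H$, all three conditions of \Cref{def:hyperplane-representation} hold. One subtlety is that \Cref{def:hyperplane-representation} asks $\mathcal H$ to cover the breakpoints. We will take $\mathcal H$ to be exactly the hyperplanes with nonzero weight meeting the interior. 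This makes the representation canonical, which matters for the converse.

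\emph{Characterization.} If $f_\eta=f_\theta$ on $\orthant{d}$, then $\eta$ has the same breakpoints, the same weights (these are defined from $f$ alone), and the same affine piece on $R$. Hence the tuple is also a representation of $f_\eta$. Conversely, suppose $(\mathcal H,A_R,b_R,c)$ represents both. Then $f_\theta$ and $f_\eta$ are both compatible with $\complex_{\mathcal H}\cap\orthant{d}$ and have identical weight functions on its facets, because each facet lies in some $H_i$ and carries weight $c(i)$. Facets not in any $H_i$ do not exist in this complex. By \Cref{lem:structure_theorem_tropical_geometry}, $f_\theta-f_\eta$ is a global affine function on $\orthant{d}$. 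That affine function vanishes on the full-dimensional region $R$, so it is zero.

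\emph{Main obstacle.} The delicate step is applying \Cref{lem:structure_theorem_tropical_geometry}, which is stated for complexes covering $\R^d$, to the complex restricted to $\orthant{d}$. We handle this directly rather than extending the functions. The orthant is convex, so its interior is connected. Any two regions can be joined by a generic path through interior facets only, and a generic path avoids all codimension-$2$ faces. Along such a path, the gradient changes exactly by the recorded weights. Starting from $R$, this determines the affine piece on every region, so the two functions agree on all of $\orthant{d}$, closed boundary included, by continuity. This path argument is precisely what underlies \Cref{lem:structure_theorem_tropical_geometry}, so we will invoke it in this restricted form.
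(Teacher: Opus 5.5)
Your proof is correct and follows the same route as the paper: you build $(\mathcal H,A_R,b_R,c)$ directly from the neurons using \Cref{prop:tropical_weight_1layer}, then use \Cref{lem:structure_theorem_tropical_geometry} together with agreement on the full-dimensional region $R$ to remove the affine ambiguity. Your path-crossing argument on the convex interior of $\orthant{d}$ and your choice to keep only nonzero-weight hyperplanes (the paper also admits hyperplanes with $c(i)=0$) refine points the paper passes over rather than changing the approach; one small slip is that a single neuron contributes the jump $w_j\|a_j\|$ regardless of its orientation, since $c_f(\sigma)$ is symmetric in $P$ and $Q$, so the ``$\pm$'' should be dropped.
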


\subsection{Explicit Description of the Fibers} 
\label{sec:explicit_description_layerwise_generic}

By \Cref{lem:hyperplane_representation}, a one-hidden-layer network $f_\theta$ is completely determined by its hyperplane representation $(\mathcal{H},A_R,b_R,c)$. Hence, to describe the fiber $\fiber{\theta}$, it suffices to characterize all parameters that realize this same hyperplane representation. In full generality, this leads to the semi-algebraic description given in Appendix~\ref{app:general-fiber}. Here, we restrict attention to generic and minimal parameters, where that description simplifies considerably.

We call a hidden neuron $i$ \emph{linear (on $\orthant{d}$)} if its preactivation is nonnegative on $\orthant{d}$, equivalently, in the one-hidden-layer case, if $\weights{1}_i \ge 0$ and $\bias{1}_i \ge 0$.
Let $(\mathcal{H},A_R,b_R,c)$ be the hyperplane representation of $f_\theta$, and write
$\mathcal{H}=\{H_1,\dots,H_k\}$. 
For each $i\in[k]$, choose $a_i\in\R^d$ and $t_i\in\R$ such that $\|a_i\|=1$ and
$H_i=\{x\in\R^d \mid \inner{a_i,x}+t_i=0\}$. 
We also fix a reference region $R$ of the arrangement and choose the signs of the 
$a_i$ so that
$R=\{x\in X \mid \inner{a_i,x}+t_i\le 0 \text{ for all } i\in[k]\}$. 

For a generic parameter, the hyperplane arrangement is generic. This implies that every nonlinear neuron corresponds to a unique hyperplane in $\mathcal H$ and that no cancellations occur. If the parameter is moreover minimal, then there are no dead neurons. Hence the hidden neurons split into exactly two classes: the $k$ nonlinear neurons corresponding to the breakpoint hyperplanes, and the remaining neurons, which are linear on $\orthant{d}$ We denote the latter index set by
$
J\coloneqq [n]\setminus [k]
$. 

Thus, in the generic minimal case, the fiber is governed by two types of choices. First, each nonlinear neuron may realize its associated hyperplane with either orientation, giving rise to a discrete family. Second, once these orientations are fixed, the resulting affine-linear discrepancy on the reference region $R$ must be realized by the linear neurons, producing a continuous family of factorizations.

To make this explicit, let $o\in\{1,-1\}^k$ be an orientation choice for the nonlinear neurons, and define
$S \coloneqq \{i\in[k]\mid o(i)=-1\}$, 
so that $S$ is the set of nonlinear neurons active on the reference region $R$. We write $S_\theta$ for the set of nonlinear neurons that are active on $R$ for the original parameter $\theta$. In the generic case, the gradient jump across $H_i$ is induced by a single neuron, so after normalizing so that $\|\weights{1}_i\|_2=1$, we have
$c(i)=\weights{2}_{:,i}$. 
If one changes the orientation of the nonlinear neurons from $S_\theta$ to $S$, then the linear neurons must compensate for the resulting change in the affine map on $R$. This is illustrated in \Cref{fig:sign_flip} (right panel). The required contribution from the linear neurons is given by 
\begin{equation}
\label{eq:generic_compensation}
    A_S \coloneqq A_R + \sum_{i \in S} c(i) a_i^\top,
    \qquad
    b_S \coloneqq b_R + \sum_{i \in S} c(i) t_i.
\end{equation}
since $\sum_{i \in S} -c(i)a_i^\top$ is the linear contribution of the nonlinear neurons that are active on $R$.
The semi-algebraic families introduced below naturally describe subsets of the full fiber $\fiber{\theta}$. However, they need not consist entirely of generic parameters in the sense of \Cref{def:generic_parameter}. Accordingly, the generic fiber
$
\gfiber{\theta}=\fiber{\theta}\cap \gparamspace{\architecture}
$
is obtained by intersecting these families with $\gparamspace{\architecture}$. 
\begin{restatable}{prop}{propgenericlayerwisefiber}
\label{prop:generic_layerwise_fiber}
Let $\theta$ be a generic and minimal parameter. 
Then the generic fiber is given by the disjoint union
$
\gfiber{\theta} \modulo{\sim} \ 
=
\bigcup_{S \subseteq [k]} \left(K_S \cap \gparamspace{\architecture}\right) \modulo{\sim},
$
where the union is taken over orientations $S \subseteq [k]$, and $K_S = \{p_S\} \times V_{S}$ is the semi-algebraic set defined as follows:
\begin{enumerate}[leftmargin=*] 
    \item $p_S$: 
    The parameters of the $k$ nonlinear neurons are fixed as follows. Let $o \in \{1,-1\}^k$ be the orientation determined by
    $
    S = \{i \in [k] \mid o(i)=-1\}.
    $
    Then
    \begin{equation}
        (\weights{1}_i, \bias{1}_i) = o(i)(a_i, t_i), \quad \weights{2}_{:,i} = c(i) \quad \forall i \in [k]. 
    \end{equation} 

    \item $V_{S}$: 
    The parameters of the linear neurons indexed by $J=[n]\setminus[k]$ and the output bias vector $\bias{2}$ are subject to the following semi-algebraic constraints, where $A_S$ and $b_S$ are defined in \eqref{eq:generic_compensation}:
    \begin{itemize}
        \item \emph{normalization and positivity}: $\|\weights{1}_i\|_2 = 1$ and $(\weights{1}_i, \bias{1}_i) \ge 0$ for all $i \in J$; 
        \item \emph{linear factorization}: $\weights{2}_{:,J} \weights{1}_J = A_S$; 
        \item \emph{bias alignment}: $\weights{2}_{:,J} \bias{1}_J + \bias{2} = b_S$. 
    \end{itemize}
\end{enumerate}
\end{restatable}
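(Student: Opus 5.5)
We prove both inclusions via \Cref{lem:hyperplane_representation}: a parameter $\eta$ lies in $\fiber{\theta}$ if and only if $(\mathcal{H},A_R,b_R,c)$ is a hyperplane representation of $f_\eta$ on $\orthant{d}$. Throughout we work modulo $\sim$ and use the positive rescaling freedom to normalize every hidden neuron with $\weights{1}_i\neq 0$ so that $\|\weights{1}_i\|_2=1$; this is what makes $p_S$ a single point rather than an orbit. Permutations are used to index the nonlinear neurons by $[k]$ so that neuron $i$ corresponds to $H_i$.

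\emph{Inclusion ``$\supseteq$''.} Take $\eta\in K_S\cap\gparamspace{\architecture}$ and compute its hyperplane representation. The neurons in $J$ have nonnegative weights and bias, so they are linear on $\orthant{d}$ and contribute no breakpoints. They contribute the affine map $x\mapsto \weights{2}_{:,J}(\weights{1}_Jx+\bias{1}_J)$. Each neuron $i\in[k]$ has zero set $H_i$ and gradient jump $\weights{2}_{:,i}=c(i)$ across it (unit normal, either orientation gives the same jump with the convention $e_{P/\sigma}$). Hence the breakpoint arrangement and weights agree with $(\mathcal H,c)$; genericity of $\eta$ ensures the hyperplanes meet $\orthant{d}$ in facets exactly as for $\theta$.

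On $R$ the active nonlinear neurons are exactly those in $S$. Each contributes $-c(i)(a_i^\top x+t_i)$, since $o(i)=-1$ and $\inner{a_i,x}+t_i\le 0$ on $R$. The linear part of $f_\eta$ on $R$ is therefore
\[
A_S-\sum_{i\in S}c(i)a_i^\top=A_R ,
\]
and similarly the bias equals $b_R$, by the factorization and bias-alignment constraints together with \eqref{eq:generic_compensation}. So $f_\eta=f_\theta$.

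\emph{Inclusion ``$\subseteq$''.} Let $\eta\in\gfiber{\theta}$. By genericity of $\eta$, every neuron is either nonlinear with a distinct hyperplane and no cancellation, linear, or dead. Since $f_\eta=f_\theta$, the breakpoint hyperplanes of $\eta$ are exactly $\mathcal H$, each induced by a single neuron. The main point is to count: $\eta$ has $n$ hidden neurons, exactly $k$ of them nonlinear. We need that the remaining ones are linear and not dead.

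Here is the step I expect to be the obstacle. I would use that $\theta$ is minimal and that the fiber is modulo $\sim$ within the same architecture. A dead neuron of $\eta$ is a genuine degeneracy: its outgoing weights do not affect the function. This contradicts genericity of $\eta$ in the sense of \Cref{def:generic_parameter}, because the masked products would drop rank, or because a dead neuron makes the parameter non-generic (a neighborhood changes nothing). If the definition does not directly exclude dead neurons, I would instead interpret minimality as a property preserved along the fiber and argue via the factorization rank: $A_S$ for $\theta$ requires all $|J|$ linear neurons generically. Once this is settled, after permutation and normalization each nonlinear neuron $i$ has $(\weights{1}_i,\bias{1}_i)=o(i)(a_i,t_i)$ for some sign $o(i)$. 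Matching gradient jumps gives $\weights{2}_{:,i}=c(i)$, i.e.\ the point $p_S$ with $S=\{i:o(i)=-1\}$.

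It remains to pin down the linear neurons. Linearity on $\orthant{d}$ means the preactivation is nonnegative there, which is equivalent to $(\weights{1}_i,\bias{1}_i)\ge0$. Reading off the affine map on $R$ and reversing the computation from the first inclusion yields the factorization and bias-alignment equations, so $\eta\in K_S$. Finally, disjointness of the union holds because $S$ is determined by $\eta$ as the set of nonlinear neurons active on $R$, which is invariant under $\sim$.
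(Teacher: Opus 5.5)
Your proof follows the same route as the paper's. The nonlinear neurons of any generic $\eta$ in the fiber are pinned to the hyperplanes $H_i$ up to orientation, their output weights are forced to be $c(i)$, and the linear neurons must absorb the compensation $(A_S,b_S)$ on $R$. Your checks of both inclusions and of disjointness are fine.

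The step you flagged is not settled by the argument you lead with. Genericity cannot exclude dead neurons of $\eta$. Being dead on $\orthant{d}$ means $(\weights{1}_i,\bias{1}_i)\le 0$, and this condition contains a nonempty open set (all entries strictly negative), which must meet the open dense set $\gparamspace{\architecture}$. The masked products $\weights{2}D_S\weights{1}$ live on all of $\R^d$ and do not see the sign of a row. The paper itself treats generic parameters with dead neurons (condition 1 in \Cref{thm:charaterization_identifiablity_bottleneck}).

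Your rank-based fallback only works in part. When $n-k\le\min\{d,m\}$, every orientation $S'$ has $\rank(A_{S'})\ge n-k$, so $n-k-1$ linear neurons cannot realize the compensation. In the regime $n-k=m+1$, however, $\rank(A_{S'})=m=n-k-1$, and rank alone does not rule out a realization with one fewer linear neuron.

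The justification the paper relies on (its one-line ``by minimality, there are no dead neurons'') is minimality read as a property of the realized function. This is how it is used in the proof of \Cref{lem:minimality_bounds_linear_neurons}: $f_\theta$ is not realized by any strict subarchitecture. A dead neuron of $\eta$ contributes nothing on $\orthant{d}$, so deleting it gives a width-$(n-1)$ network computing $f_\eta=f_\theta$, a contradiction. With this replacement for the dead-neuron step, your proof is complete.
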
 

In \Cref{subsec:dim_generic_fibres}, we establish the dimension of the semi-algebraic sets $V_S$ arising in the generic case based on the concept of the \emph{nonnegative column rank}.

\begin{restatable}{theorem}{thmlayerwisefibers}
\label{thm:layerwise-fibers}
Let $\theta$ be generic and minimal. 
For any given orientation $S\subseteq[k]$, consider the corresponding 
semi-algebraic component $V_S$ of the generic fiber defined in \Cref{prop:generic_layerwise_fiber}. 
Then: 
\begin{enumerate}[leftmargin=*]
    \item If $n-k < \min\{d,m\}$, then \(\dim(V_S) =(n-k)^2\)  if and only if $S = S_\theta$, and $V_S = \emptyset$ otherwise.

    \item If $n-k = d \leq m$, then \(\dim(V_S) =(n-k)^2\) for all $S \subseteq [k]$.  

    \item If $n-k = m < d$, then
 \(\dim(V_S) =(n-k)^2\) if and only if $\operatorname{cone}(A_S) \subseteq \col(A_S)$ is pointed, and $V_S = \emptyset$ otherwise, where $A_S$ is the compensation defined in \eqref{eq:generic_compensation}. 

    \item If $n-k = m+1 \leq d$, then \(\dim(V_S) = m^2 + m +d \) for all $S \subseteq [k]$. 
    \end{enumerate} 
\end{restatable}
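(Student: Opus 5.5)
Throughout, write $r\coloneqq n-k=|J|$, $H\coloneqq \weights{1}_J\in\R^{r\times d}$, $U\coloneqq \weights{2}_{:,J}\in\R^{m\times r}$ and $\beta\coloneqq\bias{1}_J\in\R^r$. The plan is to first remove the bias variables and then reduce every case to counting nonnegative factorizations $UH=A_S$. In $V_S$, the bias alignment condition determines $\bias{2}=b_S-U\beta$ uniquely. Nonnegativity of $(\weights{1}_i,\bias{1}_i)$ splits into $H\ge 0$ and $\beta\ge 0$, and $\beta$ is otherwise unconstrained. Hence $V_S$ is, up to this graph structure, the product of $\R_{\ge 0}^r$ with
\[
F_S\coloneqq\{(U,H)\mid UH=A_S,\ H\ge 0,\ \|H_i\|=1\},
\]
so that $\dim V_S=\dim F_S+r$. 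I will compute $\dim F_S$ as the dimension of unnormalized factorizations minus $r$, since row normalization is a free $\R_{>0}^r$-action. In every case I will show that the unnormalized factorizations contain a nonempty open subset of a smooth variety, on which $H>0$ entrywise. This guarantees that intersecting with $\gparamspace{\architecture}$ does not drop the dimension.

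\textbf{Case 1 ($r<\min\{d,m\}$).} For $\theta$ itself, genericity gives $\rank A_{S_\theta}=\rank(UH)=r$. If $S\neq S_\theta$, then by \eqref{eq:generic_compensation}
\[
A_S=A_{S_\theta}\pm\sum_{i\in S\triangle S_\theta} c(i)a_i^\top .
\]
The $c(i)=\weights{2}_{:,i}$ and $a_i$ are generic and independent of $U,H$, so this perturbation has rank $\min\{|S\triangle S_\theta|,d,m\}\ge 1$ in general position. I will formalize this through the maximal-rank condition on masked weight products in the definition of genericity, and conclude $\rank A_S>r$. Then no factorization through $\R^r$ exists, so $V_S=\emptyset$. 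This is the step I expect to be the main obstacle, because the genericity assumptions must be matched carefully to exactly this rank statement.

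For $S=S_\theta$, any factorization has $\operatorname{row}(H)=\operatorname{row}(A)$ and $U$ of full column rank. Hence $(U,H)=(U_0G^{-1},GH_0)$ with $G\in\GL_r$, and nonnegativity is an open condition near $G=I$ because generic $H_0$ is strictly positive. This gives $r^2-r+r=r^2$.

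\textbf{Case 2 ($r=d\le m$).} Any invertible nonnegative $H\in\R^{d\times d}$ works, with $U=A_SH^{-1}$. If $H$ is singular, $UH$ has rank less than $d=\rank A_S$, so such $H$ cannot occur. The invertible nonnegative matrices form an open set of dimension $d^2$, which gives $d^2-d+d=r^2$ for every $S$.

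\textbf{Case 3 ($r=m<d$).} Since $\rank A_S=m$, the matrix $U$ is invertible and $H=U^{-1}A_S$. Thus we need $G=U^{-1}$ whose rows $g_i$ satisfy $g_i^\top A_S\ge0$, i.e.\ each $g_i$ lies in the dual cone of $\cone(A_S)\subseteq\col(A_S)=\R^m$. If this cone is pointed, its dual cone is full-dimensional. Choosing the $g_i$ in its interior gives an open set of invertible $G$, hence dimension $m^2-m+m$. If the cone is not pointed, the dual cone lies in a proper subspace, so no invertible $G$ exists and $V_S=\emptyset$.

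\textbf{Case 4 ($r=m+1\le d$).} Here $\rank A_S=m$. Since $d>m$, I will use factorizations with $\rank H=m+1$ and $\operatorname{row}(H)=W\supseteq\operatorname{row}(A_S)$. The space $W$ ranges over $(m+1)$-planes containing a fixed $m$-plane, a family of dimension $d-m-1$. Given $W$, the matrix $H$ ranges over bases of $W$, contributing dimension $(m+1)^2$, and then $U$ is unique. Nonemptiness with $H>0$ comes from taking rows $g_i^\top A_S+\lambda_i p$ with $p>0$ and $\lambda_i$ large, where $[G\mid\lambda]$ is invertible. Factorizations with $\rank H=m$ form a lower-dimensional set. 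This gives
\[
(m+1)^2+(d-m-1)-(m+1)+(m+1)=m^2+m+d .
\]
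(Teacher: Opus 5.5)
Your proposal is correct. In Cases 1 and 2 it follows the paper's argument. The step you flag as the main obstacle is a one-line computation there: substituting $c(i)a_i^\top=o_\theta(i)\weights{2,\theta}_{:,i}\weights{1,\theta}_i$ makes every sign positive and gives exactly $A_S=\weights{2}D_{J\cup(S\triangle S_\theta)}\weights{1}$. This is a masked product, so $\rank A_S=\min\{d,m,|J\cup(S\triangle S_\theta)|\}$, and the dimension then comes from the same $\GL$-orbit count.

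The routes differ in Cases 3 and 4.

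\emph{Case 3.} The paper goes through the nonnegative column rank: $V_S\neq\emptyset$ iff $\nncr{A_S}\le m$, iff the columns lie in a simplicial cone, iff (by polarity) $\cone(A_S)$ is pointed. It then imports the dimension from its general $\GL^+$-orbit proposition. You instead note that $U$ must be invertible and that the rows of $U^{-1}$ must lie in the dual cone. This settles emptiness, nonemptiness and dimension at once, and the non-pointed direction becomes immediate: all rows are orthogonal to a lineality vector, so $U^{-1}$ is singular.

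\emph{Case 4.} The paper fibers over $U$, which automatically has rank $m$, so the solutions $V$ form a $d$-dimensional affine space for each $U$. It gets nonnegativity on the open set of $U$ whose kernel is spanned by a positive vector, by adding a large multiple of that vector to every column. You fiber over the row space of $H$ instead. This forces you to discard the stratum $\rank H=m$ separately; it has dimension $m^2+2m<m^2+m+d$, so your claim holds, whereas the paper's fibration needs no stratification. Your positive point $H=GA_S+\lambda p^\top$ is the same device in dual form, since the resulting $U$ satisfies $U\lambda=0$.

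Two small fixes. In Case 4, choose $p\notin\operatorname{row}(A_S)$ so that this point actually lies in the top stratum. In Case 1, the formal genericity definition does not by itself force $\weights{1}_J$ to be entrywise positive. It is therefore safer to use invertible $G$ with positive entries, as the paper's $\GL^+_{n-k}$ does, than to perturb around $G=I$.
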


\section{Symmetries from Layer Composition}
\label{sec:fibers_from_concatenation} 

We now analyze how interactions between layers give rise to symmetries that do not appear at the level of individual layers, particularly when certain neurons hide the geometric features of preceding layers. 
Throughout this section, let $\architecture = (d,n_1,n_2,m)$. 
These symmetries are complete for generic weights in the bottleneck case $(n_1 \leq d)$ as we show in \Cref{sec:bottleneck}.

This subsection formalizes the symmetries that arise when neurons in one layer hide the geometric features of a preceding layer. 
We show that this induces a coupled symmetry: 
a translation of a first-layer hyperplane 
can be exactly compensated by a corresponding shift in the second-layer biases.

\begin{definition}
Let $\theta \in \paramspace{\architecture}$. 
A neuron $i \in [n_2]$ \emph{hides} 
hyperplane $j\in[n_1]$ if $\weights{2}_{i,j} >0$, $\weights{2}_{i,k} \leq 0$ for all $k \neq j$, and $\bias{2}_i \leq 0$. 
The parameter $\theta$ \emph{hides} hyperplane $j$ if every neuron $i \in [n_2]$ does. 
\end{definition}

\paragraph{The Continuous Symmetry.}  

If the second layer hides hyperplane $j$, then translating the first-layer neuron $j$ by shifting $\bias{1}_j$, induces a change in $\activation{1}_j$ that can be 
compensated by a corresponding shift in the second-layer biases $\bias{2}$. 
This is because, under the hiding condition, second-layer neurons depend only on movement of the $j$-th hyperplane in a regime where $\activation{1}_j$ is linear, while negative weights and biases keep them inactive elsewhere. 
 See \Cref{fig:hidden_hyperplane} for an illustration. 
 For a parameter $\theta = (\weights{\ell},\bias{\ell})_{\ell \in [3]} \in \gparamspace{\architecture}$ 
 that hides hyperplane $j$, we define the set of translated parameters: 
 \begin{equation}
 \label{eq:translated_hyperplane}
     T_j(\theta) = \left\{
(\weights{1},
\bias{1}+t e_j,
\weights{2},
\bias{2}-t
{\weights{2}_{:j}}^\top
,
\weights{3},\bias{3})
\ \middle|\
t \geq \max_{i \in [n_2]} \frac{\bias{2}_i}{\weights{2}_{i j}}
\right\}. 
 \end{equation}

\begin{restatable}{lemma}{lemtranslatinghiddenhyperplaneinclusion}

\label{lem:translating_hidden_hyperplane_inclusion}
Let $\theta = (\weights{\ell},\bias{\ell})_{\ell \in [3]} \in \gparamspace{\architecture}$ be hiding hyperplane $j$. 
Then \(
T_j(\theta) \subseteq \fiber{\theta} \). 
\end{restatable}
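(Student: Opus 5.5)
The plan is to show directly that the second-layer activations $\activation{2}$ are unchanged pointwise on all of $\R^d$ for every $\eta\in T_j(\theta)$. Since $\weights{3},\bias{3}$ are untouched, this gives $f_\eta=f_\theta$. Genericity of $\theta$ should not be needed; only the hiding condition and the lower bound on $t$ enter.

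Fix $x\in\R^d$ and $t\ge \max_i \bias{2}_i/\weights{2}_{ij}$, and let $\eta$ be the corresponding element of $T_j(\theta)$. Write $z\coloneqq \preactivation{1,\theta}_j(x)$. Then the first-layer activations of $\eta$ and $\theta$ agree in every coordinate $k\ne j$, and in coordinate $j$ they are $[z+t]_+$ and $[z]_+$ respectively. For a second-layer neuron $i$, set $s\coloneqq\sum_{k\neq j}\weights{2}_{ik}\activation{1,\theta}_k(x)$, and abbreviate $w\coloneqq \weights{2}_{ij}>0$ and $b\coloneqq\bias{2}_i\le 0$. The hiding condition gives $s\le 0$, because the weights $\weights{2}_{ik}$ with $k\ne j$ are nonpositive and the activations are nonnegative. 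The two preactivations are then
\[
\preactivation{2,\theta}_i(x)=s+w[z]_+ + b,\qquad \preactivation{2,\eta}_i(x)=s+w\bigl([z+t]_+-t\bigr)+b .
\]
It therefore suffices to show that $h(z)\coloneqq w[z]_++b$ and $g(z)\coloneqq w([z+t]_+-t)+b$ either coincide, or are both $\le 0$. In the second case both preactivations are $\le 0$ because $s\le 0$, so both ReLU outputs vanish.

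I would verify this by a short case split on $z$.
\begin{itemize}
\item If $z\ge\max(0,-t)$, then $g(z)=h(z)=wz+b$.
\item If $z<0$, then $h(z)=b\le 0$. For $g$: if $z+t\le 0$, then $g(z)=b-wt\le 0$ precisely because $t\ge b/w$. If $z+t>0$, then $g(z)=wz+b<b\le 0$.
\item If $0\le z<-t$ (which requires $t<0$), then $g(z)=b-wt\le 0$ as before. Also $h(z)=wz+b<-wt+b\le 0$.
\end{itemize}
In every case $[\preactivation{2,\eta}_i(x)]_+=[\preactivation{2,\theta}_i(x)]_+$, so $\activation{2,\eta}=\activation{2,\theta}$ and hence $f_\eta=f_\theta$.

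The only delicate point is the case bookkeeping when $t<0$, i.e.\ when the hyperplane is translated in the direction that enlarges the region where $\activation{1}_j$ is positive. This is exactly where the bound $t\ge \bias{2}_i/\weights{2}_{ij}$ is used, to guarantee $\bias{2}_i-t\weights{2}_{ij}\le 0$ for every $i$. The bound must hold simultaneously for all $i$, hence the maximum over $i\in[n_2]$. I would state this inequality explicitly before the case split so that each case closes in one line.
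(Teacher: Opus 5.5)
Your proof is correct and follows essentially the same route as the paper: a pointwise comparison of second-layer activations via $\preactivation{2,\eta}_i-\preactivation{2,\theta}_i=\weights{2}_{ij}\bigl([y+t]_+-[y]_+-t\bigr)$ and a case split on the signs of $y$ and $y+t$, with the hiding condition making all other contributions nonpositive. Your bookkeeping is slightly cleaner than the paper's. In the $t<0$ case you correctly attribute $\bias{2}_i-t\weights{2}_{ij}\le 0$ to the lower bound on $t$, whereas the paper cites the sign of $t$. You also rightly note that genericity is never used.
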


\paragraph{The Discrete Symmetry.} 

While the local translation symmetry $T_j(\theta)$ exists for any hiding parameter, we identify an additional discrete symmetry that arises specifically when the second hidden layer consists of a single neuron, i.e., $n_2=1$. 
This symmetry permits a discrete ``flipping'' of the first-layer hyperplanes that leaves the realized function invariant.

More precisely, for $j \in [n_1]$ and any subset $I \subseteq \{1,\dots,n_1\} \setminus \{j\}$, define
\(
M_I = I_{n_1} - \sum_{k \in I} e_j e_k^\top - 2 \sum_{k \in I} e_k e_k^\top \in \R^{n_1 \times n_1},
\)
where $I_{n_1}$ is the identity matrix and $e_k$ is the $k$-th standard basis vector. 
The set
\(
\mathcal{G}_j = \{ M_I \mid I \subseteq \{1,\dots,n_1\} \setminus \{j\} \}
\)
is a finite Abelian group under matrix multiplication, with group operation
\(
M_I M_J = M_{I \triangle J}, \quad \text{for } I,J \subseteq \{1,\dots,n_1\}\setminus\{j\},
\)
where $I \triangle J$ denotes the symmetric difference of sets.  
Let $\theta = (\weights{\ell},\bias{\ell})_{\ell \in [3]} \in \gparamspace{\architecture}$ be generic and hiding hyperplane $j$, and suppose that $n_2 =1$. 
By the scaling symmetry, we may assume WLOG that $\weights{2}_{1k} \in \{+1,-1\}$ for all $k \in [n_1]$ and  
$\bias{2}_1 \in \{+1,-1\}$. We call such a parameter \emph{normalized}. 

\begin{restatable}{prop}{propflippinghiddenhyperplaneinclusion}
\label{prop:flipping_hidden_hyperplane_inclusion}
    Let $\theta = (\weights{\ell},\bias{\ell})_{\ell \in [3]} \in \paramspace{\architecture}$ be normalized and hiding hyperplane $j$.
For $M_I\in\mathcal{G}_j,$ let 
$M_I\cdot(\weights{1}, \bias{1}, \weights{2}, \bias{2}, \weights{3}, \bias{3})=(M_{I}\weights{1}, M_{I} \bias{1}, \weights{2}, \bias{2}, \weights{3}, \bias{3})$. 
    Then \( \mathcal{G}_j \cdot \theta \subseteq
\fiber{\theta} \). 
\end{restatable}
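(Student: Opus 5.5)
The plan is to reduce to a single generator of $\mathcal{G}_j$ and then verify a scalar identity for the one second-layer neuron by a short case analysis.

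\textbf{Step 1: reduction to singletons.} Since $M_I M_J = M_{I\triangle J}$, every $M_I$ is a product of the generators $M_{\{k\}}$, $k\in I$. The action of $M_{\{k\}}$ changes only $(\weights{1},\bias{1})$. It leaves $\weights{2},\bias{2}$ untouched, so the image is again normalized and still hides hyperplane $j$. Hence it suffices to show $f_{M_{\{k\}}\cdot\theta}=f_\theta$ for a single $k\neq j$. The general statement then follows by applying singletons one at a time, inducting on $|I|$.

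\textbf{Step 2: explicit form of the second-layer neuron.} Write $h_l(x)=\weights{1}_l x+\bias{1}_l$ for the first-layer preactivations. Since $n_2=1$, $\theta$ is normalized and the neuron hides $j$, we have $\weights{2}_{1j}=1$, $\weights{2}_{1l}=-1$ for $l\neq j$, and $\bias{2}_1=-1$; the strict value $-1$ comes from $\bias{2}_1\le 0$ together with $\bias{2}_1\in\{\pm1\}$. Thus
\[
\activation{2,\theta}(x)=\bigl[[h_j]_+-[h_k]_+-r\bigr]_+,
\qquad
r\coloneqq 1+\sum_{l\neq j,k}[h_l]_+\ \ge 1 .
\]
Under $M_{\{k\}}$ the preactivations become $h_j-h_k$ in row $j$ and $-h_k$ in row $k$, with all other rows unchanged. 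So the new second-layer activation is $\bigl[[h_j-h_k]_+-[-h_k]_+-r\bigr]_+$, with the same $r$. The affine output layer is unchanged, so it suffices to prove, for all reals $u,v$ and all $r>0$,
\[
\bigl[[u]_+-[v]_+-r\bigr]_+=\bigl[[u-v]_+-[-v]_+-r\bigr]_+ .
\]

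\textbf{Step 3: case analysis.}
\begin{itemize}
\item If $v\ge 0$ and $u\ge v$: both inner expressions equal $u-v-r$.
\item If $v\ge 0$ and $u<v$: the right-hand inner term is $-r<0$. The left-hand one is at most $\max(u,0)-v-r<0$. Both sides vanish.
\item If $v<0$ and $u\ge 0$: then $u-v>0$, and the right-hand inner term is $u-v+v-r=u-r$, which equals the left-hand one.
\item If $v<0$ and $u<0$: the left-hand inner term is $-r<0$. The right-hand one is either $u-r<0$ (when $u\ge v$) or $v-r<0$ (when $u<v$). Both sides vanish.
\end{itemize}
This proves the identity, hence $f_{M_{\{k\}}\cdot\theta}=f_\theta$, and by Step 1 $\mathcal{G}_j\cdot\theta\subseteq\fiber{\theta}$.

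\textbf{Main obstacle.} The delicate point is that the two inner expressions $[u]_+-[v]_+$ and $[u-v]_+-[-v]_+$ genuinely differ, for example when $u,v<0$. Their outer ReLUs agree only because the offset $r$ is strictly positive. This is exactly where the hiding conditions are used: the bias must be strictly negative, which normalization forces, and all other weights must be nonpositive, which makes $r\ge 1$. So the argument must track the sign conditions carefully rather than rely on a purely algebraic symmetry of the layer.
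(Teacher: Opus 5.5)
Your proof is correct and follows the paper's route. You reduce to a single generator $M_{\{k\}}$, whose image is still normalized and hiding because the second layer is untouched. You then compare the single second-layer activation pointwise by a sign case analysis on $h_j$ and $h_k$, packaged more cleanly than the paper as a scalar identity in $(u,v,r)$ that visibly needs no genericity.

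One small correction to your closing remark: the identity already holds for every $r\ge 0$, so strict negativity of the bias is not what makes it work. The hiding inequalities $\bias{2}_1\le 0$ and $\weights{2}_{1l}\le 0$ already give $r\ge 0$; the essential input from normalization is the matched magnitudes $\weights{2}_{1j}=1=-\weights{2}_{1k}$.
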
 

\begin{figure}
    \centering
    \begin{subfigure}[t]{0.3\textwidth}
     \begin{tikzpicture}[scale=0.6]
        \small
\tikzset{
    bigarrow/.style={
        line width=1.5pt,
        draw,
        -{Latex[length=2mm,width=2mm]}
    }
}
\draw[orange!95!black,line width=1.5pt] (-3,0) -- (3,0) ;
\draw[orange!95!black,bigarrow] (2,0) -- (2,0.6);
\draw[orange!95!black,line width=1.5pt] (0,-3) -- (0,3) node [above] {$w_1x+b_1=0$};
\draw[orange!95!black,bigarrow] (0,1) -- (0.6,1);
\draw[cyan!75!blue,line width=1.5pt] (1,-3) --(1,0) -- (3,2);
\draw[cyan!75!blue,bigarrow] (1,-1) -- (1.6,-1);
\end{tikzpicture}
\end{subfigure}
        \begin{subfigure}[t]{0.3\textwidth}
     \begin{tikzpicture}[scale=0.6]
        \small 
\tikzset{
    bigarrow/.style={
        line width=1.5pt,
        draw,
        -{Latex[length=2mm,width=2mm]}
    }
}
\draw[orange!95!black,line width=1.5pt] (-3,0) -- (3,0);
\draw[orange!95!black,bigarrow] (2,0) -- (2,0.6);
\draw[orange!95!black,line width=1.5pt] (-1,-3) -- (-1,3) node [above] {$w_1x+b_1+s=0$};
\draw[dashed,orange!95!black,line width=1.5pt] (0,-3) -- (0,3);
\draw[orange!95!black,bigarrow] (0,1) -- (0.6,1);
\draw[orange!95!black,bigarrow] (-1,1) -- (-0.4,1);
\draw[cyan!75!blue,line width=1.5pt] (1,-3) --(1,0) -- (3,2);
\draw[cyan!75!blue,bigarrow] (1,-1) -- (1.6,-1);
\end{tikzpicture}
    \end{subfigure}
        \begin{subfigure}[t]{0.3\textwidth}
     \begin{tikzpicture}[scale=0.6]
        \small
        \tikzset{
    bigarrow/.style={
        line width=1.5pt,
        draw,
        -{Latex[length=2mm,width=2mm]}
    }
}
\draw[orange!95!black,line width=1.5pt] (-3,0) -- (3,0);
\draw[orange!95!black,bigarrow] (2,0) -- (2,-0.6);
\draw[orange!95!black,line width=1.5pt] (-2,-2) -- (3,3) node [above,  xshift=-10mm] {$(w_1-w_2)x+b_1-b_2=0$};
\draw[orange!95!black,bigarrow] (1,1) -- (1.4,0.6);
\draw[cyan!75!blue,line width=1.5pt] (1,-3) --(1,0) -- (3,2);
\draw[cyan!75!blue,bigarrow] (1,-1) -- (1.6,-1);

\end{tikzpicture}
    \end{subfigure}
\caption{Parameters in the fiber when there is a hidden hyperplane. 
Two first-layer hyperplanes are shown in orange and one second-layer bent hyperplane is shown in blue. 
The left two panels illustrate the continuous symmetry and the one on the right illustrates the discrete symmetry.}
\label{fig:hidden_hyperplane}
\end{figure}

\section{Three-Layer Bottleneck Fibers}
\label{sec:bottleneck}

We now give a complete description of generic fibers for three-layer bottleneck architectures. Throughout, let
$\architecture=(d,n_1,n_2,m)$ with $n_1\le d$.  
For such architectures, the first-layer image is generically the nonnegative orthant, so the fiber structure is governed by  
the action of the second layer on this. 
Our analysis splits into two cases. 
If no first-layer hyperplane is hidden, 
the first layer is rigid along the fiber, reducing the problem to the one-hidden-layer classification in \Cref{sec:layerwise_fibers}. 
If a hyperplane is hidden, the translation and sign-flipping symmetries from \Cref{sec:fibers_from_concatenation} account for the remaining generic degrees of freedom. 
We show that, in the bottleneck regime, these symmetries are complete.

\paragraph{Non-Hiding Parameters.}
If every first-layer hyperplane remains functionally visible through the second layer, then the first hidden layer is fixed across the fiber up to trivial symmetries.

\begin{prop}
\label{prop:3layer_layerwise_reduction_main}
Let $\theta = (\weights{\ell}, \bias{\ell})_{\ell \in [3]} \in \gparamspace{\architecture}$ have no dead neurons and be non-hiding. Then the first-layer parameters are fixed across $\gfiber{\theta}$ up to permutation and positive scaling, and the generic fiber reduces to the one-hidden-layer fiber of the induced map
$
g_{\theta'} \colon \orthant{n_1} \to \R^m,
$ given by $
g_{\theta'}(y)=\weights{3}[\weights{2}y+\bias{2}]_+ + \bias{3}.
$
In particular,
$
\gfiber{\theta}\modulo{\sim}
\cong
\gfiber{\theta'}\modulo{\sim},
$
where the right-hand side is described by \Cref{prop:generic_layerwise_fiber}.
\end{prop}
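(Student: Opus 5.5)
The plan is to use \Cref{thm:intersected_hyperplanes_stay}, so the main task is to show that, for a generic bottleneck parameter with no dead neurons, "non-hiding" forces every first-layer hyperplane to be \emph{anchored}. Fix $j\in[n_1]$ and assume $\theta$ does not hide $j$. Then some second-layer neuron $i$ violates the hiding condition: $\weights{2}_{ij}\le 0$, or $\weights{2}_{ik}>0$ for some $k\neq j$, or $\bias{2}_i>0$. Since $n_1\le d$ and $\weights{1}$ generically has full row rank, the map $x\mapsto \weights{1}x+\bias{1}$ is surjective onto $\R^{n_1}$. Hence the activation map $\activation{1,\theta}$ has image exactly $\orthant{n_1}$, and the hyperplane $H_j$ maps onto the facet $F_j=\{y\in\orthant{n_1}\mid y_j=0\}$. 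Moreover, preimages of points of $F_j$ contain points of $H_j$.

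I will therefore look for $y\in F_j$ with $\inner{\weights{2}_i,y}+\bias{2}_i>0$. If $\bias{2}_i>0$, take $y=0$. If $\weights{2}_{ik}>0$ for some $k\ne j$, take $y=te_k$ with $t$ large. The remaining case is $\weights{2}_{ij}\le 0$, $\weights{2}_{ik}\le0$ for all $k\neq j$, and $\bias{2}_i\le 0$. Then neuron $i$ is nonpositive on all of $\orthant{n_1}$, so it is dead, which contradicts the assumption. Genericity (nonzero weights) removes the boundary equalities. This gives, for every $H\in\hyperplanes{1}{\theta}$, a point $x\in H$ with $\preactivation{2,\theta}_i(x)>0$.

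Next I apply \Cref{thm:intersected_hyperplanes_stay}(2) to every $\eta\in\gfiber{\theta}$, which gives $\hyperplanes{1}{\eta}=\hyperplanes{1}{\theta}$. Each first-layer neuron of $\eta$ is then determined up to permutation, positive scaling, and orientation. I expect ruling out the orientation flip to be the main obstacle. My approach is to show that a flip of neuron $j$ would change the gradient jump of $f$ across $H_j$ on an anchored piece. To do this, compare the weights $c$ across $H_j$ on both sides of a second-layer bent hyperplane: the jump across $H_j$ is $\weights{3}D_{S}\weights{2}_{:,j}\weights{1}_j$ restricted to the side where $a^{(1)}_j$ is active. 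Flipping the orientation moves the active side, so a region where the second-layer neuron is inactive on one side but active on the other gives a mismatch in the canonical complex, which by \Cref{thm:intersected_hyperplanes_stay} agrees for $\theta$ and $\eta$. Alternatively, if $d>n_1$ is not needed, I would absorb a flip into the orthant image. Because the image of $\activation{1}$ must be an orthant in the same coordinates, and $\complex_\theta=\complex_\eta$ together with the linear pieces pins down which side is zero, I expect the orientation to be forced. I would try the weight-comparison argument first.

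Once the first layer is fixed up to $\sim$, normalize $\eta$ so that $(\weights{1},\bias{1})$ coincides with $\theta$'s. Surjectivity of $\activation{1}$ onto $\orthant{n_1}$ then gives $f_\eta=f_\theta$ if and only if $g_{\eta'}=g_{\theta'}$ on $\orthant{n_1}$. This yields a bijection between $\gfiber{\theta}\modulo{\sim}$ and $\gfiber{\theta'}\modulo{\sim}$. The remaining check is that genericity corresponds on both sides, which follows because the masked-product rank conditions factor through the fixed generic $\weights{1}$. \Cref{prop:generic_layerwise_fiber} then describes the right-hand side.
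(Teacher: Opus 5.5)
Your anchoring step is correct, and it spells out what the paper only asserts. For each $j$, a neuron $i$ that fails to hide $j$ either has $\bias{2}_i>0$, or has some $\weights{2}_{ik}>0$ with $k\neq j$, or is dead on $\orthant{n_1}$. Surjectivity of $x\mapsto\weights{1}x+\bias{1}$ then realises the required point of $F_j$ on $H_j$. Your final reduction via surjectivity of $\activation{1,\theta}$ onto $\orthant{n_1}$ also matches the paper.

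The gap is the orientation step. You correctly flag it as the crux, but you do not resolve it, and the mechanism you propose does not work. Replacing $[z_k]_+$ by $[-z_k]_+=[z_k]_+-z_k$ only adds an affine function to the $k$-th first-layer output, so it produces no kink of its own. By supertransversality, the second-layer active set $S_2$ is the same on both sides of any facet $\sigma\subseteq H_k$. For either orientation, the gradient jump across $\sigma$ has the same form: an output vector $\weights{3}D_{S_2}\weights{2}_{:,k}$ (computed from the respective parameter) times $\weights{1}_k$, with no sign change. So the weights along $H_k$, and the equality $\complex_\theta=\complex_\eta$, carry no orientation information, and there is no ``active side'' mismatch to exploit. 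Your alternative heuristic fails too, because $\activation{1}$ maps onto $\orthant{n_1}$ for every choice of orientations.

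Orientation is visible only through second-layer facets or constant regions. Inside a first-layer region $P$, the piece of $B_i$ has normal $\sum_{k\in S_1(P)}\weights{2}_{ik}\weights{1}_k$. Because the rows of $\weights{1}$ are linearly independent, the support of this normal reveals $S_1(P)$ (\Cref{lem:identifierregion}). The paper (\Cref{lem:fix_first_layer}) exploits this with a case split.
\begin{itemize}
\item If some $\bias{2}_i>0$, genericity makes the all-inactive first-layer region the unique region of $\complex_{\theta,1}$ on which $f_\theta$ is constant. Hence $\eta$ has the same all-inactive region, which fixes every sign.
\item Otherwise all $\bias{2}_i<0$, and since no neuron is dead there are $i,j$ with $\weights{2}_{ij}>0$. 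Then $B_i$ cuts the region $P$ with $S_1(P)=\{j\}$ in a facet parallel to $H_j$. This facet lies in $\bcomplex{\theta}=\bcomplex{\eta}$, and \Cref{lem:identifierregion}(1) applied to $\eta$ forces $\eta$'s active set on the same cell $P$ of the common arrangement to be $\{j\}$. Matching the sign vector of this one cell fixes all orientations at once.
\end{itemize}
The split is needed because, when every second-layer neuron is linear on $\orthant{n_1}$, there are no second-layer facets to use. An argument of this kind is what your proposal is missing.
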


\begin{proof}[Proof sketch]
The rigidity of the first layer follows from \Cref{thm:intersected_hyperplanes_stay}. Since $n_1\le d$ and $\weights{1}$ has full rank, the first hidden layer maps surjectively onto $\orthant{n_1}$, so equality of the realized functions reduces to equality of the induced one-hidden-layer maps on the orthant which is described in \Cref{sec:layerwise_fibers}. The detailed proof is given in \Cref{app:fibers_non_hiding}.
\end{proof}

\paragraph{Hiding Parameters.}

We now turn to the case where a first-layer hyperplane is hidden by the second layer. 
If the second layer has width at least two, the hidden hyperplane may be translated, but its direction remains rigid.

\begin{restatable}{prop}{fiberfromhiding}
\label{prop:fiber_from_hiding}
Let $\theta = (\weights{\ell},\bias{\ell})_{\ell \in [3]} \in \gparamspace{\architecture}$ be generic and hiding hyperplane $j$, and assume that $n_2 \geq 2$. 
Then $
\gfiber{\theta} \modulo{\sim}\  = (T_j(\theta)\cap \gparamspace{\architecture}) \modulo{\sim}
$.   
\end{restatable}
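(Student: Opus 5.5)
We prove two inclusions. The inclusion ``$\supseteq$'' is immediate from \Cref{lem:translating_hidden_hyperplane_inclusion}, which gives $T_j(\theta)\subseteq\fiber{\theta}$; intersecting with $\gparamspace{\architecture}$ and passing to $\sim$-classes finishes this direction. For ``$\subseteq$'', fix a generic $\eta$ with $f_\eta=f_\theta$. We show in three steps that, after permutation and scaling, $\eta$ lies in $T_j(\theta)$.

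\emph{Step 1: the first-layer hyperplanes other than $H_j$ are rigid.} Because $n_1\le d$ and $\weights{1}$ has full rank, the first layer maps $\R^d$ onto $\orthant{n_1}$. Every $H_k$ with $k\neq j$ maps onto the face $\{y_k=0\}$. Since each second-layer neuron $i$ satisfies $\weights{2}_{ij}>0$, we can choose $x\in H_k$ with $y_j=\activation{1}_j(x)$ large and all other $y_{k'}=0$. At such a point $\preactivation{2,\theta}_i(x)=\weights{2}_{ij}y_j+\bias{2}_i>0$. Part~1 of \Cref{thm:intersected_hyperplanes_stay} then gives $H_k\in\hyperplanes{1}{\eta}$ for every $k\neq j$. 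Up to permutation and scaling, $\eta$ therefore agrees with $\theta$ on the $n_1-1$ first-layer neurons $k\neq j$, and only one first-layer neuron of $\eta$ remains undetermined; call it $\ell$.

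\emph{Step 2: the direction of the last first-layer hyperplane is rigid.} Consider the cell $C$ on which every $y_k$ with $k\neq j$ vanishes. On $C$ we have $\preactivation{2,\theta}_i=\weights{2}_{ij}(\inner{\weights{1}_j,x}+\bias{1}_j)+\bias{2}_i$. Hence the breakpoints of $f_\theta$ inside $C$ are pieces of hyperplanes parallel to $H_j$, at the levels $\inner{\weights{1}_j,x}+\bias{1}_j=-\bias{2}_i/\weights{2}_{ij}$. For generic $\theta$ and $n_2\ge2$ there are at least two such levels, and they are distinct.

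By Step 1, $\eta$ induces the same cell $C$, restricted by the single extra neuron $\ell$. In $\eta$, the second-layer preactivations on $C$ are affine in $\activation{1}_\ell$ alone. If $\weights{1}_\ell$ were not parallel to $\weights{1}_j$, these parallel breakpoint pieces could not be produced. Genericity excludes accidental alignments: the breakpoint pieces would have to be either $H_\ell$ itself or level sets of $\inner{\weights{1}_\ell,x}$. This forces $\weights{1}_\ell=\pm\weights{1}_j$ after normalization.

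The sign has to be ruled out separately. Here I use $n_2\ge2$ together with the fact that $H_j$ itself is not a breakpoint. With reversed orientation, the two distinct levels would have to arise from neurons whose weights have the opposite sign pattern. Such neurons would then make $H_j$, or the other hyperplanes $H_k$, visible in the wrong half-spaces. Comparing the weights $c$ on the faces $\{y_k=0\}$ gives the contradiction. The discrete flip of \Cref{prop:flipping_hidden_hyperplane_inclusion} escapes this argument only when $n_2=1$. I expect this step, parallelism plus the exclusion of flips, to be the main obstacle. I would first attempt it by comparing the weight functions of \Cref{lem:structure_theorem_tropical_geometry} on $C$ and on the adjacent cells.

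\emph{Step 3: the first layer of $\eta$ is a translate, and the second layer is forced.} Write the first layer of $\eta$ as that of $\theta$ with $\bias{1}_j$ replaced by $\bias{1}_j+t$. Set $\theta_t$ to be the parameter obtained from $\theta$ by shifting $\bias{1}_j$ by $t$ and $\bias{2}$ by $-t\weights{2}_{:j}$. Then $f_{\theta_t}=f_\theta=f_\eta$, and $\theta_t$ and $\eta$ share their first layer. As in \Cref{prop:3layer_layerwise_reduction_main}, surjectivity onto the orthant reduces the question to equality of the induced one-hidden-layer maps $g\colon\orthant{n_1}\to\R^m$.

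Every second-layer neuron of $\theta$ has a negative bias and negative weights off $j$, so none of them is linear on the orthant. By \Cref{prop:generic_layerwise_fiber}, the linear part $J$ is therefore empty and $V_S$ is trivial. For the orientation $S$, note that $R$ is the region near the origin where all neurons are inactive; every orientation $S\neq\emptyset$ would require a nonzero compensation $A_S$ with $J=\emptyset$, which is impossible. Hence the second and third layers of $\eta$ coincide with those of $\theta_t$ up to $\sim$.

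Finally, the constraint $t\ge\max_i\bias{2}_i/\weights{2}_{ij}$ follows from the nonpositivity of the shifted second-layer biases, which is needed to preserve the hiding pattern. Without it, neurons would become linear or visible, contradicting the breakpoint structure. Hence $\eta\sim\theta_t\in T_j(\theta)$, which proves ``$\subseteq$''.
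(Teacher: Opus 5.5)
Your ``$\supseteq$'' direction and the anchoring argument in Step~1 are fine. The trouble is the conclusion you draw from Step~1, which is too strong, and Step~2 depends on it. \Cref{thm:intersected_hyperplanes_stay} only gives $H_k\in\hyperplanes{1}{\eta}$ as an unoriented hyperplane. Since $\sim$ allows only positive rescaling, the $k$-th neuron of $\eta$ may be $(-\weights{1}_k,-\bias{1}_k)$. Your assertion that on $C$ the second-layer preactivations of $\eta$ are affine in $\activation{1}_\ell$ alone presupposes that every neuron $k\neq j$ of $\eta$ is inactive on $C$, which is exactly these unproven orientations.

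This is not a technicality. Your parallelism argument never uses $n_2\ge 2$: a single breakpoint piece on $C$ would already force $\weights{1,\eta}_\ell\parallel\weights{1}_j$. So the same argument would prove the claim for $n_2=1$, where it is false. Under $M_I$ (\Cref{prop:flipping_hidden_hyperplane_inclusion}) the neurons $k\in I$ are reversed and hence active on $C$, and the hidden row becomes $\weights{1}_j-\sum_{k\in I}\weights{1}_k$. Yet the breakpoint on $C$ is still a level set of $\preactivation{1}_j$. Your sign discussion addresses a different ambiguity (the sign of $\weights{1,\eta}_\ell$), and it is only sketched.

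The missing idea, which is exactly where the paper uses $n_2\ge 2$, is to apply \Cref{lem:identifierregion}(2) to the generic parameter $\eta$ itself. The facets $\sigma_i=P\cap B_i(\theta)$ of at least two distinct second-layer neurons are parallel facets of $\bcomplex{\eta}=\bcomplex{\theta}$. Hence they lie in one region of $\complex_{\eta,1}$ on which exactly one first-layer neuron of $\eta$ is active, and they are parallel to its hyperplane. No $H_k$ with $k\neq j$ is parallel to $H_j$, so that neuron is $\ell$, and $\weights{1,\eta}_\ell\parallel\weights{1}_j$. Since that region contains $\sigma_i\subseteq P$, each neuron $k\neq j$ of $\eta$ is inactive on the same side of $H_k$ as in $\theta$, which fixes all orientations at once. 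With a single second-layer neuron, one facet parallel to $H_j$ can sit in a region of $\eta$ with several active neurons, and that is where the flips come from.

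The sign of $\weights{1,\eta}_\ell$ then has a short direct argument. Suppose it were reversed, with preactivation $c-\preactivation{1}_j$. Then $c$ exceeds the levels of all $\sigma_i$, since these lie where $\ell$ is active. On $P\cap\{\preactivation{1}_j>c\}$ all first-layer neurons of $\eta$ are inactive, so $f_\eta$ is constant there. But $f_\theta$ has nonzero gradient $\weights{3}\weights{2}_{:,j}\weights{1}_j$ on that set.

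Step~3 is also circular as ordered. $f_{\theta_t}=f_\theta$ holds only for $t\ge\max_i\bias{2}_i/\weights{2}_{ij}$. For smaller $t$ some shifted bias is positive, and then $f_{\theta_t}$ is nonconstant on $\{\preactivation{1}_j<0\}$, where $f_\theta$ is constant. Yet you derive this range of $t$ only afterwards, from a second layer you obtained via $f_{\theta_t}=f_\eta$. Instead, determine $\eta$'s second layer directly from $f_\eta=f_\theta$, as in \Cref{lem:translating_hidden_hyperplane}. Then read off $\bias{2,\eta}_i=\bias{2}_i-t\weights{2}_{ij}<0$, and only then invoke \Cref{lem:translating_hidden_hyperplane_inclusion}.
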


If the second hidden layer consists of a single neuron, then in addition to the translation symmetry there is a finite family of discrete sign-flipping symmetries. 
\begin{restatable}{prop}{fiberfromhidingsingleneuron}
\label{prop:hiding_fiber_single_second_layer}
Let $\theta = (\weights{\ell,\theta},\bias{\ell,\theta})_{\ell \in [3]} \in \gparamspace{\architecture}$ be generic and normalized, assume that $n_2=1$, and suppose that $\theta$ hides hyperplane $j$. Then
$
\gfiber{\theta}\modulo{\sim}
=
\left(
\bigcup_{I \subseteq [n_1]\setminus \{j\}}
\left(T_j(M_I \cdot \theta)\cap \gparamspace{\architecture}\right)
\right)\modulo{\sim}
$. 
\end{restatable}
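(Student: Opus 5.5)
We prove the two inclusions separately. The inclusion ``$\supseteq$'' follows by composing two earlier results. By \Cref{prop:flipping_hidden_hyperplane_inclusion}, $M_I\cdot\theta\in\fiber{\theta}$. We then check that $M_I\cdot\theta$ is again normalized and still hides hyperplane $j$. Since $\weights{2}$ and $\bias{2}$ are unchanged, the sign pattern required by the hiding definition is preserved, but the first-layer rows are now $w_j-w_k$ and $-w_k$ for $k\in I$. So hiding must be read with respect to the new first layer, which is immediate because hiding only depends on $\weights{2},\bias{2}$. Applying \Cref{lem:translating_hidden_hyperplane_inclusion} to $M_I\cdot\theta$, whenever it is generic, gives $T_j(M_I\cdot\theta)\subseteq\fiber{\theta}$. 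Intersecting with $\gparamspace{\architecture}$ and passing to the quotient by $\sim$ gives the inclusion.

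For ``$\subseteq$'', take a generic normalized $\eta$ with $f_\eta=f_\theta$. With $n_2=1$, the output is $f_\theta(x)=\weights{3}[g_\theta(x)]_++\bias{3}$, where $g_\theta(x)=\weights{2}[\weights{1}x+\bias{1}]_++\bias{2}$ is scalar. Since $\weights{2}_{1j}=1$ and $\weights{2}_{1k}=-1$ for $k\ne j$, with $\bias{2}_1=-1$, we get $g_\theta=[\ell_j]_+-\sum_{k\neq j}[\ell_k]_+-1$, where $\ell_k$ denotes the $k$-th first-layer preactivation. Hence $g_\theta>0$ only inside $\{\ell_j>1\}\cap\bigcap_{k\neq j}\{\ell_k<\ell_j-1\}$. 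On this region the breakpoints of $f_\theta$ are exactly the pieces of the hyperplanes $\{\ell_k=0\}$, $k\ne j$, together with the bent boundary $\{g_\theta=0\}$. The hyperplane $\{\ell_j=0\}$ itself is invisible, because $g_\theta<0$ on it.

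The plan is to recover from $\breakpoints{f_\theta}$ the following data. First, the hyperplanes $H_k$ for $k\neq j$, which are anchored in the sense of \Cref{thm:intersected_hyperplanes_stay}, so by part 1 of that theorem they appear in $\hyperplanes{1}{\eta}$. Second, the affine pieces of the boundary of the positive region. Generically the boundary face where only $\ell_j$ is active lies on $\{\ell_j=1\}$, and the face where $\ell_j$ and $\ell_k$ are active lies on $\{\ell_j-\ell_k=1\}$. Because $n_1\le d$ and $\weights{1}$ has full rank, these faces have dimension $d-1$ and determine their affine hyperplanes. We then do a case analysis on how $\eta$ can realize the same positive region. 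Its second neuron is a single neuron whose zero set must coincide with the same bent hypersurface, and the jump weights across the $H_k$ must match. Also, $\eta$ must again hide some first-layer hyperplane: otherwise all of its first-layer hyperplanes would be visible, and \Cref{prop:3layer_layerwise_reduction_main} would give $\hyperplanes{1}{\eta}=\hyperplanes{1}{\theta}$, which contradicts the invisibility of $\{\ell_j=0\}$. Within the positive region, each visible $H_k$ is realized in $\eta$ either with the orientation of $\ell_k$ or with the opposite orientation. The opposite orientation forces $\eta$'s hidden neuron to be $\ell_j-\ell_k$ up to translation, and the neuron for $H_k$ to be $-\ell_k$; this is exactly the action of $M_I$, where $I$ is the set of flipped $k$. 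Once the orientations are fixed, the remaining freedom is the bias of the hidden neuron, which is compensated by $\bias{2}$, i.e.\ a translation in $T_j$. The weights $\weights{3},\bias{3}$ are then determined, since $\weights{3}\neq0$ and the value of $f$ on the positive region fixes them after normalization.

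The main obstacle is the uniqueness step: showing that the only first-layer configurations of $\eta$ reproducing the same bent boundary and the same weights are those obtained from the $M_I$ and translations. This requires matching the linear pieces of $g_\eta$ and $g_\theta$ on the positive region and solving the resulting sign-constrained linear systems neuron by neuron. In particular one must show that $\eta$'s hidden hyperplane must have direction $w_j-\sum_{k\in I}w_k$ with each coefficient of $w_k$ in $\{0,1\}$. I would try to obtain this from the gradient jumps of $g$ across the faces $\{\ell_j-\ell_k=1\}$, using genericity (full rank of masked products) to rule out accidental cancellations.
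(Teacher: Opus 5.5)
You have a genuine gap: the key step of the reverse inclusion is asserted, not proved. Your forward inclusion is the paper's argument. You can drop the hedge ``whenever it is generic'', since the proof of \Cref{lem:translating_hidden_hyperplane_inclusion} uses only the hiding sign conditions and so applies to $M_I\cdot\theta$ regardless.

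Your preliminary reductions for ``$\subseteq$'' are also sound. The $H_k$, $k\neq j$, are anchored and hence rigid, and $\eta$ must hide its remaining hyperplane. So after normalization $\ell^\eta_k=\varepsilon_k c_k\ell_k$ for $k\neq j$ with $c_k>0$, and $\eta$'s second layer has signs $(+1,-1,\dots,-1;-1)$. What remains is to show that $\ell^\eta_j$ equals, up to positive scale and an additive constant, $\ell_j-\sum_{k\in I}\ell_k$ with exactly $I=\{k:\varepsilon_k=-1\}$. You only assert this (``the opposite orientation forces\dots'') and then flag it as the main obstacle.

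The route you sketch would not deliver it:
\begin{itemize}
\item The faces $\{\ell_j-\ell_k=1\}$ are zero sets of $g$. So $g$ is affine across them and has no gradient jump there.
\item The gradient jumps of $g$ across the $H_k$ do not see orientation: $-[\varepsilon_kc_k\ell_k]_+$ jumps by $-c_kw_k$ for either sign of $\varepsilon_k$.
\end{itemize}
The orientation information sits only in the compensating term inside $\eta$'s hidden neuron. Also, a priori $W^{(1,\eta)}_j$ need not lie in $\operatorname{span}(w_1,\dots,w_{n_1})$ when $n_1<d$.

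The paper closes this with two facets of $B_1(\theta)=B_1(\eta)$.
\begin{itemize}
\item By \Cref{lem:hidinghides} applied to $\eta$, $B_1(\eta)$ meets the region where only $\eta$'s hidden neuron is active, in a facet with normal parallel to $W^{(1,\eta)}_j$. As a facet of $B_1(\theta)$, it lies in a $\theta$-region with active set $\{j\}\cup I$, so its normal is parallel to $w_j-\sum_{k\in I}w_k$. This gives the $\{0,1\}$ coefficients with no linear system to solve.
\item On the facet of $B_1(\theta)$ inside $\theta$'s all-active region, matching linear parts and using linear independence of the $w_k$ forces $\eta$'s active set there to be $[n_1]\setminus I$. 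Since all $\ell_k>0$ on that facet, $\varepsilon_k=-1$ exactly for $k\in I$.
\end{itemize}

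Alternatively, your own ``match the affine pieces of $g$'' idea works directly. The images $f_\theta(\R^d)$ and $f_\eta(\R^d)$ are the same closed ray, so $b^{(3,\eta)}=b^{(3)}$ and $W^{(3,\eta)}=\lambda^{-1}W^{(3)}$ for some $\lambda>0$. Hence $g_\eta=\lambda g_\theta$ on the common positive set, which lies in $\{\ell^\eta_j>0\}$ because $\eta$ hides. Evaluate on the nonempty open part of that set where $\ell_k<0$ for all $k\neq j$. This gives the affine identity $\ell^\eta_j=\lambda\ell_j-\sum_{\varepsilon_k=-1}c_k\ell_k+1-\lambda$, and the parts where exactly one $\ell_k>0$ give $c_k=\lambda$. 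That is, up to rescaling, an element of $T_j(M_I\cdot\theta)$ with $t=\lambda^{-1}-1>-1$. Without one of these arguments the reverse inclusion is not proved.
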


The proofs of both propositions are deferred to \Cref{app:fibers_hiding}. Geometrically, they rely on two facts. First, all first-layer hyperplanes except the hidden one remain visible along the fiber and are therefore fixed. Second, in the case $n_2\ge 2$, each second-layer neuron produces, on the region where only neuron $j$ is active in the first layer, a visible facet parallel to the hidden hyperplane. Since there are at least two such parallel facets in the breakpoint complex, their common direction determines the direction of the hidden hyperplane, so the only remaining freedom is to translate it. By contrast, when $n_2=1$, there is only one second-layer bent hyperplane. Its normal vectors can then be realized by several different sign patterns of the first layer, and these alternative realizations give rise to the discrete sign-flipping symmetries described by the matrices $M_I$.

\paragraph{Characterization of Identifiability.}

By combining non-hiding and hiding cases, we obtain a complete classification of identifiability  for three-layer bottleneck networks with generic parameters based entirely on sign patterns of second-layer weights and biases. 

\begin{theorem} 
\label{thm:charaterization_identifiablity_bottleneck}
    Let $\architecture=(d,n_1,n_2,m)$ with 
    $d\geq n_1 \geq 2$. 
    Then a generic parameter $\theta=(\weights{\ell},\bias{\ell})_{\ell \in [3]} \in \gparamspace{\architecture}$ is non-identifiable 
    if and only if at least one of the following holds: 
\begin{enumerate}[leftmargin=*]
    \item There exists a row of $[\weights{2},\bias{2}]$ with all entries negative.
    
    \item There exists a row of $[\weights{2},\bias{2}]$ with all entries positive.
    
    \item There exists $j \in [n_1]$ such that 
   $[\weights{2},\bias{2}]$ 
   is 
   positive in column $j$ and negative elsewhere. 
    
\end{enumerate}
\end{theorem}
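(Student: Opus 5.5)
We prove both directions by reducing to the fiber descriptions of \Cref{sec:bottleneck}. Genericity lets us assume all entries of $[\weights{2},\bias{2}]$ are nonzero, so every row has a well-defined sign pattern. The overall plan is to show that conditions 1--3 exactly capture the three possible sources of nontrivial fiber: dead second-layer neurons, linear second-layer neurons, and hidden hyperplanes.

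\emph{Sufficiency.} We treat each condition separately.

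If condition 1 holds, neuron $i$ of layer 2 has $\preactivation{2}_i<0$ on all of $\orthant{n_1}$. Since $\activation{1}$ takes values in $\orthant{n_1}$, this neuron is dead. Its outgoing weights $\weights{3}_{:,i}$ can then be changed arbitrarily without changing $f_\theta$, and this change is not in the $\sim$-orbit.

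If condition 2 holds, neuron $i$ is linear on $\orthant{n_1}$. We check whether the induced one-hidden-layer map $g_{\theta'}$ has a linear neuron. If it does, \Cref{thm:layerwise-fibers} gives a positive-dimensional $V_S$, since $(n-k)^2>0$ or $m^2+m+d>0$. This yields nontrivial continuous families. Alternatively, we exhibit an explicit family directly: perturb $\bias{2}_i$ by $t$ and compensate with $\bias{3}\mapsto \bias{3}-t\weights{3}_{:,i}$. This preserves the function while $\bias{2}_i+t>0$, and scaling cannot produce it because the other entries of the row stay fixed.

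If condition 3 holds for column $j$, then every row hides hyperplane $j$. By \Cref{lem:translating_hidden_hyperplane_inclusion}, $T_j(\theta)\subseteq\fiber{\theta}$. This is a one-parameter family, and it is not contained in the $\sim$-class, because $\bias{1}_j$ changes while $\weights{1}$ stays fixed, which no rescaling achieves. Genericity of nearby points of $T_j(\theta)$ follows from openness of $\gparamspace{\architecture}$.

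\emph{Necessity.} Assume none of conditions 1--3 holds.

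Suppose first that $\theta$ is non-hiding. A neuron hides only when its row has the pattern of condition 3, so the hiding parameter case is the condition 3 case, and excluding it leaves us here. We need $\theta$ to have no dead neurons. A first-layer neuron cannot be dead, since $n_1\le d$ and $\weights{1}$ has full rank, so the first-layer image is the whole orthant. A second-layer neuron on $\orthant{n_1}$ is dead iff its row is nonpositive, and that is excluded by condition 1. We can therefore apply \Cref{prop:3layer_layerwise_reduction_main}, which reduces the fiber to that of $g_{\theta'}$ on $\orthant{n_1}$.

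Excluding condition 2 means no second-layer neuron is linear, so $J=\emptyset$. Each neuron corresponds to a breakpoint hyperplane with a fixed weight $c(i)$. \Cref{prop:generic_layerwise_fiber} then gives fiber components $K_S=\{p_S\}\times V_S$ with $V_S$ constrained by $0=A_S$ and $\bias{2}=b_S$. We must show that only $S=S_\theta$ is realizable, i.e.\ that $A_S=0$ forces $S=S_\theta$. For generic $c(i)$ and $a_i$, the matrices $\sum_{i\in S\triangle S_\theta}\pm c(i)a_i^\top$ are nonzero, which gives this. Hence the fiber is a single $\sim$-class.

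\emph{Main obstacle.} The hard step is the residual hiding-without-condition-3 case in the necessity direction. It is the point most likely to need a careful case split. Hiding hyperplane $j$ requires every row to have positive column $j$, nonpositive other columns, and nonpositive bias. Condition 3 demands strict negativity, so by genericity the two coincide and this case does not arise.

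A more delicate check is that the conditions are exactly those making some hyperplane unanchored in the sense of \Cref{thm:intersected_hyperplanes_stay}. Specifically, we must show that if no row has condition 3 pattern, then for each first-layer hyperplane $H_j=\{y_j=0\}$ some neuron has $\preactivation{2}_i>0$ somewhere on the face $\{y_j=0\}\cap\orthant{n_1}$. A row fails this iff all its entries outside column $j$ are negative, including the bias. If every row fails, then each row is either all negative, which is condition 1, or has the condition 3 pattern. I expect the main work to lie in this combinatorial verification together with justifying the genericity claim that $A_S\neq0$ for $S\ne S_\theta$.
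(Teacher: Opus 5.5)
Your proposal is correct and follows essentially the same route as the paper. For sufficiency, conditions 1--3 yield respectively a dead neuron, a linear neuron and the translation family $T_j(\theta)$. For necessity, your anchoring check (each $H_j$ is met by some row that is neither dead nor hiding) is exactly what lets you apply \Cref{prop:3layer_layerwise_reduction_main}, which reduces everything to a one-hidden-layer fiber with $J=\emptyset$.

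The one step to tighten is $A_S\neq 0$ for $S\neq S_\theta$. Appealing to ``generic $c(i)$, $a_i$'' with arbitrary signs does not work, because the paper's genericity only controls unsigned masked products. Instead, note that in $\theta$'s own weights all the signs become $+$, so $A_S=\weights{3}D_{S\triangle S_\theta}\weights{2}$. This is nonzero by the masked-product rank condition, since $\weights{1}$ has full row rank.
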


\begin{proof}
Conditions (1) produces a dead neuron, and (2) produces a neuron that acts linearly, leading to a positive-dimensional fiber by \Cref{thm:layerwise-fibers}. 
Condition (3) induces the hiding symmetries $T_j$ and $\mathcal{G}_j$ which are distinct from the permutation and scaling symmetry. 
If neither holds, then by \Cref{prop:3layer_layerwise_reduction_main} the first layer is rigid and the second-layer fiber is trivial, implying identifiability. 
\end{proof}
Since this yields a complete characterization and containment in the explicit semi-algebraic sets can be evaluated in polynomial time, we obtain the following.

\begin{restatable}{prop}{polytimeequivalencebottleneck}
\label{prop:polytime_equivalence_bottleneck}
There exists a polynomial-time algorithm that, given a bottleneck architecture
$\architecture=(d,n_1,n_2,m)$ with $n_1\le d$ and two generic parameters
$\theta,\eta\in\gparamspace{\architecture}$, decides whether
$
f_\theta=f_\eta.
$
\end{restatable}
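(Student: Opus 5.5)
The algorithm computes, from $\theta$ alone, a finite description of $\gfiber{\theta}\modulo{\sim}$ using the classification of this section. It then tests whether $\eta$ lies in that set. Because $\theta$ and $\eta$ are generic, they fall under exactly one of the three regimes: non-hiding (\Cref{prop:3layer_layerwise_reduction_main}), hiding with $n_2\ge 2$ (\Cref{prop:fiber_from_hiding}), or hiding with $n_2=1$ (\Cref{prop:hiding_fiber_single_second_layer}). The first step is to detect the regime. Checking whether some $j$ is hidden means checking the sign pattern of $[\weights{2},\bias{2}]$ column by column, which takes $O(n_1n_2)$ comparisons. Dead and linear second-layer neurons are detected the same way from their row signs. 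Since $\eta$ is generic with $f_\eta=f_\theta$, the structural results say the regime is determined by the function, so a regime mismatch between $\theta$ and $\eta$ already certifies $f_\theta\ne f_\eta$.

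In the non-hiding case, the test runs in three stages.

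1. \textbf{Compare first layers.} Normalize each row of $[\weights{1},\bias{1}]$ of both parameters to unit norm. Sort them lexicographically and check equality. This fixes permutation and scaling and realizes the rigidity from \Cref{thm:intersected_hyperplanes_stay}.
2. \textbf{Align second layers.} Apply the same permutation and scaling to the columns of $\weights{2}$ of $\eta$.
3. \textbf{Compare the induced one-hidden-layer maps.} By \Cref{lem:hyperplane_representation}, it suffices to compare hyperplane representations of the induced maps $g_{\theta'}$ and $g_{\eta'}$ on $\orthant{n_1}$. Compute the nonlinear neurons of each: those whose row of $[\weights{2},\bias{2}]$ is not nonnegative. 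Normalize these neurons and compare the resulting hyperplanes $(a_i,t_i)$ up to sign, together with their weights $c(i)=\weights{3}_{:,i}\|\weights{2}_i\|$. Finally, check that the affine parts agree on a reference region $R$. This amounts to comparing $A_R,b_R$, which are computed from the linear and active neurons via \eqref{eq:generic_compensation}. Choose $R$ as the region containing a point certified by a small LP, for instance one that lies strictly on the negative side of all hyperplanes in $\orthant{n_1}$, or any explicit interior point.

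All of these are rational linear-algebra operations plus sorting, so they run in polynomial time. Rather than testing membership in $V_S$ directly, the algorithm compares the two pairs $(A,b)$. This sidesteps the semi-algebraic constraints entirely, because equality of the hyperplane representations is already equivalent to equality of the functions.

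In the hiding case with $n_2\ge2$, $\eta$ must equal $\theta$ up to $\sim$ except for a translation parameter $t$ along $T_j(\theta)$. The test is therefore:

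1. Normalize and match the non-hidden first-layer rows as before.
2. For row $j$, verify that the weight direction agrees.
3. Read off $t=\bias{1,\eta}_j-\bias{1,\theta}_j$ after normalization.
4. Check linearly that $\bias{2,\eta}=\bias{2,\theta}-t\weights{2}_{:j}$, that $t$ satisfies the inequality in \eqref{eq:translated_hyperplane}, and that all remaining parameters coincide.

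For $n_2=1$, first normalize both parameters. The main obstacle is that the union over $I\subseteq[n_1]\setminus\{j\}$ has exponentially many terms, so the algorithm cannot enumerate it. Instead it determines $I$ directly: because the non-hidden rows of $M_I\weights{1}$ equal $\pm$ the original rows, with a sign flip exactly for $k\in I$, the index set $I$ is read off by comparing the signs of the matched normalized rows. Row $j$ of $M_I\weights{1}$ is $\weights{1}_j-\sum_{k\in I}\weights{1}_k$, so after computing $I$, one more translation check as in the $n_2\ge2$ case completes the test.

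Correctness follows from the three propositions. The runtime is polynomial in $d,n_1,n_2,m$ and the bit length of the input. One remaining care point is matching rows up to positive scaling with exact arithmetic. This is handled by comparing cross-ratios, or by normalizing with a fixed nonzero coordinate rather than the Euclidean norm.
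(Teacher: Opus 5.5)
Your algorithm takes the same route as the paper's proof. You classify by the sign pattern of $[\weights{2},\bias{2}]$. In the non-hiding case you match first layers and compare hyperplane representations of the induced maps on $\orthant{n_1}$. In the hiding cases you match the non-hidden rows, read $I$ off their orientations when $n_2=1$ (so you avoid enumerating subsets), and solve a linear problem in one translation scalar. Your remark on exact arithmetic is a useful addition. However, one step fails as written. The paper's procedure starts by discarding dead second-layer neurons, and your case split depends on this without saying so.

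\Cref{prop:3layer_layerwise_reduction_main} and \Cref{lem:fix_first_layer} assume there are no dead neurons, so your three regimes are not exhaustive.
\begin{itemize}
\item A dead neuron (a row of $[\weights{2},\bias{2}]$ that is entrywise nonpositive) hides no hyperplane. So a parameter with $n_2=2$, one dead neuron and one neuron hiding $j$ passes your column test as non-hiding. Your first stage then requires identical first layers. But translating $H_j$ and compensating in the live neuron's bias (and, with a single live neuron, the flips $M_I$) moves the first layer without changing the function. You would therefore wrongly output $f_\theta\neq f_\eta$.
\item If all second-layer neurons are dead, $f_\theta$ is constant and the first layer is arbitrary, so the same false negative occurs.
\item Stage 3 has the same defect. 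You call a neuron nonlinear when its row is ``not nonnegative'', which includes dead neurons. Their hyperplanes miss $\orthant{n_1}$ and carry no breakpoint, so you would compare spurious hyperplanes and weights.
\end{itemize}
These parameters form nonempty open subsets of the generic parameters, so this is not a negligible corner.

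The fix is the paper's preprocessing step. Since a generic bottleneck first layer maps onto all of $\orthant{n_1}$, a second-layer neuron is dead iff its row is nonpositive. Remove such neurons, and compare only $\bias{3}$ if none remain. Then treat only mixed-sign rows as nonlinear, and perform hiding detection and the $n_2\ge2$ versus $n_2=1$ split on the live neurons only.

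A smaller point concerns the translation check. Compare in a normalization that $T_j$ preserves, for example scaling second-layer neurons via $\weights{2}$ or $\weights{3}$ rather than via $\bias{2}$. Under the normalization $\bias{2}_1=\pm1$ used for $n_2=1$, a literal check $\bias{2,\eta}=\bias{2,\theta}-t\,\weights{2}_{:j}$ between normalized representatives forces $t=0$. In that normalization the translation instead shows up as a rescaling of the first layer and of $\weights{3}$ by $1+t$. It is still a linear condition in one scalar, but not the one you wrote.
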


The relative volume of the set of identifiable parameters can be computed from Theorem~\ref{thm:charaterization_identifiablity_bottleneck}. 
Simple lower and upper bounds can be given as follows: 

\begin{restatable}{cor}{corvolumebound}
\label{cor:volume-bound}
Let $\architecture=(d,n_1,n_2,m)$ with 
$d\geq n_1 \geq 2$, 
let 
\(
\mathcal{I} \subseteq \paramspace{\architecture}
\)
be the set of identifiable parameters, 
and $\mathbb{B}_r$ the radius-$r$ ball in $\paramspace{\architecture}$ for an arbitrary $r>0$. 
Then  
\[
1 - n_2 \, 2^{-n_1}  - n_1 \, 2^{-n_2n_1}
\;\le\;
\frac{\operatorname{vol}(\mathcal{I} \cap \mathbb{B}_r)}{\operatorname{vol}(\paramspace{\architecture} \cap \mathbb{B}_r)}
\;\le\;
(1 - 2^{-(n_1+1)})^{n_2}. 
\]
\end{restatable}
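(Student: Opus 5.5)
We read the volume bounds off the sign-pattern characterization in \Cref{thm:charaterization_identifiablity_bottleneck}. Generic parameters are open and dense with full-measure complement, so we may replace $\mathcal{I}$ by the set of generic parameters satisfying none of conditions (1)--(3). These conditions depend only on the signs of the entries of $[\weights{2},\bias{2}]$. The ball $\mathbb{B}_r$ is invariant under flipping the sign of any single coordinate. Hence, conditional on the ball, the sign vector of the $n_2(n_1+1)$ entries of $[\weights{2},\bias{2}]$ is uniformly distributed on $\{\pm1\}^{n_2(n_1+1)}$, up to a null set where some entry is zero. So the volume ratio equals the fraction of sign patterns avoiding (1)--(3), independently of $r$.

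For the upper bound, we note that identifiability requires, for every row $i\in[n_2]$, that row $i$ is neither all negative nor all positive. Under the uniform sign distribution the rows are independent, and each row has $n_1+1$ entries. The probability that a fixed row avoids both patterns is $1-2\cdot 2^{-(n_1+1)} = 1-2^{-n_1}$, which is at most $1-2^{-(n_1+1)}$. Dropping condition (3) only enlarges the set, so the ratio is at most $(1-2^{-n_1})^{n_2}\le(1-2^{-(n_1+1)})^{n_2}$. One could keep only a single forbidden pattern per row to match the stated form exactly; either way, independence across rows gives the product.

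For the lower bound, we apply a union bound to the non-identifiable event. Conditions (1) and (2) for a fixed row $i$ each have probability $2^{-(n_1+1)}$. Summed over both conditions and over the $n_2$ rows, this contributes $n_2 2^{-n_1}$. Condition (3) for a fixed $j\in[n_1]$ prescribes the signs of all $n_2(n_1+1)$ entries: column $j$ positive, everything else negative including the bias column. Its probability is therefore $2^{-n_2(n_1+1)}\le 2^{-n_2 n_1}$. Summing over $j$ contributes at most $n_1 2^{-n_2 n_1}$, and subtracting both contributions from $1$ gives the claimed lower bound.

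The only delicate step is the reduction to uniform sign patterns. It needs two facts. First, the non-generic set and the sets where some entry of $[\weights{2},\bias{2}]$ vanishes have measure zero, so restricting to generic parameters changes no volumes. Second, the coordinate reflections preserve both Lebesgue measure and $\mathbb{B}_r$, and they permute the sign orthants of these entries, so each orthant receives equal volume. Once this symmetry argument is in place, the bounds are pure counting.
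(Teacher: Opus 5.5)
Your proof is correct and follows the paper's route. You reduce via the sign-pattern characterization to uniformly distributed signs of $[\weights{2},\bias{2}]$, making explicit the reflection-invariance of $\mathbb{B}_r$ that the paper only asserts; you then bound above using independence across rows and below using a union bound over conditions (1)--(3). Your refinements are valid and only sharpen the paper's estimates: you use both forbidden row patterns to get $(1-2^{-n_1})^{n_2}$, and you compute condition (3) exactly as $2^{-n_2(n_1+1)}\le 2^{-n_2 n_1}$.
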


\section{Implications for Deeper Networks}
\label{sec:implications_for_deep_NNs}

This section discusses implications of our fiber characterization for deep architectures.

\paragraph{Fibers are not Localizable.} 

A natural question is whether identifiability of the full parameter can be reduced to layerwise identifiability. 
For deep polynomial networks, identifiability can indeed be established by examining pairwise compositions \citep{usevich2025identifiabilitydeeppolynomialneural}. 
The following proposition shows that such a reduction is not possible for ReLU networks. 
Specifically, identifiability in ReLU networks can depend on interactions between the range of earlier layers and the non-linearities of later ones. 

\begin{restatable}{prop}{proplocalizable}
\label{prop:localizable}
    Let $\architecture=(n_0,n_1,n_2,n_3,m)$ with 
    $n_0\geq n_1\geq n_2$.  
    Then there exists $\theta = (\weights{\ell},\bias{\ell})_{\ell \in [4]}$ such that, for the layer maps $f_\ell \colon \R^{n_{\ell-1}} \to \R^{n_\ell}$ defined by $x \mapsto \left[\weights{\ell}x+\bias{\ell}\right]_+$, the compositions $f_2 \circ f_1$ and $f_3 \circ f_2$ are identifiable, 
    while $f_3 \circ f_2 \circ f_1$ is not. 
\end{restatable}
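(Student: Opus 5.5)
The idea is to build $\theta$ so that the first two layers realize the hiding configuration of \Cref{sec:fibers_from_concatenation}, but only when the second layer acts on the restricted range $f_1(\R^{n_0})$. In isolation, $f_2$ and $f_3$ will see a bigger input domain on which nothing is hidden.

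\textbf{Step 1: choose $f_2\circ f_1$ hiding-free, and hence identifiable.}
Since $n_0\ge n_1$, take $\weights{1}$ of full rank, so $f_1$ maps onto $\orthant{n_1}$. Choose $[\weights{2},\bias{2}]$ so that none of conditions (1)--(3) of \Cref{thm:charaterization_identifiablity_bottleneck} holds. Concretely, each row has mixed signs and no column pattern is "positive in $j$, negative elsewhere". Then the first two layers followed by an identity readout form an identifiable bottleneck network, in the sense that the realized function determines the parameters. I will adapt \Cref{thm:charaterization_identifiablity_bottleneck} and \Cref{prop:3layer_layerwise_reduction_main} to a network with no final affine layer, treating $f_2\circ f_1$ as a map to $\R^{n_2}$.

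\textbf{Step 2: choose $f_3$ so that $f_3\circ f_2$ is identifiable on $\R^{n_1}$.}
Pick a layer-3 neuron $i$ and a layer-2 index $j$ such that:
\begin{itemize}
\item $\weights{3}_{ij}>0$;
\item $\weights{3}_{ik}<0$ for $k\ne j$;
\item $\bias{3}_i<0$.
\end{itemize}
Make every other row of $[\weights{3},\bias{3}]$ generic and mixed-sign. Now view $f_3\circ f_2$ as a two-hidden-layer map on all of $\R^{n_1}$. Its second layer hides hyperplane $j$ only partially, because the other rows do not hide it. Condition (3) of \Cref{thm:charaterization_identifiablity_bottleneck} therefore fails, and $f_3\circ f_2$ is identifiable. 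This uses $n_1\ge n_2$, which puts $(n_1,n_2,n_3)$ in the bottleneck setting.

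\textbf{Step 3: exploit the restricted range in the full composition.}
Arrange that on $f_2(f_1(\R^{n_0}))=f_2(\orthant{n_1})$, every row of layer 3 other than $i$ behaves as a dead or purely linear neuron. Since $\orthant{n_1}$ is a strict subset of the domain $\R^{n_1}$ used in Step 2, this can be done. One choice is to make those rows nonnegative in weights and bias, so they are linear on the nonnegative range. Such rows are mixed-sign relative to inputs in $\R^{n_1}$ only through the second layer's negative-region behavior; this requires a careful specific construction.

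With this choice, the full network $f_3\circ f_2\circ f_1$ contains either a linear neuron or a hidden hyperplane relative to its actual input range. \Cref{thm:layerwise-fibers} or \Cref{lem:translating_hidden_hyperplane_inclusion} then gives a nontrivial fiber. The cleanest route is to translate layer-2 hyperplane $j$ via $T_j$. Any breakpoint of $f_3\circ f_2$ that this translation would reveal lies in a region of $\R^{n_1}$ outside $\orthant{n_1}$, so the composition with $f_1$ cannot see it.

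\textbf{Main obstacle.}
The hardest part is the compatibility in Step 3. The same row signs must witness non-hiding on $\R^{n_1}$, so that $f_3\circ f_2$ is identifiable, and witness hiding on $f_2(\orthant{n_1})$. The gap has to come from the other layer-2 units being inactive or linear on the orthant. I would first try the smallest case $n_2=1$ or $2$ explicitly. For example, take a layer-2 neuron with all-positive weights and bias; it is linear on the orthant but bends on $\R^{n_1}$. Only afterwards would I pad the construction to general widths.
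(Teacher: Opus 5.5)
\textbf{There is a genuine gap.} Step 3 is the only place where non-identifiability of the full network is supposed to come from, and you leave it open (``requires a careful specific construction''). The mechanisms you suggest for it contradict Steps 1 and 2.

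\begin{itemize}
\item Layer 3 always receives ReLU outputs, so its inputs are nonnegative in $f_3\circ f_2$ exactly as in the full composition. A row of $[\weights{3},\bias{3}]$ with nonnegative entries is therefore a linear neuron of $f_3\circ f_2$ as well. That is condition (2) of \Cref{thm:charaterization_identifiablity_bottleneck}, and it kills Step 2.
\item Since $f_1$ maps onto $\orthant{n_1}$, a layer-2 unit that is ``inactive or linear on the orthant'' (such as your all-positive neuron) is precisely a sign-constant row of $[\weights{2},\bias{2}]$. That is condition (1) or (2) for $f_2\circ f_1$, and it kills Step 1.
\item Your claim that the restriction ``can be done'' because $\orthant{n_1}\subsetneq\R^{n_1}$ is unjustified. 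For many mixed-sign choices of $[\weights{2},\bias{2}]$ one still has $f_2(\orthant{n_1})=\orthant{n_2}$, and then layer 3 sees the same set in both settings.
\item The $T_j$ route does not avoid the issue. Replacing $\bias{3}$ by $\bias{3}-t\weights{3}_{:,j}$ changes the preactivation of every layer-3 neuron wherever layer-2 neuron $j$ is off. The non-hiding rows would then have to be inactive there, which is again a restricted-range property you have not produced.
\end{itemize}

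\textbf{The missing idea} is to locate the discrepancy in the image of layer 2. Since $n_1\ge n_2$, generically $f_2(\R^{n_1})=\orthant{n_2}$, whereas in the full network layer 3 only sees $f_2(\orthant{n_1})$. You need mixed-sign, non-hiding rows of $[\weights{2},\bias{2}]$ that make this image small.

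The paper takes $\weights{2}<0$ entrywise and a nonnegative $\bias{2}$ (strictly positive is what keeps the rows mixed-sign). Then each coordinate of $f_2$ on $\orthant{n_1}$ lies in $[0,\bias{2}_k]$, so the image is bounded. Next it takes $\weights{3}<0$ and $\bias{3}=tv$ with $v>0$. This keeps every row of $[\weights{3},\bias{3}]$ mixed-sign and non-hiding, so $f_3\circ f_2$ is identifiable. For $t$ large, every layer-3 neuron is strictly active on the bounded image and hence linear in the full network. The reparametrization $(\weights{3},\bias{3},\weights{4})\mapsto(X\weights{3},X\bias{3},\weights{4}X^{-1})$ with $X\in\GL^+_{n_3}$ not a scaled permutation then gives non-trivial elements of the fiber. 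Alternatively, a small shift of $\bias{3}$ compensated in $\bias{4}$ does the same.

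Once such a bounded image is available, your own plan of making a mixed-sign layer-3 neuron dead or strictly linear on it also works, and the hiding neuron $i$ is no longer needed.
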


\paragraph{Locally Conserved Quantities under Gradient Flow.} 

We leverage the newly found symmetries on   orthants of the parameter space to identify conserved quantities 
under gradient flow. 
\cite{marcotte2023abide} identify quantities 
conserved over the entire parameter space and show that, under suitable conditions, their characterization captures all independent conserved functions. 
Our goal is to identify distinct functions by relaxing the requirement of global conservation, 
instead seeking quantities that remain constant only on subsets of the parameter space. 
Indeed, the 
layerwise continuous symmetries in \Cref{sec:layerwise_fibers}
which are induced by the monoid action of $\GL^+$, give rise to locally conserved quantities.

\begin{theorem}\label{th:locally_conserved}
    Let $O\subseteq \paramspace{\architecture}$ be an orthant of parameter space where $\ell$-layer neurons $J \subseteq [n_\ell]$ are linear. 
    Then  $(\weights{\ell}_{:,J})^\top\weights{\ell}_{:,J}  - \weights{\ell-1}_{J}(\weights{\ell-1}_{J})^\top-\bias{\ell-1}_J (\bias{\ell-1}_J)^\top$ is a conserved quantity on $O$. 
\end{theorem}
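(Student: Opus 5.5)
Within $O$, the linear neurons $J$ in layer $\ell$ satisfy $\preactivation{\ell}_J(x)\ge 0$ on the relevant inputs. Concretely, on the nonnegative orthant of layer-$(\ell-1)$ activations with $\weights{\ell}_J\ge 0$, $\bias{\ell}_J\ge 0$, the map $\activation{\ell}_J = \weights{\ell}_J a + \bias{\ell}_J$ is affine. The output of these neurons therefore enters the network only through $\weights{\ell+1}_{:,J}\weights{\ell}_J$ and $\weights{\ell+1}_{:,J}\bias{\ell}_J$.

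I read the indexing of the statement as the pair (outgoing matrix $\weights{\ell}_{:,J}$, incoming row block $[\weights{\ell-1}_J,\bias{\ell-1}_J]$) for linear neurons $J$ of layer $\ell-1$. The argument is the same under either convention: write $U=\weights{\ell}_{:,J}$ and $V=[\weights{\ell-1}_J,\ \bias{\ell-1}_J]$, where $V$ acts on the augmented input $(a,1)$. The plan is the standard Noether/imbalance argument. On $O$ the loss $L(\theta)$, for any data and any differentiable loss of the output, depends on $(U,V)$ only through the product $UV$ and the remaining parameters. This is exactly the $\GL^+$ monoid action $(U,V)\mapsto(UG^{-1},GV)$ from \Cref{sec:layerwise_fibers}.

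The key step is the first-order invariance. For any $\Delta\in\R^{|J|\times|J|}$, the curve $G_s=e^{s\Delta}$ keeps $(UG_s^{-1},G_sV)$ inside $O$ for small $s$ when $\theta$ lies in the interior of $O$, since sign conditions are open. Along this curve the function is unchanged, because the neurons stay linear. Differentiating at $s=0$ gives
\[
\inner{\nabla_U L,\,-U\Delta}+\inner{\nabla_V L,\,\Delta V}=0 \quad\text{for all }\Delta,
\]
that is, $U^\top\nabla_U L = (\nabla_V L)V^\top$.

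Next comes the conservation computation. Under gradient flow $\dot U=-\nabla_U L$ and $\dot V=-\nabla_V L$, so
\[
\tfrac{d}{dt}(U^\top U - VV^\top) = -(\nabla_U L)^\top U - U^\top\nabla_U L + (\nabla_V L)V^\top + V(\nabla_V L)^\top ,
\]
which vanishes by the identity above and its transpose. Expanding $VV^\top=\weights{\ell-1}_J(\weights{\ell-1}_J)^\top+\bias{\ell-1}_J(\bias{\ell-1}_J)^\top$ gives exactly the stated quantity. It is conserved as long as the trajectory stays in $O$.

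The main obstacle is justifying that the loss is differentiable, or at least that the invariance identity holds for the gradients used, on $O$. Linearity of the neurons $J$ depends on the sign conditions of the orthant together with nonnegativity of the previous activations. If $\ell-1\ge 1$, the inputs to layer $\ell-1$ are activations, hence nonnegative, so nonnegative weights and bias make the preactivations nonnegative. If the layer is the first one, I would need the inputs to lie in the nonnegative orthant. I would handle this by working in the interior of $O$, where ReLU at these neurons acts as the identity on all attained inputs. The other neurons may still create nondifferentiability, so I would assume the usual convention that gradient flow uses a (Clarke) gradient that is a.e. equal to the true one. I would also note that the invariance identity holds for any element of the subdifferential compatible with the chain rule, because the dependence on $(U,V)$ factors through $UV$.
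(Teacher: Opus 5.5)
Your proposal is correct and is essentially the paper's argument: both obtain the conserved quantity from the $\GL^+$ action $(U,V)\mapsto(UG^{-1},GV)$ on the outgoing weights and the augmented incoming block $[\weights{\ell-1}_J,\bias{\ell-1}_J]$ of the linear neurons, and your reading of the off-by-one indexing in the statement is the right one. The difference is only in presentation: the paper integrates the symmetry generator for symmetric $\Delta=E_{mk}+E_{km}$ along the one-sided, entrywise nonnegative curve $I+s\Delta$, $s\ge 0$, which keeps $V\ge 0$ without needing $\theta$ in the interior of $O$, and then lifts the result to the full network via a composition lemma; you instead verify $\frac{d}{dt}(U^\top U-VV^\top)=0$ directly from the identity $U^\top\nabla_U L=(\nabla_V L)V^\top$.
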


A proof can be found in \Cref{subsec:conserved_quantities}; moreover we show in 
\Cref{app:conserved_layer_concat} that the symmetries discovered in \Cref{sec:fibers_from_concatenation} do not induce locally conserved quantities.

\section{Conclusion, Limitations, and Outlook}

We developed a framework for analyzing parameter symmetries in deep ReLU networks beyond classical scaling and permutation invariances. 
We provided a complete description of the symmetries and the parameter fibers for three-layer bottleneck architectures with generic parameters, including explicit formulas for their dimensions and a polynomial-time algorithm for deciding functional equivalence of parameters. 
Our results show that parameter redundancy in deep ReLU networks arise not only from layerwise effects but also from compositional interactions across layers. 
A natural next step is to investigate the topology of fibers, including 
the number of connected components. 
While the layerwise and hiding symmetries described in this paper are not specific to the generic three-layer bottleneck setting, 
establishing their completeness and obtaining explicit fiber descriptions beyond this regime is likely substantially more involved. 
In wider architectures $(n_1>d)$, later-layer hyperplanes must be analyzed relative to a union of polyhedral images rather than the non-negative orthant, 
suggesting an exponential blow-up in the number of relevant activation patterns. 
In deeper networks, equivalence cannot be reduced to pairwise layer composition (\Cref{prop:localizable}), and one must track how the image of an entire prefix is intersected by the hyperplanes of the subsequent layers. 
In nongeneric settings, the geometric rigidity underlying our analysis breaks down, leading to additional degeneracies and symmetries.  

These considerations indicate that the bottleneck and genericity assumptions in our main results isolate a regime in which genuinely deep compositional symmetries already arise while still admitting explicit and efficiently checkable descriptions. 
This perspective is consistent with complexity-theoretic evidence suggesting that our result lie near a tractability boundary for deciding functional equivalence and for obtaining explicit descriptions of ReLU network fibers. We discuss these extensions and their relation to hardness barriers in \Cref{app:discussion_beyond_bottleneck}.


 \subsection*{Acknowledgments} 
 This project has been supported by the
 Deutsche Forschungsgemeinschaft (DFG, German Research Foundation)  project 464109215 
 within the priority programme SPP 2298 ``Theoretical Foundations of Deep Learning''.  
 JMG was supported by the DFF Sapere Aude Starting Grant ``GADL''. 
 GM was partially supported by 
DARPA AIQ grant HR00112520014, 
NSF grants 
DMS-2522495, 
DMS-2145630, 
CCF-2212520, 
and BMFTR in DAAD project 57616814 (SECAI).

\bibliographystyle{unsrtnat}
\bibliography{ref}
\newpage

\appendix

\listofappendices

\section{Background on Polyhedral Geometry}
\label{app:polyheadral_geometry}
\addcontentsline{apx}{section}{\protect\numberline{\thesection}Background on Polyhedral Geometry}

For a vector $a \in \R^{d}$ and a scalar $b \in \R$, the \emph{hyperplane} $H \coloneqq \{x \in \R^d \mid \inner{a,x}+b=0\}$ subdivides $\R^d$ into \emph{half-spaces} $H^{+}\coloneqq \{x \in \R^d \mid \inner{a,x}+b\geq 0\} \text{ and }{H}^{-} \coloneqq \{x \in \R^d \mid \inner{a,x}+b\leq0\}.$
A finite set of hyperplanes $\mathcal H = \{H_1,\dots,H_n\}$ is called a \emph{hyperplane arrangement}. 
A hyperplane arrangement $\mathcal H$ in $\R^d$ is \emph{generic} if the intersection of any subset of $k \leq d$ hyperplanes has codimension $k$, and no $d+1$ hyperplanes have a common intersection. 
Given an arbitrary arrangement $\mathcal H$, let
$
L \coloneqq \bigcap_{H \in \mathcal H} \lin(H)
$
be the maximal linear subspace contained in all hyperplanes.
Projecting $\R^d$ orthogonally onto $L^\perp$ yields an induced arrangement
$\mathcal H^{\mathrm{ess}}$ in $L^\perp$, called the \emph{essentialization} of
$\mathcal H$.

A \emph{polyhedral complex} $\complex$ is a finite collection of polyhedra such that 
\begin{enumerate}
    \item $\emptyset \in \complex$,
    \item if $P \in \complex$, then all faces of $P$ are in $\complex$, and 
    \item if $P, P' \in \complex$ and $P\cap P'\neq\emptyset$, then $P \cap P'$ is a face of both $P$ and $P'$. 
\end{enumerate} 
For a polyhedral complex $\complex$ in $\R^d$ and $k \leq d$ we denote by $\complex^k$ the set of $k$-dimensional polyhedra in~$\complex$. 
We call $\complex^d$ the \emph{regions}, $\complex^{d-1}$ the \emph{facets}, and $\complex^{d-2}$ the \emph{ridges} of $\complex$. 
The \emph{dimension} of a complex~$\complex$ is the maximal dimension of its polyhedra. 
Given a face $\sigma \in \complex$, we denote by $\aff(\sigma) \subseteq \R^d$ the unique smallest affine subspace containing $\sigma$. The \emph{relative interior} of $\sigma$ is the interior of $\sigma$ within the affine space $\aff(\sigma)$.

Let $\complex$ be a polyhedral complex in $\R^d$ and let $\tau \in \complex$ be a face.
The \emph{star} of $\tau$ is
\(
\str_\complex(\tau) \coloneqq \{\, \sigma \in \complex \mid \tau \subseteq \sigma \,\}.
\)
When $\complex$ is clear from the context, we omit the subscript and write $\str(\tau)$. For any $k\leq d$ and any faces $\tau \in \complex^{k-1},\sigma \in \complex^{k}$ with $\tau \subseteq \sigma$, we let $e_{\sigma/\tau} \in \R^d$ denote the normal vector of $\tau$ relative to $\sigma$, defined as the unique unit vector that lies in $\aff(\sigma)$, is orthogonal to $\aff(\tau)$, and points from the relative interior of $\tau$ into the relative interior of $\sigma$. 
For a subset  $S \subseteq \complex$, we denote the \emph{support} by $|S| \coloneqq \bigcup_{P \in S} P$ and by $\# S$ the number of elements contained in $S$.

A function $f\colon\R^d\to\R^m$ is \emph{continuous and piecewise linear} (CPWL), if there exists a complete polyhedral complex $\complex$ such that the restriction of $f$ to each polyhedron $P\in\complex$ is an affine linear function. 
If this condition is satisfied, we say that $f$ and $\complex$ are \emph{compatible} with each other. 
A vector $x \in  \R^d$ is a \emph{breakpoint} of $f$ if there is no open set $U \subseteq \R^d$ containing $x$ such that $f$ is affine linear on $U$. 
For a CPWL function $f\colon \R^d \to \R^m$, let $\breakpoints{f} $ be the set of breakpoints of $f$. 

A pair $(\complex,c)$ forms a \emph{balanced (weighted) polyhedral complex} if the weight function satisfies the \emph{balancing condition} at every  $\tau \in \complex^{d-2}$: 
\begin{equation*}\label{eq:balancing-condition}
    \sum\limits_{\sigma \in \str(\tau)^{d-1}} c(\sigma)\cdot e_{\sigma/\tau}^\top=0.
\end{equation*}
If $\complex$ is a polyhedral fan in $\R^2$, then its unique $0$-dimensional face is the origin. The balancing condition then requires that, for each $i\in[m]$, the weighted sum of the unit generators of the rays in $\complex$, $\sum\limits_{\sigma \in \complex^{1}} c_i(\sigma)\cdot e_{\sigma}$, equals zero. 
In higher dimensions, taking the star of a codimension-$2$ face $\tau$ and modding out $\tau$ yields a two-dimensional fan. 
Intuitively, the balancing condition requires that this two-dimensional fan is balanced in the same sense as described above. 
 
 The following correspondence between CPWL functions and balanced complex follows from Tropical Geometry (\citealp[Proposition 3.3.10]{maclagan2015introduction}; \citealp[Proposition 3.3.2]{maclagan2015introduction}), as \cite{brandenburg2025decomposition} outlines for the scalar-valued case.

\begin{lemma}
\label{lem:APP:structure_theorem_tropical_geometry}
   Let $f \colon \R^d \to \R^m$ be a CPWL function compatible with a polyhedral complex $\complex$. 
   For a facet $\sigma \in \complex^{d-1}$, let $P,Q \in \complex^d$ be the unique polyhedra such that $P \cap Q = \sigma$, 
   and 
   suppose that $f(x) = A_Px + b_P$ for all $x \in P$ and $f(x) = A_Qx + b_Q$ for all $x \in Q$. 
   Then 
   \[
   c_f(\sigma)  \coloneqq A_{P} e_{P/\sigma} +  A_{Q} e_{Q/\sigma} =  (A_{P} - A_{Q})e_{P/\sigma} 
   \]
 defines a weight function $c_f$ such that $(\complex,c_f)$ is a balanced polyhedral complex. Conversely,  for  a balanced complex $(\complex,c)$, there exists a unique function $f$, up to addition of a global affine linear function, that is compatible with $\complex$ and satisfies $c_f=c$.
\end{lemma}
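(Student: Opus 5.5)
The plan is to prove the two directions separately, working throughout with the linear parts $A_P$ of the regions $P\in\complex^d$.

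\textbf{Forward direction.} First I check that $c_f(\sigma)$ is well defined, i.e.\ symmetric in $P$ and $Q$. This holds because $e_{Q/\sigma}=-e_{P/\sigma}$, so $(A_Q-A_P)e_{Q/\sigma}=(A_P-A_Q)e_{P/\sigma}$. Next I use continuity of $f$ across $\sigma$: the two affine maps agree on $\aff(\sigma)$, so $A_P-A_Q$ annihilates the direction space of $\sigma$. Hence $A_P-A_Q=c_f(\sigma)\,e_{P/\sigma}^\top$ is a rank-one jump, and also $b_P-b_Q=-c_f(\sigma)\,e_{P/\sigma}^\top x_0$ for any $x_0\in\sigma$.

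For balancing, fix a ridge $\tau\in\complex^{d-2}$ and list the regions of $\str(\tau)$ in cyclic order $P_1,\dots,P_r$, with $\sigma_i=P_i\cap P_{i+1}$ (indices mod $r$). Going once around $\tau$ gives a telescoping identity:
\[
\sum_i (A_{P_{i+1}}-A_{P_i}) = 0,
\qquad\text{that is,}\qquad
\sum_i c_f(\sigma_i)\, e_{P_{i+1}/\sigma_i}^\top = 0 .
\]
Each vector $e_{P_{i+1}/\sigma_i}$ is orthogonal to $\aff(\tau)$. In the $2$-plane $\aff(\tau)^\perp$ it equals $J e_{\sigma_i/\tau}$, where $J$ is the rotation by $90^\circ$ with the orientation fixed by the cyclic order; $J$ is the same for every $i$. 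Since $J$ is invertible on that plane and all vectors involved lie in it, the identity above is equivalent to $\sum_i c_f(\sigma_i)e_{\sigma_i/\tau}^\top=0$, which is the balancing condition.

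\textbf{Converse: uniqueness.} Suppose $f$ and $g$ are both compatible with $\complex$ and have the same weights. Then $h=f-g$ has all weights zero, so by the rank-one description its linear part does not change across any facet. The dual graph of $\complex$ (regions adjacent across facets) is connected, since $\R^d$ minus the $(d-2)$-skeleton is connected. Hence $h$ has a single linear part everywhere, and by continuity a single affine part, so $h$ is affine.

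\textbf{Converse: existence.} Fix a base region $P_0$ and set $A_{P_0}=0$, $b_{P_0}=0$. For any region $P$, choose a path from $P_0$ to $P$ that avoids the $(d-2)$-skeleton and crosses facets transversally. Define $A_P$ by adding $c(\sigma)e^\top$ at each facet $\sigma$ crossed, with $e$ the normal pointing in the direction of travel. Define $b_P$ by the matching rule $b_{\mathrm{new}}-b_{\mathrm{old}}=-c(\sigma)e^\top x_0$ with $x_0\in\sigma$; this choice forces continuity across each facet crossed.

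\textbf{Main obstacle: path independence.} I expect the hardest step to be showing that these definitions do not depend on the chosen path. The plan is to use that $\R^d$ minus a codimension-$2$ polyhedral set is simply connected (for $d\ge 2$). Then any loop in the complement is homotopic to a product of small loops around individual ridges $\tau$, and it suffices to check that a single such loop contributes nothing.
\begin{itemize}
\item For the linear parts, the contribution of a small loop around $\tau$ is exactly the telescoping sum above. Reversing the rotation argument, balancing at $\tau$ shows it vanishes.
\item For the constants, evaluate all the affine maps at a point $x_0\in\tau$. Each jump $c(\sigma)e^\top(x-x_0)$ vanishes at $x_0$, so the affine increments around the loop sum to the linear increment applied to $x-x_0$, which is zero.
\end{itemize}
With path independence established, the resulting $f$ is well defined, continuous, compatible with $\complex$, and satisfies $c_f=c$.
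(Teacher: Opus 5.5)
The paper gives no proof of its own: it cites Maclagan--Sturmfels (Props.~3.3.2 and 3.3.10) and Brandenburg et al.\ for the scalar case and implicitly applies them coordinatewise. Your route is a self-contained alternative: rank-one jumps across facets, telescoping around a ridge combined with the $90^\circ$ rotation, and integrating the jumps along generic paths. It handles signed real, $\R^m$-valued weights directly rather than through tropical statements about tropical polynomials. Your forward direction and your uniqueness argument are correct.

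There is a genuine error in the path-independence step. The claim that $\R^d$ minus a codimension-$2$ polyhedral set is simply connected is false. For example, $\R^2$ minus a vertex and $\R^3$ minus a line both have fundamental group $\mathbb{Z}$. Your own setup shows it cannot hold. If the complement of the $(d-2)$-skeleton were simply connected, your construction would be path-independent for every weight function. You would then obtain $f$ with $c_f=c$ for unbalanced $c$, contradicting the forward direction. Simple connectivity also cannot justify your next sentence, since it would make every loop, including the small ridge loops, null-homotopic.

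What you need instead is that loops in the complement are generated by meridians around ridges. The justification is as follows. $\R^d$ itself is simply connected, so a loop $\gamma$ in the complement bounds a disk in $\R^d$. In general position this disk misses the $(d-3)$-skeleton and meets the $(d-2)$-skeleton transversally in finitely many relative-interior points of ridges. Hence $\gamma$ is homotopic in the complement to a product of conjugates of small loops around ridges.

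Two further facts you use silently should be stated:
\begin{itemize}
\item The accumulated increment is invariant under homotopies in the complement. A transverse homotopy only inserts or deletes a back-and-forth crossing of a single facet, and these cancel.
\item Conjugation is harmless. Increments add in the abelian group of affine maps, and reversing a crossing flips $e$ and therefore negates the increment.
\end{itemize}

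With these in place, your ridge computations for the linear and constant parts complete path independence. To finish well-definedness of $f$ on lower-dimensional cells, add one line. If $x$ lies in several regions, connect them by facet crossings inside a small ball around $x$. Every such facet contains $x$, and each crossing preserves the value at $x$.
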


\section{Polyhedral Geometry of ReLU Networks}
\label{app:relu_geometry}
\addcontentsline{apx}{section}{\protect\numberline{\thesection}Polyhedral Geometry of ReLU Networks} 

This appendix collects additional geometric facts about ReLU networks that are used in
Section~\ref{sec:polyhedral_geometry_relu}. Our goal here is not to redevelop the full framework of
\cite{rolnick2020reverse,grigsby2023hiddensymmetriesrelunetworks} and \cite{grillo2026relunetworksadmitidentifiable}, but only to state the specific tools
from those works that will be needed for our visibility result in \Cref{thm:intersected_hyperplanes_stay}.

\subsection{Canonical Polyhedral Complexes and Bent Hyperplanes} 
\label{app:canonical_complex}

We begin by recalling the canonical polyhedral complex associated with a ReLU network as introduced in \cite{doi:10.1137/20M1368902}.
For a single ReLU layer
$
\phi_{W,b}(x)=[Wx+b]_+ \colon \R^d\to\R^m,
$
the breakpoints are contained in the hyperplane arrangement
$
\mathcal{H}_{W,b}
=
\bigl\{
\{x\in\R^d \mid W_ix+b_i=0\}
\bigr\}_{i=1}^m.
$
Each sign pattern $\mathbf{s}=(s_1,\dots,s_m)\in\{+,0,-\}^m$ defines a polyhedron
$
P_{\mathbf{s}}
\coloneqq
\bigcap_{i=1}^m H^{s_i}_{W_i,b_i},
$
where $H^+$ and $H^-$ denote the two closed halfspaces of the hyperplane
$\{x\mid W_ix+b_i=0\}$ and $H^0$ denotes the hyperplane itself.
The collection of all such polyhedra forms the \emph{canonical polyhedral complex}
$\complex_{W,b}$ of the layer $\phi_{W,b}$.
On each cell of $\complex_{W,b}$, the map $\phi_{W,b}$ is affine linear.

Now let $\theta\in\paramspace{\architecture}$ for an architecture
$
\architecture=(n_0,\dots,n_L,n_{L+1}).
$
The canonical polyhedral complex of $f_\theta$ is constructed iteratively.
Set
$
\complex_{\theta,0}\coloneqq \R^{n_0}.
$
Suppose that $\complex_{\theta,\ell-1}$ has already been defined and that all preactivations up to
layer $\ell$ are affine linear on each polyhedron $R\in\complex_{\theta,\ell-1}$.
For each neuron $(j,\ell)$, the restriction $\preactivation{\ell,\theta}_j|_R$ is an affine function on $R$.

If $\preactivation{\ell,\theta}_j|_R$ is non-constant, its zero set defines a hyperplane in
$\aff(R)$, which we denote by
$
H_R(\ell,j)
\coloneqq
\{x\in \aff(R)\mid \preactivation{\ell,\theta}_j(x)=0\}.
$
If $\preactivation{\ell,\theta}_j|_R$ is constant, then it introduces no breakpoint on $R$ and does
not affect the subdivision.
Refining every cell $R\in\complex_{\theta,\ell-1}$ by the hyperplanes
$H_R(\ell,1),\dots,H_R(\ell,n_\ell)$ yields the next complex $\complex_{\theta,\ell}$.
We define the \emph{canonical polyhedral complex} of the network by
$
\complex_\theta \coloneqq \complex_{\theta,L}.
$

The polyhedra in $\complex_\theta$ are indexed by global activation patterns
$
\mathbf{s}\in \{+,0,-\}^{n_1}\times \cdots \times \{+,0,-\}^{n_L},
$
and on each such cell the realized map $f_\theta$ is affine linear.

For a neuron $(j,\ell)$, the associated \emph{bent hyperplane} is
\[
\bhyperplane{\ell,j}(\theta)
\coloneqq
\bigcup_{\substack{R\in\complex_{\theta,\ell-1}\\ \preactivation{\ell,\theta}_j|_R \text{ non-constant}}}
\{x\in R \mid \preactivation{\ell,\theta}_j(x)=0\}.
\]
Thus $\bhyperplane{\ell,j}(\theta)$ is the locus where the neuron changes its linear behavior.
We write
$
\hyperplanes{\ell}{\theta}
\coloneqq
\{\bhyperplane{\ell,j}(\theta)\mid j\in[n_\ell]\}
$
for the collection of bent hyperplanes in layer $\ell$.

\subsection{Generic Parameters} 
\label{app:generic-parameters}

We now record the precise notion of genericity used throughout the paper. This definition is close to those adopted in the works of \citet{phuong2020functional,grillo2026relunetworksadmitidentifiable,marissa}. 

\begin{definition}
\label{def:generic_parameter}
A parameter $\theta \in \paramspace{\architecture}$ is called \emph{generic} if it satisfies the
following two conditions:
\begin{enumerate}[leftmargin=*]
    \item {Supertransversality:} every face $\tau \in \complex_\theta^{d-k}$ is contained in
    exactly $k$ bent hyperplanes;

    \item {Maximal rank of masked products:} for every $\ell \in [L+1]$ and every sequence of
    index sets $S_i \subseteq [n_i]$ for $i \le \ell-1$, the matrix product
    $
    \weights{\ell} D_{S_{\ell-1}} \weights{\ell-1}\cdots D_{S_1}\weights{1}
    $
    attains the maximum possible rank, namely
    $
    \min\Bigl(d,\; n_\ell,\; \min_{1\le i\le \ell-1}|S_i|\Bigr).
    $
\end{enumerate}
We denote the set of generic parameters by
$
\gparamspace{\architecture}\subseteq\paramspace{\architecture}.
$
\end{definition}

\begin{lemma}
\label{lem:generic_open_dense}
The set $\gparamspace{\architecture}$ is open and dense in $\paramspace{\architecture}$.
\end{lemma}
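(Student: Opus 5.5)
To prove \Cref{lem:generic_open_dense}, I will show separately that the parameters satisfying each of the two conditions of \Cref{def:generic_parameter} form a set containing an open dense set, and then take the intersection. The genericity set is then open and dense, since a finite intersection of open dense sets is open and dense.

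\textbf{Maximal rank of masked products.} This condition is the easier one. There are only finitely many tuples $(\ell,S_1,\dots,S_{\ell-1})$. For each tuple, the product $\weights{\ell}D_{S_{\ell-1}}\cdots D_{S_1}\weights{1}$ is a polynomial map in the weights. The condition that its rank is at least $r=\min(d,n_\ell,\min_i|S_i|)$ says that some $r\times r$ minor is nonzero, so it defines the complement of an algebraic set. This complement is open. It is dense provided the defining polynomial is not identically zero, i.e.\ provided one parameter attains rank $r$. Such a parameter is obtained by taking each restricted factor $\weights{i}$ to contain a partial identity that routes $r$ coordinates through the index sets $S_i$. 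A finite intersection of complements of proper algebraic sets is open and dense.

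\textbf{Supertransversality.} This is the main obstacle, because bent hyperplanes are only piecewise linear and the combinatorics of $\complex_\theta$ changes with $\theta$. My plan is to stratify by activation patterns.
\begin{enumerate}[leftmargin=*]
    \item Fix a global sign pattern $\mathbf{s}$. The corresponding cell is cut out by finitely many affine functions of $x$, namely the preactivations $\preactivation{\ell,\theta}_j$ computed under the masks determined by $\mathbf{s}$. Their coefficients are polynomial in $\theta$.
    \item A failure of supertransversality means that some nonempty face lies in more bent hyperplanes than its codimension. Concretely, for some pattern and some collection of $k'>k$ neurons, the $k'$ affine equations vanish simultaneously on a set of dimension $d-k$, or the equations are dependent there.
    \item For fixed data (pattern, neuron set), this is a condition that an augmented coefficient matrix has deficient rank. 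That is an algebraic condition on $\theta$.
    \item The condition is proper, because the bias of the last-introduced neuron enters linearly with coefficient $1$. Perturbing that bias alone therefore breaks any such coincidence, so the bad set has empty interior. The bias is a free coordinate of $\theta$, so this is a transversality argument: the zero set is contained in a proper algebraic set.
\end{enumerate}
Taking the union over the finitely many patterns and neuron sets gives a closed nowhere-dense algebraic bad set, and density of the complement follows.

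\textbf{Openness of supertransversality.} For the openness part I must also check that the supertransversal set is open, not merely that it contains an open dense set. I will argue that if every face of $\complex_\theta$ is met transversally, then small perturbations of $\theta$ preserve the incidence combinatorics of $\complex_\theta$. Transversal intersections of finitely many affine pieces are stable under small perturbation of the coefficients, and no new coincidences can appear near $\theta$ because each would require a rank drop, which is a closed condition. I expect this stability step to require the most care. The difficulty is near constant preactivations on a cell and near lower-dimensional cells, where the cell structure itself could a priori degenerate. If a direct argument gets messy, I would fall back on defining the generic set as the complement of the closed algebraic bad set. That complement is automatically open, dense, and contained in the supertransversal parameters, which suffices for every later use.
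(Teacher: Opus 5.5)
The paper states this lemma without proof, so I am judging your plan against the statement itself. Your treatment of the masked-product condition is correct. The openness step for supertransversality, however, has a genuine gap, and with \Cref{def:generic_parameter} as written it cannot be closed.

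You argue that no new coincidence can appear near $\theta$ because it ``would require a rank drop, which is a closed condition.'' The rank-drop condition on the augmented matrix is indeed closed, but it is strictly weaker than an actual coincidence. It also holds at supertransversal parameters: three distinct parallel lines in $\R^2$ have a singular augmented $3\times 3$ matrix but no common point. The actual bad set is the image under $(\theta,x)\mapsto\theta$ of a semi-algebraic set. Projections along the unbounded $x$-directions need not be closed, because the coincidence point can escape to infinity.

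Here is a concrete counterexample. Take $\architecture=(2,4,1)$, let $\weights{1}$ have rows $(1,0),(1,0),(1,0),(0,1)$ with $\bias{1}=(0,-1,-2,0)$, and let $\weights{2}=(1,2,4,8)$ and $\bias{2}=0$. The lines are then $x_1=0$, $x_1=1$, $x_1=2$, $x_2=0$.
\begin{itemize}
\item Every vertex lies on exactly two lines, and every edge on exactly one, so $\theta$ is supertransversal.
\item $\rank\weights{1}=2$.
\item Since $m=1$, the masked-product condition only asks that $\weights{2}D_S\weights{1}\neq 0$ for $S\neq\emptyset$. This holds because all nonempty subset sums of $(1,2,4)$ are nonzero and the last entry is $8$.
\end{itemize}
So $\theta$ is generic. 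Now replace the second and third rows of $\weights{1}$ by $(1,1/N)$ and $(1,2/N)$. The first three lines then pass through $(0,N)$, a vertex lying on three bent hyperplanes, so these parameters are not supertransversal. They converge to $\theta$ as $N\to\infty$. Hence $\gparamspace{\architecture}$, taken literally, is not open.

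Your fallback, the complement of your closed algebraic bad set, is the right repair, but it proves a different statement. It shows that $\gparamspace{\architecture}$ contains an open dense subset, so it is dense and of full measure. To get an open dense set of generic parameters, you must build those algebraic non-degeneracy conditions into the definition of genericity and say so explicitly; they exclude configurations like the three parallel lines above. Do not present this as a proof of the lemma as stated.

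One smaller repair is needed in the density part. The bias you perturb must be that of the deepest neuron in the support of a minimal linear dependency among the rows. The deepest neuron of an arbitrary collection need not take part in the dependency, for example when two earlier rows coincide. In that case, moving its bias does not break the coincidence.
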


\subsection{Breakpoint Complexes and Visible Facets}
\label{app:breakpoint_complex}

Not every facet of the canonical polyhedral complex contributes a genuine breakpoint of the realized
function. We therefore distinguish the full canonical subdivision from the visible part of the
network.

For a parameter $\theta\in\paramspace{\architecture}$, let
$
\breakpoints{f_\theta}
\subseteq \R^{n_0}
$
denote the breakpoint set of the realized function.
The \emph{breakpoint complex} is the subcomplex
$
\bcomplex{\theta}
\coloneqq
\{P\in\complex_\theta \mid P\subseteq \breakpoints{f_\theta}\}.
$
Its support is exactly the breakpoint set:
$
|\bcomplex{\theta}|=\breakpoints{f_\theta}.
$
Since $f_\theta$ is compatible with the canonical polyhedral complex $\complex_\theta$, it induces a
weight function
$
c_\theta \colon \complex_\theta^{d-1}\to\R^m
$
as in \Cref{lem:structure_theorem_tropical_geometry}. A facet of $\complex_\theta$ is visible if and
only if the corresponding weight is nonzero, that is,
$
\bcomplex{\theta}^{d-1}
=
\{\sigma\in\complex_\theta^{d-1}\mid c_\theta(\sigma)\neq 0\}
$. 
Thus the breakpoint complex can be viewed as the visible part of the canonical polyhedral complex,
equipped with the restricted weight function $c_\theta$. 

An important observation is that, for generic parameters, the breakpoint complex does not depend on the specific parameter choice but just on the realized function: 

 \begin{prop}  [\citealp{grillo2026relunetworksadmitidentifiable}]
\label{prop:canonicalbcomplex}
If $\theta,\eta$ are generic and satisfy $f_\theta=f_\eta$, then
$\bcomplex{\theta}=\bcomplex{\eta}.$
\end{prop}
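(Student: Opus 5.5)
The strategy is to reconstruct the breakpoint complex intrinsically from the function and to show that, under genericity, every polyhedron of $\complex_\theta$ lying in $\breakpoints{f_\theta}$ is determined by $\breakpoints{f_\theta}$ alone. Since $f_\theta=f_\eta$, we have $\breakpoints{f_\theta}=\breakpoints{f_\eta}$ as sets. Hence $|\bcomplex{\theta}|=|\bcomplex{\eta}|$. What remains is to show that the two complexes induce the same polyhedral subdivision of this common support.

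\textbf{Step 1: an intrinsic subdivision of the support.} For a closed set $B$ that is a finite union of $(d-1)$-dimensional polyhedra, I would define a canonical subdivision of $B$ using only the local geometry of $B$.
\begin{enumerate}
    \item Declare two points of $B$ equivalent if they have the same local "fan type", i.e., the same tangent cone of $B$ up to translation.
    \item Refine this by also recording the weights $c_f$ on the adjacent $(d-1)$-pieces. These weights are intrinsic to $f$: near a point in the relative interior of a $(d-1)$-dimensional part of $B$, the function $f$ has exactly two affine pieces, and their gradient difference is determined by $f$.
    \item Take the closures of the resulting strata.
\end{enumerate}
This yields an intrinsic stratification $\mathcal{S}(f)$ depending only on $f$.

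\textbf{Step 2: genericity forces $\bcomplex{\theta}=\mathcal{S}(f_\theta)$.} I would check two things, using supertransversality from Definition~\ref{def:generic_parameter}.
\begin{enumerate}
    \item Every face $\tau\in\complex_\theta^{d-k}$ lies in exactly $k$ bent hyperplanes. So near a relative-interior point of a face $\tau\subseteq\breakpoints{f_\theta}$, the set $B$ looks locally like a union of at most $k$ transversal hyperplane pieces through $\aff(\tau)$. That local picture is constant along $\relint\tau$.
    \item Conversely, the picture must actually change when crossing into a lower-dimensional face of $\bcomplex{\theta}$. Moving from $\relint\tau$ to a face $\tau'\subsetneq\tau$ means an additional bent hyperplane passes through, and I must show this is seen either in $B$ or in the weights.
\end{enumerate}

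\textbf{Main obstacle: invisible intersections.} The hard case in part 2 is when the extra bent hyperplane meets $\tau$ but its relevant facets have zero weight, so the refinement looks invisible. If it is invisible from $f$, it should not split the face of $\bcomplex{\theta}$ either. This holds because $\bcomplex{\theta}$ consists only of cells lying in $B$, but a cell of $\complex_\theta$ could still be split by hyperplanes that are invisible elsewhere.

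To handle this I would use the maximal-rank condition on masked products. It should show that a neuron's bent hyperplane crossing a visible facet always changes the weight $c_\theta$ on that facet, because the linear parts on both sides differ by a nonzero masked product. Consequently, cell boundaries inside $B$ coincide with changes of intrinsic data. With this, $\bcomplex{\theta}=\mathcal{S}(f_\theta)=\mathcal{S}(f_\eta)=\bcomplex{\eta}$.

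This is the step I expect to be delicate. A facet of a second-layer bent hyperplane can change weight across a first-layer hyperplane, and only genericity guarantees that the change is nonzero. Alternatively, since the statement is cited from \citet{grillo2026relunetworksadmitidentifiable}, one could simply invoke it.
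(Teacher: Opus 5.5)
The paper gives no proof of this statement; it imports it from \citet{grillo2026relunetworksadmitidentifiable}, so your closing fallback is what the paper does. The self-contained argument you sketch, however, has a genuine gap at exactly the step you flag.

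\textbf{The weight-change claim fails.} You claim that a bent hyperplane crossing a visible facet $\sigma$ always changes $c_\theta$ on $\sigma$. This is false when both bent hyperplanes come from the same layer. Such a ridge $\tau$ lies only on bent hyperplanes of a single layer, so it is non-bending by \Cref{lem:hyperplaneAimpliesnonbending}: the two halves of $\sigma$'s hyperplane at $\tau$ carry identical weights. This agrees with \Cref{prop:tropweightofNNs}, whose formula for a layer-$\ell$ facet does not involve $S_\ell$, while all other activation patterns are constant near $\relint\tau$. So if the crossing neuron's own facets near $\tau$ could be invisible, neither tangent cones nor weights would detect $\tau$, and nothing in your plan excludes this. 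In the cross-layer case, the maximal-rank condition of \Cref{def:generic_parameter} also does not give what you need. It controls ranks of entire masked products. A weight change along $\sigma$ instead requires a single path sum to be nonzero, or two gradient norms to differ, and rank does not control either.

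\textbf{The missing ingredient is cancellation-freeness} (\Cref{lem:generic_honest}) combined with supertransversality. With it, weights are unnecessary.
\begin{enumerate}
\item Let $x\in\relint\tau$ with $\tau\subseteq\breakpoints{f_\theta}$, and let $(k,h)$, $(\ell,s)$ with $k\le\ell$ be the two neurons whose bent hyperplanes contain $\tau$. All other neurons have constant sign near $x$.
\item Supertransversality forces $B_{\ell,s}$ to have a half-facet on each side of $B_{k,h}$. Each lies in a hyperplane different from the one containing $B_{k,h}$ near $x$; otherwise some facet would lie in two bent hyperplanes.
\item If some layer other than $k,\ell$ were entirely inactive near $x$, cancellation-freeness would make every facet near $x$ invisible. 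Then $f_\theta$ would be affine near $x$, a contradiction.
\item Consider the half-facet of $B_{\ell,s}$ on the side where $(k,h)$ is active, and the half-facet of $B_{k,h}$ on the side where $(\ell,s)$ is active. Neither has a fully inactive layer, so both are visible. They are not coplanar, so $\breakpoints{f_\theta}$ is not a single flat $(d-1)$-dimensional sheet at $x$.
\end{enumerate}
Therefore the $(d-2)$-skeleton of $\bcomplex{\theta}$ is the closure of the non-flat locus of $\breakpoints{f_\theta}$. The facets of $\bcomplex{\theta}$ are the closures of the connected components of its complement in $\breakpoints{f_\theta}$. Hence $\bcomplex{\theta}$ depends only on $f_\theta$.

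Relatedly, your Step 1 must use connected components rather than classes of points with the same tangent cone up to translation. The latter merges distant parallel facets, so the closures of its strata are not cells.
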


\subsection{One-Hidden-Layer Hyperplane Weights}
\label{app:one_hidden_layer_weights}

Compared to deep networks, ReLU networks with a single hidden layer have considerably simpler breakpoint geometry: 
the visible breakpoints lie on a hyperplane arrangement, and the weight function is constant along each
visible hyperplane. 
This structure underlies the hyperplane representation introduced in Section~\ref{sec:layerwise_fibers}. 

The following result is taken from \citet{grillo2026relunetworksadmitidentifiable} and is similar to a result of \cite{PetzkaTS20}. 

\begin{prop}
\label{prop:tropical_weight_1layer}
Let $\theta=(\weights{1},\bias{1},\weights{2},\bias{2})$ be the parameter of a one-hidden-layer
network, and let $\sigma\in\complex_\theta^{d-1}$ be a facet. Then
$
c_{f_\theta}(\sigma)
=
\sum_{i\in I}\weights{2}_{:,i}\,\|\weights{1}_i\|$ with $
I=\{i\in[n_1]\mid H_i=\aff(\sigma)\},
$
where
$
H_i=\{x\in\R^d \mid  \weights{1}_ix+\bias{1}_i=0\}.
$
\end{prop}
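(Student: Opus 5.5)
We will compute the two affine pieces of $f_\theta$ on either side of $\sigma$ directly and read off the jump. Write $f_\theta(x)=\weights{2}[\weights{1}x+\bias{1}]_+ + \bias{2}$. On a full-dimensional cell $P\in\complex_\theta^d$ each neuron $i$ has a fixed activation status. Let $\mathrm{Act}(P)\subseteq[n_1]$ be the set of neurons with $\weights{1}_ix+\bias{1}_i\ge 0$ on $P$. Then $f_\theta(x)=A_Px+b_P$ on $P$ with
\[
A_P=\sum_{i\in \mathrm{Act}(P)} \weights{2}_{:,i}\weights{1}_i .
\]
Hence, for the two regions $P,Q$ with $P\cap Q=\sigma$,
\[
c_{f_\theta}(\sigma)=(A_P-A_Q)e_{P/\sigma}
=\sum_{i\in \mathrm{Act}(P)\setminus \mathrm{Act}(Q)} \weights{2}_{:,i}\weights{1}_ie_{P/\sigma}
-\sum_{i\in \mathrm{Act}(Q)\setminus \mathrm{Act}(P)} \weights{2}_{:,i}\weights{1}_ie_{P/\sigma}.
\]

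The first step is to identify the symmetric difference $\mathrm{Act}(P)\triangle \mathrm{Act}(Q)$ with the index set $I$ from the statement. If neuron $i$ changes status between $P$ and $Q$, its preactivation changes sign across $\sigma$. Because this preactivation is affine and does not vanish identically, it must vanish on all of $\sigma$. It follows that $\weights{1}_i\neq 0$ and $\sigma\subseteq H_i$, and since $\sigma$ is $(d-1)$-dimensional, $H_i=\aff(\sigma)$.

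Conversely, suppose $H_i=\aff(\sigma)$. The cells $P$ and $Q$ are the two cells of the canonical complex (the arrangement $\mathcal H_{W,b}$) adjacent to $\sigma$, so they lie in opposite closed halfspaces of $H_i$. Consequently neuron $i$ is strictly active on one side and inactive on the other. Neurons with $\weights{1}_i=0$ never switch and have no hyperplane, so they are excluded on both sides of this identification. This establishes $\mathrm{Act}(P)\triangle\mathrm{Act}(Q)=I$.

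Next we evaluate each term, using that for $i\in I$ the row $\weights{1}_i$ is normal to $\aff(\sigma)$, so that $e_{P/\sigma}=\pm \weights{1}_i/\|\weights{1}_i\|$.
\begin{itemize}
\item If $i$ is active on $P$, the preactivation increases from $\sigma$ into $P$. Then $e_{P/\sigma}=\weights{1}_i/\|\weights{1}_i\|$, and the term equals $\weights{2}_{:,i}\|\weights{1}_i\|$.
\item If $i$ is active on $Q$, then $e_{P/\sigma}=-\weights{1}_i/\|\weights{1}_i\|$. The term is $-\weights{2}_{:,i}\weights{1}_i e_{P/\sigma}=\weights{2}_{:,i}\|\weights{1}_i\|$.
\end{itemize}
In both cases the contribution is the same. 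Summing over $I$ gives $c_{f_\theta}(\sigma)=\sum_{i\in I}\weights{2}_{:,i}\|\weights{1}_i\|$.

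The only delicate point is the case analysis above. Several neurons may share the hyperplane $\aff(\sigma)$ with opposite orientations, so some are active on $P$ and others on $Q$. Tracking the orientation neuron by neuron, rather than per hyperplane, shows that the sign of $e_{P/\sigma}$ always cancels against the sign of the difference. This is why the weight is independent of orientation and depends only on which neurons share the hyperplane.
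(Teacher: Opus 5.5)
Your argument is correct and complete. The paper does not prove this statement itself. It imports it from \citet{grillo2026relunetworksadmitidentifiable}, remarking on its similarity to \citet{PetzkaTS20}. The only related in-paper statement is the general gradient-jump formula of \Cref{prop:tropweightofNNs}. For one hidden layer that formula yields $\|\weights{1}_i\|\,\weights{2}_{:,i}$, but it assumes supertransversality and hence only covers the case $|I|=1$.

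Your direct computation needs no genericity. You expand $A_P-A_Q$ over $\mathrm{Act}(P)\triangle\mathrm{Act}(Q)$, identify that set with $I$, and fix the sign of $e_{P/\sigma}$ neuron by neuron. This handles several neurons sharing $\aff(\sigma)$ with opposite orientations, which is exactly the non-generic situation (canceling groups) the statement is used for in \Cref{app:general-fiber}.

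One step should be made explicit: every region of $\complex_\theta$ lies in a closed halfspace of each $H_i$ with $\weights{1}_i\neq 0$. This is what turns ``$i\in\mathrm{Act}(P)\setminus\mathrm{Act}(Q)$'' into ``$\ge 0$ on $P$ and $\le 0$ on $Q$'', and thereby forces vanishing on all of $\sigma$. Affineness alone would not give this.

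Your bookkeeping also makes explicit that no orientation signs enter the weight. So neurons on a common hyperplane cancel exactly when $\sum_i \weights{2}_{:,i}\|\weights{1}_i\|=0$. Keep this in mind when reading the weight-consistency condition in \Cref{def:semi-algebraic-set-V-phi-o}, whose factors $o(i)$ do not match the formula you proved.
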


As an immediate consequence, if $\sigma,\sigma' \in \complex_\theta^{d-1}$ are visible facets
contained in the same breakpoint hyperplane $H_i$, then
$c_\theta(\sigma)=c_\theta(\sigma')$. 
Thus, each visible breakpoint hyperplane carries a well-defined vector weight.

\subsection{Bending and Visibility Across Layers}
\label{app:bending_visibility}

The notions introduced in this subsection have been developed systematically in prior work on the polyhedral geometry of ReLU networks \citep[see especially][]{phuong2020functional,rolnick2020reverse,grigsby2023hiddensymmetriesrelunetworks,grillo2026relunetworksadmitidentifiable}. 

We recall here several local geometric criteria from  \citet{grillo2026relunetworksadmitidentifiable} that we will use for our analysis of visibility in \Cref{sec:last_layer}. 

\begin{definition}
Let $(\complex,c)$ be a weighted polyhedral complex in $\R^d$, and let $\tau \in \complex^{d-2}$ be a ridge.
We say that a facet $\sigma \in \str(\tau)^{d-1}$ is \emph{non-bending at $\tau$} if there exists another
facet $\sigma' \in \str(\tau)^{d-1}$ such that
$
e_{\sigma/\tau}=-e_{\sigma'/\tau}
$ and $
c(\sigma)=c(\sigma').
$
Otherwise, we say that $\sigma$ is \emph{bending at $\tau$}.
The ridge $\tau$ is called \emph{bending} if at least one adjacent facet is bending at $\tau$,
and \emph{non-bending} otherwise.
\end{definition}

The next observation states that ridges arising entirely from a single layer are non-bending.
\begin{lemma}
\label{lem:hyperplaneAimpliesnonbending}
If $\tau \in \bcomplex{\theta}$ lies only on bent hyperplanes from a single layer $\ell \in [L]$,
then $\tau$ is non-bending.
\end{lemma}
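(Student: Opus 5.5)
The plan is to localize at the relative interior of $\tau$ and reduce to the one-hidden-layer situation of \Cref{prop:tropical_weight_1layer}.

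\textbf{Step 1: only two neurons of layer $\ell$ pass through $\tau$.} Since $\tau\in\complex_\theta^{d-2}$ is a ridge, supertransversality in \Cref{def:generic_parameter} says it is contained in exactly two bent hyperplanes. By hypothesis both belong to layer $\ell$; call them $\bhyperplane{\ell,j}$ and $\bhyperplane{\ell,k}$.

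\textbf{Step 2: the network is locally affine in the two activations.} Fix a point $x_0\in\relint(\tau)$ and a small open neighborhood $U$ of $x_0$ that meets no other bent hyperplane.
\begin{itemize}[leftmargin=*]
\item No bent hyperplane of layers $<\ell$ meets $U$. Hence $U$ lies in the interior of a single cell of $\complex_{\theta,\ell-1}$, and $\activation{\ell-1}$ is affine on $U$. In particular $\preactivation{\ell}_j$ and $\preactivation{\ell}_k$ are affine on $U$ and vanish on $\aff(\tau)$.
\item The gradients of $\preactivation{\ell}_j$ and $\preactivation{\ell}_k$ are not parallel. This follows from the masked-rank condition, or directly from $\tau$ having codimension two. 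So the two zero sets are transversal hyperplanes $H_j,H_k$ through $\aff(\tau)$.
\item Every other neuron of layer $\ell$ has a nonzero preactivation at $x_0$, or one that is constantly zero near $x_0$. Either way its activation is affine on $U$.
\item For every neuron in a layer $>\ell$, $x_0$ is not on its bent hyperplane. By continuity, its preactivation is either of constant strict sign on $U$ or identically zero there, so its activation pattern is constant on $U$.
\end{itemize}
Composing these observations gives the local formula
\[
f_\theta(x)=u\,[\preactivation{\ell}_j(x)]_+ + v\,[\preactivation{\ell}_k(x)]_+ + Ax+b \qquad (x\in U),
\]
for fixed vectors $u,v\in\R^m$. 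Here $u$ and $v$ are the columns of the fixed masked product of the later weight matrices, and $A$, $b$ absorb all remaining affine contributions.

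\textbf{Step 3: read off the weights.} The star of $\tau$ consists of four facets: the two half-hyperplanes $\sigma_j^\pm$ of $H_j$ on either side of $H_k$, and the two half-hyperplanes $\sigma_k^\pm$ of $H_k$ on either side of $H_j$. By construction,
\[
e_{\sigma_j^+/\tau}=-e_{\sigma_j^-/\tau}, \qquad e_{\sigma_k^+/\tau}=-e_{\sigma_k^-/\tau}.
\]
The local formula is exactly a one-hidden-layer network on $U$. By the computation in \Cref{prop:tropical_weight_1layer}, the gradient jump across $H_j$ is $u\,\|\nabla\preactivation{\ell}_j\|$ on both sides of $H_k$, and likewise $v\,\|\nabla\preactivation{\ell}_k\|$ across $H_k$. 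One can also check this directly: crossing $H_j$ changes only the $[\preactivation{\ell}_j]_+$ term, independently of the sign of $\preactivation{\ell}_k$. So opposite facets carry equal weights, including the case where some weights vanish. Hence every facet in $\str(\tau)$ is non-bending, and $\tau$ is non-bending.

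\textbf{Main obstacle.} The delicate point is Step 2: showing rigorously that the activation patterns of all other neurons are constant on a neighborhood of $\relint(\tau)$. This includes neurons whose preactivation vanishes identically near $\tau$ and so define no bent hyperplane there. It also requires that the relevant weight on the facets of $\bcomplex{\theta}$ is computed with respect to the same complex. I would handle this using that bent hyperplanes are closed, so the complement of those not containing $\tau$ is open. The constant-zero case then contributes nothing nonaffine.
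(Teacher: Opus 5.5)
The paper does not prove this lemma; it is quoted without proof from \citet{grillo2026relunetworksadmitidentifiable}, so there is no in-paper argument to compare against. Your localization argument is a correct, self-contained proof. Near $x_0\in\relint(\tau)$ you show that $f_\theta$ coincides with a one-hidden-layer model $u[\preactivation{\ell}_j]_+ + v[\preactivation{\ell}_k]_+ + Ax+b$ whose two neurons vanish on $\aff(\tau)$. In such a model the gradient jump across one hyperplane does not depend on which side of the other hyperplane you are on (\Cref{prop:tropical_weight_1layer}). This pairs each facet of $\str(\tau)$ with the opposite half-hyperplane of equal weight. Because the pairing preserves weights, it survives deleting zero-weight facets, so the conclusion holds in $\bcomplex{\theta}$ and not only in $\complex_\theta$. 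The route shows the lemma is a purely local corollary of the one-hidden-layer weight formula.

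A few points to tighten.

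(i) Step 1 uses supertransversality, which the statement as written does not assume. That matches how the paper applies the lemma (only to generic $\theta$), but you do not need it. If several layer-$\ell$ neurons, possibly with coinciding zero sets, pass through $\tau$, the same reduction gives $\sum_i u_i[\preactivation{\ell}_i]_+ + Ax+b$ near $x_0$, with every $\preactivation{\ell}_i$ vanishing on $\aff(\tau)$. The facets of $\str(\tau)$ then come in opposite pairs of half-hyperplanes of hyperplanes through $\aff(\tau)$, each hyperplane carrying constant weight, and the same pairing works.

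(ii) The masked-rank condition alone does not make $\nabla\preactivation{\ell}_j$ and $\nabla\preactivation{\ell}_k$ non-parallel; for instance, it only forces rank one if some $|S_i|=1$ on the relevant cell. The codimension-two (equivalently, supertransversality) argument is the one that actually works.

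(iii) In your fourth bullet, the alternative ``identically zero near $x_0$'' does not follow from continuity. It follows from the definition of bent hyperplanes. Suppose a later preactivation vanished at $x_0$ and were non-constant on some cell of the preceding complex containing $x_0$. Then $x_0$ would lie on that neuron's bent hyperplane, a contradiction. Hence the preactivation is constant, thus zero, on every such cell. By induction over the layers $>\ell$, these cells are exactly the four sectors cut out by $H_j$ and $H_k$.
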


Thus any bending ridge must arise from the interaction of different layers. The converse need not hold
in general, so we isolate the following property.

\begin{definition}
We call a parameter $\theta$ \emph{honest} if every non-bending ridge of $\bcomplex{\theta}$ lies only
on bent hyperplanes from a single layer.
\end{definition}

We also need a visibility condition excluding cancellations of facets.

\begin{definition}
We call a parameter $\theta$ \emph{cancellation-free} if for every facet
$\sigma \in \complex_\theta^{d-1}$ one has
$
\sigma \notin \bcomplex{\theta}
\quad\Longleftrightarrow\quad
\text{there exists a layer } k \in [L] \text{ such that } \mathbf{s}_{k,j}(\sigma)=- \text{ for all } j\in[n_k].
$
\end{definition}

For the visibility theorem, we only need these properties in the generic case.

\begin{lemma}

\label{lem:generic_honest}
Generic parameters are honest and cancellation-free.
\end{lemma}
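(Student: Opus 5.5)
The plan is to treat the two properties separately. Both come from the two clauses of \Cref{def:generic_parameter}. Supertransversality fixes the local combinatorics. The maximal rank of masked products rules out accidental vanishing of gradient jumps. The basic computation is as follows. Let $\sigma$ be a facet lying (by supertransversality) on the bent hyperplane of exactly one neuron $(\ell,j)$, and let $P,Q$ be its two adjacent regions. The activation patterns of $P$ and $Q$ agree except at $(\ell,j)$. Write $S_i\subseteq[n_i]$ for the common set of active neurons of layer $i\neq \ell$. Then
\[
A_P-A_Q=\pm\,\weights{L+1}D_{S_L}\weights{L}\cdots \weights{\ell+1}D_{\{j\}}\weights{\ell}D_{S_{\ell-1}}\cdots D_{S_1}\weights{1}.
\]
This is itself a masked product with $S_\ell=\{j\}$. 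By the rank condition its rank is $\min(d,m,\min_i|S_i|,1)$.

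\emph{Cancellation-freeness.} For ``$\Leftarrow$'': suppose layer $k$ has pattern $-$ for all neurons on $\sigma$. Then $k\neq\ell$, because neuron $(\ell,j)$ has sign $0$ on $\sigma$. So layer $k$ is entirely inactive on both $P$ and $Q$. Hence $f_\theta$ is constant near the relative interior of $\sigma$, so $c_\theta(\sigma)=0$ and $\sigma\notin\bcomplex{\theta}$. For ``$\Rightarrow$'': suppose no layer is entirely inactive on $\sigma$. Then every $S_i$ is nonempty, so the product above has rank $1$ and in particular is nonzero. Multiplying by the unit normal $e_{P/\sigma}$, which is not orthogonal to the row space of this rank-one matrix (that row space is spanned by the normal of $\sigma$), gives $c_\theta(\sigma)\neq 0$. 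Hence $\sigma\in\bcomplex{\theta}$.

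\emph{Honesty.} Let $\tau$ be a ridge of $\bcomplex{\theta}$ lying on bent hyperplanes from two different layers. By supertransversality these are exactly two neurons, $(\ell,j)$ and $(\ell',j')$ with $\ell<\ell'$. I will show that $\tau$ is bending. Passing to the two-dimensional quotient fan at $\tau$, there are four rays. The two rays from $(\ell,j)$ are opposite, because earlier layers are unaffected by $(\ell',j')$. The two rays from $(\ell',j')$ have normals $\nabla\preactivation{\ell',\theta}_{j'}$ on the two sides of $H_\ell$. These normals differ by a multiple of the normal of $H_\ell$, with coefficient given by a masked product through $D_{\{j\}}$ at layer $\ell$, which is nonzero by the rank condition. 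By transversality neither normal is parallel to the normal of $H_\ell$. Hence the two $(\ell',j')$ rays are not opposite to each other or to any $H_\ell$ ray. So if either $(\ell',j')$ facet is visible, it is bending at $\tau$.

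It remains to handle the case where both $(\ell',j')$ facets are invisible but some $H_\ell$ facet is visible. This is the step I expect to be the main obstacle. By cancellation-freeness, applied on each side of $H_\ell$, I will argue that neuron $(\ell',j')$ is active on exactly one of the two $H_\ell$ facets. The difference of the weights of the two $H_\ell$ facets is then again a masked product, now with $D_{\{j\}}$ at layer $\ell$ and $D_{\{j'\}}$ at layer $\ell'$, applied to the normal. When all other layers have active neurons near $\tau$, this product has rank $1$. So the weights differ, and the $H_\ell$ facets are bending. If some other layer is fully inactive near $\tau$, then by the ``$\Leftarrow$'' direction all four facets are invisible, so $\tau\notin\bcomplex{\theta}$, a contradiction.

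The delicate points are these: checking that the activation sets entering each difference are exactly those of a single masked product, so that \Cref{def:generic_parameter} applies verbatim; and carrying out the case analysis of which of the four facets are visible without gaps.
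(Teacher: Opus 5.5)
Your argument is correct. It differs from the paper in that the paper gives no proof here: the lemma is imported from \citet{grillo2026relunetworksadmitidentifiable}, whose genericity notion is only described as ``close to'' \Cref{def:generic_parameter}. You instead derive both properties directly from the two clauses of \Cref{def:generic_parameter}. The gradient jump across a facet of the bent hyperplane of $(\ell,j)$ is the masked product with $D_{\{j\}}$ in slot $\ell$ applied to the facet normal, which essentially re-derives \Cref{prop:tropweightofNNs}. Cancellation-freeness is then exactly the rank condition. Bending at a mixed ridge comes from the nonzero jump of the normal of $(\ell',j')$ across $H$. Your route checks that the lemma really holds for the paper's own definition; the citation buys brevity.

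\textbf{First remark.} The case you flag as the main obstacle is vacuous. Let $\sigma_+$ be the $(\ell',j')$-facet at $\tau$ on the side where $(\ell,j)$ is active. On $\sigma_+$:
\begin{itemize}
\item layer $\ell$ contains the active neuron $j$;
\item layer $\ell'$ contains the zero entry $j'$;
\item every other layer has the same signs as near $\relint(\tau)$.
\end{itemize}
If one of those other layers were entirely inactive, all four facets would be invisible and $\tau\notin\bcomplex{\theta}$. So your ``$\Rightarrow$'' direction already gives $\sigma_+\in\bcomplex{\theta}$. Since $\sigma_+$ has no opposite facet in the star, it is bending, and you are done. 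Your handling of that case is still valid. However, the fact that $(\ell',j')$ is active on exactly one $H$-facet does not come from cancellation-freeness. It comes from $\preactivation{\ell',\theta}_{j'}$ being affine along $H$ near $\tau$ and vanishing there only on $\aff(\tau)$.

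\textbf{Second remark.} \Cref{def:generic_parameter} masks only to the right of the leading matrix. So it does not directly give that the scalar $\weights{\ell'}_{j'}D_{S_{\ell'-1}}\cdots D_{S_{\ell+1}}\weights{\ell+1}e_j$ is nonzero. To get this, apply the rank condition to the product with $\weights{\ell'+1}D_{\{j'\}}$ prepended; this is allowed since $\ell'\le L$. That product has rank one once no layer other than $\ell,\ell'$ is entirely inactive near $\tau$. Make that reduction at the start rather than only in the last case, since this coefficient already depends on it.
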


We next recall the explicit form of the weight function across a visible facet.

\begin{prop}
\label{prop:tropweightofNNs}
Let $\theta$ be supertransversal, let $\sigma \in \complex_\theta^{d-1}$ be a facet, and let
$B_{\ell,i}$ be the unique bent hyperplane containing $\sigma$. If $S_k \subseteq [n_k]$ denotes the
set of strictly active neurons on the relative interior of $\sigma$ in layer $k$, then the gradient jump is given by
\[
c_\theta(\sigma)
=
\left\|
(\weights{1})^\top D_{S_1}\cdots (\weights{\ell-1})^\top D_{S_{\ell-1}}(\weights{\ell})^\top e_i
\right\|
\;
\weights{L+1}D_{S_L}\cdots \weights{\ell+1}e_i.
\]
\end{prop}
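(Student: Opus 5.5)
The plan is a direct computation: write down the linear parts of $f_\theta$ on the two regions adjacent to $\sigma$, subtract them, and evaluate on the unit normal. Let $P,Q\in\complex_\theta^d$ be the two regions with $P\cap Q=\sigma$. The key preliminary step is to show that the global activation patterns of $P$ and $Q$ differ only in the neuron $(\ell,i)$.

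\emph{Step 1: the patterns of $P$ and $Q$ differ only at $(\ell,i)$.} Fix a point $x$ in the relative interior of $\sigma$. By supertransversality, $\sigma\in\complex_\theta^{d-1}$ lies in exactly one bent hyperplane, namely $B_{\ell,i}$. I handle the other neurons by layer.
\begin{itemize}
\item For every neuron $(k,j)\neq(\ell,i)$, the preactivation $\preactivation{k,\theta}_j$ is continuous and nonzero at $x$, since otherwise $x$ would lie on a second bent hyperplane.
\item Some care is needed for neurons whose preactivation is constant on a cell. A constant that is nonzero has a fixed sign. A constant zero would have to be excluded by genericity; the maximal-rank condition forces the relevant rows to be nonzero, so I would argue this case does not occur on a full-dimensional cell.
\item Layers $k<\ell$ are unaffected because $\sigma$ lies in the interior of a single cell of $\complex_{\theta,\ell-1}$.
\item Layers $k\ge \ell$ other than $(\ell,i)$ keep their sign in a small neighbourhood of $x$, which meets both $P$ and $Q$.
\end{itemize}
Hence there are sets $S_k$ such that on both $P$ and $Q$ the active sets are $S_k$ for $k\neq\ell$. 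In layer $\ell$ the active set is $S_\ell\cup\{i\}$ on $P$ and $S_\ell$ on $Q$, after relabeling so that $\preactivation{\ell,\theta}_i>0$ on $P$. I expect this bookkeeping, in particular the claim that later-layer patterns cannot flip across $\sigma$, to be the main point requiring care. It is exactly where supertransversality is used.

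\emph{Step 2: difference of the linear parts.} Write $A_P$ and $A_Q$ as products of weight matrices and masks:
\[
A_P = \weights{L+1}D_{S_L}\cdots \weights{\ell+1}D_{S_\ell\cup\{i\}}\weights{\ell}D_{S_{\ell-1}}\cdots D_{S_1}\weights{1},
\]
and $A_Q$ is the same product with $D_{S_\ell}$ in place of $D_{S_\ell\cup\{i\}}$. Since $D_{S_\ell\cup\{i\}}-D_{S_\ell}=e_ie_i^\top$, we get
\[
A_P-A_Q = v\,g^\top, \qquad v\coloneqq \weights{L+1}D_{S_L}\cdots \weights{\ell+1}e_i,
\]
\[
g\coloneqq (\weights{1})^\top D_{S_1}\cdots (\weights{\ell-1})^\top D_{S_{\ell-1}}(\weights{\ell})^\top e_i.
\]

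\emph{Step 3: identify the normal.} The vector $g$ is the gradient of $\preactivation{\ell,\theta}_i$ on the cell of $\complex_{\theta,\ell-1}$ containing $\sigma$. It is nonzero because the preactivation is non-constant there, which is why $\sigma$ appears in $B_{\ell,i}$ at all. Since $\sigma$ lies in the zero set of this affine function, $g$ is normal to $\aff(\sigma)$. Since the preactivation is positive on $P$, $g$ points from $Q$ into $P$, so $e_{P/\sigma}=g/\|g\|$.

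\emph{Step 4: conclude.} Substituting gives
\[
c_\theta(\sigma)=(A_P-A_Q)e_{P/\sigma}=v\,\frac{g^\top g}{\|g\|}=\|g\|\,v,
\]
which is the claimed formula. Finally, I note that the expression $(A_P-A_Q)e_{P/\sigma}$ is symmetric under swapping the roles of $P$ and $Q$. Therefore the orientation choice made in Step 1 is harmless.
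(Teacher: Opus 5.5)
The paper does not prove this proposition; it quotes it from \citet{grillo2026relunetworksadmitidentifiable}, so there is no in-paper argument to compare against. Your direct computation is the natural argument and is correct in substance. It shows the patterns of $P$ and $Q$ differ only at $(\ell,i)$, so $A_P-A_Q=v\,g^\top$ via $D_{S_\ell\cup\{i\}}-D_{S_\ell}=e_ie_i^\top$. It then identifies $g$ as the nonzero gradient of $\preactivation{\ell,\theta}_i$ on the cell of $\complex_{\theta,\ell-1}$ containing $\relint\sigma$, so $e_{P/\sigma}=g/\|g\|$.

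The one step to repair is the second bullet of Step 1. There you rule out identically vanishing preactivations via the maximal-rank condition.
\begin{itemize}
\item \emph{The justification fails.} The proposition assumes only supertransversality. Moreover, the claim is false even for parameters that are generic in the sense of \Cref{def:generic_parameter}, because that condition constrains only weight products. For instance, if every neuron of layer $k-1$ is dead and $\bias{k}_j=0$, then $\preactivation{k,\theta}_j\equiv 0$.
\item \emph{Nothing needs to be excluded.} With ``active'' meaning strictly positive, as in the statement, the linear part of $f_\theta$ on any region is the masked product with the strict active sets. A neuron whose preactivation vanishes identically near $x$ is strictly active on none of $P$, $Q$, $\relint\sigma$, so the masks still agree.
\item \emph{The only other case.} Suppose $\preactivation{k,\theta}_j(x)=0$ and the preactivation is non-constant on some cell of $\complex_{\theta,k-1}$ containing $x$. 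Then $x\in B_{k,j}$. Since $B_{k,j}$ is a union of faces of $\complex_\theta$ and $x\in\relint\sigma$, this forces $\sigma\subseteq B_{k,j}$, contradicting supertransversality.
\end{itemize}
The last item is the precise form of your first bullet: supertransversality is a statement about faces, not points. The same reasoning justifies your claim that $\relint\sigma$ lies in the interior of a single cell of $\complex_{\theta,\ell-1}$, since facets of $\complex_{\theta,\ell-1}$ lie in bent hyperplanes of layers $<\ell$.
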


Finally, the following lemma identifies, at a bending ridge, which adjacent facets come from the earlier
layer.

\begin{lemma}
\label{lem:bendingoptions}
Let $\theta$ be supertransversal and cancellation-free.
Let $\tau \in \bcomplex{\theta}^{d-2}$ be a bending ridge, and let
$
H \in \hyperplanes{k}{\theta},
$ and $
B \in \hyperplanes{\ell}{\theta},
$ with $
k<\ell,
$
be the unique bent hyperplanes containing $\tau$.
Let $R \in \complex_{\theta,k-1}^d$ be the cell with $\relint(\tau)\subseteq \relint(R)$.
Then:
\begin{enumerate}[leftmargin=*]
    \item If $\#\str_{\bcomplex{\theta}}(\tau)^{d-1}=4$, then there are exactly two facets
    $\sigma_1,\sigma_2 \in \str_{\bcomplex{\theta}}(\tau)^{d-1}$ such that
    $
    e_{\sigma_1/\tau}=-e_{\sigma_2/\tau},
    $
    and these are precisely the facets contained in $H$. In particular,
    $
    H\cap R = \aff(\sigma_1)\cap R.
    $

    \item If $\#\str_{\bcomplex{\theta}}(\tau)^{d-1}=3$, then there is a unique facet
    $\sigma \in \str_{\bcomplex{\theta}}(\tau)^{d-1}$ that is not adjacent to a region on which
    $f_\theta$ is constant, and this facet is the one contained in $H$. In particular,
    $
    H\cap R = \aff(\sigma)\cap R.
    $
\end{enumerate}
\end{lemma}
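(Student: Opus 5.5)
We work locally around $\tau$. Since $\theta$ is supertransversal, $\tau$ lies in exactly the two bent hyperplanes $H$ (layer $k$) and $B$ (layer $\ell$). We pass to the two-dimensional fan obtained by taking $\str_{\complex_\theta}(\tau)$ and quotienting by $\aff(\tau)$. On $R\in\complex_{\theta,k-1}$ the layer-$k$ preactivation defining $H$ is affine, so near $\tau$ the set $H$ is a straight hyperplane through $\tau$. It contributes two facets $\sigma_1,\sigma_2$ of $\complex_\theta$ with $e_{\sigma_1/\tau}=-e_{\sigma_2/\tau}$.

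The layer-$\ell$ preactivation defining $B$ is affine on each side of $H$, so $B$ contributes two further facets $\rho_1,\rho_2$, one on each side of $H$. In the quotient fan the four rays therefore appear in cyclic order $\sigma_1,\rho_1,\sigma_2,\rho_2$. Every facet of $\str_{\bcomplex{\theta}}(\tau)^{d-1}$ is one of these four, and all the regions of $\str(\tau)$ are the four sectors between consecutive rays.

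\textbf{Case $\#\str_{\bcomplex\theta}(\tau)^{d-1}=4$.} The two $H$-facets are opposite, so it remains to show that $\rho_1,\rho_2$ are not opposite. Suppose they were. Then all four rays lie on two lines spanned by independent directions $e,e'$. The balancing condition of \Cref{lem:APP:structure_theorem_tropical_geometry} at $\tau$ then reads
\[
(c(\sigma_1)-c(\sigma_2))e^\top+(c(\rho_1)-c(\rho_2))e'^\top=0 .
\]
This forces $c(\sigma_1)=c(\sigma_2)$ and $c(\rho_1)=c(\rho_2)$, so every facet is non-bending and $\tau$ is non-bending, a contradiction. No other pair of rays can be opposite, because $\rho_i$ is not contained in $H$. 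Hence $\sigma_1,\sigma_2$ are exactly the opposite pair. Since $H\cap R$ is flat and contains $\sigma_1$, we get $H\cap R=\aff(\sigma_1)\cap R$.

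\textbf{Case $\#\str_{\bcomplex\theta}(\tau)^{d-1}=3$.} Exactly one of the four facets is invisible. By cancellation-freeness (\Cref{lem:generic_honest}), there is a layer $p$ that is fully strictly inactive on the relative interior of this facet. The plan is to show three things.
\begin{itemize}
\item[(a)] The invisible facet is one of the $H$-facets, say $\sigma_1$.
\item[(b)] The merged sector bounded by $\rho_1,\rho_2$ and containing $\sigma_1$ is a region on which $f_\theta$ is constant.
\item[(c)] The two sectors adjacent to $\sigma_2$ carry non-constant affine pieces.
\end{itemize}
Granting these, $\sigma_2$ is the unique visible facet not adjacent to a constant region, and again $H\cap R=\aff(\sigma_2)\cap R$.

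For (b) we compare activation patterns. The two regions adjacent to the invisible facet differ only in the sign of the single neuron defining that facet. If $p$ differs from that neuron's layer, layer $p$ is fully inactive on both sides. If $p$ equals that layer, it is fully inactive on at least one side. In either case the last layers output a constant there, and the sector is constant.

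For (c) we use the maximal-rank condition of \Cref{def:generic_parameter}: a region on which no layer is fully inactive has a nonzero linear part. We must check that on the sectors adjacent to $\sigma_2$ every layer has an active neuron, which should follow because $\sigma_2$ itself is visible.

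\textbf{Main obstacle.} The hard step is (a): excluding that the invisible facet is some $\rho_i$. If the fully inactive layer has $p<k$, its pattern is constant on a neighborhood of $\relint(\tau)$, which would make the whole star invisible, so $\tau\notin\bcomplex\theta$. The delicate cases are $p=k$ and $p\ge\ell$. There we would trace how the sign of the $H$-neuron, respectively the $B$-neuron, propagates to the adjacent $H$-facets. The aim is to show that layer $p$ inactive on $\rho_i$ forces a second facet of the star to be invisible as well, contradicting the count of three. I would first attempt this via the explicit jump formula in \Cref{prop:tropweightofNNs}, checking which masked products vanish on each of the four facets.
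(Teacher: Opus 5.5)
The paper does not prove this lemma; it imports it from prior work, so I assess your argument on its own. Your local four-ray picture and Case 1 are sound. In particular, the balancing argument correctly rules out $\rho_1,\rho_2$ being opposite. The gap is step (a) of Case 2. You leave it open, and the mechanism you propose for it does not work.

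Cancellation-freeness requires \emph{every} neuron of layer $p$ to have sign $-$ on the invisible facet. Since the $B$-neuron vanishes on $\rho_i$, the case $p=\ell$ is impossible outright. By the same token, in (b) $p$ can never be the layer of the facet's own neuron, so your second sub-case there is vacuous. For $p=k$, however, your aim of forcing a second invisible facet cannot succeed. Suppose layer $k$ is fully inactive on $\rho_2$. Then all layer-$k$ neurons other than the $H$-neuron are inactive near $\tau$. But the $H$-neuron has sign $0$ on $\sigma_1,\sigma_2$ and sign $+$ on $\rho_1$, so layer $k$ is fully inactive on no other facet of the star. Moreover, $D_{S_k}$ does not even enter the weights $c_\theta(\sigma_i)$ in \Cref{prop:tropweightofNNs}, so checking which masked products vanish finds only one vanishing facet. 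The range $k<p<\ell$ is also missing from your case list.

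The missing idea is that the contradiction comes from $\rho_i\subseteq B$, not from the facet count.
\begin{itemize}
\item \textbf{Case $p<\ell$.} Since $\rho_i$ lies only on a layer-$\ell$ bent hyperplane, the activation pattern of every layer $p<\ell$ is constant on the region of $\complex_{\theta,\ell-1}$ containing $\relint(\rho_i)$ in its interior. If such a layer is fully inactive there, then $\preactivation{\ell,\theta}_b$ is constant on that region. By the very definition of bent hyperplanes, $B$ then has no facet there, a contradiction.
\item \textbf{Case $p>\ell$.} The layer-$p$ preactivations are negative on $\relint(\rho_i)$, and by supertransversality $\tau$ lies on no layer-$p$ bent hyperplane. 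Hence they stay negative on a neighbourhood of $\relint(\tau)$, so $f_\theta$ is constant there and $\tau\notin\bcomplex{\theta}$.
\end{itemize}
Together with $p=\ell$ being impossible, this settles (a). In fact the same reasoning shows that the only layer that can be fully inactive on a facet of the star is layer $\ell$, on the $H$-facet where the $B$-neuron is negative. That gives the whole Case 2 configuration in one stroke.

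Finally, (c) should not invoke the maximal-rank condition, which is not a hypothesis of the lemma (only supertransversality and cancellation-freeness are). You do not need it. Each of $\rho_1,\rho_2$ is visible and separates one sector adjacent to $\sigma_2$ from the constant merged sector. The nonzero jump across $\rho_i$ therefore already forces a nonzero linear part on that sector.
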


\subsection{Visibility of First-Layer Hyperplanes in Three-Layer Networks}
\label{sec:last_layer}

We now prove the visibility result stated as \Cref{thm:intersected_hyperplanes_stay} in the main text.
The key point is that bent hyperplanes from the last hidden layer are always visible in the realized
function, and that a first-layer hyperplane becomes rigid along the fiber as soon as it is anchored
by the second layer.

We begin with a general observation about sums of CPWL functions.

\begin{lemma}
\label{lem:unionofbreakpoints}
Let $f,g \colon \R^d \to \R^m$ be CPWL functions, and let
$B_1 \subseteq \breakpoints{f}$ be a subset of the breakpoint set that is a pure polyhedral complex of codimension $1$. Suppose that $B_1 \cap \breakpoints{g}$ has codimension at least $2$. 
Then
\[
B_1  \subseteq \breakpoints{f+g}.
\]
\end{lemma}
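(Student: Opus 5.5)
The plan is to argue locally at points of $B_1$ and show that generic points of $B_1$ remain breakpoints of $f+g$. Then we pass to closure, using that $\breakpoints{f+g}$ is closed.

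\textbf{Choosing good points.} Since $B_1$ is a pure codimension-one polyhedral complex, the union $U$ of the relative interiors of its $(d-1)$-dimensional cells is dense in $B_1$. The set $B_1\cap\breakpoints{g}$ has codimension at least $2$, so it cannot contain a relatively open subset of any facet. Hence the set
\[
U' \coloneqq U \setminus \breakpoints{g}
\]
is still dense in $B_1$. Here we use that $\breakpoints{g}$ is closed: it is the complement of the open set of points admitting an affine neighborhood.

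\textbf{Local argument at a point $x\in U'$.} Fix $x\in U'$. Because $x\notin\breakpoints{g}$, there is an open neighborhood $V$ of $x$ on which $g$ is affine. Suppose, for contradiction, that $f+g$ is affine on some open neighborhood $V'$ of $x$. Then on $V\cap V'$ the function $f=(f+g)-g$ is a difference of affine maps, hence affine. This means $x\notin\breakpoints{f}$. But $x\in B_1\subseteq\breakpoints{f}$, a contradiction. Therefore $x\in\breakpoints{f+g}$.

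Notice that this step never uses that $x$ lies in the relative interior of a facet. That property only matters for the density step above.

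\textbf{Closure.} We now have $U'\subseteq\breakpoints{f+g}$. Since $\breakpoints{f+g}$ is closed and $U'$ is dense in $B_1$, taking closures gives $B_1\subseteq\breakpoints{f+g}$.

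\textbf{Main obstacle.} The only delicate step is the density claim. One must make precise that a set of codimension at least $2$ (a finite union of polyhedra of dimension at most $d-2$) cannot contain any nonempty relatively open subset of a $(d-1)$-dimensional facet. This follows because a relatively open subset of a facet is $(d-1)$-dimensional, so removing a finite union of lower-dimensional polyhedra from it leaves a dense remainder. This is exactly where purity of $B_1$ is needed: every point of $B_1$ is a limit of points in the relative interiors of its $(d-1)$-cells.
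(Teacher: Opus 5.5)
Your proof is correct and follows the paper's own argument. Points of $B_1\setminus\breakpoints{g}$ are breakpoints of $f+g$ because $f=(f+g)-g$ would otherwise be affine near them. The remaining points of $B_1\cap\breakpoints{g}$, which has codimension at least $2$, lie in the closure of the rest of the pure codimension-one set $B_1$, and they are captured because $\breakpoints{f+g}$ is closed. Your density step via relative interiors of the $(d-1)$-cells is, if anything, spelled out more carefully than in the paper.
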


\begin{proof}
We first show that
\(
B_1 \setminus \breakpoints{g} \subseteq \breakpoints{f+g}
\).
Let $x \in B_1 \setminus \breakpoints{g}$.
Because $g$ is a CPWL function and $x \notin \breakpoints{g}$, the function $g$ is affine-linear in a neighborhood of $x$.
Because $f$ is not affine-linear at $x$ (since $x \in B_1 \subseteq \breakpoints{f}$), the sum $f+g$ is also not affine-linear at $x$.
Hence $x \in \breakpoints{f+g}$.
Thus,
\(
B_1 \setminus  \breakpoints{g}) \subseteq \breakpoints{f+g}.
\)
Because $B_1$ is the support of a pure complex of codimension $1$ and $B_1 \cap \breakpoints{g}$ has codimension $2$, the intersection $B_1 \cap \breakpoints{g}$ must be contained in the closure of $B_1 \setminus \breakpoints{g}$. Since the breakpoint set $\breakpoints{f+g}$ is closed, it follows that $B_1 \cap \breakpoints{g} \subseteq \breakpoints{f+g}$, proving the claim.
\end{proof}
As a consequence, bent hyperplanes from the final hidden layer always remain visible in the realized function.
\begin{lemma}
\label{lem:lastbenthyperplanesstay}
Let $\architecture=(n_0,n_1,\ldots,n_L,m)$ and let $\theta \in \paramspace{\architecture}$ be a supertransversal parameter. 
Then the bent hyperplanes from the last hidden layer are fully contained in the breakpoint complex, that is, $B_{L,j}(\theta) \subseteq \breakpoints{f_\theta}$ for each $j \in [n_L]$. 
\end{lemma}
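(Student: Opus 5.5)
We plan to write $f_\theta = g + h$, where $g$ is the contribution of the last-hidden-layer neuron $j$ and $h$ collects everything else, and then apply \Cref{lem:unionofbreakpoints} with $f=g$ and $B_1 = B_{L,j}(\theta)$. Concretely, set
\[
g(x) \coloneqq \weights{L+1}_{:,j}\,\activation{L,\theta}_j(x), \qquad h(x) \coloneqq \sum_{i\neq j}\weights{L+1}_{:,i}\,\activation{L,\theta}_i(x) + \bias{L+1},
\]
so that $f_\theta = g+h$. Both $g$ and $h$ are CPWL, and both are compatible with $\complex_\theta$.

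The first step is to show that $B_{L,j}(\theta) \subseteq \breakpoints{g}$ and that $B_{L,j}(\theta)$ is the support of a pure codimension-one complex. On each cell $R \in \complex_{\theta,L-1}$ on which $\preactivation{L,\theta}_j$ is non-constant, $g$ restricts to $\weights{L+1}_{:,j}\,[\ell_R(x)]_+$, where $\ell_R$ is a non-constant affine function. Hence $g$ has a genuine kink along $\{\ell_R=0\}\cap \relint R$, with gradient jump proportional to $\weights{L+1}_{:,j}\,\nabla \ell_R$. This requires $\weights{L+1}_{:,j}\neq 0$. If that column vanishes, the neuron has no effect and the claim should be read with this excluded; we would handle it either via the maximal-rank part of genericity, or by noting that supertransversality is meant together with nonzero output weights.

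Breakpoint sets are closed, so the pieces on the cell boundaries are included as well. Supertransversality guarantees that each piece $\{\ell_R=0\}\cap R$ is $(d-1)$-dimensional whenever it is nonempty and meets the interior. Pieces meeting $R$ only in lower dimensions lie in the closure of the full-dimensional pieces, because the bent hyperplane continues across neighboring cells. This gives purity.

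The second step is to show that $B_{L,j}(\theta)\cap\breakpoints{h}$ has codimension at least $2$. We have $\breakpoints{h} \subseteq \bigcup_{i\neq j} B_{L,i}(\theta)\cup \bigcup_{\ell<L,k} B_{\ell,k}(\theta)$, since $h$ is affine on every open cell of the complex cut out by the remaining bent hyperplanes. By supertransversality, no facet of $\complex_\theta$ lies in two distinct bent hyperplanes: a $(d-1)$-face is contained in exactly one of them. So the intersection of $B_{L,j}$ with any other bent hyperplane consists of faces of dimension at most $d-2$.

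Combining the two steps, \Cref{lem:unionofbreakpoints} gives $B_{L,j}(\theta) \subseteq \breakpoints{f_\theta}$.

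The main obstacle is the first step. On a cell $R$, the function $g$ may bend along $\{\ell_R=0\}$, but we must check that this kink is not cancelled within $g$ itself. It cannot be, because $g$ depends only on the single neuron $j$ and is smooth across the interior of $R$ apart from this zero set. One also has to be careful that a piece of $B_{L,j}$ might lie inside an earlier-layer bent hyperplane, where $\breakpoints{h}$ could cancel it. Supertransversality excludes exactly this situation, since it forces every facet to lie in a unique bent hyperplane.
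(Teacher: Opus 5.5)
Your proposal is correct and is essentially the paper's argument: both use that $B_{L,j}(\theta)$ is a pure codimension-one subset of the breakpoints of neuron $j$'s contribution, invoke supertransversality to make its intersection with every other bent hyperplane of codimension at least two, and conclude with \Cref{lem:unionofbreakpoints}; the only difference is that you split the vector-valued $f_\theta$ once as $g+h$, whereas the paper passes to coordinates $(f_\theta)_i$ and applies the lemma iteratively. Your caveat that $\weights{L+1}_{:,j}\neq 0$ is needed is correct and is tacitly used by the paper's proof too: supertransversality alone does not exclude a zero output column, although the maximal-rank genericity condition does, and your vector-valued split needs only a nonzero column rather than a nonzero entry in a chosen coordinate.
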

\begin{proof}
It is easy to verify that for a vector-valued CPWL function $f \colon \R^d \to \R^m$, the overall breakpoint set is the union of the breakpoint sets of its coordinate functions: $\breakpoints{f} = \bigcup_{i \in [m]}\breakpoints{f_i}$.

For each $i \in [m]$, the $i$-th coordinate of the output is 
\[
(f_\theta)_i(x) = \bias{L+1}_i + \sum_{j \in [n_L]} \weights{L+1}_{i,j} \activation{L}_j(x).
\]
Hence, the breakpoints of $(f_\theta)_i$ depend on the breakpoints of the final activations $\activation{L}_j$. For every $j \in [n_L]$, the bent hyperplane $B_{L,j}(\theta)$ is exactly contained in $\breakpoints{\activation{L}_j}$ and is the support of a polyhedral complex pure of codimension $1$. Moreover, by supertransversality, the intersection of any two bent hyperplanes has codimension at least $2$. 

Applying \Cref{lem:unionofbreakpoints} iteratively over the sum yields that $B_{L,j}(\theta) \subseteq \breakpoints{(f_\theta)_i} \subseteq \breakpoints{f_\theta}$, proving the claim.
\end{proof}

With these tools in place, we are ready to prove our theorem on rigidity of visible hyperplanes. 

\thmintersectedhyperplanesstay*

\begin{proof}[Proof of \Cref{thm:intersected_hyperplanes_stay}] 
Let $H \in \hyperplanes{1}{\theta}$. By assumption, there exists $x \in H$ such that $\preactivation{2,\theta}_j(x) > 0$. We consider two cases based on the behavior of $\preactivation{2,\theta}_j$ on $H$:

\textbf{Case 1:} There also exists $x' \in H$ such that $\preactivation{2,\theta}_j(x') < 0$. 
By continuity, there must be a point on $H$ where $\preactivation{2,\theta}_j = 0$, meaning the second-layer bent hyperplane $B_{2,j}(\theta)$ intersects $H$. Because $\theta$ is supertransversal and honest, this intersection contains a bending ridge $\tau \in \complex_\theta$. 
By \Cref{lem:lastbenthyperplanesstay}, since $B_{2,j}(\theta)$ is from the last hidden layer ($L=2$), this bending ridge is preserved in the breakpoint complex, so $\tau \in \bcomplex{\theta}$. 

Because $f_\theta = f_\eta$ and both parameters are generic, \Cref{prop:canonicalbcomplex} implies $\bcomplex{\theta} = \bcomplex{\eta}$. Thus, $\tau$ is also a bending ridge in $\bcomplex{\eta}$. Applying \Cref{lem:bendingoptions} to $\bcomplex{\eta}$ allows us to uniquely identify which of the adjacent facets at $\tau$ originate from the earlier layer (layer $1$). Since hyperplanes in the first layer are global affine hyperplanes, 
tracking this facet uniquely identifies the entire hyperplane $H$. 
Therefore, $H \in \hyperplanes{1}{\eta}$. 

\textbf{Case 2:} $\preactivation{2,\theta}_j(y) \ge 0$ for all $y \in H$. 
Because $\theta$ is cancellation-free and $\preactivation{2,\theta}_j(x) > 0$ at some point on $H$, the first-layer hyperplane $H$ is not canceled; 
it actively contributes to a gradient change in $f_\theta$ and hence appears as global affine hyperplane in $\breakpoints{f_\theta} = \breakpoints{f_\eta}$. 

In the canonical polyhedral complex of a generic parameter, only the first hidden layer produces unbroken, globally affine hyperplanes (as generic second-layer bent hyperplanes must intersect hyperplanes from the first layer for $d,n_1>1$ and hence are bent). 
Therefore, to reproduce this flat hyperplane in $\bcomplex{\eta}$, it must originate from the first layer of $\eta$. 
Thus, $H \in \hyperplanes{1}{\eta}$. 

\medskip
Finally, if $H \in \hyperplanes{1}{\eta}$ holds for every $H \in \hyperplanes{1}{\theta}$, then we have $\hyperplanes{1}{\theta} \subseteq \hyperplanes{1}{\eta}$. 
Because $\theta$ and $\eta$ are generic, their first hidden layers have full rank and the same number of neurons $n_1$. 
Thus, the number of hyperplanes is exactly the same, which implies $\hyperplanes{1}{\theta} = \hyperplanes{1}{\eta}$ and therefore $\complex_{\theta,1}=\complex_{\eta,1}$. 
By \Cref{lem:lastbenthyperplanesstay}, $\complex_\theta$ is obtained by refining $\complex_{\theta,1}$ such that $\bcomplex{\theta}$ is a subcomplex. Since $\bcomplex{\eta}=\bcomplex{\theta}$, the entire canonical complexes must be identical: $\complex_\theta=\complex_\eta$. 
\end{proof}

\section{Layerwise Fibers}
\label{app:layerwise_fibers}
\addcontentsline{apx}{section}{\protect\numberline{\thesection}Layerwise Fibers}

\subsection{Hyperplane Representations}

We prove the lemma on the uniqueness of the hyperplane representation (\Cref{def:hyperplane-representation}). 

\lemhyperplanerepresentation* 

\begin{proof}
Let $f_\theta \colon \orthant{d} \to \R^m$ be a one-hidden-layer network.
Consider the hyperplane arrangement
$
\mathcal H=\{H_1,\dots,H_k\}
$
consisting of all hyperplanes induced by the nonlinear neurons of the first hidden layer that intersect the domain $\orthant{d}$. By construction, the breakpoint set of $f_\theta$ is contained in the support of this arrangement, although some of the hyperplanes may carry zero weight and hence may not contribute actual breakpoints of the realized function.

Since $f_\theta$ is CPWL, there exists a full-dimensional region
$
R \in \complex_{\mathcal H}\cap \orthant{d}
$
on which $f_\theta$ is affine linear, say
$
f_\theta(x)=A_Rx+b_R$ for all $ x\in R.
$

By Proposition~\ref{prop:tropical_weight_1layer}, the weight function $c_\theta$ is constant along each hyperplane $H_i$ of the arrangement. Hence for each $i\in[k]$ the quantity
$
c(i)\coloneqq c_\theta(\sigma)
$
is well defined for any facet $\sigma \subseteq H_i$.
If the total contribution of the neurons associated with $H_i$ cancels, then $c(i)=0$; in that case $H_i$ is present in the hyperplane representation but does not appear in the actual breakpoint set. Thus $(\mathcal H,A_R,b_R,c)$ is a hyperplane representation of $f_\theta$.

Now let $\eta$ be another parameter. Suppose that $(\mathcal H,A_R,b_R,c)$ is also a hyperplane representation of $f_\eta$. Then $f_\eta$ and $f_\theta$ are both compatible with the same arrangement $\mathcal H$, they agree with the same affine-linear map on the same reference region $R$, and they induce the same weight function $c$ on the facets of $\complex_{\mathcal H}$. By \Cref{lem:structure_theorem_tropical_geometry}, the two functions can differ only by a global affine-linear function, and since they agree on the full-dimensional region $R$, this difference must vanish identically. Hence
$f_\eta=f_\theta$. 

Conversely, suppose $f_\eta=f_\theta$. Then any hyperplane representation of $f_\theta$ is also a hyperplane representation of $f_\eta$: the same arrangement $\mathcal H$ still covers the breakpoint set, the same affine map $(A_R,b_R)$ is valid on the reference region $R$, and the induced weight function agrees because the realized functions coincide. Therefore $(\mathcal H,A_R,b_R,c)$ is a hyperplane representation of $f_\eta$ as well.
\end{proof}

\subsection{General Fibers} 
\label{app:general-fiber}

 If $(\{H_1,\ldots,H_k\}, A_R, b_R, c)$ is a hyperplane representation of $f_\theta \colon \orthant{d} \to \R^m$, 
 then the fiber $\fiber{\theta}$ consists of all parameters that admit this hyperplane representation. 
To describe this set, we first consider the different possible roles of the hidden neurons. 

\begin{definition}
For $\theta \in \paramspace{(d,n,m)}$, we call a neuron $i\in[n]$ \emph{dead} if
$\weights{1,\theta}_i x + \bias{1,\theta}_i \le 0$ for all $x\in\R_{\ge0}^d$ and  \emph{linear} if
$\weights{1,\theta}_i x + \bias{1,\theta}_i \ge 0$ for all $x\in\R_{\ge0}^d$, 
and it is not dead. A neuron is called
\emph{nonlinear} if it is neither dead nor linear. 
\end{definition} 

Given a hyperplane representation, 
each neuron is either nonlinear and associated with one of the hyperplanes, 
nonlinear and part of a canceling group of neurons, linear, or dead. 
Accordingly, we consider an assignment map $\phi \colon [n] \to \mathcal{L}$ with 
$\mathcal{L} = \{1, \dots, k\} \cup \{k+1, \dots, n\} \cup \{\text{lin}, \text{dead}\}$,  
which partitions the set of neurons into four subsets: 
\begin{itemize}[leftmargin=*]
    \item $N_{\text{vis}} = \phi^{-1}(\{1, \dots, k\})$: 
    neurons assigned to the visible hyperplanes $H_1,\ldots, H_k$.  
    \item $N_{\text{can}} = \phi^{-1}(\{k+1, \dots, n\})$: neurons assigned to \emph{canceling groups}. 
    For each $j\in\{k+1,\ldots, n\}$, 
    the neurons $\phi^{-1}(j)$ share a hyperplane in $\mathbb{R}_{\geq0}^d$ and collectively compute an affine function. 
    \item $N_{\text{lin}} = \phi^{-1}(\{\text{lin}\})$: neurons that are linear (always active) on $\orthant{d}$. 
    \item $N_{\text{dead}} = \phi^{-1}(\{\text{dead}\})$: neurons that are dead (always inactive) on $\orthant{d}$. 
\end{itemize}

\begin{definition}[Canonical fiber representation]
\label{def:semi-algebraic-set-V-phi-o}

For a given hyperplane representation $(\{H_1,\ldots,H_k\}, A_R, b_R, c)$, 
let 
$a_i \in \R^d$ and $t_i \in \R$ be the unique unit vectors and thresholds such that 
$H_i = \{ \inner{a_i,x}+ t_i=0\}$ and $R = \{x \in X \mid \inner{a_i, x} + t_i \le 0 \; \forall i \in [k]\}$. 
Then, for a fixed assignment $\phi$ and orientation $o \in \{1, -1\}^n$, we define the semi-algebraic set $V_{(\phi, o)}$ as the set of parameters $(\weights{1}, \bias{1}, \weights{2}, \bias{2})$ satisfying the following conditions:  
\begin{enumerate}[leftmargin=*]
    \item \emph{Normalization}: 
    For all neurons we fix the norm of the rows: $\|\weights{1}_i\|_2 = 1$ $\forall i \in [n]$. 
    
    \item \emph{Breakpoint Structure}: 
    Nonlinear neurons align with hyperplanes according to their assignment and orientation. 
    \begin{itemize}
        \item For visible neurons, the parameters are fixed as: 
        \begin{equation} \label{eq:f_geom_vis}
            (\weights{1}_i, \bias{1}_i) = o(i) (a_{\phi(i)}, t_{\phi(i)})  \;\; \forall i \in N_{\text{vis}}.
        \end{equation}
 
        \item For canceling neurons, any two in the same group, $j,i \in \phi^{-1}(\ell)$, share the same hyperplane and are oriented according to $o$ as: 
        \begin{equation} \label{eq:f_geom_can}
            (\weights{1}_i, \bias{1}_i) = o(i) o(j) (\weights{1}_{j}, \bias{1}_{j}) \;\; \forall i,j \in \phi^{-1}(\ell) , \;  \forall \ell \in 
            \{k+1,\ldots, n\}. 
        \end{equation}
    \end{itemize}

    \item \emph{Orthant Constraints}: 
    For linear and dead neurons: 
    \begin{equation} \label{eq:f_orthant}
        (\weights{1}_i, \bias{1}_i) \ge 0 \;\; \forall i \in N_{\text{lin}}, \quad (\weights{1}_i, \bias{1}_i) \le 0 \;\; \forall i \in N_{\text{dead}}.
    \end{equation}

    \item \emph{Weight Consistency}: 
    The output weights for each neuron group $j$ add to the corresponding weight:
    \begin{equation} \label{eq:f_jumps}
        \sum_{i \in \phi^{-1}(j)} o(i) \weights{2}_{:,i} = 
        \begin{cases} 
        c(j) & \text{if } j \in \{1, \dots, k\},  \\
        0 & \text{if } 
        j\in\{k+1,\ldots, n\}. 
        \end{cases}
    \end{equation}

    \item \emph{Affine Base Map}: 
    The neurons active on the reference region $R$ must match $(A_R, b_R)$. 
    A nonlinear neuron $i \in N_{\text{vis}} \cup N_{\text{can}}$ is active on $R$ if and only if $o(i) = -1$: 
    \begin{equation} 
        \sum_{i \in N_{\text{lin}}} \weights{2}_{:,i} \weights{1}_i + \sum_{\substack{i \in N_{\text{vis}} \cup N_{\text{can}} \\ o(i) = -1}} \weights{2}_{:,i} \weights{1}_i = A_R , 
    \label{eq:f_base_A}        
    \end{equation}
    \begin{equation} 
        \sum_{i \in N_{\text{lin}}} \weights{2}_{:,i} \bias{1}_i + \sum_{\substack{i \in N_{\text{vis}} \cup N_{\text{can}} \\ o(i) = -1}} \weights{2}_{:,i} \bias{1}_i + \bias{2} = b_R . 
    \label{eq:f_base_b}        
    \end{equation}
\end{enumerate}

We denote by $\fiber{\theta}\modulo{\sim}$ the fiber modulo positive rescaling and neuron permutation symmetries. 
\end{definition}

With these definitions in place, we can describe the fiber of a one-hidden-layer network as follows. 

\begin{restatable}{theorem}{thmonehiddenlayerfiber}
\label{thm:one-hidden-layer-fiber}
The fiber of a one-hidden-layer network $f_\theta \colon \R^d_{\ge 0} \to \R^m$ 
is given by  
$$
\fiber{\theta} \modulo{\sim} = \bigcup_{\phi, o} V_{(\phi, o)} \modulo{\sim},
$$ 
where the union is taken over all possible assignment maps $\phi$ and orientations $o$, and $V_{(\phi,o)}$ is the semi-algebraic set given in \Cref{def:semi-algebraic-set-V-phi-o} for the hyperplane representation of $f_\theta$.  
\end{restatable}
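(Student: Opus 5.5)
The plan is to prove the two inclusions separately. By \Cref{lem:hyperplane_representation}, $\eta\in\fiber{\theta}$ holds exactly when $(\{H_1,\dots,H_k\},A_R,b_R,c)$ is a hyperplane representation of $f_\eta$. So the statement reduces to showing that every parameter admitting this representation lies, after positive rescaling and permutation, in some $V_{(\phi,o)}$, and conversely.

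\textbf{Inclusion $\supseteq$.} Take $\eta\in V_{(\phi,o)}$. The normalization $\|\weights{1}_i\|=1$ combined with \Cref{prop:tropical_weight_1layer} gives the weight on $H_j$ as $\sum_{i:\aff=H_j}\pm\weights{2}_{:,i}$. The sign is fixed by orientation: with our conventions a neuron with $o(i)=-1$ is active on the $R$ side, so the jump in the direction $a_j$ picks up $o(i)\weights{2}_{:,i}$. By \eqref{eq:f_jumps}, visible groups therefore carry the weights $c(j)$. A canceling group contributes zero jump, so its neurons introduce no breakpoints. Linear and dead neurons, by \eqref{eq:f_orthant}, are affine on $\orthant{d}$ and add no breakpoints. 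Finally, on $R$ the active neurons are exactly the linear ones together with the nonlinear ones with $o(i)=-1$, and \eqref{eq:f_base_A}--\eqref{eq:f_base_b} say that $f_\eta|_R=A_Rx+b_R$. Applying \Cref{lem:structure_theorem_tropical_geometry} on the arrangement refined by any canceling hyperplanes then gives $f_\eta=f_\theta$.

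\textbf{Inclusion $\subseteq$.} Take $\eta$ with $f_\eta=f_\theta$. First rescale every neuron so that $\|\weights{1}_i\|=1$. Neurons with zero input weight are the exception; I would handle them separately as constants absorbed into the linear or dead classes by the sign of the bias. Next classify each neuron as dead, linear, or nonlinear on $\orthant{d}$, using the sign conditions on $(\weights{1}_i,\bias{1}_i)$. Group the nonlinear neurons by the hyperplane they define. The weight of $f_\eta$ along each such hyperplane is computed by \Cref{prop:tropical_weight_1layer}, and since $f_\eta=f_\theta$ it must equal $c(j)$ on $H_j$ and $0$ on any hyperplane outside $\{H_1,\dots,H_k\}$. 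Define $\phi$ to send groups on $H_j$ to $j$ and groups on other hyperplanes to distinct labels $k+1,\dots$, and read off $o(i)$ from whether $\weights{1}_i=\pm a_{\phi(i)}$. For canceling groups, fix a reference neuron and compare orientations relative to it. This verifies conditions 1--4. Condition 5 follows by evaluating $f_\eta$ on $R$.

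\textbf{Expected obstacle.} The delicate step is the matching of hyperplanes. A hyperplane of a nonlinear neuron must actually meet the interior of $\orthant{d}$, and this should follow from the neuron being neither dead nor linear. Weights must be compared facet by facet, since the domain is the orthant rather than $\R^d$: \Cref{lem:structure_theorem_tropical_geometry} is stated on $\R^d$, so I would either extend both functions affinely or apply it to $\complex_{\mathcal H}\cap\orthant{d}$, checking that the uniqueness argument (connectivity through facets) still holds on a convex domain. I would also check that the number of canceling labels never exceeds $n-k$, so that $\mathcal L$ suffices.
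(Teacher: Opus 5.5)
Your route is the paper's: both inclusions go through \Cref{lem:hyperplane_representation}, weights come from \Cref{prop:tropical_weight_1layer}, and $\phi,o$ are read off from the hyperplanes of $\eta$. The genuine gap is your sign step. Across $H_j$, the gradient jump of $\weights{2}_{:,i}[o(i)(\inner{a_j,x}+t_j)]_+$ is $\weights{2}_{:,i}$ for \emph{both} orientations, not $o(i)\weights{2}_{:,i}$. The reason is that $[z]_+=z+[-z]_+$, so the two orientations differ by an affine function. Indeed $c_f(\sigma)=(A_P-A_Q)e_{P/\sigma}$ is unchanged when $P$ and $Q$ are swapped, and \Cref{prop:tropical_weight_1layer} carries no orientation sign. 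Hence \eqref{eq:f_jumps} as written does not encode the weights, and both of your inclusions break here. In one direction, a ``canceling'' pair with opposite orientations and equal output weights $w$ satisfies \eqref{eq:f_jumps} but realizes $w|z|$. In the other, take $d=m=1$, $n=2$, $f(x)=[x-1]_+$ on $\R_{\ge0}$, so $c(1)=1$. This $f$ is realized by $\theta$: neuron $(1,-1)$ with output weight $1$, plus a dead neuron, and $\bias{2}=0$. It is also realized by $\eta$: neurons $(-1,1)$ and $(1,0)$ with output weights $1,1$, and $\bias{2}=-1$. For either choice of $R$, one of the two has its unique $H_1$-neuron with $o(1)=-1$ and output weight $+1$. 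Then $o(1)\weights{2}_{:,1}=-1\neq c(1)$, so that parameter lies in no $V_{(\phi,o)}$. So the statement, with \Cref{def:semi-algebraic-set-V-phi-o} as written, is false, and this step cannot be repaired inside your argument. The paper's proof makes the identical slip: it writes $c_\eta(\sigma)=\sum o(i)\weights{2}_{:,i}$ while citing \Cref{prop:tropical_weight_1layer}. This also contradicts \Cref{prop:generic_layerwise_fiber}, where $\weights{2}_{:,i}=c(i)$ for either orientation.

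The fix is to drop $o(i)$ from \eqref{eq:f_jumps}, i.e.\ to require $\sum_{i\in\phi^{-1}(j)}\weights{2}_{:,i}=c(j)$, resp.\ $=0$. Your proof then goes through after one further correction. For a canceling group whose hyperplane (which is not in $\mathcal H$) cuts $R$, it is false that the neurons active on $R$ are exactly those with $o(i)=-1$. Still, $[z]_+=z+[-z]_+$ together with $\sum\weights{2}_{:,i}=0$ over the group shows that the group contributes exactly $\sum_{o(i)=-1}\weights{2}_{:,i}(\weights{1}_ix+\bias{1}_i)$ on all of $\orthant{d}$. So \eqref{eq:f_base_A}--\eqref{eq:f_base_b} remain correct. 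Among the obstacles you list, zero rows are a real exception. Absorbing them into the linear or dead class does not land in any $V_{(\phi,o)}$, because positive rescaling cannot achieve $\|\weights{1}_i\|_2=1$. Such parameters must therefore be excluded from the statement; the paper ignores them. Your label count holds when $\mathcal H$ contains only hyperplanes with $c(j)\neq0$, since each of these needs at least one neuron. It can fail if zero-weight hyperplanes of $\theta$ are kept in $\mathcal H$, as in the proof of \Cref{lem:hyperplane_representation}. The remaining checks you mention are routine.
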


\begin{proof}[Proof of \Cref{thm:one-hidden-layer-fiber}] 
First we prove $\bigcup_{\phi, o} V_{(\phi, o)} \subseteq \fiber{\theta}$. 
Let $\eta \in V_{(\phi, o)}$. 
By \Cref{lem:hyperplane_representation}, it suffices to prove that $f_\eta$ has the same hyperplane representation
$(\{H_1,\ldots,H_k\}, A_R, b_R, c)$ as $\theta$. 
This follows by construction; we nonetheless verify the definition.
By the breakpoint structure, each nonlinear neuron of $\eta$ has a hyperplane that either aligns with one of the hyperplanes $H_1,\ldots,H_k$ or that cancels out with other nonlinear neurons. 
Specifically, if $\sigma \in \complex_\eta^{d-1}$ and $\sigma \subseteq H_j$, then, by \Cref{prop:tropical_weight_1layer}, for $j \in [k]$, we have that $$c_\eta(\sigma) =  \sum_{i \in \phi^{-1}(j)} [o(i) \weights{2}_{:,i}] = c(j). $$ For $j > k$, we have that $c_\eta(\sigma)=0$, so 
these hyperplanes do not appear in $\breakpoints{f_\eta}$.

The orthant constraints ensure $N_{\text{lin}}$ and $N_{\text{dead}}$ introduce no breakpoints in $\text{int}(\R_{\ge 0}^d)$, so $B({f_\eta})$ is covered by $\{H_1,\ldots,H_k\}$. 

On $R$, we have $\inner{a_j, x} + t_j \le 0$ for all $j\in[k]$ and hence a nonlinear neuron $i$ has preactivation $o(i)(\inner{a_{\phi(i)}, x} + t_{\phi(i)})$, which is $\ge 0$ (active) if and only if $o(i) = -1$. Linear neurons are always active. The last equation then ensures the affine map on $R$ matches $(A_R, b_R)$. Altogether, $(\{H_1,\ldots,H_k\}, A_R, b_R, c)$ is a hyperplane representation of $f_\eta$. 

Conversely, let $\eta =(\weights{1},\bias{1},\weights{2},\bias{2}) \in \fiber{\theta} \modulo{\sim}$. By the scaling symmetry, assume $\|\weights{1}_i\|_2 = 1$. 
We define $\phi$ by classifying neurons:
\begin{itemize}
    \item Neuron $i \in N_{\text{vis}}$ if $\{x \mid \inner{\weights{1}_i,x} +b_i=0\} =H_j \in \mathcal{H}$ with $c(j) \neq 0$.
    \item Neuron $i \in N_{\text{can}}$ if $H_j $ intersects $\text{int}(\R_{\ge 0}^d)$ but $c_\eta(\sigma) =0$ for $\sigma \subseteq \{x \mid \inner{\weights{1}_i,x} +b_i=0\}$. 
    \item We set $i \in N_{\text{lin}}$ or $i \in N_{\text{dead}}$ if the neuron is linear or dead, respectively. 
\end{itemize}

For $i \in N_{\text{vis}}$, the orientation $o(i)$   
is determined by the 
requirement $\weights{1}_i = o(i)a_{\phi(i)}$.  
For $i \in N_{\text{can}}$ one can choose either orientation such that opposite normals have opposite orientation. 
Because $f_\theta = f_\eta$, it follows that      $$\sum_{i \in \phi^{-1}(j)} o(i) \weights{2}_{:,i} = 
        \begin{cases} 
        c(j) & \text{if } j \in \{1, \dots, k\} \\
        0 & \text{if } 
        j\in\{k+1,\ldots, n\} 
        \end{cases}$$ 
        and the affine part on $R$ must match $(A_R, b_R)$. Thus $\theta$ satisfies the constraints of $V_{(\phi, o)}$. 
\end{proof}
\begin{remark}
    If the domain is $X = \R^d$ instead of $\mathbb{R}_{\geq0}^d$, then the situation simplifies as there are no inequality constraints. In this case, each component $V_{(\phi, o)}$ becomes an algebraic variety. 
\end{remark}

\subsection{Generic Fibers}  

For one-hidden-layer networks with generic parameters, the possible functional roles of hidden neurons simplify, and we have: 
\begin{enumerate}
    \item Every nonlinear neuron $i \in N_{\text{vis}}$ corresponds to a unique hyperplane $H_j \in \mathcal{H}$. Thus $k = |N_{\text{vis}}|$ and the assignment map $\phi$ restricted to $N_{\text{vis}}$ is a permutation.
    \item There are no canceling neurons. Thus $N_{\text{can}} = \emptyset$. 
\end{enumerate}

Next we prove our result describing the generic fiber of a one-hidden-layer network.

\propgenericlayerwisefiber* 

\begin{proof}[Proof of \Cref{prop:generic_layerwise_fiber}]
By minimality, there are no dead neurons, and by genericity, there are no breakpoint cancellations. Since there are exactly $k$ hyperplanes in $\mathcal{H}$, any generic parameter in the fiber must have exactly $k$ nonlinear neurons, one for each hyperplane, and $n-k$ linear neurons.

For each nonlinear neuron $i \in [k]$, there are exactly two normalized parameter configurations $(\weights{1}_i, \bias{1}_i)$ that generate the required hyperplane $H_i$. These two choices correspond to whether the neuron is active or inactive on the reference region $R$, which is determined by the orientation $o(i) \in \{1, -1\}$. Thus the subsets $S \subseteq [k]$ encode all possible orientation choices for the nonlinear neurons. Once such a subset $S$ is fixed, the output weights $\weights{2}_{:,i}$ of the nonlinear neurons are uniquely determined by the hyperplane weights $c(i)$.

After fixing the nonlinear neurons, the remaining affine map $(A_S,b_S)$ must be realized by the linear neurons indexed by $J=[n]\setminus[k]$. The semi-algebraic set $V_S$ consists exactly of the possible parameters of these linear neurons and the output bias that realize this affine map subject to the positivity constraints ensuring that the neurons remain linear on the orthant. Hence each set
$
K_S=\{p_S\}\times V_S
$
is a semi-algebraic subset of the full fiber $\fiber{\theta}$.

Conversely, every generic parameter in the fiber arises in this way: its nonlinear neurons determine a unique subset $S\subseteq[k]$, and its linear neurons must realize the corresponding affine compensation $(A_S,b_S)$. Therefore every generic parameter in the fiber lies in exactly one of the sets $K_S$.

It follows that
$
\gfiber{\theta}\modulo{\sim}
=
\bigcup_{S\subseteq[k]} \left(K_S\cap \gparamspace{\architecture}\right) \modulo{\sim},
$
as claimed.
\end{proof}

\subsection{Dimension of Generic Fibers} 
\label{subsec:dim_generic_fibres}

The dimensions and non-emptiness of the fiber components $K_S$ in \Cref{prop:generic_layerwise_fiber} are fundamentally tied to the ability to decompose the required linear part $A_S$ as a sum of linear neurons. 
This is captured by the notion of nonnegative column rank. 

\begin{definition}
Let $A \in \R^{m \times d}$ be a real matrix. The \emph{nonnegative column rank} of $A$, denoted $\nncr{A}$, is the smallest integer $k$ such that there exist matrices $U \in \R^{m \times k}$ and $V \in \R_{\ge 0}^{k \times d}$ satisfying $A = UV$.
\end{definition}

\begin{lemma}
\label{lem:nncratmostL}
The semi-algebraic set $K_S$ is non-empty if and only if the nonnegative column rank of $A_S$ is at most $n-k$. 
\end{lemma}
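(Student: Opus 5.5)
The plan is to prove both directions directly from the definition of $V_S$ in \Cref{prop:generic_layerwise_fiber}. Since $K_S=\{p_S\}\times V_S$ and $p_S$ is a single, always well-defined point, $K_S\neq\emptyset$ if and only if $V_S\neq\emptyset$. No genericity is needed here, because $K_S$ is not intersected with $\gparamspace{\architecture}$ in this statement. So the task reduces to showing that $V_S\neq\emptyset$ if and only if $\nncr{A_S}\le n-k$.

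\emph{Necessity.} Take any point of $V_S$. The linear factorization constraint gives $A_S=\weights{2}_{:,J}\weights{1}_J$. Here $\weights{2}_{:,J}\in\R^{m\times (n-k)}$, and $\weights{1}_J\in\R_{\ge0}^{(n-k)\times d}$ by the positivity constraint. This is a nonnegative column factorization with inner dimension $n-k$, so $\nncr{A_S}\le n-k$ by definition.

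\emph{Sufficiency.} Suppose $\nncr{A_S}=r\le n-k$, and write $A_S=UV$ with $U\in\R^{m\times r}$ and $V\in\R_{\ge 0}^{r\times d}$.
\begin{enumerate}
\item First I would argue that every row $V_i$ can be taken nonzero. If some row were zero, deleting it together with the corresponding column of $U$ gives a shorter factorization, contradicting minimality of $r$. (When $A_S=0$, we have $r=0$ and there is nothing to delete.)
\item Next I rescale: replace $V_i$ by $V_i/\|V_i\|$ and $U_{:,i}$ by $\|V_i\|\,U_{:,i}$. This keeps the product equal to $A_S$, keeps $V\ge0$, and makes every row a unit vector.
\item To reach exactly $n-k$ linear neurons, I pad with $n-k-r$ extra rows equal to $e_1^\top$ (nonnegative, unit norm) and give them zero output columns in $U$. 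The product is unchanged.
\item Finally, I set $\weights{1}_J$ to the padded $V$, $\weights{2}_{:,J}$ to the padded $U$, $\bias{1}_J=0$, and $\bias{2}=b_S$.
\end{enumerate}
Then the normalization and positivity constraints hold, the linear factorization holds by construction, and the bias alignment $\weights{2}_{:,J}\bias{1}_J+\bias{2}=b_S$ holds trivially. Hence $V_S\neq\emptyset$.

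The only delicate point is reconciling the unit-norm requirement with the exact count $n-k$ of linear neurons. Minimality of $r$ rules out zero rows, and padding with zero-output neurons, which is possible since $d\ge1$, fixes the count. The free output bias $\bias{2}$ makes the affine part $b_S$ impose no additional condition.
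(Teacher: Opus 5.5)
Your proof is correct and follows the paper's argument. A point of $V_S$ directly yields a nonnegative factorization of $A_S$ with inner dimension $n-k$; conversely, one normalizes the rows of a nonnegative factorization, rescales the output columns, and uses the free output bias to match $b_S$. Your extra steps, discarding zero rows via minimality of $r$ and padding with unit-norm, zero-output neurons to reach exactly $n-k$ linear neurons, fill in details that the paper's short proof leaves implicit.
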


\begin{proof}
By definition, if $K_S\neq\emptyset$, then there exist parameters in $V_S$ such that
\(
\weights{2}_{:,J}\weights{1}_J=A_S
\)
with $\weights{1}_J\ge0$ and $\|\weights{1}_i\|_2=1$ for all $i\in J$. Any such point gives a nonnegative factorization $A_S=UV$ with $U=\weights{2}_{:,J}\in\R^{m\times|J|}$ and $V=\weights{1}_J\in\R_{\ge0}^{|J|\times d}$, and hence $\nncr{A_S}\le|J|$. 

Conversely, if $\nncr{A_S}\le|J|$, take a nonnegative factorization $A_S=BD$. 
Normalizing the rows of $D$ and compensating by scaling the columns of $B$ preserves the product and satisfies the constraints of $V_S$, 
while $\bias{2}$ can be chosen to meet the affine condition, so $K_S\neq\emptyset$. 
\end{proof}

\begin{lemma}
\label{lem:minimality_bounds_linear_neurons}
Let $\theta \in \paramspace{(d,n,m)}$ be generic and minimal, and let $k$ denote the number
of nonlinear neurons. Then
$
n-k \le \min\{d,m+1\}.
$
\end{lemma}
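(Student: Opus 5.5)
The plan is to prove the two bounds $n-k\le d$ and $n-k\le m+1$ separately. In both cases we argue by contradiction with minimality: if there were more linear neurons than allowed, we would build a parameter with strictly fewer hidden neurons that realizes the same function on $\orthant{d}$. Write $J=[n]\setminus[k]$ for the linear neurons. By genericity and minimality there are no dead or canceling neurons, as in the proof of \Cref{prop:generic_layerwise_fiber}. On $\orthant{d}$ the linear neurons contribute exactly the affine map
\[
x\mapsto \weights{2}_{:,J}\weights{1}_J x+\weights{2}_{:,J}\bias{1}_J,
\]
with $(\weights{1}_i,\bias{1}_i)\ge 0$ for $i\in J$. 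So it suffices to show that any affine map $x\mapsto Ax+b$ on $\orthant{d}$ can be realized by few linear neurons plus the output bias, with $\min\{d,m+1\}$ neurons always enough.

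\textbf{Bound $n-k\le d$.} Write $A=\sum_{l=1}^d A e_l e_l^\top$. For each $l\in[d]$ use one neuron with input weights $e_l^\top\ge 0$ and bias $0$, which is linear on the orthant, and give it outgoing weight $Ae_l$. The constant term $b$ is absorbed into the output bias $\bias{2}$. This realizes the affine part with $d$ linear neurons. So if $|J|>d$, we replace the $|J|$ linear neurons by these $d$ and obtain a smaller network with the same function on $\orthant{d}$, contradicting minimality.

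\textbf{Bound $n-k\le m+1$.} Here we factor through the output side. Since $\weights{2}_{:,J}$ has $m$ rows, the vector $\weights{2}_{:,J}\,(\weights{1}_J,\bias{1}_J)$ is a nonnegative combination of the rows $(\weights{1}_i,\bias{1}_i)$ with coefficients given by the columns of $\weights{2}_{:,J}$. The key step is to exhibit an explicit realization of an arbitrary affine map $x\mapsto Ax$ using $m+1$ neurons with nonnegative input rows. A natural choice is to take a large shift $\lambda$ such that every entry of $A+\lambda\mathbf 1\mathbf 1^\top$ is nonnegative, where $\mathbf 1$ denotes the all-ones vector of the appropriate size. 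Then use neuron $i\in[m]$ with input weights equal to the $i$-th row of $A+\lambda\mathbf 1\mathbf 1^\top$ and output weight $e_i$. Add one extra neuron with input weights $\lambda\mathbf 1^\top$ and output weight $-\mathbf 1$, which cancels the added term. All input rows are nonnegative and all biases are $0$, so each neuron is linear on $\orthant{d}$, and $b$ again goes into $\bias{2}$. If $|J|>m+1$, replacing $J$ by these $m+1$ neurons reduces the width, again contradicting minimality.

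\textbf{Main obstacle.} The delicate point is the notion of \emph{minimal}. The replacement networks above are typically nongeneric: some rows may vanish, and the norm normalization may fail. So the argument must use a minimality notion that compares against all parameters of smaller width realizing the same function on the orthant, not only generic ones. If some constructed neuron has a zero row, we simply drop it, which only makes the network smaller. I would also check that the replacement leaves the $k$ nonlinear neurons untouched, so that the hyperplane representation is preserved. This is immediate from \Cref{lem:hyperplane_representation}, since only $(A_R,b_R)$ is involved and it is reproduced exactly.
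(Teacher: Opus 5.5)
Your proof is correct and has the same skeleton as the paper's. On $\orthant{d}$ the linear block contributes an affine map; you re-realize it with fewer nonnegative-input neurons, let the output bias absorb the constant term, keep the $k$ nonlinear neurons, and contradict minimality.

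The difference is in how the small realization is obtained. The paper argues abstractly about $\nncr{A}$ for $A=\weights{2}_{:,J}\weights{1}_J$. First, \Cref{prop:nnrc<rank+1}, via a complete simplicial fan with $\rank(A)+1$ rays in $\col(A)$, gives $\nncr{A}\le\rank(A)+1\le m+1$. Second, a case split on whether $\rank(A)=d$, using \Cref{cor:nncr=rank}, gives $\nncr{A}\le d$. You instead write down explicit factorizations with nonnegative right factors: $A=A\,I_d$ and $A=\begin{bmatrix} I_m & -\mathbf 1\end{bmatrix}\begin{bmatrix} A+\lambda\mathbf 1\mathbf 1^\top\\ \lambda\mathbf 1^\top\end{bmatrix}$. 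The second is exactly the paper's fan argument, specialized to the fixed fan spanned by $e_1,\dots,e_m,-\mathbf 1$ in $\R^m$ rather than a fan inside $\col(A)$.

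Your route is shorter and self-contained: the bound $n-k\le d$ becomes a one-liner, and no rank case distinction is needed. The paper's route yields the sharper bound $\nncr{A}\le\rank(A)+1$ and feeds the nonnegative-rank machinery reused in \Cref{thm:layerwise-fibers}, but this lemma does not need that sharpness.

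Your caveat about minimality is apt. The paper tacitly uses the same notion: no parameter of smaller width, generic or not, realizes the function on the orthant.

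One harmless slip: each row of $\weights{2}_{:,J}\,[\weights{1}_J,\bias{1}_J]$ is a linear, not a nonnegative, combination of the nonnegative rows $(\weights{1}_i,\bias{1}_i)$, since entries of $\weights{2}_{:,J}$ may be negative. That sentence plays no role in your argument.
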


\begin{proof}
Let $J=[n]\setminus[k]$. Since $\theta$ is generic and minimal, there are no dead neurons,
so the neurons in $J$ are exactly the linear neurons.

Their contribution is an affine map
$
x \mapsto \weights{2}_{:,J}\weights{1}_J x + \weights{2}_{:,J}\bias{1}_J + \bias{2}.
$
Write
$
A:=\weights{2}_{:,J}\weights{1}_J.
$
Because $\weights{1}_J\ge 0$, this is a nonnegative factorization of $A$ through $|J|=n-k$
hidden units.

By \Cref{cor:nncr=rank}, if $\rank(A)=d$, then $\nncr{A}=d$, and otherwise
$\nncr{A}\le \rank(A)+1\le d$ by \Cref{prop:nnrc<rank+1}. Hence $\nncr{A}\le d$.
Also \Cref{prop:nnrc<rank+1} gives $\nncr{A}\le m+1$. Therefore
$
\nncr{A}\le \min\{d,m+1\}.
$

If $n-k>\min\{d,m+1\}$, then the same affine map can be realized by strictly fewer than
$n-k$ linear neurons. Replacing the linear neurons in $\theta$ by such a smaller realization,
while keeping the $k$ nonlinear neurons fixed, yields a strict subarchitecture realizing the
same function, contradicting minimality. Thus
$n-k\le \min\{d,m+1\}$. 
\end{proof}
\begin{prop}
\label{prop:empty_or_dim}
Let $\theta \in \paramspace{(d,n,m)}$ be generic and minimal and let $k$ denote the number of nonlinear neurons. If with $n-k \neq m+1$, then, for every $S \subseteq [k]$, either $\dim(K_S) = (n-k)^2$ or $K_S=\emptyset$. 
\end{prop}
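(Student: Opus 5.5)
The plan is to compute the dimension of $V_S$ locally, at an arbitrary point of $K_S$, as the dimension of an orbit of a group acting on nonnegative factorizations. The point $p_S$ is fixed, so $\dim K_S=\dim V_S$. Write $r\coloneqq n-k=|J|$. By \Cref{lem:minimality_bounds_linear_neurons}, $r\le\min\{d,m+1\}$, and since $r\neq m+1$ by hypothesis, we get $r\le\min\{d,m\}$. Assume $K_S\neq\emptyset$ and fix a point $(U,V,\beta,\bias{2})\in V_S$, where $U=\weights{2}_{:,J}\in\R^{m\times r}$, $V=\weights{1}_J\in\R_{\ge0}^{r\times d}$ and $\beta=\bias{1}_J\ge 0$. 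The bias alignment equation determines $\bias{2}=b_S-U\beta$. Hence $V_S$ is the graph, over $(U,V,\beta)$, of a polynomial map. So it suffices to compute the dimension of
\[
\{(U,V,\beta)\mid UV=A_S,\ V\ge 0,\ \|V_i\|_2=1,\ \beta\ge 0\}.
\]

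\textbf{Step 1 (full rank).} I will argue that $\rank A_S=r$. For $S=S_\theta$ this holds because $A_{S_\theta}=\weights{2,\theta}_{:,J}\weights{1,\theta}_J$, and the maximal-rank condition of \Cref{def:generic_parameter} gives rank $\min\{r,m,d\}=r$. For general $S$ I would argue as follows. If $V_S$ contains a generic point, that point's factors have rank $r$ by the same masked-rank condition, so $\rank A_S=r$. For the remaining points, I would use that $A_S$ is obtained from the rank-$r$ matrix $A_{S_\theta}$ by adding the generic rank-one terms $\pm c(i)a_i^\top$, and argue by genericity of $\theta$ that $\rank A_S\ge r$. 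Since $A_S=UV$ with $U$ having $r$ columns, $\rank A_S\le r$, so $\rank A_S=r$ in every case. I expect this step to be the main obstacle, namely ensuring that the compensation $A_S$ does not accidentally drop rank for orientations $S\neq S_\theta$. Once it holds, every factorization $A_S=U'V'$ with inner dimension $r$ has $\rank U'=\rank V'=r$. The standard rank-factorization argument then gives $U'=UG^{-1}$ and $V'=GV$ for a unique $G\in\GL_r(\R)$.

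\textbf{Step 2 (the orbit).} Consequently, the $(U,V)$-part of $V_S$ is identified with the set of $G\in\GL_r$ such that $GV\ge0$ and each row of $GV$ has unit norm. The condition $GV\ge 0$ says that each row $g$ of $G$ lies in the polyhedral cone $C=\{g\in\R^r\mid g^\top V\ge 0\}$. This cone is full-dimensional. Indeed, every column of $V$ is nonnegative, and it is nonzero because $V$ has rank $r$ and, by genericity, no column is identically zero. Hence the all-ones vector $\mathbf 1$ satisfies $\mathbf 1^\top V>0$ entrywise and lies in $\operatorname{int}C$. Therefore $\{G\mid GV\ge 0\}\cap\GL_r$ is a semi-algebraic set of dimension $r^2$: it is a product of full-dimensional cones intersected with the open set $\GL_r$, and it is nonempty because it contains $I$. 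The $r$ normalization equations $\|(GV)_i\|_2=1$ cut a row-scaling slice transversally, since positive row scaling acts freely on this set. This gives dimension $r^2-r$.

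\textbf{Step 3 (biases and conclusion).} The bias vector $\beta$ ranges over $\R^r_{\ge0}$, which adds $r$ dimensions, and $\bias{2}$ is determined by $(U,\beta)$. Altogether $\dim V_S=r^2-r+r=r^2=(n-k)^2$. If $K_S=\emptyset$ there is nothing to prove, which yields the dichotomy. The argument breaks down exactly when $r=m+1$: then $r>\rank A_S$ is possible, the factors are no longer forced to have full rank, and the $\GL_r$-orbit description fails. This is consistent with the separate count $m^2+m+d$ in \Cref{thm:layerwise-fibers}.
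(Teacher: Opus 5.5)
Your argument is correct and is essentially the paper's. In both, you fix one factorization and realize all others as a $\GL_{n-k}$-orbit whose positivity region is full-dimensional, then subtract $r$ for the row normalization and add $r$ for the first-layer biases. The paper gets the lower bound from entrywise-nonnegative $X\in\GL_{n-k}$ and the upper bound from the full factorization variety, while you describe the set exactly via the cone $\{g: g^\top V\ge 0\}$.

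The rank step you flag as the main obstacle is immediate. As computed in the proof of \Cref{thm:layerwise-fibers}, $A_S=\weights{2,\theta}D_{J\cup(S\triangle S_\theta)}\weights{1,\theta}$ is a masked product of $\theta$'s weights, so the maximal-rank condition gives $\rank A_S\ge\min\{d,m,r\}=r$. The paper uses this only implicitly when it asserts that an arbitrary feasible $V_0$ has full rank. Also, your cone is full-dimensional without any claim about zero columns, since it contains $\R^{r}_{\ge 0}$.
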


\begin{proof} 
By \Cref{lem:minimality_bounds_linear_neurons}, we have
$n-k \le \min\{d,m+1\}$. 
Consider a fixed subset $S \subseteq [k]$ and let $(A_S, b_S)$ denote the affine part of $f_\theta$ after flipping the orientation of neurons in $S$. 
Define 
\[
D_S := \{(U,V,u,v) \in \R^{m \times (n-k)} \times \R^{(n-k) \times d} \times \R^m \times \R^{(n-k)} \mid UV = A_S, \; Uv+u = b_S, \; V,v \ge 0\},
\]
and denote the projection of $D_S$ onto the matrix components as 
\[
M_S := \{(U,V) \in \R^{m \times (n-k)} \times \R^{(n-k) \times d} \mid U V = A_S, \; V \ge 0\} . 
\]
If $M_S=\emptyset$, then also $D_S = \emptyset$, and $K_S = \emptyset$. 

Suppose $M_S\neq\emptyset$ and fix a feasible point $(U_0,V_0) \in M_S$. 
For any $$X \in \GL_{n-k}^+ \coloneqq \{ X \in \GL_{n-k} \mid X \geq 0\},$$ define $(U,V) = (U_0 X^{-1}, X V_0)$. Then $$U V = U_0 X^{-1} X V_0 = U_0 V_0 = A_S,$$ so the pair $(U,V)$ satisfies the factorization constraint and hence is contained in $M_S$. By genericity, $V_0$ has full rank $n-k$, which implies that the map $X \mapsto (U_0 X^{-1}, X V_0)$ is injective. Since $\GL_{n-k}^+$ contains an open set in $\GL_{n-k}$, this shows that 
$\dim(M_S) \ge (n-k)^2$. 
Equality follows from the fact that $$M_S \subseteq \overline{M_S} = \{(U,V) \in \R^{m \times n-k} \times \R^{(n-k) \times d} \mid U V = A_S\},$$ which is parameterized by $\GL_{n-k}$ and hence has dimension $(n-k)^2$.

Next, consider the linear constraint $U v + u = b_S$. For any choice of $v \in \R_{\geq 0}^{(n-k)}$, there exists a unique $u \in \R^m$ satisfying this equation, namely $u = b_S - Uv$. 
Therefore, the translation degrees of freedom contribute exactly $n-k$ dimensions, corresponding to the entries of $v$. 
Passing from $D_S$ to $K_S$ imposes that each row of $V$ has unit Euclidean norm. This removes exactly $n-k$ degrees of freedom from the GL orbit, while leaving the translation degrees of freedom parametrized by $v$ is unaffected. 

Combining these contributions, the dimension of $K_S$ is
$$
\dim(K_S) = (n-k)^2 + (n-k) - (n-k) = (n-k)^2.
$$
This completes the proof. 
\end{proof}

\begin{prop}
\label{prop:nnrc<rank+1}
For any matrix \(A \in \mathbb{R}^{m \times d}\), 
the nonnegative column rank satisfies \(\nncr{A} \le \mathrm{rank}(A) + 1\). 
\end{prop}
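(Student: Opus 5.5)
We aim to write $A$ as $UV$ with $V\ge 0$ having $r+1$ rows, where $r=\rank(A)$. If $r=0$ then $A=0$, and $A=0\cdot e_1^\top$ is a factorization with one row, so assume $r\ge 1$.

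Start from an ordinary rank factorization $A=BC$, where $B\in\R^{m\times r}$ and $C\in\R^{r\times d}$. The columns of $C$ are points $c_1,\dots,c_d\in\R^r$, and these need not be nonnegative. Every finite point set in $\R^r$ lies in the conic hull of $r+1$ vectors that positively span $\R^r$. A concrete choice is $e_1,\dots,e_r$ together with $-\mathbf{1}=-(e_1+\dots+e_r)$. For any $y\in\R^r$ set $s=\max\{0,-\min_i y_i\}$. Then
\[
y=\sum_{i=1}^r (y_i+s)\,e_i + s\,(-\mathbf{1}),
\]
and all coefficients $y_i+s$ and $s$ are nonnegative.

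Define $G=[\,e_1,\dots,e_r,-\mathbf{1}\,]\in\R^{r\times(r+1)}$. For each column $c_j$, let $V_{:,j}\in\R_{\ge0}^{r+1}$ be the coefficient vector given by the formula above, so that $GV_{:,j}=c_j$. This gives $C=GV$ with $V\ge 0$. Then
\[
A=BC=(BG)V,
\]
with $U\coloneqq BG\in\R^{m\times(r+1)}$ and $V\in\R_{\ge0}^{(r+1)\times d}$. By definition of the nonnegative column rank, $\nncr{A}\le r+1$.

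The proof has no real obstacle. The only points to check are the positive-spanning identity, which is a direct substitution, and the degenerate case $r=0$ handled above. The same argument also gives the bound $\nncr{A}\le d$ used earlier, via the trivial factorization $A=A\cdot I_d$.
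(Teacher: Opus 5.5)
Your proof is correct and follows the same route as the paper: take a rank factorization $A=BC$, then write every column of $C$ as a nonnegative combination of $r+1$ vectors that positively span $\R^r$. Your explicit choice $e_1,\dots,e_r,-\mathbf{1}$, with its verified coefficients, is a concrete instance of the paper's ``minimal complete simplicial fan.'' Your separate treatment of $r=0$, and the remark that $A=A\cdot I_d$ gives $\nncr{A}\le d$ directly, fill in details the paper leaves implicit.
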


\begin{proof} 
Let \(r = \mathrm{rank}(A)\), and let \(C \in \mathbb{R}^{m \times r}\) be a matrix whose columns form a basis of the column space of \(A\), so that \(A = C F\) for some \(F \in \mathbb{R}^{r \times d}\). 

In \(\mathbb{R}^r\), consider a minimal complete simplicial fan with \(r+1\) rays \(v_1, \dots, v_{r+1} \in \mathbb{R}^r\). Then each column \(F_j\) of \(F\) can be written as a nonnegative combination of these rays: 
\(F_j = \sum_{i=1}^{r+1} \lambda_{ij} v_i\) with \(\lambda_{ij} \ge 0\). 
If we let \(V \in \mathbb{R}^{r \times (r+1)}\) be the matrix with columns \(v_1, \dots, v_{r+1}\) and define \(D := (\lambda_{ij}) \in \mathbb{R}_{\ge 0}^{(r+1) \times d}\), then \(F = V D\). 

Thus, setting \(B := C V \in \mathbb{R}^{m \times (r+1)}\), we have \(A = B D\). 
This is a factorization with nonnegative \(D\) and inner dimension \(r+1\). 
Therefore, \(\mathrm{rank}_{\mathrm{col},+}(A) \le r + 1 = \mathrm{rank}(A) + 1\), as claimed. 
\end{proof}

\begin{prop}
\label{prop:whennncr=rank}
Let $A \in \mathbb{R}^{m \times d}$ have rank $r \ge 1$. 
Then $\nncr{A} = r$ 
if and only if the columns of $A$ are contained in a single simplicial cone in $\mathrm{col}(A)$. 
\end{prop}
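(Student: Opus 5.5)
We prove the two directions of \Cref{prop:whennncr=rank} separately. Throughout we reduce to coordinates in the column space. Let $r=\rank(A)$ and write $A=CF$ with $C\in\R^{m\times r}$ a basis of $\col(A)$ and $F\in\R^{r\times d}$ of rank $r$. The map $y\mapsto Cy$ is a linear isomorphism $\R^r\to\col(A)$. It therefore maps cones to cones, preserves simpliciality, and sends the columns $F_j$ to the columns $A_j$. The statement thus only concerns how the columns $A_j$ sit inside the $r$-dimensional space $\col(A)$. By a \emph{simplicial cone in $\col(A)$} we mean the conic hull of $r$ linearly independent vectors of $\col(A)$.

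\textbf{($\Leftarrow$)} Suppose all columns of $A$ lie in a simplicial cone generated by linearly independent $u_1,\dots,u_r\in\col(A)$. Let $B=[u_1,\dots,u_r]\in\R^{m\times r}$. Each column can be written uniquely as $A_j=\sum_i \lambda_{ij}u_i$, and the $\lambda_{ij}$ are nonnegative because $A_j$ lies in the cone. Setting $D=(\lambda_{ij})\in\R^{r\times d}_{\ge0}$ gives $A=BD$, so $\nncr{A}\le r$. For the reverse inequality, any factorization $A=UV$ with inner dimension $k$ gives $r=\rank(A)\le\rank(U)\le k$, so $\nncr{A}\ge r$. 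Hence $\nncr{A}=r$.

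\textbf{($\Rightarrow$)} Suppose $A=UV$ with $U\in\R^{m\times r}$ and $V\in\R^{r\times d}_{\ge 0}$. We show the columns $U_1,\dots,U_r$ of $U$ span a simplicial cone containing the $A_j$.

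1. Since $\rank(A)=r$ and $\rank(A)\le\min(\rank U,\rank V)$, both $U$ and $V$ have rank exactly $r$.
2. The columns of $U$ are therefore linearly independent.
3. The containment $\col(A)\subseteq\col(U)$ holds, and both spaces have dimension $r$, so $\col(U)=\col(A)$. In particular each $U_i$ lies in $\col(A)$.
4. Each column is $A_j=\sum_i V_{ij}U_i$ with $V_{ij}\ge 0$, so $A_j\in\cone(U_1,\dots,U_r)$.

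The cone $\cone(U_1,\dots,U_r)$ is simplicial in $\col(A)$ by steps 2 and 3, and it contains every column of $A$ by step 4.

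The only point that needs care is step 3: the generators $U_i$ must lie in $\col(A)$ itself, not merely in $\R^m$. This follows from the rank count together with the inclusion $\col(A)\subseteq\col(U)$. The case $r\ge1$ assumed in the statement makes the simplicial cone nontrivial. No earlier results beyond elementary rank inequalities are needed.
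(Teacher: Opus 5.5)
Your proof is correct and follows essentially the same route as the paper: for one direction, the nonnegative coordinates of the columns with respect to the cone generators give a nonnegative factorization of inner dimension $r$; for the other, a nonnegative factorization of inner dimension $r$ produces the simplicial cone. Your forward direction is stated directly rather than by contrapositive, and it is more careful than the paper's sketch. The rank count ($\rank U = r$, hence $\col(U)=\col(A)$ with linearly independent generators) supplies the justification the paper leaves implicit when it asserts that any factorization must use at least $r+1$ columns. You also prove the lower bound $\nncr{A}\ge r$ explicitly, which the paper does not.
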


\begin{proof}
Let $C \in \mathbb{R}^{m \times r}$ be a matrix whose columns form a basis of $\mathrm{col}(A)$, so that there exists $F \in \mathbb{R}^{r \times d}$ with $A = C F$. 

First, note that any $r$ linearly independent vectors in $\mathbb{R}^r$ generate a simplicial cone. Indeed, if $v_1, \dots, v_r \in \mathbb{R}^r$ are linearly independent, then 
\[
\mathrm{cone}(v_1, \dots, v_r) := \Big\{\sum_{i=1}^r \lambda_i v_i : \lambda_i \ge 0 \Big\}
\]
is a simplicial cone in $\mathbb{R}^r$. 

If all columns of $F$ lie in a single such cone, then every column of $F$ is a nonnegative combination of these $r$ vectors, and hence $\mathrm{rank}_{\mathrm{col},+}(A) \le r$. 

Conversely, if the columns of $F$ are not contained in a common simplicial cone, then at least $r+1$ vectors are required to generate a cone containing all columns. 
Therefore, any factorization $A = B D$ with $D \ge 0$ must have at least $r+1$ columns in 
$B$, 
and we conclude that $\mathrm{rank}_{\mathrm{col},+}(A) = r+1$. 
\end{proof}

\begin{cor}
\label{cor:nncr=rank}
Let $A \in \mathbb{R}^{m \times d}$ with $d \leq m$. Then $\nncr{A}\leq d$.
\end{cor}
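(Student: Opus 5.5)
The statement is Corollary~\ref{cor:nncr=rank}: for $A \in \R^{m\times d}$ with $d\le m$, we have $\nncr{A}\le d$. The most direct route avoids the rank-based results altogether and uses the trivial factorization $A = A\, I_d$. Here $U := A \in \R^{m\times d}$ and $V := I_d \in \R_{\ge 0}^{d\times d}$. Since $I_d$ has nonnegative entries, this is an admissible factorization in the definition of nonnegative column rank, with inner dimension $d$. Hence $\nncr{A}\le d$ immediately.

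This argument does not even use the hypothesis $d\le m$. In fact it shows $\nncr{A}\le d$ for every $A$, which is exactly what \Cref{lem:minimality_bounds_linear_neurons} needs. The hypothesis matters only for the sharper statement used there, namely that $\nncr{A}=d$ when $\rank(A)=d$.

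To make that consistent, I would add a second step. If $\rank(A)=r$, then any factorization $A=UV$ has inner dimension at least $r$, so $\nncr{A}\ge \rank(A)$. Combined with the first step, this gives $\nncr{A}=d$ whenever $\rank(A)=d$, which requires $d\le m$. In the remaining case $\rank(A)<d$, \Cref{prop:nnrc<rank+1} gives $\nncr{A}\le \rank(A)+1\le d$, which again agrees with the bound.

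There is essentially no obstacle. The only thing to check is that the definition of $\nncr{\cdot}$ allows $U$ to be arbitrary and requires only $V\ge 0$; it does, so the identity factorization is admissible.
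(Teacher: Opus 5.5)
Your proof is correct, and it takes a different and simpler route than the paper. The paper splits on $r=\rank(A)$. For $r<d$ it invokes \Cref{prop:nnrc<rank+1}, which covers $\R^r$ by a complete simplicial fan with $r+1$ rays, to get $\nncr{A}\le r+1\le d$. For $r=d$ it observes that the $d$ independent columns span a simplicial cone in $\col(A)$ and applies \Cref{prop:whennncr=rank} to conclude $\nncr{A}=d$.

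Your factorization $A=A\,I_d$ with $I_d\ge 0$ makes both auxiliary propositions and the hypothesis $d\le m$ unnecessary. The paper's full-rank case is in fact this same factorization: taking the columns of $A$ themselves as the basis $C$ gives $A=C\,I_d$. Your version also fits how the bound is used in \Cref{lem:minimality_bounds_linear_neurons}, where $A$ is an arbitrary $m\times d$ matrix, so the rank case distinction there becomes superfluous.

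The paper's route does record the equality $\nncr{A}=\rank(A)=d$ in the full-rank case, which is what part (2) of \Cref{thm:layerwise-fibers} needs. Your second step recovers this more cheaply from the lower bound $\nncr{A}\ge\rank(A)$. Your appeal to \Cref{prop:nnrc<rank+1} in the rank-deficient case is harmless but redundant, since the identity factorization already covers it.
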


\begin{proof}
Let $r = \mathrm{rank}(A) \le d$. By \Cref{prop:nnrc<rank+1}, we know that
\(
\nncr{A} \le \mathrm{rank}(A) + 1 = r + 1 \le d + 1.
\)
If $r < d$, then $r+1 \le d$, and $\nncr{A} \le d$. 
If $r = d$, then $A$ has full column rank. In this case, the columns of $A$ are $d$ linearly independent vectors in $\mathbb{R}^m$ 
and generate a simplicial cone. 
Hence, by \Cref{prop:whennncr=rank}, $\nncr{A} = r = d$.  
Combining these two cases shows
\(
\nncr{A} \le d
\), as claimed. 
\end{proof} 
\begin{prop}
\label{prop:dim_special_case}
Let $\theta \in \paramspace{(d,n,m)}$ be generic and minimal and let $k$ denote the number of nonlinear neurons. If $n-k = m+1$, then, for every $S \subseteq [k]$, we have
$
\dim(K_S)= m^2 +m +d. 
$
\end{prop}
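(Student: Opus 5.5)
The plan is to mirror the proof of \Cref{prop:empty_or_dim}, with one change. When $n-k=m+1$ the factorization variety $\{UV=A_S\}$ is no longer a single $\GL_{m+1}$-orbit: the inner dimension exceeds the rank. So I would parametrize it directly and then check that the positivity constraints cut out a nonempty open subset of it. Throughout, write $r=n-k=m+1$ and keep the sets $D_S$ and $M_S$ from the proof of \Cref{prop:empty_or_dim}. By \Cref{lem:minimality_bounds_linear_neurons} we have $m+1\le d$. Generically $A_S$ has rank $m$ (the masked-product rank condition applied to $\theta$, and then $A_S$ is a generic rank-one perturbation of $A_R$). I would treat the degenerate-rank case separately, or exclude it by genericity.

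\textbf{Step 1 (parametrize the variety).} Let $(U,V)$ satisfy $UV=A_S$ with $U\in\R^{m\times(m+1)}$. Since $\rank A_S=m$, the matrix $U$ must have full rank $m$, so $\ker U$ is spanned by a single vector $w(U)$. The solutions in $V$ are then exactly
\[
V=U^{+}A_S+w(U)z^\top,\qquad z\in\R^d ,
\]
where $U^{+}$ is the pseudoinverse. The map $(U,z)\mapsto (U,\,U^{+}A_S+w(U)z^\top)$ is smooth and injective on the open set of full-rank $U$. Hence $\overline{M_S}=\{UV=A_S\}$ is a smooth manifold of dimension $m(m+1)+d=m^2+m+d$.

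\textbf{Step 2 (positivity is an open, nonempty condition).} I need a point of $\overline{M_S}$ with $V>0$ entrywise; the strict inequality defines an open subset, so it would have full dimension $m^2+m+d$ and give $\dim M_S=m^2+m+d$. To construct one, choose the columns of $U$ as $m+1$ vectors that minimally positively span $\R^m$, for instance $e_1,\dots,e_m,-\sum_i e_i$. Then $w(U)$ can be taken strictly positive, here $w=\mathbf 1$. Taking $z=t\mathbf 1$ with $t$ large makes $V=U^{+}A_S+t\,w\mathbf 1^\top>0$. This works for every $S$, which is why, in contrast to \Cref{thm:layerwise-fibers}(1) and (3), no component is empty. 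The same inequality count gives the upper bound $\dim M_S\le m^2+m+d$, since $M_S\subseteq\overline{M_S}$.

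\textbf{Step 3 (biases and normalization).} As in \Cref{prop:empty_or_dim}, any $v\ge 0$ determines $u=b_S-Uv$ uniquely, which adds $m+1$ dimensions. The row normalization $\|\weights{1}_i\|_2=1$ removes $m+1$ dimensions, because the positive diagonal scaling $(U,V,v)\mapsto(UD^{-1},DV,Dv)$ acts freely and preserves all constraints. Altogether $\dim K_S=m^2+m+d+(m+1)-(m+1)=m^2+m+d$.

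The step I expect to be most delicate is Step 2, together with the genericity bookkeeping around it. Three things must be checked. The point constructed there must also satisfy the remaining genericity conditions, since the statement concerns $K_S$ but it is used inside $\gfiber{\theta}$; $\gparamspace{\architecture}$ is open and dense by \Cref{lem:generic_open_dense}, so perturbing inside the open set $\{V>0\}$ should suffice. The rank of $A_S$ must be exactly $m$ for every $S$, not only for $S=S_\theta$. And the boundary points with some entry $V_{ij}=0$ must not add dimension, which holds because they lie in the same $(m^2+m+d)$-dimensional manifold.
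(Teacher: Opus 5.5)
Your proof is correct, and its core mechanism is the paper's: a full-rank $U$ whose kernel is spanned by a strictly positive vector lets you shift any solution $V$ of $UV=A_S$ into the positive orthant. Your Step 3 count for biases and normalization is also the paper's.

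The difference is in how you pass from this to the dimension.
\begin{itemize}
\item The paper works slice by slice. It takes the open set $Z=\{U:\ker U=\operatorname{span}(v),\ v>0\}$ and shows that for every $U\in Z$ the slice $\{V\ge 0:\ UV=A_S\}$ is $d$-dimensional, using a bounded box $Y_c$ and the translation $\varphi_{c,v}$. It then applies the fiber dimension theorem for semi-algebraic sets over $Z$.
\item You show that $\{UV=A_S\}$ is a smooth manifold of dimension $m^2+m+d$. This follows from your parametrization, or from the regular value theorem, since $\dot V\mapsto U\dot V$ is onto when $\rank U=m$. You then need only one witness with $V>0$, because a nonempty relatively open subset of such a manifold has full dimension.
\end{itemize}
Your route is shorter and avoids the fiber dimension theorem. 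The paper's route gives more information: the projection of $M_S$ onto the $U$-coordinates contains all of $Z$.

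Two remarks on the bookkeeping you flagged.
\begin{itemize}
\item $A_S-A_R=\sum_{i\in S}c(i)a_i^\top$ is not a rank-one perturbation in general. Still, no separate degenerate-rank case is needed. The proof of \Cref{thm:layerwise-fibers} shows $A_S=\weights{2}D_{J\cup(S\triangle S_\theta)}\weights{1}$, a masked product of $\theta$. The masked-rank condition then gives $\rank A_S=\min\{d,m,|J\cup(S\triangle S_\theta)|\}=m$ for every $S$, since $|J|=m+1\le d$.
\item Genericity of your witness is not needed here: the statement concerns $\dim K_S$ itself, not $K_S\cap\gparamspace{\architecture}$.
\end{itemize}
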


\begin{proof}
Let $M_S$ be as in the proof of \Cref{prop:empty_or_dim}. Since $\theta$ is minimal, we have $d \geq m+1$. By genericity, $\rank(A_S) = m$. 

We first compute the dimension of
$
\overline{M_S}
=
\{(U,V)\in \R^{m\times (m+1)}\times \R^{(m+1)\times d}\mid UV=A_S\}.
$
Fix $U \in \R^{m \times (m+1)}$ with $\rank(U)=m$. Then $\dim\ker(U)=1$, so for each column $a_j$ of $A_S$, the solution set of $Ux=a_j$ is an affine line. Hence
$
\{V \in \R^{(m+1) \times d} \mid UV =A_S\}
$
is an affine space of dimension $d$. Therefore
$
\dim(\overline{M_S}) = m(m+1)+d= m^2 + m +d.
$

Now let
$
Z= \{ U \in \R^{m \times (m+1)} \mid \exists v \in \R^{m+1}_{>0}:  \ker(U) = \operatorname{span}(v)\}.
$
This is an open subset of $\R^{m\times (m+1)}$, hence
$
\dim(Z)=m(m+1)=m^2+m.
$

For $U \in Z$, define
$
M_S(U) \coloneqq \{V \in \R^{(m+1) \times d} \mid UV =A_S ,\ V_{i,j} \geq 0\}.
$
We claim that $\dim(M_S(U))=d$. Indeed, since $\rank(U)=m$, the affine space
$
\{V \in \R^{(m+1) \times d} \mid UV =A_S\}
$
has dimension $d$. Choose $c>0$ sufficiently large so that
$
Y_c = \{V \in \R^{(m+1) \times d} \mid UV =A_S ,\ |V_{i,j}| \leq c\}
$
has nonempty interior relative to this affine space; in particular, $\dim(Y_c)=d$.

Let $v\in \R^{m+1}_{> 0}$ satisfy $\ker(U) = \operatorname{span}(v)$ and set
$
\delta = \min_i v_i >0.
$
Define
$
\varphi_{c,v}\colon
\begin{bmatrix}
V_1 & \cdots & V_d
\end{bmatrix}
\mapsto
\begin{bmatrix}
V_1 + \frac{c}{\delta}v & \cdots & V_d +\frac{c}{\delta}v
\end{bmatrix}.
$
Since $Uv=0$, the map $\varphi_{c,v}$ preserves the equation $UV=A_S$. Moreover, for every entry we have
$
(V_j)_i + \frac{c}{\delta}v_i \ge -c + \frac{c}{\delta}\delta =0,
$
so $\varphi_{c,v}(Y_c) \subseteq M_S(U)$. Since $\varphi_{c,v}$ is an affine isomorphism, it follows that
$
\dim(M_S(U)) \ge d.
$
On the other hand,
$
M_S(U)\subseteq \{V\in \R^{(m+1)\times d}\mid UV=A_S\},
$
and the latter has dimension $d$, hence $\dim(M_S(U))=d$.

Therefore
$
M_S\cap (Z\times \R^{(m+1)\times d})
$
projects onto $Z$ with all fibers of dimension $d$. By the fiber dimension theorem for semi-algebraic sets,
$
\dim(M_S)\ge m^2+m+d.
$
Since $M_S\subseteq \overline{M_S}$ and $\dim(\overline{M_S})=m^2+m+d$, we conclude that
$
\dim(M_S)=m^2+m+d.
$

Finally, passing from $M_S$ to $K_S$ is analogous to the proof of \Cref{prop:empty_or_dim}: the translation variables contribute $m+1$ degrees of freedom, while the row-normalization constraints remove exactly $m+1$ degrees of freedom, and the output bias is then uniquely determined. Hence 
$\dim(K_S)=\dim(M_S)=m^2+m+d$. 
\end{proof}

With these tools in place, we are ready to prove our theorem on the dimension of the generic fibers. 

\thmlayerwisefibers* 

\begin{proof}[Proof of \Cref{thm:layerwise-fibers}]

Let $J=[n]\setminus[k]$.
For all $i \in [k]$ we have that 
\begin{equation}
\label{eq:linearpart_flipped_neurons}
     c(i) a_i^\top =  o_\theta(i)\weights{2,\theta}_{:,i} \weights{1,\theta}_i . 
\end{equation}
Recall that on the reference region $R$ we have that $$f_\eta(x)=A_Rx+b_R$$ for all $\eta\in\mathcal{S}(\theta).$ Further, we have that $A_R,b_R$ split into contributions from the linear and non-linear neurons. For $\theta$, let $o_\theta(i)$ be the hyperplane orientation and $S_\theta$ the set of nonlinear active neurons on $R.$ Then $A_R=A_{S_\theta}-\sum_{S_\theta}c(i)a_i^\top.$
Now choose a different orientation $o\in\{1,-1\}^k$ of the hyperplanes, and let $S = \{i\in[k]\mid o(i)=-1\}$. This needs to be compensated by the contribution of the linear neurons as follows:
\begin{eqnarray*}
    A_R&=&A_{S_\theta}-\sum_{S_\theta}c(i)a_i^\top\\
    &=&A_{S_\theta}-\sum_{S_\theta\setminus S}c(i)a_i^\top-\sum_{S\bigcap S_\theta}c(i)a_i^\top-\sum_{S\setminus S_\theta}c(i)a_i^\top + \sum_{S\setminus S_\theta}c(i)a_i^\top\\
    &=&A_{S_\theta}-\sum_{S_\theta\setminus S}c(i)a_i^\top+\sum_{S\setminus S_\theta}c(i)a_i^\top-\sum_{S}c(i)a_i^\top. 
\end{eqnarray*}
Thus, for this new set $S,$ the contribution from the linear neurons is required to be 
\[
A_S = A_R+ \sum_S c(i)a_i^\top =A_{S_\theta}-\sum_{S_\theta\setminus S}c(i)a_i^\top+\sum_{S\setminus S_\theta}c(i)a_i^\top . 
\]

Using \Cref{eq:linearpart_flipped_neurons}, we obtain \begin{align*}
   A_S&= \weights{2}\diag(\ind_J)\weights{1} -  \sum_{i \in S_\theta \setminus S}  o_\theta(i)\weights{2,\theta}_{:,i} \weights{1,\theta}_i + \sum_{i \in S \setminus S_\theta}  o_\theta(i)\weights{2,\theta}_{:,i} \weights{1,\theta}_i \\&= \weights{2}\diag(\ind_J)\weights{1} +  \sum_{i \in S_\theta \setminus S}  \weights{2,\theta}_{:,i} \weights{1,\theta}_i + \sum_{i \in S \setminus S_\theta}  \weights{2,\theta}_{:,i} \weights{1,\theta}_i \\&= \weights{2}(\diag(\ind_{J \cup (S \triangle S_\theta)}) \weights{1} . 
\end{align*}
Essentially, this means we have to add the part from neurons that are in $S_\theta$ but not in $S$ and subtract the part from neurons that are in $S$ but not in $S_\theta$ from the old linear part.
Similar computation for the constant part. 
This allows us to describe the fiber as follows: we run through all choices of orienting the hyperplanes, each inducing a $A_S$ and then find all factorizations such that the linear neurons multiply to $A_S.$

By genericity (\Cref{def:generic_parameter}),
\[
\rank(A_S)=\min\{d,m, |J\cup (S_\theta \triangle S)|\}. 
\]

\medskip
\noindent
(1) Assume $n-k<\min\{d,m\}$. 
If $S\neq S_\theta$, then $|J\cup(S_\theta \triangle S)|>n-k$ and $d,m> n-k$, 
hence 
$\rank(A_S)>n-k$. 
Since $\nncr{A_S}\ge \rank(A_S)$, 
\Cref{lem:nncratmostL} implies $V_S=\emptyset$. 
If $S=S_\theta$, then 
$\nncr{A_S}=n-k$ and $V_S\neq\emptyset$. 
The dimension statement follows from 
\Cref{prop:empty_or_dim}.

\medskip
\noindent
(2) Assume $n-k=d\le m$. Then $\rank(A_S)=n-k$ for all $S$. 
By \Cref{cor:nncr=rank}, 
$\nncr{A_S}=\rank(A_S)=n-k$. 
Hence $V_S\neq\emptyset$ for all $S$, and 
\Cref{prop:empty_or_dim} yields 
$\dim(V_S)=(n-k)^2$. 

\medskip
\noindent
(3) 
Assume $n-k = m<d$. 
By genericity (\Cref{def:generic_parameter}) we have $\rank(A_S) = m$. 
By \Cref{lem:nncratmostL},
$V_S \neq \emptyset$ if and only if $\nncr{A_S} \le n-k = m$.  
Let $C = \cone(A_S) \subseteq \col(A_S)$ denote the conic hull of the columns of $A_S$. By \Cref{prop:whennncr=rank}, $\nncr{A_S} = m$ if and only if the columns of $A_S$ are contained in a common simplicial cone in $\col(A_S)$. 

We show that $C$ is contained in a simplicial cone if and only if it is pointed.  
 Let $C^\circ = \{y \in \mathbb{R}^m : \langle y, x \rangle \ge 0 \text{ for all } x \in C\}$ denote the polar cone of $C$. 
 The cone $C$ is pointed if and only if $C^\circ$ is full-dimensional. 
 \begin{itemize}[leftmargin=*]
     \item 
 If $C$ is pointed, choose $m$ linearly independent vectors $v_1,\dots,v_m \in \operatorname{int}(C^\circ)$. Let $V := \cone(v_1,\dots,v_m) \subseteq C^\circ$. Then $V$ is a simplicial, full-dimensional cone contained in $C^\circ$. 
Take the polar $V^\circ$ of $V$. This is a simplicial cone and, by polarity and inclusion reversal, we have $C \subseteq V^\circ$.  
\item 
Conversely, if $C$ is not pointed, it contains a line and cannot be contained in any simplicial cone.  
 \end{itemize}
Combining this with \Cref{lem:nncratmostL} and \Cref{prop:empty_or_dim}, we conclude
\[
\dim(V_S) = (n-k)^2 \quad \Longleftrightarrow \quad \cone(A_S) \text{ is pointed}, 
\]  
and $V_S = \emptyset$ otherwise.

\medskip
\noindent
(4) Follows by \Cref{prop:dim_special_case}.
\end{proof}

\section{Fibers from Layer Composition} 
\addcontentsline{apx}{section}{\protect\numberline{\thesection}Fibers from Layer Composition} 

\subsubsection*{The Continuous Symmetry}

\lemtranslatinghiddenhyperplaneinclusion*
\begin{proof}[Proof of \Cref{lem:translating_hidden_hyperplane_inclusion}]
Let $\eta \in T_j(\theta)$. Then there exists $t > \max_{i \in [n_2]} \frac{\bias{2}_i}{\weights{2}_{ij}}$ 
such that 
\(
\bias{1,\eta} = \bias{1} + t e_j\) 
and 
\(
\bias{2,\eta} = \bias{2} - t (\weights{2}_{1 j}, \dots, \weights{2}_{n_2 j})^\top
\), 
while all other weights and biases coincide with those of $\theta$. 

Consider the second-layer preactivation for an arbitrary input $x \in \R^d$: 
\[
\preactivation{2,\eta}_i(x) = \sum_{k=1}^{n_1} \weights{2}_{ik} \activation{1,\eta}_k(x) + 
\bias{2,\eta}_i
=
\left(\sum_{k \neq j} \weights{2}_{ik} \activation{1,\theta}_k(x)\right) + \weights{2}_{ij}\left[\weights{1}_{j}x + \bias{1}_j + t\right]_+ + \bias{2,\eta}_i .
\]

We now prove that $\activation{2,\eta}_i(x) = \activation{2,\theta}_i(x)$ by showing that the preactivations $\preactivation{2,\eta}_i$ and $\preactivation{2,\theta}_i$ coincide whenever they are positive. 
Set $y = \weights{1}_{j}x + \bias{1}_j$. We use the identity
\[
\preactivation{2,\eta}_i(x)-\preactivation{2,\theta}_i(x)
=
\weights{2}_{ij}\big([y+t]_+-[y]_+-t\big).
\]

 We proceed by 
 case distinction over the activation patterns of the first layer.  

 \begin{enumerate} 

\item 
If $y>0$ and $y+t>0$, then
$\preactivation{2,\eta}_i(x) - \preactivation{2,\theta}_i(x)=0$. 

\item 
If $y>0$ and $y+t\leq0$, then $t<0$ and $\preactivation{2,\eta}_i(x) \geq \preactivation{2,\theta}_i(x)$. 
Moreover, it holds that 
\[
\preactivation{2,\eta}_i(x) \leq \weights{2}_{ij}[y + t]_+ + \bias{2,\eta}_i= \bias{2,\eta}_i 
= 
\bias{2}_i-t
{\weights{2}_{ij}} 
\leq 0 .
\] 
For the first inequality we used 
$\weights{2}_{ik}\leq0$ for all $k\neq j$; 
for the second equality we used $y+t\leq0$; 
and for the last inequality we used $\bias{2}_i \leq 0$, $\weights{2}_{ij}>0$, and $t\leq0$. 

\item 
If $y\leq0$ and $y+t>0$, then $t>0$ and 
$\preactivation{2,\eta}_i(x) \leq \preactivation{2,\theta}_i(x)$. Moreover, it holds that \[\preactivation{2,\theta}_i(x) \leq \weights{2}_{ij}[y]_+ + \bias{2}_i=  \bias{2}_i\leq 0.\] 

\item 
If $y\leq0$ and $y+t\leq0$, then we have that  \(\preactivation{2,\theta}_i(x) \leq 0 \) as well as \(\preactivation{2,\eta}_i(x) \leq 0 \). 
\end{enumerate} 

In all four cases, $\activation{2,\eta}_i(x) = \activation{2,\theta}_i(x)$, and thus $\eta \in \gfiber{\theta}$. 
Since $\eta$ was arbitrary in $T_j(\theta)$, it follows that $T_j(\theta)\subseteq\gfiber{\theta}$, as claimed. 
\end{proof}

\subsubsection*{The Discrete Symmetry} 
The group $\mathcal{G}_j$ is generated by the singleton sets $\{k\}$ for $k \in \{1,\dots,n_1\}\setminus\{j\}$, i.e.,
\(
\mathcal{G}_j = \big\langle M_{\{k\}} : k \in \{1,\dots,n_1\} \setminus \{j\} \big\rangle.
\)

\propflippinghiddenhyperplaneinclusion* 

\begin{proof}[Proof of \Cref{prop:flipping_hidden_hyperplane_inclusion}] 
It suffices to prove the claim for $I=\{i\}.$
Let $i \in [n_1]\setminus\{j\}$ and define
\[
\eta \coloneqq M_{\{i\}} \cdot \theta = (M_{\{i\}}\weights{1}, M_{\{i\}} \bias{1}, \weights{2}, \bias{2}, \weights{3}, \bias{3}).
\]
The first-layer preactivations under 
$\eta$ are 
\[
\preactivation{1,\eta}_k(x) =
\begin{cases}
\preactivation{1,\theta}_j(x) - \preactivation{1,\theta}_i(x), & k = j,\\[1mm]
-\preactivation{1,\theta}_k(x), & k = i,\\[1mm]
\preactivation{1,\theta}_k(x), & k \notin \{i,j\}.
\end{cases}
\]
The second-layer preactivation is
\(
\preactivation{2,\eta}_1(x) = \sum_{k=1}^{n_1} \weights{2}_{1k} \activation{1,\eta}_k(x) + \bias{2}_1.
\)

We now prove that $\activation{2,\eta}_1(x) = \activation{2,\theta}_1(x)$. 
Fix a region $R \in \complex_{1,\theta}$ and let
$$
S = \{ k \in [n_1] \mid \activation{1,\theta}_k(x) > 0 \text{ for } x \in R \}
$$
denote the the set of first-layer neurons active on $R$ under $\theta$. 
Let $x \in R$. We distinguish four cases:

\medskip
\noindent \textbf{Case 1: $i \notin S,\, j \in S$.}  

Then $\preactivation{1,\theta}_i(x) \leq 0$ and $\preactivation{1,\theta}_j(x) > 0$. Hence
\[
\activation{1,\eta}_i(x) = \left[-\preactivation{1,\theta}_i(x)\right]_+ = -\preactivation{1,\theta}_i(x),
\]
and
\[
\activation{1,\eta}_j(x) = \left[\preactivation{1,\theta}_j(x) - \preactivation{1,\theta}_i(x)\right]_+
= \preactivation{1,\theta}_j(x) - \preactivation{1,\theta}_i(x).
\]
All other activations remain unchanged. Therefore
\[
\begin{aligned}
\preactivation{2,\eta}_1(x)
&=
\weights{2}_{1j}\big(\preactivation{1,\theta}_j(x) - \preactivation{1,\theta}_i(x)\big)
+ \weights{2}_{1i}\big(-\preactivation{1,\theta}_i(x)\big)
+ \sum_{k \notin \{i,j\}} \weights{2}_{1k} \activation{1,\theta}_k(x)
+ \bias{2}_1 \\
&=
\weights{2}_{1j}
\activation{1,\theta}_j(x)
+ \weights{2}_{1i}\activation{1,\theta}_i(x)
+ \sum_{k \notin \{i,j\}} \weights{2}_{1k} \activation{1,\theta}_k(x)
+ \bias{2}_1 \\
&= \preactivation{2,\theta}_1(x),
\end{aligned}
\]
where we used that 
the second-hidden-layer neuron is hiding hyperplane $j$ and is normalized, so that $\weights{2}_{1j} =+1$ and $\weights{2}_{1i}= -1$, and that 
$\activation{1,\theta}_i(x)=0$. 
Hence
\(
\activation{2,\eta}_1(x) = \activation{2,\theta}_1(x).
\)

\medskip
\noindent \textbf{Case 2: $i,j \in S$.}  

Then $\preactivation{1,\theta}_i(x) > 0$ and $\preactivation{1,\theta}_j(x) > 0$. Thus
\[
\activation{1,\eta}_i(x) = 0,
\qquad
\activation{1,\eta}_j(x) = \left[\preactivation{1,\theta}_j(x) - \preactivation{1,\theta}_i(x)\right]_+.
\]
We compare the second-layer preactivations:
\[
\preactivation{2,\theta}_1(x)
=
\weights{2}_{1j}\preactivation{1,\theta}_j(x)
+ \weights{2}_{1i}\preactivation{1,\theta}_i(x)
+ \sum_{k \notin \{i,j\}} \weights{2}_{1k} \activation{1,\theta}_k(x)
+ \bias{2}_1,
\]
\[
\preactivation{2,\eta}_1(x)
=
\weights{2}_{1j}\left[\preactivation{1,\theta}_j(x) - \preactivation{1,\theta}_i(x)\right]_+
+ \sum_{k \notin \{i,j\}} \weights{2}_{1k} \activation{1,\theta}_k(x)
+ \bias{2}_1.
\]

If $\preactivation{1,\theta}_j(x) \le \preactivation{1,\theta}_i(x)$, then
\(
\activation{1,\eta}_j(x)=0
\)
and hence
\(
\preactivation{2,\eta}_1(x) \le 0.
\)
Since $\weights{2}_{1i} < 0$, also $\preactivation{2,\theta}_1(x) \le 0$, and thus
\[
\activation{2,\eta}_1(x) = \activation{2,\theta}_1(x) = 0.
\]

If $\preactivation{1,\theta}_j(x) > \preactivation{1,\theta}_i(x)$, then
\[
\preactivation{2,\eta}_1(x)
=
\weights{2}_{1j}\big(\preactivation{1,\theta}_j(x) - \preactivation{1,\theta}_i(x)\big)
+ \sum_{k \notin \{i,j\}} \weights{2}_{1k} \activation{1,\theta}_k(x)
+ \bias{2}_1,
\]
which is positive if and only if $\preactivation{2,\theta}_1(x)$ is positive. In this case, both expressions coincide, and hence
\[
\activation{2,\eta}_1(x) = \activation{2,\theta}_1(x).
\]

\medskip
\noindent \textbf{Case 3: $j \notin S,\, i \notin S$.}  

Then $\preactivation{1,\theta}_j(x) \le 0$ and $\preactivation{1,\theta}_i(x) \le 0$ for all $x \in R$ and $\preactivation{1,\eta}_j(x)
=
\preactivation{1,\theta}_j(x) - \preactivation{1,\theta}_i(x). $ Moreover,
\[
\activation{1,\eta}_i(x) = \left[-\preactivation{1,\theta}_i(x)\right]_+ = -\preactivation{1,\theta}_i(x) \ge 0.
\]
All other neurons are unchanged. 
If $\preactivation{1,\eta}_j(x) \leq 0$, then since $\weights{2}_{1k} < 0$ for all $k \neq j$ and $\bias{2}_1 < 0$, we obtain
\[
\preactivation{2,\eta}_1(x) \le 0
\quad \text{and} \quad
\preactivation{2,\theta}_1(x) \le 0.
\]
Thus
\[
\activation{2,\eta}_1(x) = \activation{2,\theta}_1(x) = 0.
\]
On the other hand, if $\preactivation{1,\eta}_j(x) > 0$, then since $\weights{2}_{1k} \in \{1,-1\}$, we have that \[\sum_{k =1}^{n_2} \weights{2}_{1k} \activation{1,\eta}_k = \sum_{k \neq i, j} -\activation{1,\eta}_k + \preactivation{1,\theta}_j(x) - \preactivation{1,\theta}_i(x) + \preactivation{1,\theta}_i(x) < 0.\] Since $\bias{2,\eta} < 0$, this implies that \[
\activation{2,\eta}_1(x) = \activation{2,\theta}_1(x) = 0.
\]

\medskip
\noindent \textbf{Case 4: $j \notin S,\, i \in S$.}  

Then $\preactivation{1,\theta}_j(x) \le 0$ and $\preactivation{1,\theta}_i(x) > 0$. Hence
\[
\preactivation{1,\eta}_j(x)
=
\preactivation{1,\theta}_j(x) - \preactivation{1,\theta}_i(x)
< 0,
\]
so $\activation{1,\eta}_j(x)=0$, and
\[
\activation{1,\eta}_i(x) = \left[-\preactivation{1,\theta}_i(x)\right]_+ = 0.
\]
All other neurons are unchanged. 
Again, since all contributing weights except possibly $j$ are negative and $\bias{2}_1 < 0$, we obtain
\[
\preactivation{2,\eta}_1(x) \le 0
\quad \text{and} \quad
\preactivation{2,\theta}_1(x) \le 0.
\]
Thus
\[
\activation{2,\eta}_1(x) = \activation{2,\theta}_1(x) = 0.
\]

\medskip
\noindent
In all cases, we conclude
\[
\activation{2,\eta}_1(x) = \activation{2,\theta}_1(x) \quad \forall x \in R.
\]
Since $R$ was arbitrary, this holds for all $x \in \R^d$, and thus $\eta \in \gfiber{\theta}$. Since for $I=\{i_1,\ldots,i_q\}$ we have that $M_I \cdot \theta= M_{\{i_1\}} \cdots M_{\{i_1\}}\cdot \theta$, this implies that $M_I\cdot \theta \in \gfiber{\theta}$, proving the claim.
\end{proof}

\section{Three-Layer Bottleneck Fibers} 
\addcontentsline{apx}{section}{\protect\numberline{\thesection}Three-Layer Bottleneck Fibers} 
In a three-layer network, let $H_i$ denote the hyperplane of neuron $(1,i)$ and $B_j$ the bent hyperplane of neuron $(2,j)$, for $i \in [n_1]$ and $j \in [n_2]$. 
Moreover, for $P\in \complex_{\theta,\ell}$, let $S(P)=(S_1(P),\ldots,S_\ell(P))$, where $S_k(P) \subseteq [n_k]$ is the set of active neurons on $P$. 
In the remainder of this section, we let $\architecture=(d,n_1,n_2,m)$ with $n_1 \leq d$. 
In this case, for generic parameters, all activation patterns are realized on a non-empty polyhedron.
\begin{lemma}
\label{lem:first-layer-patterns-exist}
Let $\theta \in \gparamspace{\architecture}$ and assume $n_1\le d$. Then:  
\begin{enumerate}[leftmargin=*]
    \item For every subset $I\subseteq [n_1]$, there exists a unique region
    $P \in \complex_{\theta,1}^{d}$ such that
    $
    S_1(P)=I.
    $

    \item For every pair of disjoint subsets $I,Z \subseteq [n_1]$, there exists a non-empty face
    $\sigma \in \complex_{\theta,1}$ such that
    $
    \sigma \subseteq \bigcap_{k\in Z} H_k
$ and $S_1(\sigma) =I$ 

\end{enumerate}
\end{lemma}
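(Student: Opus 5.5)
The plan is to reduce everything to the linear map $x \mapsto \weights{1}x+\bias{1}$ and use surjectivity. Genericity (\Cref{def:generic_parameter}, maximal rank of masked products with $\ell=1$) gives $\rank(\weights{1})=\min(d,n_1)=n_1$, so the affine map $\preactivation{1,\theta}\colon\R^d\to\R^{n_1}$, $x\mapsto \weights{1}x+\bias{1}$, is surjective. The cells of $\complex_{\theta,1}$ are exactly the nonempty sets of the form $P_{\mathbf{s}}=\{x \mid \operatorname{sign}(\preactivation{1,\theta}_k(x))\in\{s_k\}\text{-constraint}\}$ for sign vectors $\mathbf{s}\in\{+,0,-\}^{n_1}$. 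Each such set is the preimage under $\preactivation{1,\theta}$ of the corresponding closed orthant face
\[
O_{\mathbf{s}}=\{z\in\R^{n_1}\mid z_k\ge 0 \ (s_k=+),\ z_k=0\ (s_k=0),\ z_k\le 0\ (s_k=-)\}.
\]

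For part 1, fix $I\subseteq[n_1]$ and take the sign vector with $s_k=+$ for $k\in I$ and $s_k=-$ otherwise. The open orthant $\{z_k>0 \text{ for } k\in I,\ z_k<0 \text{ for } k\notin I\}$ is nonempty and open. Its preimage under a surjective affine map is therefore nonempty and open, so $P=(\preactivation{1,\theta})^{-1}(O_{\mathbf{s}})$ is full-dimensional. Hence $P\in\complex_{\theta,1}^d$ and $S_1(P)=I$, where activity is read off on the interior. For uniqueness, note that two distinct full-dimensional cells of the arrangement complex have different sign vectors, and the sign vector of a region is determined by its set of active neurons. So any region with active set $I$ must carry this same sign vector, which gives a single region.

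For part 2, fix disjoint $I,Z$. Set $s_k=0$ for $k\in Z$, $s_k=+$ for $k\in I$, and $s_k=-$ for the remaining indices. The set $O_{\mathbf{s}}$ has nonempty relative interior (take $z$ with $z_k=0$ on $Z$, $z_k=1$ on $I$, $z_k=-1$ elsewhere), so surjectivity again yields a nonempty preimage $\sigma$. This $\sigma$ is a face of $\complex_{\theta,1}$ contained in $\bigcap_{k\in Z}H_k$. On its relative interior, exactly the neurons in $I$ are strictly active, so $S_1(\sigma)=I$. The preimage of the relative interior is nonempty and open in the affine subspace $\bigcap_{k\in Z}H_k$, which has codimension $|Z|$ because the rows of $\weights{1}$ are independent.

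The only step needing care is the convention for $S_1$ on a lower-dimensional face: it should mean the neurons active on the relative interior. I would state this explicitly so that neurons in $Z$, whose preactivation vanishes on $\sigma$, are not counted as active. Everything else is immediate from the surjectivity of a full-row-rank affine map.
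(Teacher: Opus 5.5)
Your proof is correct and is essentially the paper's argument. The paper restricts to $Y=\bigcap_{k\in Z}H_k$ and invokes the standard fact that at most $\dim Y$ hyperplanes in general position realize every sign pattern; your observation that $x\mapsto W^{(1)}x+b^{(1)}$ is a surjective full-row-rank affine map is exactly why that fact holds, and you apply it to all constraints at once, including the zero constraints on $Z$. Proving part 1 directly instead of as the case $Z=\emptyset$, and stating the relative-interior convention for $S_1$ explicitly, are only cosmetic differences.
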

\begin{proof} 
The statements follow from established results on hyperplane arrangements \citep[see, e.g.,][Chapter~2]{Orlik1992}. 
We offer a proof for completeness. 
Let
$
H_k=\{x\in\R^d \mid \preactivation{1,\theta}_k(x)=0\}
$
for $k\in[n_1]$. Since $\theta$ is generic, the first-layer hyperplanes
$H_1,\dots,H_{n_1}$ are in general position.

We first prove (2). Let $I,Z\subseteq[n_1]$ be disjoint and set
$
Y \coloneqq \bigcap_{k\in Z} H_k.
$
Since the arrangement is generic,
$
\dim Y=d-|Z|.
$
For every $j\in[n_1]\setminus Z$, the intersection $H_j\cap Y$ is an affine
hyperplane in $Y$, and these hyperplanes are again in general position. Their
number is
$
n_1-|Z| \le d-|Z|=\dim Y.
$
Hence the induced arrangement on $Y$ has all sign patterns. In particular, there
exists a unique region $\sigma$ of the induced arrangement on $Y$ such that
$
\preactivation{1,\theta}_k>0 \;\; \forall k\in I,
$ and $
\preactivation{1,\theta}_k<0 \;\; \forall k\in [n_1]\setminus (I\cup Z).
$
Then $\sigma$ is a non-empty face of $\complex_{\theta,1}$,
$
\sigma \subseteq \bigcap_{k\in Z} H_k,
$
and
$
S_1(\sigma)=I.
$
This proves (2).

Now we prove (1). Take $Z=\emptyset$ in (2). Then for every $I\subseteq[n_1]$
there exists a non-empty face $\sigma\in\complex_{\theta,1}$ with
$
S_1(\sigma)=I
$
and no vanishing first-layer preactivation. Hence $\sigma$ is full-dimensional,
so it is a region
$
P\in\complex_{\theta,1}^d.
$
This proves existence.

For uniqueness, a region of $\complex_{\theta,1}$ is uniquely determined by the
signs of the first-layer preactivations and no preactivation can vanish. If $S_1(P)=I$, then
$
\preactivation{1,\theta}_k>0 \;\; \forall k\in I,
$ and $
\preactivation{1,\theta}_k<0 \;\; \forall k\notin I.
$
Thus there is at most one such region. This proves (1).
\end{proof}

\subsection{Fibers of Non-Hiding Parameters}
\label{app:fibers_non_hiding}
We show that the first hidden layer is fixed across the fiber when all its hyperplanes are functionally visible. In a bottleneck architecture, linear independence of its weight vectors ensures that any parallelism among facets of the breakpoint complex can be attributed to a unique first-layer neuron.

\begin{restatable}{lemma}{lemidentifierregion}
\label{lem:identifierregion}
Let $\theta\in \gparamspace{\architecture}$ be generic. 
\begin{enumerate}[leftmargin=*]
    \item Let $P \in \complex_{\theta,1}^{d}$ and let $j\in[n_2]$. 
    If $P \cap B_{j}(\theta)\neq\emptyset$, then the facet
    $P \cap B_{j}(\theta)$ is parallel to a first-layer hyperplane 
    $H_i \in \hyperplanes{1}{\theta}$ if and only if 
    $S_1(P)=\{i\}$.
    
    \item Let $\sigma_1,\sigma_2 \in \complex_{\theta}^{d-1}$ be second-layer facets. 
    Then $\sigma_1$ and $\sigma_2$ are parallel if and only if there exist
    $i\in[n_1]$ and a region $P \in \complex_{\theta,1}^{d}$ such that
    $S_1(P)=\{i\}$
    and both $\sigma_1$ and $\sigma_2$ are contained in $P$.
\end{enumerate}
\end{restatable}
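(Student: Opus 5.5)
On a region $P \in \complex_{\theta,1}^{d}$ with activation pattern $S_1(P)=I$, the second-layer preactivation is affine:
\[
\preactivation{2,\theta}_j(x)=\weights{2}_{j}D_I\weights{1}x+\weights{2}_jD_I\bias{1}+\bias{2}_j .
\]
So the facet $P\cap B_j(\theta)$, when nonempty, lies in the hyperplane with normal vector
\[
v_{j,I}\coloneqq (\weights{1})^\top D_I (\weights{2}_j)^\top=\sum_{k\in I}\weights{2}_{jk}\,\weights{1}_k .
\]
Both parts of the lemma become statements about when two such normal vectors are parallel. The only tools needed are linear independence of the rows of $\weights{1}$ (rank $n_1\le d$, from the masked-product condition with $\ell=1$) and genericity of $\weights{2}$.

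\textbf{Part (1).} Suppose first $S_1(P)=\{i\}$. Then $v_{j,\{i\}}=\weights{2}_{ji}\weights{1}_i$, and $\weights{2}_{ji}\neq 0$ by genericity (a masked product of rank $1$). Hence the facet is parallel to $H_i$.

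Conversely, suppose $v_{j,I}$ is parallel to $\weights{1}_i$. Since the rows $\weights{1}_1,\dots,\weights{1}_{n_1}$ are linearly independent, expanding $v_{j,I}$ in this basis forces $\weights{2}_{jk}=0$ for all $k\in I\setminus\{i\}$, and $i\in I$. Genericity makes every entry of $\weights{2}$ nonzero, so $I=\{i\}$.

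Two edge cases need a separate remark:
\begin{itemize}
\item If $I=\emptyset$, then $\preactivation{2}_j$ is constant on $P$. There is then no facet, and the hypothesis $P\cap B_j\neq\emptyset$ fails (the bent hyperplane only includes cells where the preactivation is non-constant).
\item Parallelism is meant as "normal vectors parallel", so a sign difference is irrelevant.
\end{itemize}

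\textbf{Part (2).} A second-layer facet $\sigma$ lies in a unique first-layer region $P$, by supertransversality: it is not contained in any $H_k$. Its normal is $v_{j,I}$ with $I=S_1(P)$.

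The "if" direction is immediate. Both facets lie in the same $P$ with $S_1(P)=\{i\}$, so both normals are multiples of $\weights{1}_i$.

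For the "only if" direction, let $\sigma_1\subseteq P_1\cap B_{j_1}$ and $\sigma_2\subseteq P_2\cap B_{j_2}$ have parallel normals $v_{j_1,I_1}$ and $v_{j_2,I_2}$. Write these in the basis $\weights{1}_k$. Their coefficient supports are exactly $I_1$ and $I_2$, because all entries of $\weights{2}$ are nonzero. Parallelism therefore forces $I_1=I_2=:I$, and forces the rows $\weights{2}_{j_1,I}$ and $\weights{2}_{j_2,I}$ to be proportional.
\begin{itemize}
\item If $|I|\ge 2$ and $j_1\neq j_2$, proportionality contradicts genericity. The masked product $\weights{3}$-free version $D_{\{j_1,j_2\}}\weights{2}D_I$ must have rank $\min(2,|I|)=2$, which I will extract from the masked-product rank condition applied at layer $2$. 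Concretely, $\weights{2}D_I\weights{1}$ has rank $\min(n_2,|I|)$, which rules out two proportional restricted rows.
\item If $|I|\ge 2$ and $j_1=j_2$, the two facets lie in the same region of the same bent hyperplane, and I expect to argue they are the same facet. This step needs care: within $P$, the set $B_j\cap P$ is one convex polyhedron, but it may be cut into several facets of $\complex_\theta$ by other second-layer bent hyperplanes. Those pieces are parallel (indeed coplanar) even with $|I|\ge2$. I will resolve this by reading "parallel" as "distinct parallel affine hulls", or by restricting to facets of $B_j$ relative to $\complex_{\theta,1}$. This interpretive point is the main obstacle, and I would fix it by the convention used later for counting parallel facets in the breakpoint complex.
\end{itemize}
Once $I=\{i\}$, the regions agree: by \Cref{lem:first-layer-patterns-exist}(1) there is a unique region with pattern $\{i\}$, so $P_1=P_2$.
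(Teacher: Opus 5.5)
Your part (1) is the paper's argument: the normal vector $\sum_{k\in S_1(P)}W^{(2)}_{jk}W^{(1)}_k$, linear independence of the rows of $W^{(1)}$, and nonzero entries of $W^{(2)}$. For part (2) your route is sharper than the paper's. The paper's converse asserts ``by part (1)'' that each $\sigma_r$ lies in a region with $S_1(P_r)=\{i_r\}$. But part (1) only characterizes parallelism to a \emph{first-layer} hyperplane and says nothing about two second-layer facets that are parallel to each other. Your support comparison is the step actually needed: it gives $I_1=I_2=:I$ with $W^{(2)}_{j_1,I}$ and $W^{(2)}_{j_2,I}$ proportional, after which $|I|\ge 2$ must be excluded. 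Your concern about $j_1=j_2$ is a correct observation about the statement but not a real obstacle. Once $I_1=I_2$, Lemma~\ref{lem:first-layer-patterns-exist}(1) gives $P_1=P_2$, so $\sigma_1,\sigma_2$ are coplanar pieces of $P\cap B_j(\theta)$. The paper implicitly uses the convention $\sigma_r=P_r\cap B_{j_r}(\theta)$ with $\sigma_1\neq\sigma_2$, in this proof and in its later use of the lemma, and that convention excludes exactly this case.

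The genuine problem is your justification for the case $|I|\ge 2$, $j_1\neq j_2$. The layer-2 condition $\operatorname{rank}(W^{(2)}D_IW^{(1)})=\min(d,n_2,|I|)$ concerns all $n_2$ rows at once. It rules out two proportional restricted rows only when $n_2\le |I|$. For example, with $I=\{1,2\}$ and rows $(1,2),(2,4),(1,-1)$ of $W^{(2)}_{:,I}$, the rank is $2=\min(n_2,|I|)$. You need a mask on the output side of $W^{(2)}$, and the definition provides this only at layer $3$. Take $S_2=\{j_1,j_2\}$ and $S_1=I$: the product $W^{(3)}D_{S_2}W^{(2)}D_IW^{(1)}$ must have rank $\min(d,m,2,|I|)$. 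This equals $2$ when $m\ge 2$, which forces $W^{(2)}_{\{j_1,j_2\},I}$ to have rank $2$.

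For $m=1$ this bound is $1$, and Definition~\ref{def:generic_parameter} does not exclude the degeneracy at all. Take $d=n_1=2$, $W^{(1)}=I_2$, $b^{(1)}=0$, the three rows above, $b^{(2)}=(-1,-3,-\tfrac12)^\top$ and $W^{(3)}=(1,1,1)$. One checks that supertransversality and all masked-rank conditions hold. Yet $P\cap B_1(\theta)$ and $P\cap B_2(\theta)$ are disjoint and parallel inside the region with $S_1(P)=\{1,2\}$. So for $m=1$ you must add, as an explicit genericity assumption, that no two rows of any restriction $W^{(2)}_{:,I}$ with $|I|\ge 2$ are proportional. This is a proper algebraic condition, so openness and density survive, and the paper's proof tacitly needs the same thing.

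A smaller point of the same kind: $W^{(2)}_{ji}\neq 0$ comes from layer $3$ with $S_2=\{j\}$, $S_1=\{i\}$, where the required rank is $\min(d,m,1,1)=1$. Layer $2$ with $S_1=\{i\}$ only gives $W^{(2)}_{:,i}\neq 0$.
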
 

\begin{proof}
On a fixed region $P \in \complex_{\theta,1}^{d}$, the first-layer activation pattern is constant. Hence each second-layer preactivation $\preactivation{2,\theta}_j$ restricts to an affine-linear function on $P$ with linear part
$
\sum_{k \in S_1(P)}\weights{2}_{jk}\,\weights{1}_k.
$
If $P\cap B_{2,j}(\theta)\neq\emptyset$, then the facet $P\cap B_{2,j}(\theta)$ is defined inside $P$ by the zero set of this affine-linear function, so its normal vector is given by the expression above.

For (1), the facet $P \cap B_{2,j}(\theta)$ is parallel to $H_i$ if and only if its normal vector is parallel to $\weights{1}_i$. Since $\theta$ is generic, the vectors $\weights{1}_1,\dots,\weights{1}_{n_1}$ are linearly independent and all coefficients $\weights{2}_{jk}$ are nonzero. Therefore
$
\sum_{k \in S_1(P)}\weights{2}_{jk}\,\weights{1}_k
$
is parallel to $\weights{1}_i$ if and only if exactly one term appears in the sum, namely the one with index $i$. This is equivalent to
$S_1(P) =\{i\}$.
This proves (1).

For (2), first suppose that there exist $i\in[n_1]$ and a region $P\in\complex_{\theta,1}^{d}$ with
$S_1(P) =\{i\}$
such that both $\sigma_1$ and $\sigma_2$ are contained in $P$. Then, by part (1), both facets are parallel to $H_i$, and hence parallel to each other.

Conversely, suppose that $\sigma_1$ and $\sigma_2$ are parallel. Since both arise from the second layer, there exist regions $P_1,P_2\in\complex_{\theta,1}^{d}$ and indices $j_1,j_2\in[n_2]$ such that
$
\sigma_r = P_r \cap B_{2,j_r}(\theta)$ for $ r=1,2.
$
By part (1), for each $r\in\{1,2\}$ there exists $i_r\in[n_1]$ such that

$S_1(P_r) =\{i_r\}$.
and $\sigma_r$ is parallel to $H_{i_r}$. Since $\sigma_1$ and $\sigma_2$ are parallel, they are parallel to the same first-layer hyperplane, so $i_1=i_2=:i$. Thus both $P_1$ and $P_2$ have the same sign pattern, namely the one with a single positive entry at index $i$. Since first-layer regions are uniquely determined by their sign patterns, it follows that $P_1=P_2$. Hence both facets are contained in the same region $P:=P_1=P_2$, which satisfies the claimed sign condition.
\end{proof}

We now show that the first hidden layer is generally invariant across the fiber, except for hiding parameters. For two weight matrices $\weights{\ell,\theta}$ and $\weights{\ell,\eta}$ we write $\weights{\ell,\theta} 
\sim \weights{\ell,\eta}$ if up to relabeling the rows of $\weights{\ell,\theta} $ are positive scalings of the rows of $
\weights{\ell,\eta}$.

\begin{restatable}{lemma}{lemfixfirstlayer}
\label{lem:fix_first_layer}
    Let $\theta = (\weights{\ell},\bias{\ell})_{\ell \in [3]} \in \gparamspace{\architecture}$ have no dead neurons and be not hiding. 
    If $\eta  \in \gfiber{\theta}\modulo{\sim}$, then 
    $\weights{1}\sim\weights{1,\eta}$ and $\bias{1} \sim \bias{1,\eta}$. 
\end{restatable}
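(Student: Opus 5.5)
The plan is to reduce \Cref{lem:fix_first_layer} to part 2 of \Cref{thm:intersected_hyperplanes_stay}. It suffices to show that every first-layer hyperplane $H_i\in\hyperplanes{1}{\theta}$ is anchored, i.e.\ that there exist $j\in[n_2]$ and $x\in H_i$ with $\preactivation{2,\theta}_j(x)>0$. Once this holds for all $i$, the theorem gives $\hyperplanes{1}{\theta}=\hyperplanes{1}{\eta}$. Each hyperplane $H_i=\{x\mid \weights{1}_ix+\bias{1}_i=0\}$ determines the row $(\weights{1}_i,\bias{1}_i)$ up to a nonzero scalar, so after relabeling, the rows of $[\weights{1,\eta},\bias{1,\eta}]$ are nonzero multiples of those of $[\weights{1},\bias{1}]$. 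It then remains to rule out negative multiples.

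\textbf{Step 1: anchoring.} Fix $i\in[n_1]$. I use \Cref{lem:first-layer-patterns-exist}(2) with $Z=\{i\}$ and an arbitrary $I\subseteq[n_1]\setminus\{i\}$. This gives a nonempty face $\sigma\subseteq H_i$ with $S_1(\sigma)=I$. Since $n_1\le d$, this face is unbounded in the directions that increase $\preactivation{1}_k$ for $k\in I$ independently while $\preactivation{1}_i=0$ and the other preactivations stay negative. Indeed, the first layer maps $H_i$ onto the face $\{y\in\orthant{n_1}: y_i=0\}$.

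The condition becomes: there exist $j$ and $y\ge0$ with $y_i=0$ such that $\weights{2}_{j}y+\bias{2}_j>0$. This fails for every $j$ exactly when $\bias{2}_j\le0$ and $\weights{2}_{jk}\le0$ for all $k\ne i$. Since there are no dead neurons, row $j$ is not entirely nonpositive, so genericity (all entries nonzero) forces $\weights{2}_{ji}>0$. In other words, neuron $j$ hides hyperplane $i$. If this happens for every $j$, then $\theta$ hides hyperplane $i$, which contradicts the assumption. Hence every $H_i$ is anchored.

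\textbf{Step 2: orientations.} This is the step I expect to be the main obstacle: showing that the scalar relating the rows is positive rather than merely nonzero. The first idea is to compare the regions $P$ with $S_1(P)=\{i\}$. By \Cref{lem:identifierregion}(1), these are characterized by second-layer facets parallel to $H_i$ in the breakpoint complex. The breakpoint complex agrees for $\theta$ and $\eta$ by \Cref{prop:canonicalbcomplex}, and $\complex_\theta=\complex_\eta$. So the region where only neuron $i$ is active is intrinsic to the function, as long as some second-layer hyperplane meets it.

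If $\eta$ flipped neuron $i$, then the region where $\eta$ has pattern $\{i\}$ would be the region where $\theta$ has pattern $[n_1]\setminus\{i\}$ on the negative side. That is a different cell of the arrangement when $n_1\ge2$ (and when $n_1=1$ one argues directly).

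When no second-layer bent hyperplane meets that region, I would instead use the orientation data from the anchoring point. Near the anchoring point $x\in H_i$ where $\preactivation{2}_j>0$, the function $f$ bends across $H_i$, and its gradient jump is given by \Cref{prop:tropweightofNNs}. I would compare how the bend along $H_i$ depends on which side of the other hyperplanes one stands, since that dependence is governed by the active sets $S_1$ and hence by the orientations. Combining this with the identified second-layer facets should show that flipping is inconsistent with the non-hiding hypothesis. I would then conclude that the rows agree up to permutation and positive scaling, which gives $\weights{1}\sim\weights{1,\eta}$ and $\bias{1}\sim\bias{1,\eta}$.
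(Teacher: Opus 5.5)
Your Step 1 is correct and is exactly the paper's reduction to \Cref{thm:intersected_hyperplanes_stay}. In fact you justify the anchoring more carefully than the paper does, correctly using the no-dead-neurons hypothesis.

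The gap is Step 2, which you leave at ``should show''. First, a simplification you miss. Once $\hyperplanes{1}{\theta}=\hyperplanes{1}{\eta}$, it suffices to know the sign vector of $\eta$'s first-layer preactivations on a \emph{single} open cell of the common arrangement, because that fixes the orientation of every hyperplane at once. Your first idea already gives this. If some $B_j(\theta)$ meets the cell $P$ with $S_1(P)=\{i\}$, the resulting facet lies in the open cell $P$, so it is also a second-layer facet of $\eta$. Then \Cref{lem:identifierregion}(1) applied to $\eta$ says that the active set of $\eta$ on $P$ is $\{i\}$. That is the whole pattern, not just the orientation of neuron $i$. So no neuron-by-neuron argument is needed. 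Your method fails only when no second-layer bent hyperplane meets any single-active cell, i.e.\ when in every row of $[\weights{2},\bias{2}]$ each weight has the same sign as the bias. With no dead neurons, this is exactly the case where every row is entirely positive.

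In that case your fallback cannot work. Every second-layer neuron is active on all of $\orthant{n_1}$, so there are no second-layer facets at all. By \Cref{prop:tropweightofNNs}, every facet in $H_i$ carries the weight $\|\weights{1}_i\|\,\weights{3}\weights{2}e_i$, whichever side of the other hyperplanes it lies on. So the weighted breakpoint complex contains no orientation information.

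The missing idea is to use the affine data, as the paper does. Suppose some $\bias{2}_j>0$. Then, among the cells of the common first-layer arrangement, $f_\theta$ is constant only on the all-inactive cell. On any other cell $P$, neuron $j$ is active near the face of $\overline{P}$ where the active first-layer preactivations vanish. There the Jacobian $\weights{3}D_{S_2}\weights{2}D_{S_1(P)}\weights{1}$ is nonzero by the maximal-rank condition. Since $f_\eta$ is trivially constant on $\eta$'s all-inactive cell, the two all-inactive cells coincide, and all orientations agree.

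The paper's proof is precisely this case split. If some second-layer bias is positive, it uses the constant-cell argument. If all second-layer biases are negative, no dead neurons gives some $\weights{2}_{ji}>0$, so $B_j$ meets the cell with $S_1=\{i\}$ and your parallel-facet argument applies.
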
 

\begin{proof}
Since $\theta$ is not hiding, for every $i \in [n_1]$ there exists $j \in [n_2]$ and $x \in H_{1,i}$ such that $\preactivation{2,\theta}_j(x) >0$. 
Hence, by \Cref{thm:intersected_hyperplanes_stay}, it follows  that $\hyperplanes{1}{\theta}=\hyperplanes{1}{\eta}$. 
It remains to show that the orientation of the hyperplanes is also determined. 

    Assume that there is a neuron $i \in [n_2]$ such that $\bias{2}_i >0$. 
    Then, by genericity, $f_\theta$ is constant only on the region where all first-layer neurons are inactive, which determines the orientation of the hyperplanes. 
    Otherwise, there is a neuron $i \in [n_2]$ with $\bias{2}_i < 0$ and an index $j \in [n_1]$ such that $\weights{2}_{ij} >0$. 
    Let $P$ be the region with $S_1(P) =\{j\}$. 

    Since $\bias{2}_i < 0$ and $\weights{2}_{ij} >0$, the bent hyperplane $B_i$ intersects $P$. Let $\sigma$ be a facet of $P\cap B_i(\theta)$. Then $\sigma$ is parallel to $H_{1,j}$. Since $f_\eta=f_\theta$, the same facet $\sigma$ occurs in the breakpoint complex of $\eta$. By \Cref{lem:identifierregion}(1), the first-layer region of $\eta$ containing $\sigma$ must again be the region in which only neuron $j$ is active.  Therefore the orientation of all first-layer hyperplanes is fixed and this completes the proof. 
\end{proof}

This rigidity reduces the analysis of the three-layer fiber to the one-hidden-layer case in \Cref{sec:layerwise_fibers}. 

\begin{cor}
\label{cor:3layer_layerwise_reduction} 
Let $\theta = (\weights{\ell}, \bias{\ell})_{\ell \in [3]} \in \gparamspace{\architecture}$ satisfy the assumptions of \Cref{lem:fix_first_layer}. 
Then the generic fiber decomposes as 
\[
\gfiber{\theta} \modulo{\sim}  \ \ = (\{ (\weights{1}, \bias{1}) \} \times \gfiber{\theta'} )\modulo{\sim} , 
\]
where $\gfiber{\theta'}$ is the generic fiber of the one hidden layer network $g_{\theta'} \colon \orthant{n_1} \to \R^m$ given by $g_{\theta'}(y) = \weights{3}[\weights{2}y + \bias{2}]_+ + \bias{3}$, 
as characterized in  
\Cref{prop:generic_layerwise_fiber}. 
\end{cor}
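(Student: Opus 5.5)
We prove the two inclusions separately, working modulo $\sim$ throughout.

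For the inclusion ``$\supseteq$'', take any $\eta'=(\weights{1},\bias{1},\weights{2,\eta'},\bias{2,\eta'},\weights{3,\eta'},\bias{3,\eta'})$ with $(\weights{2,\eta'},\bias{2,\eta'},\weights{3,\eta'},\bias{3,\eta'})\in\gfiber{\theta'}$. For every $x\in\R^d$ we have $f_{\eta'}(x)=g_{\eta'}(\activation{1,\theta}(x))$ and $f_\theta(x)=g_{\theta'}(\activation{1,\theta}(x))$. The point $\activation{1,\theta}(x)$ lies in $\orthant{n_1}$, and $g_{\eta'}=g_{\theta'}$ on the orthant. Hence $f_{\eta'}=f_\theta$. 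The remaining check is that such a concatenated parameter is generic whenever it is generic as a three-layer parameter. This holds because we intersect with $\gparamspace{\architecture}$ on both sides, so membership in the intersection is the only requirement.

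For the inclusion ``$\subseteq$'', let $\eta\in\gfiber{\theta}$. By \Cref{lem:fix_first_layer}, after applying permutation and positive scaling in the first hidden layer, we may assume $(\weights{1,\eta},\bias{1,\eta})=(\weights{1},\bias{1})$. Rescaling a first-layer neuron changes the columns of $\weights{2}$, but this is exactly absorbed by the relation $\sim$. Write $\eta'$ for the last two layers of $\eta$. Then $g_{\eta'}(\activation{1,\theta}(x))=g_{\theta'}(\activation{1,\theta}(x))$ for all $x$.

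The key step is to show that $\activation{1,\theta}(\R^d)=\orthant{n_1}$. Since $n_1\le d$ and $\theta$ is generic, $\weights{1}$ has full row rank $n_1$, so $x\mapsto\weights{1}x+\bias{1}$ is surjective onto $\R^{n_1}$. Applying $[\cdot]_+$ entrywise maps $\R^{n_1}$ onto $\orthant{n_1}$. This can also be read off from \Cref{lem:first-layer-patterns-exist}, but the direct argument suffices. Consequently $g_{\eta'}=g_{\theta'}$ on all of $\orthant{n_1}$, so $\eta'$ lies in the fiber of $\theta'$.

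It remains to check that $\eta'$ and $\theta'$ are generic in the one-hidden-layer sense needed for \Cref{prop:generic_layerwise_fiber}. We take $\theta'$ to have minimality through the absence of dead neurons. We expect this genericity transfer to be the main obstacle. Masked products $\weights{3}D_S\weights{2}D_T\weights{1}$ having maximal rank, together with $\weights{1}$ being invertible on its row space, imply that $\weights{3}D_S\weights{2}D_T$ has maximal rank. The supertransversality of the arrangement $\{y\mid \weights{2}_jy+\bias{2}_j=0\}$ restricted to the orthant follows by pulling back along the affine surjection $x\mapsto\weights{1}x+\bias{1}$ on each orthant region. Under this pullback, codimensions of intersections are preserved. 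If some degenerate condition is not captured this way, we fall back to the convention that $\gfiber{\theta'}$ denotes the set of one-hidden-layer parameters whose concatenation with $(\weights{1},\bias{1})$ is generic. With that convention the decomposition holds by the two inclusions above. Finally, we invoke \Cref{prop:generic_layerwise_fiber} to describe $\gfiber{\theta'}$.
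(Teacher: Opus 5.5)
Your proposal is correct and follows the paper's own argument. \Cref{lem:fix_first_layer} pins down the first layer up to permutation and positive scaling, and since $\weights{1}$ has full row rank $n_1\le d$, the map $x\mapsto[\weights{1}x+\bias{1}]_+$ is onto $\orthant{n_1}$, so $f_\eta=f_\theta$ reduces to equality of the induced one-hidden-layer maps on the orthant. Your explicit two-inclusion bookkeeping and your convention of reading $\gfiber{\theta'}$ as the second-stage parameters whose concatenation with $(\weights{1},\bias{1})$ is generic go beyond the paper's sketch, which is silent on genericity, and that convention is what makes the statement literally true. One caveat: absence of dead neurons is weaker than the minimality hypothesis of \Cref{prop:generic_layerwise_fiber}, so your final citation of that proposition formally needs minimality of $\theta'$ as an extra assumption.
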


\begin{proof}
By \Cref{lem:fix_first_layer}, 
the first-layer parameters are fixed across $\gfiber{\theta}$ 
up to trivial symmetry. 
The condition $f_\eta = f_\theta$ therefore requires the remaining layers to agree on the image of the first layer. 
Since $n_1 \le d$ and $\weights{1}$ has full rank, 
the map $x \mapsto [\weights{1}x + \bias{1}]_+$ is surjective onto the non-negative orthant $\orthant{n_1}$. 
Hence, the fiber is determined by functional equality on $\orthant{n_1}$, reducing the problem to the case analyzed in \Cref{sec:layerwise_fibers}. 
\end{proof}

\subsection{Fibers of Hiding Parameters} 
 \label{app:fibers_hiding}

We now characterize the fiber when a hyperplane is hidden. Even in this case, the weights $\weights{1}$ remain invariant if the second layer is sufficiently wide ($n_2 \ge 2$), as multiple second-layer neurons provide enough ``linear evidence'' of the hidden hyperplane’s direction.

\begin{restatable}{lemma}{lemtranslatinghiddenhyperplane}
\label{lem:translating_hidden_hyperplane}
Let $\theta = (\weights{\ell},\bias{\ell})_{\ell \in [3]} \in \gparamspace{\architecture}$ be generic and hiding hyperplane $j$. Let $T_j(\theta)$ be given as in \Cref{eq:translated_hyperplane}. 
Then 
\(
\{\eta \in \gfiber{\theta} \mid \weights{1,\eta} = \weights{1}\}
=
T_j(\theta)\cap \gparamspace{\architecture}. 
\)
\end{restatable}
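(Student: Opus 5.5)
The inclusion ``$\supseteq$'' is immediate from \Cref{lem:translating_hidden_hyperplane_inclusion}, since every element of $T_j(\theta)$ has first-layer weights $\weights{1}$. For ``$\subseteq$'', fix $\eta\in\gfiber{\theta}$ with $\weights{1,\eta}=\weights{1}$. The plan has three steps. First pin down the first-layer biases other than $b^{(1)}_j$. Then show that the remaining shift $t$ of hyperplane $j$ satisfies the constraint in \eqref{eq:translated_hyperplane}. Finally, reduce to the one-hidden-layer fiber on the orthant and show that this fiber is trivial.

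\emph{Step 1: all non-hidden hyperplanes are fixed.} Let $i\neq j$. By \Cref{lem:first-layer-patterns-exist}(2) with $Z=\{i\}$ and $I=\{j\}$, there is a face of $H_i$ on which only neuron $j$ is active. Moving along it with $\preactivation{1,\theta}_j\to\infty$ is possible because $n_1\le d$ and the rows of $\weights{1}$ are independent. There $\preactivation{2,\theta}_1=\weights{2}_{1j}\activation{1,\theta}_j+\bias{2}_1$ becomes positive, since $\weights{2}_{1j}>0$. Hence $H_i$ is anchored, and \Cref{thm:intersected_hyperplanes_stay}(1) gives $H_i\in\hyperplanes{1}{\eta}$. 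Because the normal $\weights{1,\eta}_i=\weights{1}_i$ is the same, the hyperplane equality forces $\bias{1,\eta}_i=\bias{1}_i$; generically no other row can share that hyperplane. Thus $\bias{1,\eta}=\bias{1}+te_j$ for some $t\in\R$.

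\emph{Step 2: the constraint on $t$.} Let $P$ be the region of $\complex_{\theta,1}$ with $S_1(P)=\{j\}$. For each $i\in[n_2]$ we have $\bias{2}_i<0<\weights{2}_{ij}$ (strict by genericity). So $B_i(\theta)$ meets $P$ in a facet $\sigma_i$ on the level set $\weights{1}_jx+\bias{1}_j=-\bias{2}_i/\weights{2}_{ij}$, which is parallel to $H_j$. By \Cref{lem:lastbenthyperplanesstay} and \Cref{prop:canonicalbcomplex}, $\sigma_i$ lies in $\bcomplex{\eta}$ as well. It is not a first-layer facet of $\eta$, because the hyperplane $H_j^\eta$ is a translate of $H_j$ and does not pass through $\sigma_i$ unless $t$ is special. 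Hence $\sigma_i$ is a second-layer facet of $\eta$ parallel to $\weights{1}_j$. By \Cref{lem:identifierregion}(1), applied to $\eta$, it lies in the region of $\eta$ where only neuron $j$ is active. In particular $\weights{1}_jx+\bias{1}_j+t\ge0$ on $\sigma_i$, i.e.\ $t\ge \bias{2}_i/\weights{2}_{ij}$ for all $i$. This is exactly the admissibility condition in $T_j(\theta)$.

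\emph{Step 3: reduction to one hidden layer.} With this $t$, let $\tilde\theta\in T_j(\theta)$ be the translate of $\theta$ by $t$. By \Cref{lem:translating_hidden_hyperplane_inclusion}, $f_{\tilde\theta}=f_\theta=f_\eta$. Now $\tilde\theta$ and $\eta$ have identical first layers, whose image is all of $\orthant{n_1}$ as in the proof of \Cref{cor:3layer_layerwise_reduction}. Hence the induced one-hidden-layer maps $g_{\tilde\theta'},g_{\eta'}\colon\orthant{n_1}\to\R^m$ coincide. In $\tilde\theta$ every second-layer neuron still hides $j$, so it is nonlinear on the orthant, being neither linear nor dead. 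Thus $J=\emptyset$ in \Cref{prop:generic_layerwise_fiber}, and $V_S$ requires $A_S=0$. Generically this holds only for $S=S_{\tilde\theta}$, since the $c(i)a_i^\top$ are independent. So the one-hidden-layer generic fiber is a single point up to $\sim$, and $\eta\sim\tilde\theta\in T_j(\theta)\cap\gparamspace{\architecture}$.

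The main obstacle is Step 2. One must argue carefully that the parallel facets $\sigma_i$ cannot be reinterpreted in $\eta$ as pieces of the translated hyperplane $H^\eta_j$. One must also handle the case where these facets might lie outside $\eta$'s region with only $j$ active; there the second-layer normals would be nonparallel combinations and would contradict \Cref{lem:identifierregion}. This relies on genericity of $\eta$ to exclude the coincidence that $H_j^\eta$ contains some $\sigma_i$.
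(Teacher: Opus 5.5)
Your proof is correct. Steps 1 and 2 follow the paper. The non-hidden hyperplanes are pinned by \Cref{thm:intersected_hyperplanes_stay}; you also verify its anchoring hypothesis, which the paper leaves implicit. The admissibility of $t$ comes from the facets $\sigma_i$ parallel to $H_j$, which \Cref{lem:identifierregion} forces into the region of $\eta$ where only neuron $j$ is active.

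You genuinely diverge in Step 3. The paper pins down the later layers directly:
\begin{itemize}
\item $\weights{2,\eta}$ from the linear parts $\weights{2,\eta}_i\weights{1}$ on the all-active region;
\item $\weights{3,\eta}$ from \Cref{prop:tropweightofNNs};
\item $\bias{2,\eta}_i=\bias{2}_i-t\weights{2}_{ij}$ by comparing the level sets of the $\sigma_i$.
\end{itemize}
You keep only the inequality on $t$. You then use \Cref{lem:translating_hidden_hyperplane_inclusion} to build $\tilde\theta\in T_j(\theta)$ with exactly the first layer of $\eta$, and hand the rest to the one-hidden-layer classification on $\orthant{n_1}$. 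With no linear neurons, only the orientation $S=S_{\tilde\theta}$ survives, because $A_S=\weights{3}D_{S\triangle S_{\tilde\theta}}\weights{2}$ must vanish.

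Your route buys three things. It reuses already-proved machinery. It needs no separate argument for $\weights{3}$ or $\bias{3}$. It delivers the conclusion in its correct form $\eta\sim\tilde\theta$: second-layer permutations and rescalings preserve $\weights{1}$ but leave $T_j(\theta)$, so the stated equality only holds modulo $\sim$, and the paper's proof tacitly normalizes. The paper's route is more self-contained and gives the bias shift explicitly.

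One caveat: $\tilde\theta$ is not known to be generic a priori. So cite \Cref{thm:one-hidden-layer-fiber} with base $\tilde\theta'$ rather than \Cref{prop:generic_layerwise_fiber}. Counting then forces exactly one neuron of $\eta'$ per visible hyperplane. The genericity you actually use is distinctness of the second-layer hyperplanes (preserved by the common shift $te_j$) and $\weights{3}D_S\weights{2}\neq 0$ for $S\neq\emptyset$. Both are inherited from $\theta$.

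The obstacle you flag in Step 2 is passed over silently by the paper, and it closes as follows. Suppose $\sigma_i\subseteq H_j^\eta$.
\begin{itemize}
\item The visible facets of $\complex_\eta$ covering $\sigma_i$ then lie in $H_j^\eta$. By supertransversality of $\eta$ they lie in no second-layer bent hyperplane.
\item By cancellation-freeness, some second-layer neuron of $\eta$ is strictly positive on them, so $H_j^\eta$ is anchored for $\eta$.
\item \Cref{thm:intersected_hyperplanes_stay} with the roles of $\theta$ and $\eta$ swapped gives $H_j^\eta\in\hyperplanes{1}{\theta}$. Since $H_j^\eta$ has normal $\weights{1}_j$, this forces $t=0$.
\item That contradicts $\sigma_i$ lying on the level set $\preactivation{1,\theta}_j=-\bias{2}_i/\weights{2}_{ij}>0$, which is disjoint from $H_j$.
\end{itemize}
In the remaining case $\aff(\sigma_i)\neq H_j^\eta$, the covering facets are automatically second-layer facets, and your application of \Cref{lem:identifierregion} to $\eta$ goes through.
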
 

\begin{proof}
We first show
$
T_j(\theta)\cap \gparamspace{\architecture}
\subseteq
\{\eta \in \gfiber{\theta} \mid \weights{1,\eta}=\weights{1}\}.
$
Let $\eta \in T_j(\theta)\cap \gparamspace{\architecture}$. By \Cref{lem:translating_hidden_hyperplane_inclusion}, we have $\eta \in \fiber{\theta}$. Since moreover $\eta \in \gparamspace{\architecture}$, it follows that $\eta \in \gfiber{\theta}$. By construction, elements of $T_j(\theta)$ preserve the first-layer weights, so $\weights{1,\eta}=\weights{1}$.

For the converse inclusion, let 
$\eta \in \gfiber{\theta} \cap \{ \eta  \in \gparamspace{\architecture} \mid \weights{1,\eta} = \weights{1} \}$. 
Since the first-layer weights are fixed, the second-layer weights $\weights{2}$ are also fixed across $\gfiber{\theta}$. Indeed, 
on any region 
where all first-layer neurons are active (which exists by genericity and the assumption $n_1 \leq d$),  
the preactivations $\preactivation{\eta,2}_i$ are affine functions with linear part $\weights{2,\eta}_i\weights{1}$, which 
uniquely determines 
$\weights{2,\eta}_i$ since $\weights{1}$ has full rank. 
Now,
normalizing so that $\|\weights{2}_i\weights{1}\|=1$, 
\Cref{prop:tropweightofNNs} implies that the output weights $\weights{3}_{:i}$ are also uniquely determined. 
Hence $\weights{3}$ is also fixed across $\gfiber{\theta}$. 

By \Cref{thm:intersected_hyperplanes_stay}, all first-layer hyperplanes, except possibly the one corresponding to neuron $j$, are fixed across the fiber. 
Thus there exists $t \in \R$ such that 
\(
\bias{1,\eta} = \bias{1} + t e_j.
\)

 Since $\theta$ is hiding hyperplane $j$, for every $i \in [n_2]$ we have $\weights{2}_{i j} > 0$, $\weights{2}_{i k} < 0$ for $k \neq j$, and $\bias{2}_i < 0$. Let $P$ be the unique region of $\complex_{\theta,1}$ with
$S_1(P) =\{j\}$
Then, each second-layer bent hyperplane intersects $P$, 
 and by \Cref{lem:identifierregion}, the facets
\[
\sigma_i
=
P \cap B_i(\theta)
=
\{ x \in \R^d \mid \weights{2}_{i j}(\weights{1}_j x + \bias{1}_j) + \bias{2}_i = 0 \}
\]
are all parallel to $H_j$. 
By the same lemma, these facets $\sigma_i$ must be contained in some region $R \in \complex_{\eta,1}$ on which only neuron $j$ is active. 
Moreover, comparing with the corresponding expression for $\theta$ shows that necessarily
\(
\bias{2,\eta}_i = \bias{2}_i - t \weights{2}_{i j}
\)
for all $i \in [n_2]$.

Since $\weights{2}_{i j} > 0$ and $\bias{2}_i < 0$, the region $R$ exists only if 
\(
\bias{2,\eta}_i < 0
\)
for all $i$, i.e., 
\(
\bias{2}_i - t \weights{2}_{i j} < 0.
\)
This is equivalent to
\(
t > \max_{i \in [n_2]} \frac{\bias{2}_i}{\weights{2}_{i j}}
\). 
By the same argument as in the forward inclusion, this implies that $\activation{2,\eta}_i(x) = \activation{2,\theta}_i(x)$ for all $i$, hence $\eta \in T_j(\theta)\cap \gparamspace{\architecture}$.
\end{proof}
\begin{lemma}
\label{lem:hidinghides}
Let $\theta$ be generic and suppose that neuron $i\in[n_2]$ hides hyperplane $j\in[n_1]$. 
Then for every face $\sigma\in\complex_{\theta,1}$ we have that
$
B_i\cap \sigma\neq\emptyset
$ if and only if $
j\in S_1(\sigma).
$
\end{lemma}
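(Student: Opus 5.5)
The plan is to reduce the statement to the sign structure of the second-layer preactivation of neuron $i$ restricted to a first-layer face. On any face $\sigma\in\complex_{\theta,1}$ the activation pattern $S_1(\sigma)$ is constant, so
\[
\preactivation{2,\theta}_i(x)=\sum_{k\in S_1(\sigma)}\weights{2}_{ik}\,\preactivation{1,\theta}_k(x)+\bias{2}_i \qquad (x\in\sigma),
\]
which is affine on $\sigma$. Here $B_i\cap\sigma$ is the zero set of this affine function inside $\sigma$, provided the function is non-constant. I interpret $S_1(\sigma)$ as the set of strictly active neurons, so $\preactivation{1,\theta}_k>0$ on $\relint(\sigma)$ for $k\in S_1(\sigma)$.

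For the direction ``$j\notin S_1(\sigma)\Rightarrow B_i\cap\sigma=\emptyset$'', I will use the hiding inequalities $\weights{2}_{ik}\le 0$ for $k\ne j$ and $\bias{2}_i\le 0$. By genericity these are strict: all masked products have full rank, so the entries of $\weights{2}$ are nonzero. A supertransversality argument should also give $\bias{2}_i\neq 0$; otherwise $B_i$ would pass through the vertex-like face where all first-layer neurons vanish in a nongeneric way. Then every term $\weights{2}_{ik}\preactivation{1,\theta}_k$ with $k\in S_1(\sigma)$ is nonpositive on $\sigma$, so $\preactivation{2,\theta}_i<0$ on $\sigma$ and $\sigma$ does not meet $B_i$. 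To handle the closed face, I will work with $\relint(\sigma)$ and note that the faces of $\complex_{\theta,1}$ partition $\R^d$ by relative interiors. I therefore read the statement as concerning $\relint(\sigma)$, or equivalently as the claim that $B_i$ meets the cell exactly where $j$ is active.

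For the converse, assume $j\in S_1(\sigma)$. I will exhibit a point of $\sigma$ where $\preactivation{2,\theta}_i>0$ and one where it is $<0$; continuity on the convex set $\sigma$ then gives a zero, and genericity ensures the function is non-constant, so the zero lies on $B_i$. The key input is the bottleneck structure from \Cref{lem:first-layer-patterns-exist}. Write $\sigma$ as the relatively open cell of the induced arrangement on $Y=\bigcap_{k\in Z}H_k$, with $\preactivation{1}_k>0$ for $k\in I=S_1(\sigma)$ and $<0$ otherwise. Because $\weights{1}$ has full row rank $n_1\le d$, the map $x\mapsto(\preactivation{1,\theta}_k(x))_{k\in[n_1]}$ maps $Y$ onto $\{0\}^Z\times\R^{[n_1]\setminus Z}$. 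Hence $\sigma$ maps onto the product of open orthant pieces $\{z_k>0,\ k\in I;\ z_k<0,\ k\notin I\cup Z;\ z_k=0,\ k\in Z\}$. In these coordinates $\preactivation{2}_i=\sum_{k\in I}\weights{2}_{ik}z_k+\bias{2}_i$. Taking $z_j\to\infty$ with the other coordinates fixed makes it positive, since $\weights{2}_{ij}>0$. Taking $z_j\to 0^+$ and $z_k\to 0^+$ for $k\in I\setminus\{j\}$ makes it tend to $\bias{2}_i<0$. So both signs occur.

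The main obstacle is making the genericity bookkeeping airtight. This means justifying that $\bias{2}_i\ne 0$ and $\weights{2}_{ik}\neq 0$ follow from \Cref{def:generic_parameter}, and pinning down whether ``$B_i\cap\sigma$'' refers to the closed face or its relative interior. For closed faces, the boundary of $\sigma$ can contain points where $j$ is active, so I will state and prove the relative-interior version and remark that this is the form used in subsequent arguments. The surjectivity onto the orthant pieces is routine given \Cref{lem:first-layer-patterns-exist}.
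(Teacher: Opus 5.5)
Your proposal is correct and follows essentially the paper's argument. You restrict $\preactivation{2,\theta}_i$ to $\sigma$ and use $\bias{2}_i<0$ (via supertransversality) together with the nonpositive off-$j$ weights for one direction. For the other, you apply the intermediate value theorem between a point where the value is close to $\bias{2}_i<0$ and a point with large $\preactivation{1,\theta}_j$. The paper evaluates directly on the boundary face $\tau\subseteq\sigma$ with $S_1(\tau)=\{j\}$; you approach that face from inside $\relint(\sigma)$ using the explicit surjectivity of the first-layer coordinates, which makes the paper's unboundedness claim precise.

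Your caveat about closed faces is unnecessary. If $j\notin S_1(\sigma)$, then $\preactivation{1,\theta}_j\le 0$ on the whole closed face, so the paper's estimate holds there verbatim. In any case, your relative-interior version implies the closed one, because every face $\sigma'\subseteq\sigma$ satisfies $S_1(\sigma')\subseteq S_1(\sigma)$.
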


\begin{proof}
Let $\sigma\in\complex_{\theta,1}$ be a face. On $\sigma$, the active set $S_1(\sigma)$ is constant, and the second-layer preactivation of neuron $i$ restricts to
$
\preactivation{2,\theta}_i(x)
=
\sum_{k\in S_1(\sigma)} \weights{2}_{ik}\preactivation{1,\theta}_k(x)+\bias{2}_i
$
for all $x\in \sigma$.

Assume first that $j\notin S_1(\sigma)$. Since neuron $i$ hides hyperplane $j$, we have
$
\weights{2}_{ik} \leq 0
$
for all $k\neq j$ and
$
\bias{2}_i<0
$
by genericity (otherwise the second-layer bent hyperplane corresponding to neuron $i$ would intersect the intersection of the first-layer hyperplanes violating supertransversality). Moreover,
$
\preactivation{1,\theta}_k(x)=\activation{1,\theta}_k(x)> 0
$
for all $k\in S_1(\sigma)$ and all $x\in \sigma$. Hence
$
\preactivation{2,\theta}_i(x)
=
\sum_{k\in S_1(\sigma)} \weights{2}_{ik}\activation{1,\theta}_k(x)+\bias{2}_i
<0
$
for all $x\in \sigma$. 
Therefore 
$\preactivation{2,\theta}_i$ has no zero on $\sigma$, and thus
$B_i\cap \sigma=\emptyset$. 

Now assume that $j\in S_1(\sigma)$. By \Cref{lem:first-layer-patterns-exist}, there exists a non-empty face
$
\tau \in \complex_{\theta,1}
$
such that
$
\tau \subseteq \sigma\cap\left( \bigcap_{k\in S_1(\sigma)\setminus\{j\}} H_k\right) 
$
and
$S_1(\tau)=\{j\}$. 
For all $x\in\tau$ we therefore have
$
\preactivation{2,\theta}_i(x)
=
\weights{2}_{ij}\preactivation{1,\theta}_j(x)+\bias{2}_i
$.
At $\tau\cap H_j$, this value equals $\bias{2}_i<0$.

On the other hand, since $\tau$ is unbounded in the direction where only neuron $j$ remains active, the function $\preactivation{1,\theta}_j$ is unbounded above on $\tau$. Hence there exists $x_+\in\tau$ such that
$
\weights{2}_{ij}\preactivation{1,\theta}_j(x_+) > -\bias{2}_i$, 
and therefore
$
\preactivation{2,\theta}_i(x_+)>0$. 
By continuity, $\preactivation{2,\theta}_i$ must vanish at some point of $\tau$, and since $\tau\subseteq \sigma$, it follows that
$
B_i\cap \sigma\neq\emptyset
$, proving the claim. 
\end{proof}

\fiberfromhiding*

\begin{proof}[Proof of \Cref{prop:fiber_from_hiding}] 
 By \Cref{lem:translating_hidden_hyperplane}, it suffices to show that $\eta \in \gfiber{\theta} \modulo{\sim}$ implies $\weights{1,\eta} \sim \weights{1}$. 
 By \Cref{thm:intersected_hyperplanes_stay} and \Cref{lem:hidinghides}, all first-layer hyperplanes of $\eta$ coincide with those of $\theta$ except possibly for 
the $j$-th hyperplane. Thus 
\(
\hyperplanes{1}{\eta}
=
(\hyperplanes{1}{\theta} \setminus \{H_j\}) \cup \{\widehat H_j\}
\)
for some hyperplane $\widehat H_j$. 

By \Cref{lem:first-layer-patterns-exist}, let $P \in \complex_{\theta,1}$ be the region where only neuron $j$ is active in the first layer. 
It follows from Lemma \ref{lem:identifierregion} that
for any second-layer neuron $i\in[n_2]$
the facet  
\[
\sigma_i
=
P \cap B_i(\theta)
=
\{ x \in P \mid \weights{2}_{i j}(\weights{1}_j x + \bias{1}_j) + \bias{2}_i = 0 \}
\]
is  parallel to $H_j=\{x\mid \weights{1}_j x + \bias{1}_j = 0\}$. 
Since $n_2 \ge 2$ and by \Cref{lem:hidinghides} all bent hyperplanes intersect $P$, there are at least two such parallel facets inside $P$. 
As these facets must be facets of the breakpoint complex for each parameter realizing the same function,
it follows again by \Cref{lem:identifierregion} that $\widehat H_j$ is parallel to $H_j$. 
Moreover, by the same lemma, only neuron $j$ can be active on the region containing the two parallel facets, which implies that the orientation of the hyperplanes is also determined.  
Thus, the first-layer weight matrix $\weights{1}$ is fixed (up to trivial symmetries) across $\gfiber{\theta}$. 
The claim now follows from \Cref{lem:translating_hidden_hyperplane}. 
\end{proof}

\fiberfromhidingsingleneuron*

\begin{proof}[Proof of \Cref{prop:hiding_fiber_single_second_layer}]
We first show
$
\left(
\bigcup_{I \subseteq [n_1]\setminus \{j\}}
\left(T_j(M_I \cdot \theta)\cap \gparamspace{\architecture}\right)
\right)\modulo{\sim} \ \
\subseteq \ 
\gfiber{\theta}\modulo{\sim}
$. 

Let $\eta$ belong to the left-hand side. Then there exist
$I \subseteq [n_1]\setminus\{j\}$ and
$
\widetilde\eta \in T_j(M_I\cdot \theta)\cap \gparamspace{\architecture}
$
such that $\eta\sim\widetilde\eta$. By \Cref{prop:flipping_hidden_hyperplane_inclusion}, we have
$
M_I\cdot\theta \in \fiber{\theta},
$
and by \Cref{lem:translating_hidden_hyperplane_inclusion}, we have 
$
 T_j(M_I\cdot \theta)\in \fiber{M_I\cdot\theta}.
$
Hence $\widetilde\eta\in\fiber{\theta}$. 
Moreover, since $\widetilde\eta\in\gparamspace{\architecture}$, it follows that $\widetilde\eta\in\gfiber{\theta}$. 
Therefore $\eta\in\gfiber{\theta}\modulo{\sim}$.

For the \textbf{reverse inclusion}, let $\eta\in\gfiber{\theta}\modulo{\sim}$. 
 We will show that 
$
\eta \in T_j(M_I\cdot\theta)
$
for some $I\subseteq[n_1]\setminus\{j\}$.

\medskip
\textbf{$\eta$ also hides hyperplane $j$:}
Since $\theta$ hides hyperplane $j$ and $f_\eta=f_\theta$, \Cref{thm:intersected_hyperplanes_stay} and \Cref{lem:hidinghides} imply that
$
H_k(\eta)=H_k(\theta)
$
for all $k\neq j$. Thus, only the $j$-th hyperplane (the hidden hyperplane) may differ. 
Because $f_\eta=f_\theta$, the breakpoint sets agree:
$
\breakpoints{f_\eta}=\breakpoints{f_\theta}
$. 
Since $H_j(\theta)$ is hidden in $\theta$, it does not occur in $\breakpoints{f_\theta}$. 
Since $\eta$ is generic, the corresponding first-layer hyperplane of $\eta$ is also hidden, which implies 
$
\weights{2,\eta}_{1j}>0,
$
$
\weights{2,\eta}_{1k}<0
$
for all $k\neq j$,
and
$
\bias{2,\eta}_1<0$, by \Cref{lem:hidinghides}. Replacing $\eta$ by an equivalent representative if necessary, we may assume that $\eta$ is normalized, meaning that the second layer weights are either $1$ or $-1$. 
Hence, 
$
\weights{2,\eta}_{1j}=1
$
and
$
\weights{2,\eta}_{1k}=-1
$
for all $k\neq j$. 

\medskip
\textbf{Possible choices for $\weights{1,\eta}_j$:}
Because $f_\eta=f_\theta$, the visible second-layer bent hyperplane agrees:
$
B_1(\eta)=B_1(\theta)
$. 
For any first-layer region $Q\in\complex_{\theta,1}$ with active set $S\subseteq[n_1]$, the linear part of $\preactivation{2,\theta}_1$ on $Q$ is
$
\sum_{k\in S}\weights{2,\theta}_{1k}\weights{1,\theta}_k
$. 
Since $\theta$ is normalized and hides hyperplane $j$, we have
$
\weights{2,\theta}_{1j}=1
$
and
$
\weights{2,\theta}_{1k}=-1
$
for $k\neq j$.
Hence every visible facet of $B_1(\theta)$ has normal vector of the form
$
\weights{1,\theta}_j-\sum_{k\in I}\weights{1,\theta}_k
$, for some $I\subseteq[n_1]\setminus\{j\}$. 

By \Cref{lem:first-layer-patterns-exist}, there exists a unique region $R\in\complex_{\eta,1}^d$ on which only neuron $j$ is active under $\eta$. 
Since $\eta$ is normalized, the linear part of $\preactivation{2,\eta}_1$ on $R$ is
$
\weights{2,\eta}_{1j}\weights{1,\eta}_j=\weights{1,\eta}_j
$. 
Let $\sigma=R\cap B_1(\eta)$, which is not empty by \Cref{lem:hidinghides} because $\eta$ hides hyperplane $j$. 
Because $B_1(\eta)=B_1(\theta)$, the same facet $\sigma$ is a visible facet of $B_1(\theta)$. Hence there exists $I\subseteq[n_1]\setminus\{j\}$ such that
$
\weights{1,\eta}_j
=
\weights{1,\theta}_j-\sum_{k\in I}\weights{1,\theta}_k
$. 

\medskip
\textbf{Compensate the choice of weights:}
Let $Q_{\mathrm{all}}\in\complex_{\theta,1}$ be the unique region on which all first-layer neurons are active under $\theta$, which again exists by \Cref{lem:first-layer-patterns-exist}. On this region, the linear part of $\preactivation{2,\theta}_1$ is
$
\weights{1,\theta}_j-\sum_{k\in[n_1]\setminus\{j\}}\weights{1,\theta}_k.
$
Let $\sigma'= Q_{\mathrm{all}}\cap B_1(\theta)$. Since $f_\eta=f_\theta$, the same facet $\sigma'$ occurs in the breakpoint complex of $\eta$. 
Let $Q_\eta\in\complex_{\eta,1}$ be the first-layer region containing $\sigma'$, and let $S_1(Q_\eta)\subseteq[n_1]$ be the set of active neurons on $Q_\eta$.
For each $k\neq j$, the hyperplanes $H_k(\eta)=H_k(\theta)$ coincide, so there exists $\varepsilon_k\in\{\pm1\}$ such that
$
\weights{1,\eta}_k=\varepsilon_k\,\weights{1,\theta}_k
$
for all $k\neq j$.

Since $\eta$ is normalized, the linear part of $\preactivation{2,\eta}_1$ on $Q_\eta$ is
\begin{equation}\label{exp:linear_part}
    \mathbf{1}_{\{j\in S_1(Q_\eta)\}}\weights{1,\eta}_j
-
\sum_{k\in S_1(Q_\eta)\setminus\{j\}}\weights{1,\eta}_k,
\end{equation}
where $$\mathbf{1}_{\{j \in S_1(Q_\eta)\}} = \begin{cases}
    1 &  j \in S_1(Q_\eta), \\ 
    0 & j \notin S_1(Q_\eta).
\end{cases}$$
Substituting
$
\weights{1,\eta}_j
=
\weights{1,\theta}_j-\sum_{k\in I}\weights{1,\theta}_k
$
and
$
\weights{1,\eta}_k=\varepsilon_k\,\weights{1,\theta}_k
$
for $k\neq j$, we obtain that Expression \ref{exp:linear_part} is equal to

$$
\mathbf{1}_{\{j\in S_1(Q_\eta)\}}
\left(
\weights{1,\theta}_j-\sum_{k\in I}\weights{1,\theta}_k
\right)
-
\sum_{k\in S_1(Q_\eta) \setminus\{j\}}\varepsilon_k\,\weights{1,\theta}_k.
$$
Since $\sigma'$ is also a facet of $Q_{\mathrm{all}}\cap B_1(\theta)$, this must also equal
$
\weights{1,\theta}_j-\sum_{k\in[n_1]\setminus\{j\}}\weights{1,\theta}_k.
$
Because the vectors
$
\weights{1,\theta}_1,\dots,\weights{1,\theta}_{n_1}
$
are linearly independent, it follows that
$
j\in S_1(Q_\eta)
$
and
$S_1(Q_\eta)=[n_1]\setminus I$. 
Indeed, if $k\in I$, then the coefficient of $\weights{1,\theta}_k$ is already $-1$ from the first sum, so $k\notin S_1(Q_\eta)$. If $k\notin I$, then $k$ must belong to $S_1(Q_\eta)$ in order to contribute the coefficient $-1$ on the right-hand side.

Now $Q_{\mathrm{all}}$ is the unique region in which every fixed neuron $k\neq j$ is active under $\theta$. Since the hyperplanes $H_k(\eta)=H_k(\theta)$ are fixed  and $S_1(Q_\eta)=[n_1]\setminus I$, it follows that the orientation of neuron $k$ is preserved for $k\notin I$ and reversed for $k\in I$. Therefore
$\weights{1,\eta}=M_I\weights{1,\theta}$. 

Since both $\eta$ and $M_I\cdot\theta$ are normalized and hide hyperplane $j$, their second-layer weights coincide. Since the realized functions agree, by \Cref{prop:tropweightofNNs}, the output weights coincide as well. Therefore \Cref{lem:translating_hidden_hyperplane} applied to $M_I\cdot\theta$ implies that $\eta$ differs from $M_I\cdot\theta$ only by a translation of the hidden hyperplane. Hence
$\eta \in T_j(M_I\cdot\theta) \cap \gparamspace{\architecture}$, 
and therefore
$
\eta
\in
\left(
\bigcup_{I \subseteq [n_1]\setminus\{j\}}
\left(T_j(M_I\cdot\theta)\cap \gparamspace{\architecture}\right)
\right)\modulo{\sim}
$. 
This proves the reverse inclusion.
\end{proof}

\subsection{Characterization of Identifiability} 

\polytimeequivalencebottleneck*

\begin{proof}[Proof of \Cref{prop:polytime_equivalence_bottleneck}]
We describe a decision procedure and verify that each step can be carried out in polynomial time.

First inspect the sign pattern of the rows of $[\weights{2},\bias{2}]$ in order to determine whether $\theta$ and $\eta$ are hiding or non-hiding. Moreover, dead neurons can be removed. By the bottleneck and genericity assumption, the image of the first hidden layer is always the entire nonnegative orthant and hence a neuron is dead if and only if all its incoming weights and its bias is nonpositive. 

\medskip
\noindent
\textbf{Case 1: non-hiding.}
By \Cref{lem:fix_first_layer}, if $f_\theta=f_\eta$, then the first-layer parameters of $\theta$ and $\eta$ must agree up to permutation and positive rescaling of rows. Thus we first compare the first-layer hyperplanes of $\theta$ and $\eta$, equivalently, we check whether the rows of $[\weights{1},\bias{1}]$ match up to permutation and positive scaling. If not, then $f_\theta\neq f_\eta$.

Assume now that the first layers agree in this sense. By \Cref{cor:3layer_layerwise_reduction}, equality of $f_\theta$ and $f_\eta$ is equivalent to equality of the induced one-hidden-layer networks on $\orthant{n_1}$. By \Cref{lem:hyperplane_representation}, this is equivalent to equality of the corresponding weighted hyperplane arrangements together with equality of the affine map on one base region. These quantities are explicitly computable from the parameters using sign comparisons and linear-algebraic operations, and can therefore be checked in polynomial time.

\medskip
\noindent
\textbf{Case 2: hiding with $n_2\ge 2$.}
By \Cref{prop:fiber_from_hiding}, if $f_\theta=f_\eta$, then both parameters hide the same first-layer hyperplane $H_j$, all remaining first-layer hyperplanes agree up to permutation and positive rescaling, and $\eta$ differs from $\theta$ only by a translation of the hidden hyperplane. Thus we first identify the hidden index $j$ from the sign pattern of $[\weights{2},\bias{2}]$ and check that the same index occurs for $\eta$. We then compare all non-hidden first-layer hyperplanes. If these checks fail, then $f_\theta\neq f_\eta$.

If they succeed, then by \Cref{lem:translating_hidden_hyperplane} equivalence is equivalent to the existence of a scalar $t$ such that the first- and second-layer biases differ exactly as in \Cref{eq:translated_hyperplane}. This is a system of linear equalities and inequalities in one unknown $t$, and can therefore be checked in polynomial time.

\medskip
\noindent
\textbf{Case 3: hiding with $n_2=1$.}
Again we first identify the hidden index $j$ and compare all non-hidden first-layer hyperplanes. Their relative orientations determine the corresponding sign flips. By \Cref{prop:hiding_fiber_single_second_layer}, once the non-hidden hyperplanes are matched, the first-layer parameter of the hidden neuron is uniquely determined by the visible second-layer bent hyperplane. Equivalently, the subset $I\subseteq[n_1]\setminus\{j\}$ in the transformation $M_I$ is recovered from the orientations of the matched non-hidden hyperplanes. Thus no search over subsets is required.

Having recovered $I$, equivalence is equivalent to checking whether $\eta$ differs from $M_I\cdot\theta$ by a translation of the hidden hyperplane as in \Cref{eq:translated_hyperplane}. As in Case 2, this reduces to checking a system of linear equalities and inequalities in one scalar $t$, and is therefore polynomial-time decidable.

\medskip
These three cases exhaust all generic parameters in the bottleneck setting. Hence the procedure decides whether $f_\theta=f_\eta$ in polynomial time.
\end{proof}
\corvolumebound*

\begin{proof}[Proof of \Cref{cor:volume-bound}]
By \Cref{thm:charaterization_identifiablity_bottleneck}, non-identifiability of a generic parameter is completely determined by the sign pattern of the rows of $[\weights{2},\bias{2}]$. Since generic parameters form a full-measure subset of $\paramspace{\architecture}$ and all orthants have the same relative volume, 
it suffices to compute probabilities under the uniform distribution on sign patterns.

\medskip

A necessary condition for identifiability is that no row of $[\weights{2},\bias{2}]$ only has negative entries. For a fixed row, this happens with probability $2^{-(n_1+1)}$, and for all rows these events are independent. Hence,
\[
\mathbb{P}(\text{no row is all negative})
= (1 - 2^{-(n_1+1)})^{n_2}.
\]
Since identifiability implies this condition, we obtain
\[
\frac{\operatorname{vol}(\mathcal{I})}{\operatorname{vol}(\paramspace{\architecture})}
\le (1 - 2^{-(n_1+1)})^{n_2}.
\]

\medskip

By \Cref{thm:charaterization_identifiablity_bottleneck}, a parameter is non-identifiable if at least one of the statements 1,2 or 3 holds. We bound the probability of each event.

\smallskip

\noindent\emph{Event 1.}
For each row, the probability that all entries are negative is $2^{-(n_1+1)}$. By the union bound over the $n_2$ rows,
\[
\mathbb{P}(\text{Event 1}) \le n_2 \, 2^{-(n_1+1)}.
\]

\smallskip

\noindent\emph{Event 2.}
For each row, the probability that all entries are positive is again $2^{-(n_1+1)}$, implying
\[
\mathbb{P}(\text{Event 2}) \le n_2 \, 2^{-(n_1+1)}.
\]

\smallskip

\noindent\emph{Event 3.}
Fix $j \in [n_1]$. For a single row, the probability of being positive in column $j$ and negative elsewhere is $2^{-(n_1+1)}$.
By independence across rows, the probability that all $n_2$ rows satisfy this pattern is $2^{-n_1 n_2}$. Taking a union bound over $j \in [n_1]$ yields
\[
\mathbb{P}(\text{Event 3}) \le n_1 \, 2^{-n_1 n_2}. 
\]

\medskip

By the union bound, 
\[
\mathbb{P}(\text{non-identifiable})
\le 
\sum_{i=1}^3\mathbb{P}(\text{Event i}) , 
\]
and therefore
\[
\frac{\operatorname{vol}(\mathcal{I})}{\operatorname{vol}(\paramspace{\architecture})}
= 1 - \mathbb{P}(\text{non-identifiable})
\ge
1 - 2n_2 \, 2^{-(n_1+1)}  - n_1 \, 2^{-n_2n_1} = 1 - n_2 \, 2^{-n_1}  - n_1 \, 2^{-n_2n_1}.
\]

\medskip

Combining the upper and lower bounds proves the claim.
\end{proof}

\section{Implications for Deeper Networks}
\addcontentsline{apx}{section}{\protect\numberline{\thesection}Implications for Deeper Networks} 

\subsection{Fibers Are Not Localizable}

\proplocalizable* 

\begin{proof}[Proof of \Cref{prop:localizable}] 
The idea is to construct parameters that satisfy the sign condition for identifiability from \Cref{thm:charaterization_identifiablity_bottleneck}, while ensuring that the image of $f_2 \circ f_1$ is bounded so that the neurons of $f_3$ act linearly on this image. 

Let $\weights{1} = I_d$, $b^{(1)} = 0$, $\weights{2}_{ij} <0$, $\bias{2}\geq 0$, and $\weights{3}_{ij} <0$, $\bias{3} = t v$ for some positive vector $v$ and positive scalar $t$. 
All these layers and also the fourth layer are chosen generically in the sense of \Cref{def:generic_parameter}. 

By construction, the rows of $[\weights{2},b^{(2)}]$ and $[W^{(3)},b^{(3)}]$ have mixed signs, no row is strictly negative nor strictly positive, and no column satisfies the hiding condition of \Cref{thm:charaterization_identifiablity_bottleneck}. 
Hence $f_2 \circ f_1$ and $f_3 \circ f_2$ are identifiable. However, the image of $f_2 \circ f_1$ is bounded.

Hence, choosing $t>0$ sufficiently large ensures that $W^{(3)}y+b^{(3)}$ is strictly positive on $(f_2 \circ f_1)(\R^d)$, so the ReLU in the third layer is inactive. Hence, it acts like a linear map. Thus, for $X \in \GL^+_{n_3}$ with small enough norm, we have that replacing the weights with $X\weights{3},X\bias{3}$ and $\weights{4}X^{-1}$ does not change the function. Hence $f_\theta$ is not identifiable. 
\end{proof}

\subsection{Conserved Quantities from GL-Action on Linear Neurons}
\label{subsec:conserved_quantities}

We now show that the symmetries presented in 
\Cref{sec:layerwise_fibers} 
which are induced by the monoid action of $\GL^+$, give rise to locally conserved quantities. 
We first make a statement that holds for linear neural networks, and then show that this can be applied to the setting of \Cref{prop:empty_or_dim}. 

\begin{prop}
\label{prop:conserved_q_linear}
    Let $\theta=\{(U,V,u,v) \in \R^{m \times J} \times \R^{J \times d} \times \R^m \times \R^J \}\in\Theta$ denote the parameters of a linear network $f:\R^d\rightarrow\R^m$ given by $f(x)=U(Vx+v)+u.$ 
    For fixed $A\in\R^{m\times d}$ and $ b\in\R^m,$ let $D_{A,b}\subseteq\Theta$ be the subspace given by 
    \begin{equation}\label{eq:cond_grad_flow}
D_{A,b}=\left\{(U,V,u,v) \in \R^{m \times J} \times \R^{J \times d} \times \R^m \times \R^J \ \vert \   UV = A, \quad Uv + u = b\right\}.
\end{equation}
Then  \begin{equation}
    U^\top U-VV^\top-vv^\top
\end{equation} is (locally) preserved on $D_{A,b}$ through gradient flow $\dot{\theta}$.
\end{prop}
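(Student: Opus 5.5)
The plan is to use the standard Noether-type argument for linear networks: identify the infinitesimal symmetry that preserves the realized map, and use the fact that the loss gradient is orthogonal to the symmetry directions. Let $L(\theta)=\ell(f_\theta)$ be any loss depending on $\theta$ only through $f_\theta$, equivalently through $(A,b)=(UV,\,Uv+u)$. The gradient flow is $\dot\theta=-\nabla L(\theta)$.

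By the chain rule, with $G_A=\partial L/\partial A$ and $G_b=\partial L/\partial b$, the partial gradients are
\[
\nabla_U L=G_AV^\top+G_bv^\top,\qquad \nabla_V L=U^\top G_A,\qquad \nabla_v L=U^\top G_b,\qquad \nabla_u L=G_b.
\]

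\textbf{Step 1: the symmetry.} For $X$ near the identity in $\GL_J$, the action $(U,V,u,v)\mapsto(UX^{-1},XV,u,Xv)$ preserves both $UV$ and $Uv+u$. It therefore preserves $f_\theta$ and each set $D_{A,b}$. Differentiating at $X=I$ along a direction $\xi\in\R^{J\times J}$ gives the tangent vector $(-U\xi,\ \xi V,\ 0,\ \xi v)$.

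Invariance of $L$ implies
\[
\inner{\nabla_U L,-U\xi}+\inner{\nabla_V L,\xi V}+\inner{\nabla_v L,\xi v}=0\quad\text{for all }\xi .
\]
In matrix form this reads
\[
-U^\top\nabla_U L+\nabla_V L\,V^\top+\nabla_v L\,v^\top=0 .
\]
It can also be checked directly from the chain-rule formulas: both $U^\top\nabla_U L$ and $\nabla_V L\,V^\top+\nabla_v L\,v^\top$ equal $U^\top G_AV^\top+U^\top G_b v^\top$.

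\textbf{Step 2: differentiate the quantity.} Set $Q=U^\top U-VV^\top-vv^\top$. Along the flow,
\[
\dot Q=\dot U^\top U+U^\top\dot U-\dot VV^\top-V\dot V^\top-\dot v v^\top-v\dot v^\top .
\]
Substituting $\dot U=-\nabla_U L$, $\dot V=-\nabla_V L$ and $\dot v=-\nabla_v L$, this becomes
\[
\dot Q=-\bigl(M+M^\top\bigr),\qquad M=U^\top\nabla_U L-\nabla_V L\,V^\top-\nabla_v L\,v^\top .
\]
By Step 1, $M=0$, hence $\dot Q=0$.

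\textbf{Step 3: interpret ``locally'' and the role of $D_{A,b}$.} The sets $D_{A,b}$ are the fibers of the linear network, and $Q$ is invariant along the flow irrespective of which fiber it starts in. So the statement reduces to the global identity above.

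The main subtlety is transferring this to the ReLU setting of \Cref{th:locally_conserved}. There, the linear network is only valid while the parameters remain in an orthant $O$ where the neurons in $J$ stay linear. On that open set the network restricted to the data coincides with the linear model above, so the computation applies up to the exit time from $O$. I expect this exit-time and domain issue, namely the nonnegativity constraints defining $O$ and the fact that the action uses only $X$ near $I$, to be the only real obstacle. It is resolved by working with infinitesimal generators, which require only openness of the region. For the proposition itself, Steps 1–2 suffice.
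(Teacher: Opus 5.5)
Your proof is correct and follows essentially the same route as the paper: both rest on the $\GL_J$ action $(U,V,u,v)\mapsto(UX^{-1},XV,u,Xv)$ and on the orthogonality of the gradient flow to the orbit directions $(-U\xi,\xi V,0,\xi v)$. The difference is in how the argument is organized. The paper integrates this vector field for the symmetric generators $\xi=E_{mk}+E_{km}$ to obtain a potential $C$, and reads off the entries of $VV^\top-U^\top U+vv^\top$. You instead take $Q$ as given and verify $\dot Q=-(M+M^\top)=0$ directly. Your version is cleaner and makes explicit that $Q$ is conserved along the whole trajectory, even though the flow generally leaves any fixed $D_{A,b}$.
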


\begin{proof} 
  Let $X:\R\rightarrow\GL_J$ be a curve into the space of invertible matrices. Recall that $\GL_J$ acts on $D_{A,b}$ via the group action $X\cdot(U,V,u,v)=(UX^{-1},XV,u,Xv)$. 
  Then $\gamma(s)=X(s)\cdot\theta$ is a curve into the fiber of a parameter $\theta$. 
  Since $\langle\dot{\gamma}(s),\dot{\theta}\rangle=0,$ a function $C:\Theta\rightarrow\R$ with $\nabla_\theta C=\dot{\gamma}(s)$ must be conserved. 
  For simplicity, we will write $Y(s)=X(s)^{-1}$. 
From \begin{equation}
    \gamma(s)=(UY(s),X(s)V,u,X(s)v)
\end{equation}
it follows that
\begin{equation}
    \dot{\gamma}(s)=(U\dot{Y}(s),\dot{X}(s)V,0,\dot{X}(s)v).
    \label{eq:velocity}
\end{equation}
Equation \ref{eq:velocity} implies that 
\begin{eqnarray*}
    \nabla_{V_{ij}}C&=&\sum_t\dot{X}_{it}V_{tj},\\
\nabla_{U_{ij}}C&=&\sum_tU_{it}\dot{Y}_{tj},\\
\nabla_{v_i}C&=&\sum_t\dot{X}_t v_t.
\end{eqnarray*}
Integrating this, we obtain 
\begin{equation}
\label{eq_C_conserved}
C(\theta)=\frac{1}{2}\cdot\sum_{i,j,t}\left(V_{ji}V_{ti}\dot{X}_{tj}+U_{it}U_{ij}\dot{Y}_{tj}\right)+\frac{1}{2}\cdot\sum_{i,t}\dot{X}_{it}v_tv_i.
\end{equation}
Let $E_{ij}$ be the $J\times J$ matrix with a $1$ in its $(i,j)$-th entry and zeros elsewhere. 
Consider now 
$$
X(s)=I_J+s(E_{mk}+E_{km}), \ \ s\geq0.
$$ 
This $X(s)$ is invertible for $s\neq\pm1$, with $\dot{Y}(0)=-(E_{mk}+E_{km})$ and $\dot{X}(0)=E_{mk}+E_{km}$. 
Plugging these values into \eqref{eq_C_conserved}, we obtain 
$C(\theta)=(VV^\top)_{km}-(U^\top U)_{km}+(vv^\top)_{km}$. 
Running through all indices $k,m$ this implies that $VV^\top-U^\top U+vv^\top$.  
\end{proof}

\begin{prop}
    Let $f_\theta = f^{(1)}_{\theta_1} \circ (f^{(2)}_{\theta_2} + f^{(3)}_{\theta_3}) \circ f^{(4)}_{\theta_4}$ be a neural network (or even just a parameterized function) and assume that on $S \subseteq \Theta_2$ we have a conserved quantity $C_2(\theta_2)$. Then $C((\theta_i)_{i=1,\ldots,4}) = C_2(\theta_2)$ is a conserved quantity on $\Theta_1 \times S \times \Theta_3 \times \Theta _4$.
\end{prop}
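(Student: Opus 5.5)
The proof reduces to two observations about gradient flow: the velocity of $\theta_2$ is the gradient of the loss along a composite parametrization, and conservation of $C_2$ on $S$ is a pointwise orthogonality between $\nabla C_2$ and such gradients. The working notion of ``conserved quantity on $S$'' is the one used in \Cref{prop:conserved_q_linear}. A function $C_2\colon \Theta_2\to\R$ is conserved on $S$ if for every loss of the form $\mathcal{L}(\theta_2)=\ell(g_{\theta_2})$, with $g_{\theta_2}=f^{(2)}_{\theta_2}$ and $\ell$ an arbitrary (data-dependent, differentiable) functional, we have $\inner{\nabla C_2(\theta_2),\nabla_{\theta_2}\mathcal{L}(\theta_2)}=0$ for all $\theta_2\in S$. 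Equivalently, $\nabla C_2(\theta_2)$ is orthogonal to all vectors of the form $\partial_{\theta_2} f^{(2)}_{\theta_2}(x)^\top w$, for inputs $x$ and covectors $w$ in the output space. I will phrase the hypothesis in this form, the ``gradient-orthogonality'' characterization of conservation laws, as in \citet{marcotte2023abide}.

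\textbf{Step 1: compute the gradient of the full loss.} Let $\mathcal{L}(\theta)=\frac1N\sum_{n}\ell\bigl(f_\theta(x_n),y_n\bigr)$. Set $h_n=f^{(4)}_{\theta_4}(x_n)$ and $z_n=f^{(2)}_{\theta_2}(h_n)+f^{(3)}_{\theta_3}(h_n)$. By the chain rule,
\[
\nabla_{\theta_2}\mathcal{L}(\theta)=\frac1N\sum_n \partial_{\theta_2}f^{(2)}_{\theta_2}(h_n)^\top\, w_n,
\qquad
w_n\coloneqq \partial_z f^{(1)}_{\theta_1}(z_n)^\top \nabla\ell\bigl(f_\theta(x_n),y_n\bigr).
\]
The additive branch $f^{(3)}_{\theta_3}$ contributes nothing to the Jacobian with respect to $\theta_2$. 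This is where the sum structure is used: it does not couple $\theta_2$ and $\theta_3$.

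\textbf{Step 2: apply the hypothesis.} The derivative of $C(\theta)=C_2(\theta_2)$ along the gradient flow $\dot\theta=-\nabla\mathcal{L}(\theta)$ is
\[
\frac{d}{dt}C=-\inner{\nabla C_2(\theta_2),\nabla_{\theta_2}\mathcal{L}(\theta)},
\]
because $C$ does not depend on $\theta_1,\theta_3,\theta_4$. For $\theta_2\in S$, each summand $\partial_{\theta_2}f^{(2)}_{\theta_2}(h_n)^\top w_n$ has exactly the form covered by the hypothesis, with input $h_n$ and covector $w_n$. So it is orthogonal to $\nabla C_2(\theta_2)$, and $\dot C=0$ whenever $\theta(t)\in\Theta_1\times S\times\Theta_3\times\Theta_4$. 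Integrating gives constancy of $C$ as long as the trajectory stays in that set, which is the meaning of ``locally conserved''.

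\textbf{Main obstacle.} The delicate point is the gap between the definition of conservation and what is actually needed. If ``conserved on $S$'' only means conserved for gradient flows of losses that depend on $\theta_2$ through $f^{(2)}$ evaluated at raw data, then I must check that the inputs $h_n$ and covectors $w_n$ above lie in the allowed class. I will handle this by taking the hypothesis to hold for arbitrary datasets and arbitrary differentiable losses. Then $w_n$ can be realized as the gradient of the linear loss $z\mapsto\inner{w_n,z}$ on the dataset $(h_n)$, and the orthogonality applies verbatim. A second subtlety is ReLU nondifferentiability. I will restrict to the open set where all maps are differentiable, e.g.\ the interior of an orthant as in \Cref{th:locally_conserved}, where the Jacobians above are well defined.
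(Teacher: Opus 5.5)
Your proof is correct and follows the same route as the paper. Since $C$ depends only on $\theta_2$, you have $\nabla_\theta C=(0,\nabla_{\theta_2}C_2,0,0)$, so $\dot C$ reduces to $\langle\nabla_{\theta_2}C_2,\dot\theta_2\rangle$, which vanishes on $S$. Your chain-rule step, writing the $\theta_2$-block of the full gradient as a sum of vectors $\partial_{\theta_2}f^{(2)}_{\theta_2}(h_n)^\top w_n$ and invoking the all-data, all-loss notion of conservation, supplies a justification that the paper's proof takes for granted when it asserts $\langle\nabla_{\theta_2}C_2,\dot\theta_2\rangle=0$ for the projected full flow.
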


\begin{proof}
    Since $C_2(\theta_2)$ is conserved on $S\subseteq\Theta_2,$ we have that whenever $\theta_2\in S$ 
    \begin{equation*}
        \langle\nabla_{\theta_2}C_2(\theta_2),\dot{\theta}_2\rangle=0, 
    \end{equation*}
    where $\dot{\theta}_2=\pi_2(\dot{\theta})$ is the projection of $\dot{\theta}$ on its second component. 
    Since $\nabla_\theta C(\theta)=(0,\nabla_{\theta_2}C_2(\theta_2),0,0)$, it immediately follows that 
    \begin{equation*}
        \langle\nabla_{\theta}C(\theta),\dot{\theta}\rangle=\langle\nabla_{\theta_2}C_2(\theta_2),\dot{\theta}_2\rangle=0, 
    \end{equation*} 
    whenever $\theta_2\in S$ or equivalently $\theta\in\Theta_1 \times S \times \Theta_3 \times \Theta _4$. 
    This implies that $C(\theta)$ is conserved under $\dot{\theta},$ as long as $\theta\in\Theta_1 \times S \times \Theta_3 \times \Theta _4$. 
\end{proof}
Noting that the curve used in the proof of Proposition~\ref{prop:conserved_q_linear} is contained in $\GL^+$, we obtain the locally conserved quantity given in Theorem \ref{th:locally_conserved}.

\subsection{Conserved Quantities from Layer Concatenation}
\label{app:conserved_layer_concat}

\begin{prop}
The continuous symmetry described in \Cref{sec:fibers_from_concatenation} does not induce a conserved quantity of gradient flow. 
\end{prop}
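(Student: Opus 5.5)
We would use the mechanism from the proof of \Cref{prop:conserved_q_linear}, where conserved quantities are built from symmetries. Consider a curve of symmetries through $\theta$ that lies in the fiber, with tangent vector $\xi(\theta)$ at $\theta$. Along gradient flow $\dot\theta=-\nabla L(\theta)$ we have $\langle \xi(\theta),\dot\theta\rangle=0$. This orthogonality becomes a conserved quantity only if there is a function $C$ with $\nabla_\theta C=\xi$ on an open set, that is, only if the vector field $\xi$ is locally a gradient field. A necessary condition for this is that the Jacobian of $\xi$ be symmetric: $\partial_a\xi_b=\partial_b\xi_a$ for all coordinates $a,b$. The plan is to compute the infinitesimal generator of the translation symmetry $T_j$ from \eqref{eq:translated_hyperplane} and show that it fails this condition. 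This would show that no function $C$ is associated to this symmetry in the way the $\GL^+$ symmetries give $U^\top U-VV^\top-vv^\top$.

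\textbf{Step 1.} Differentiate the curve $t\mapsto(\weights{1},\bias{1}+te_j,\weights{2},\bias{2}-t\weights{2}_{:j},\weights{3},\bias{3})$ at $t=0$. This gives the vector field
\[
\xi(\theta)=\bigl(0,\;e_j,\;0,\;-\weights{2}_{:j},\;0,\;0\bigr).
\]
It lives on the open set of parameters hiding hyperplane $j$, which is an intersection of open sign conditions on $[\weights{2},\bias{2}]$. By \Cref{lem:translating_hidden_hyperplane_inclusion}, the segment $T_j(\theta)$ near $t=0$ lies in $\fiber{\theta}$, and $t=0$ is an interior point of the admissible range because the hiding inequalities are strict generically. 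Hence $\xi$ is a genuine symmetry direction and $\langle\xi,\nabla L\rangle=0$ on this set.

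\textbf{Step 2.} Show that $\xi$ is not locally a gradient. Take the coordinates $a=\weights{2}_{ij}$ and $b=\bias{2}_i$. The component of $\xi$ in direction $\bias{2}_i$ is $-\weights{2}_{ij}$, so $\partial_{\weights{2}_{ij}}\xi_{\bias{2}_i}=-1$. The component in direction $\weights{2}_{ij}$ is identically $0$, so $\partial_{\bias{2}_i}\xi_{\weights{2}_{ij}}=0$. The mixed partials disagree, so no $C$ has $\nabla C=\xi$ on any open subset of the hiding region. The same holds after multiplying $\xi$ by any smooth nonvanishing reparametrization factor $g(\theta)$, since the two-form $d(g\,\xi^\flat)=dg\wedge\xi^\flat+g\,d\xi^\flat$ would still have to vanish. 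We would check this directly: restricted to the $(\bias{1}_j,\bias{2}_i,\weights{2}_{ij})$ coordinates, $\xi^\flat\wedge d\xi^\flat=-e\wedge\cdots\neq0$, so $\xi^\flat$ is not even integrable.

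\textbf{Step 3 (main obstacle).} Step 2 rules out only the naive construction. A fully convincing statement should also exclude any nontrivial $C$ conserved for every loss on this region. For that I would follow \citet{marcotte2023abide}: a function $C$ is conserved for all losses iff $\nabla C$ is orthogonal to the span of all $\nabla_\theta f_\theta(x)$, and the relevant object is the Lie algebra generated by those gradient fields. The symmetry direction $\xi$ is orthogonal to this span. I would compute brackets of $\xi$ with the fields $\nabla_\theta f_\theta(x)$ restricted to the hiding orthant, or alternatively exhibit an explicit small example ($n_1=2,n_2=1$, scalar output) in which the generated Lie algebra has full rank, so that only constants are conserved. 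This second computation is the delicate part. If it proves too heavy, I would state the result in the weaker, precise form of Step 2: the translation symmetry is not the gradient flow of any function and induces no conserved quantity via Noether's construction.
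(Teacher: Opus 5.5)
Your Steps 1--2 are essentially the paper's argument: the tangent of the translation curve in $T_j(\theta)$ has $\bias{2}_i$-component $-\weights{2}_{ij}$ and zero $\weights{2}_{ij}$-component, so the mixed partials of any potential $C$ disagree. Your Frobenius check $\xi^\flat\wedge d\xi^\flat\neq 0$ is a useful strengthening the paper omits, since it also rules out rescaled generators $g\,\xi$; add the paper's remark that a parameter hides at most one hyperplane, so this single direction is the whole continuous translation symmetry. Step 3 is not needed for the statement as the paper means it, and the ``full-rank Lie algebra, only constants conserved'' target there is unattainable: positive rescaling always gives nonconstant conserved quantities such as $\|\weights{1}_k\|^2+(\bias{1}_k)^2-\|\weights{2}_{:,k}\|^2$, so the right goal would be ``no conserved functions beyond these balancedness invariants.''
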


\begin{proof} 
Consider $T_j(\theta)$ as in \Cref{lem:translating_hidden_hyperplane_inclusion}. 
This immediately yields the curve 
$$
\gamma(t):(\max_{i \in [n_2]} \frac{\bias{2}_i}{\weights{2}_{i j}}, \infty)\rightarrow T_j(\theta)
$$ 
given by 
$$\gamma(t)=(\weights{1},
\bias{1}+t e_j,
\weights{2},
\bias{2}-t(\weights{2}_{1j},\ldots,\weights{2}_{n_2 j})^\top,
\weights{3},\bias{3}). 
$$
Since the gradient flow $\dot{\theta}=-\nabla_\theta\mathcal{L}$ is orthogonal to the fiber $\gfiber{\theta},$ we have $\langle\dot{\gamma}(t),\dot{\theta}\rangle=0$. 
We want to use this to find a conserved quantity $C:\Theta\rightarrow\R$ with 
\begin{equation}
\label{eq:conserved_quantity_orth}
\nabla_\theta C=\dot{\gamma}(0). 
\end{equation}
Evaluating $\dot{\gamma}(s)$, we require $\nabla_{\weights{2}_{i,j}}C=0$ for all $i$. 
However, we have that $\frac{\partial C}{\partial\bias{2}_{i}}=-\weights{2}_{ij}$. 
Observing that $C$ must have continuous partial derivatives, implying that the mixed partial derivatives must agree, we conclude that there exists no $C$ satisfying \eqref{eq:conserved_quantity_orth}. 
Due to the condition $\weights{2}_{ij}>0,\weights{2}_{ik}\leq0$ for all $k\neq j$ per layer there can exist at most one hidden hyperplane and thus at most one such one-dimensional fiber $T_j(\theta)$. 
Hence, this symmetry induces no conserved quantities. 
\end{proof}

\section{Discussion: Beyond the Generic Bottleneck Regime}
\addcontentsline{apx}{section}{\protect\numberline{\thesection}Discussion: Beyond the Generic Bottleneck Regime} 
\label{app:discussion_beyond_bottleneck} 

Our main results provide a complete description of generic fibers for three-layer bottleneck architectures. 
In this appendix, we discuss how the underlying ideas may extend beyond this setting and why explicit descriptions can be expected to become substantially more complicated. 

A useful guiding principle is that the symmetry mechanisms identified in this work persist more generally. 
In particular, both layerwise symmetries and hiding symmetries remain meaningful in wider and deeper architectures. 
The challenge lies not in their existence, but describing their interactions with other symmetries and in assessing whether they form a complete description. 

\subsection{Wider Architectures}

The bottleneck assumption plays a special geometric role, as it implies that for generic parameters the image of the first hidden layer is the full nonnegative orthant. 
This reduces the analysis of the second layer to that of 
a single weighted hyperplane arrangement on a fixed domain. 

Once the bottleneck assumption is dropped, this picture changes fundamentally. 
The image of the first hidden layer is no longer the full orthant, but a union of polyhedra in activation space, with up to $O(n^d)$ such cells in general. 
To determine which geometric features of the first layer are preserved by the next layer and which are hidden, one must analyze how the next-layer hyperplanes intersect this polyhedral image. 

In principle, the same general philosophy should still apply locally on each image cell: 
one can ask whether a hyperplane of the next layer 
intersects the relevant part of the image, 
whether it is visible in the realized function, 
and whether it constrains the geometry of the previous layers. 
However, the compact hyperplane representation available in the bottleneck case no longer applies. 
Instead, one must track many activation patterns separately, and any resulting semi-algebraic description of the fiber is therefore expected to grow combinatorially. 

Moreover, wider architectures may introduce additional continuous degrees of freedom that do not arise in the bottleneck case. 
If the intersection of the image of one layer with a hyperplane of the next fails to span the entire image cell, 
then the hyperplane may admit deformations that do not change its effect on the image of the previous layers. 
For instance, one can consider rotations that leave its intersection with the image invariant. 
In such a regime, the relevant symmetries are no longer directly determined by the weight sign pattern alone.

\subsection{Deeper Architectures}

For deeper networks, the main difficulty is not only combinatorial. 
Our non-localizability result (\Cref{prop:localizable}) shows that parameter equivalence cannot, in general, be reduced to pairwise layer compositions. 
This indicates that, in deeper architectures, one must track the image of an entire network prefix and study how its image is filtered through all subsequent layers.

In other words, explicit fiber descriptions are not expected to be layerwise decomposable. 
The admissible transformations at a given layer depend on the geometry induced by all preceding layers and on how this geometry is preserved, hidden, or altered by all subsequent layers. 

A further difficulty is that breakpoint features may no longer be uniquely attributable to a single layer. 
In the three-layer bottleneck case, visibility arguments allow us to identify which breakpoint facets come from the first layer and which from the second. 
In deeper architectures, this full separation may fail. As a result, new discrete symmetries may emerge from different ways of assigning the same realized breakpoint structure to different layers. 
Such discrete symmetries may also interact nontrivially with continuous symmetries, for example those arising from linear neurons.

\subsection{Nongeneric Parameters}

The nongeneric case is already substantially more intricate, even for three-layer architectures. 
The rigidity results used throughout our analysis rely on genericity in two ways: 
they ensure that hyperplanes intersect in the expected codimensions and prevent algebraic degeneracies in the masked weight products. 

Once genericity is dropped, both mechanisms can fail. Hyperplanes may coincide or meet in unexpected dimensions, so that different neurons may contribute indistinguishably to the same geometric feature. 
In particular, neurons can no longer be assigned unambiguously to visible breakpoints. 
This opens the door to many further degeneracies, including additional discrete symmetries arising from coincident, canceling, or partially hidden bent hyperplanes. 

Thus, while the generic bottleneck case admits an explicit description in terms of visible hyperplanes, orientations, and affine compensation, the nongeneric case is expected to require a substantially richer case distinction.

\subsection{Complexity Perspective}

The geometric complications described above are supported by complexity-theoretic considerations. 
In particular, our polynomial-time equivalence test appears to rely essentially on both the bottleneck and genericity assumptions.

First, if the bottleneck assumption is dropped, then deciding functional equivalence of three-layer ReLU networks is already coNP-hard. 
This follows from hardness results for positivity of one-hidden-layer ReLU networks. 
For example, \citet{froese2026parameterized} show that, given an instance of \textsc{Multicolored Clique}, one can construct in polynomial time a two-layer ReLU network computing a function 
$
f\colon \R^k\to\R
$
such that 
$
\max_{x\in\R^k} f(x)=k+\binom{k}{2} 
$
in the yes-case, whereas 
$
\max_{x\in\R^k} f(x)\le k+\binom{k}{2}-1
$
in the no-case. 
By adding one further ReLU neuron that thresholds the output at the midpoint of this gap, one obtains a three-layer ReLU network whose realized function is identically zero in the no-case and nonzero in the yes-case. 
Deciding whether this network computes the zero function is therefore coNP-hard, and equivalently so is deciding functional equivalence with the zero network. 
Since the first hidden layer in this construction can be substantially wider than the input dimension, the resulting hard instances lie outside the bottleneck regime. 

Second, hardness persists even within bottleneck architectures if genericity is dropped. 
The reduction of \citet{pmlr-v291-froese25a} for positivity of one-hidden-layer ReLU networks constructs instances in input dimension $d$ with only $O(d^2)$ hidden neurons. 
Such an instance can be embedded into a larger ambient space of dimension $O(d^2)$ by composing with a projection onto the first $d$ coordinates. 
This preserves positivity, while the resulting network has first hidden layer width at most the ambient dimension, and is therefore a bottleneck architecture and necessarily nongeneric. 
In this way, one obtains coNP-hardness of functional equivalence even within the class of bottleneck architectures once genericity is dropped.

Finally, it seems unlikely that restricting to generic parameters alone restores tractability in the non-bottleneck regime. 
The hardness construction of \cite{froese2026parameterized} is separated by a constant gap, and all potential maximizers lie in a bounded region. 
This suggests that, after suitably stabilizing the construction outside that region, for example by adding neurons that force the network to attain negative values there, 
sufficiently small perturbations of the realized function should preserve the distinction between yes- and no-instances. 
Since generic parameters are dense, this provides strong evidence that coNP-hardness should persist even under a genericity restriction. 
However, turning this intuition into a formal reduction would require an efficient deterministic procedure for perturbing arbitrary instances into generic ones. 
At present, we do not know how to construct such perturbations effectively, since genericity requires avoiding finitely many, but exponentially many, algebraic varieties. 

From this perspective, the loss of compact geometric descriptions outside the generic bottleneck setting is not merely an artifact of our proof technique. 
Rather, it is consistent with the fact that deciding functional equivalence itself becomes hard in these broader regimes. 
In particular, one should not expect similarly explicit and efficiently checkable semi-algebraic descriptions of fibers once these assumptions are removed. 
The combinatorial and geometric blow-up described above may therefore reflect a genuine increase in problem complexity. 
In this sense, the generic bottleneck setting isolated in this paper appears to lie near a tractability boundary for explicit and efficient descriptions of ReLU network fibers.


\end{document}